\titlespacing*{\section}{0pt}{6pt}{0pt}
\titlespacing*{\subsection}{0pt}{6pt}{0pt}
\newcommand{\mbf}{\bm}
\def\eqref#1{equation~\ref{#1}}
\def\1{\bm{1}}
\DeclareMathAlphabet{\mathsfit}{\encodingdefault}{\sfdefault}{m}{sl}
\SetMathAlphabet{\mathsfit}{bold}{\encodingdefault}{\sfdefault}{bx}{n}
\newcommand{\E}{\mathbb{E}}
\newcommand{\R}{\mathbb{R}}
\DeclareMathOperator*{\argmax}{arg\,max}
\newcommand{\X}{\mbf{X}}
\newcommand{\F}{\mbf{F}}
\newcommand{\w}{\mbf{w}}
\newcommand{\x}{\mbf{x}}
\newcommand{\y}{\mbf{y}}
\newcommand{\A}{{\mbf{A}}}
\newcommand{\B}{{\mbf{B}}}
\newcommand{\C}{{\mbf{C}}}
\newcommand{\U}{\mbf{U}}
\newcommand{\W}{\mbf{W}}
\newcommand{\V}{\mbf{V}}
\newcommand{\Z}{\mbf{Z}}
\newcommand{\D}{\mbf{D}}
\newcommand{\Q}{\mbf{Q}}
\newcommand{\K}{\mbf{K}}
\newcommand{\bI}{\mbf{I}}
\renewcommand{\S}{\mbf{S}}
\newcommand{\bE}{\mbf{E}}
\newcommand{\e}{\mbf{e}}
\newcommand{\bu}{\mbf{u}}
\newcommand{\bzero}{\mbf{0}}
\newcommand{\dS}{\delta\S}
\newcommand{\Lbd}{\mbf{\Lambda}}
\newcommand{\bGamma}{\mbf{\Gamma}}
\newcommand{\bSigma}{\mbf{\Sigma}}
\newcommand{\bxi}{\mbf{\xi}}
\DeclareMathOperator{\poly}{poly}
\newtheorem{lemma}{Lemma}[section]
\newtheorem{corollary}{Corollary}[section]
\newtheorem{theorem}{Theorem}[section]
\newtheorem{assumption}{Assumption}[section]
\newtheorem{definition}{Definition}[section]
\newcommand{\tdelta}{\widetilde{\Delta}}
\def\*#1{\mathbf{#1}}
\def\+#1{\mathcal{#1}} 
\def\-#1{\mathrm{#1}}
\def\^#1{\mathbb{#1}}
\def\!#1{\mathtt{#1}}
\def\1#1{\mathbb{I}\left[#1\right]}
\DeclareMathOperator{\cir}{C}
\crefname{assumption}{assumption}{assumptions}
\newcommand{\indi}{\mathbbm{1}}
\title{Transformers Learn to Implement Multi-step Gradient Descent with Chain of Thought}
\author{Jianhao Huang$^1$\thanks{Equal Contribution (alphabetical order).} , Zixuan Wang$^2$$^*$, Jason D. Lee$^2$  \\
$^1$Shanghai Jiaotong University, $^2$Princeton University\\
}
\begin{document}

\allowdisplaybreaks

\maketitle
\begin{abstract}
Chain of Thought (CoT) prompting has been shown to significantly improve the performance of large language models (LLMs), particularly in arithmetic and reasoning tasks, by instructing the model to produce intermediate reasoning steps. Despite the remarkable empirical success of CoT and its theoretical advantages in enhancing expressivity, the mechanisms underlying CoT training remain largely unexplored. In this paper, we study the training dynamics of transformers over a CoT objective on an in-context weight prediction task for linear regression. We prove that while a one-layer linear transformer without CoT can only implement a single step of gradient descent (GD) and fails to recover the ground-truth weight vector, a transformer with CoT prompting can learn to perform multi-step GD autoregressively, achieving near-exact recovery. Furthermore, we show that the trained transformer effectively generalizes on the unseen data. With our technique, we also show that looped transformers significantly improve final performance compared to transformers without looping in the in-context learning of linear regression. Empirically, we demonstrate that CoT prompting yields substantial performance improvements.
\end{abstract}

\section{Introduction}
\label{sec: intro}
Transformer-based Large Language Models (LLMs) have demonstrated significant success across various language modeling tasks, achieving state-of-the-art performance in numerous domains \citep{openai2023gpt4}. Remarkably, these models have also unlocked complex reasoning abilities, particularly in mathematical problem-solving and coding tasks \citep{chowdhery2023palm, anil2022exploring, achiam2023gpt}. A key method driving this advancement is the Chain of Thought (CoT), which enables LLMs to generate intermediate reasoning steps autoregressively rather than providing a direct answer. This process effectively improves the model’s capacity to solve complex problems. In practice, CoT reasoning can be elicited either by providing few-shot CoT examples or by appending prompts like ``let’s think step by step" to bootstrap the model's response \citep{kojima2022large, wei2022chain, suzgun2022challenging, nye2021show}.

Theoretically, CoT enables LLMs to perform multi-step sequential computations by generating intermediate results, thereby significantly improving the expressive power of transformers \citep{li2024chain, feng2024towards, merrill2023expresssive} compared to standard decoder transformers that generate direct outputs without intermediate reasoning \citep{liu2022transformers, merrill2023parallelism}. Despite these theoretical insights, it remains unclear how transformers are \textbf{trained} on CoT data to effectively execute multi-step reasoning. Furthermore, it is unknown whether a transformer trained specifically with an auto-regressive objective with multi-step CoT can substantially outperform one trained to directly output answers without CoT.


This paper takes an initial step beyond expressiveness to study the training dynamics of transformers when trained on CoT data. Specifically, following the modified in-context learning (ICL) setting on linear regression proposed by \citep{ahn2023linear, zhang2023trained}, we use it as a testbed to analyze the training process with the CoT framework implemented. We name the task \textbf{in-context weight prediction} where the goal is to predict the linear weight vector from the sequence of input prompts. Instead of performing direct ICL and outputting a prediction, the transformer with CoT prompting is allowed to generate multiple intermediate steps before arriving at the final answer. We theoretically investigate the transformer's training trajectory on the CoT objective and show the expressiveness gap between transformers trained with CoT and those without. Our main results show this separation is \textbf{learnable}: gradient-based algorithm can learn the constructed transformer with CoT in the expressivity result.

We summarize our contributions as follows:
\begin{itemize}[leftmargin=14pt, itemsep=3pt]
    \item \textbf{Expressiveness Gap.} We characterize the global optimum of the population loss for the in-context weight prediction task on linear regression using a one-layer transformer without CoT prompting. Our results show that, without CoT, the transformer at the global minimizer effectively performs a single step of gradient descent (GD)(\Cref{main thm: lower bound for tf without cot}), leading to significant errors in predicting the $d$-dimensional weight vector $\w^* \in \R^d$ when the number of examples for ICL is $n = \widetilde{\Theta}(d)$ (\Cref{main corollary: significant error for 1-step}). In contrast, we demonstrate that a one-layer transformer with CoT prompting can achieve near-exact recovery by executing multi-step GD (\Cref{main thm: construction for tf with cot}).
    
    \item \textbf{Convergence.} We prove the convergence results of running gradient flow on the population CoT loss under mild assumptions (\Cref{informal main thm: global convergence}). Our analysis uses a novel stage-wise approach combining dynamics analysis and landscape properties: the parameters initially approach the global minimizer, followed by local convergence toward the final solution. Our proof technique involves a novel characterization of the complicated population gradient. Furthermore, we prove that the trained transformer can exhibit both in-distribution and out-of-distribution generalization (\Cref{main theorem: evaluation}) at inference time. We are the first to establish the learnable separation between transformers with and without CoT under the in-context linear regression setting. We empirically validate that the trained transformer converges to the minimizer predicted by our theory, with a distinct performance gap between models trained with and without CoT prompting.
\end{itemize}
\paragraph{Outline.} In \Cref{sec: prelim}, we formalize the problem setting including the data model, the one-layer transformer architecture, and the CoT prompting format. In \Cref{sec: expressiveness improvement of cot}, we theoretically show the performance gap between the transformer with and without CoT. \Cref{sec: global convergence} consists of our main results, including our dynamics analysis and out-of-distribution (OOD) generalization result. \Cref{sec: experiments} empirically validates the advantage of CoT. 
\subsection{Related works}
\label{subsec: related works}
\paragraph{Training dynamics of transformers.} Several works have studied the training process of specific transformer architectures. \citet{jelassi2022vision, li2023theoretical} examined the training process and sample complexity of Vision Transformer \citep{dosovitskiy2020image}. \citet{tarzanagh2023transformers, ataee2023max, li2024mechanics} explored the connection between the optimization landscape of self-attention mechanisms and the Support Vector Machine problem. \citet{tian2023scan, tian2023joma} provided insights into the training dynamics of the self-attention and MLP layers during the training process respectively. 

A related line of research focuses on Markov-like data models. \citet{bietti2024birth} studied the \textit{induction head} mechanism from the perspective of associative memory. \citet{nichani2024transformers} demonstrated that a simplified two-layer transformer provably learns a generalized induction head on latent causal graphs. \citet{chen2024unveiling} further proved that a modified two-layer multi-head transformer can learn in-context generalized $n$-gram. \citet{edelman2024evolution} investigated the multi-stage phase transitions during training on bigram and $n$-gram ($n\ge 3$). Additionally, \citet{makkuva2024attention} studied the loss landscape of transformers trained on sequences from a Markov Chain. 

Another growing body of literature aims to understand the training dynamics of in-context learning (ICL). \citet{garg2022can} first empirically studied the ICL capabilities of transformers over a variety of function classes. \citet{akyurek2022learning, von2023transformers} investigated the behavior of transformers on random ICL instances of linear regression. Several works have also established the existence of deep transformers capable of implementing multi-step gradient descent (GD) across different domains \citep{fu2023transformers, bai2023transformers, giannou2023looped}. \citet{mahankali2023one, ahn2024transformers} analyzed the loss landscape of the linear regression ICL task and \citet{zhang2023trained} proved global convergence on a one-layer linear self-attention layer using gradient flow. \citet{gatmiry2024can} demonstrated that a linear looped transformer with specific update procedures can learn to implement multi-step GD for linear regression. 
Further analyses of training dynamics under more realistic assumptions about data models and architectures have been conducted by \citet{huang2023context, kim2024transformers, chen2024training}. For a detailed discussion see Appendix~\ref{subsec: appendix discussion}.

Compared to prior works, our study and \citet{huang2023context, ahn2024transformers, zhang2023trained, tarzanagh2023transformers, nichani2024transformers, kim2024transformers, wang2024transformers, chen2024unveiling, renlearning} all use similar reparameterizations that combine key and query matrices to simplify the training dynamics. Moreover, many previous studies \citep{tian2023scan, zhang2023trained, huang2023context, nichani2024transformers, kim2024transformers, chen2024training, gatmiry2024can} adopted the population loss to facilitate the analysis of these dynamics.

A closely related work is \citet{gatmiry2024can}, which shows that a looped transformer can implement multi-step GD on the ICL linear regression task to directly predict the query answer in context. In comparison, the goal of our setting is to predict the weight vector from the input examples using a realistic CoT autoregressive generation process. Theoretically, we also establish a performance gap between transformers with CoT and those without. See \Cref{appendix subsec: discussion} for a more detailed discussion.

\paragraph{Chain of Thought and Scratchpad} The CoT prompting method was first introduced by \citet{wei2022chain} to enhance the multi-step reasoning capability of LLMs. Before the formalization of CoT, \citet{nye2021show} demonstrated that allowing language models to generate intermediate results on ``\textit{scratchpads}" dramatically boosts the multi-step computation ability of LLMs. \citet{wang2022self, yao2024tree, creswell2022selection, zhou2022least} further proposed variants of the CoT/scratchpad method to improve the efficiency and reliability of generation.

Recently, several works have attempted to understand CoT from both experimental and theoretical perspectives. \citet{wang2022towards, saparov2022language, shi2022language, paul2023refiner} empirically studied the capability of CoT, providing valuable insights on its reasoning processes. Meanwhile, \citet{wu2023analyzing,tutunov2023can, hou2023towards, cabannes2024iteration} investigated CoT through the lens of mechanistic interpretability. 
On the theoretical side, \citet{liu2022transformers, merrill2023expresssive, li2024chain, feng2024towards}
explored the expressive power of transformers with CoT, showing that CoT can significantly extend the expressivity of transformers in the context of circuit complexity. \citet{hu2024unveiling} investigated the statistical foundations of CoT. However, the training dynamics of CoT remain largely unexplored. To the best of our knowledge, this work is among the first theoretical analyses of training dynamics on CoT/scratchpad objectives.

\section{Preliminaries}
\label{sec: prelim}
In this section, we describe the modified in-context learning linear regression task, i.e. \textbf{in-context weight prediction}, the one-layer linear self-attention architecture, and the Chain of Thought (CoT) prompting formulation.

\paragraph{Notation } We use $[T]$ to denote the set $\{1,2,..., T\}$. Scalars are in lower-case unbolded letters ($y, \alpha$, etc.). Matrices and vectors are denoted in upper-case bold letters ($\W,\V$, etc.) and lower-case bold letters ($\x,\w$, etc.), respectively. $\W_{[i, j]}, \W_{[i, :]},\W_{[:, j]}$ respectively denotes the $(i,j)$-th entry, $i$-th row, and $j$-th column of the matrix $\W$. $\W_{[:, -1]}$ means the last column of the matrix $\W$. The notation $\W_{ij}$ denotes block matrices/vectors on the $i$-th row and $j$-th column according to context.
For norm, $\norm{\cdot}$ denotes $\ell_2$ norm and $\|\cdot\|_F$ denotes the Frobenius norm.  We use $\indi\{\cdot\}$ to denote the indicator function. We use $\Tilde{O}(\cdot)$ to hide logarithmic factors in the asymptotic notations.

\subsection{In-Context Weight Prediction}

\label{subsec: linear regression ICL}
Previous works \citep{zhang2023trained, ahn2023linear,ahn2024transformers,akyurek2022learning,mahankali2023one} focus on the in-context learning (ICL) task on linear regression. We suppose the data sequence is sampled from a linear regression task where the ground-truth 
\begin{equation}
\label{eq: data distribution}
    \w^*\sim\mathcal{N}(0,\bI_d)\quad \x_i\sim \mathcal{N}(0, \bI_d)\quad y_i ={\w^*}^\top \x_i \text{ for all } i\in[n].
\end{equation} The goal of in-context learning is to predict the correct label ${\w^*}^\top\x_{\text{query}}$ given a query $\x_{\text{query}}$ and the previous example pairs $(\x_i,y_i)$.
Most previous works \citep{zhang2023trained,ahn2024transformers,mahankali2023one} show the transformer predicts the query label $y_{\text{query}}$ by implicitly doing a one-step gradient descent without predicting the linear classifier ${\w}^*$.

In this work, we go one step further: instead of directly outputting the query label, we require the transformers to implement gradient descent to learn the ground-truth weight vector $\w^*$. We call this task \textbf{in-context weight prediction} for linear regression. Specifically, the data sequence is in the following format:
\begin{align}
    \Z_0 = \begin{bmatrix}
         \x_1&\cdots&\x_n & 0 \\
         y_1&\cdots&y_n & 0  \\
         0&\cdots&0 & {\w}_0 \\
         0&\cdots&0 & 1
    \end{bmatrix}:= \begin{bmatrix}
        \X &0\\
        \y &0\\
        \mbf{0}_{d\times n}&\w_0\\
        \mbf{0}_{1\times n}& 1
    \end{bmatrix}\in \R^{d_e\times (n+1)},
\end{align}
where $\X:= \qty[\x_1,\cdots,\x_n]$ is the data matrix and $\w_0$ is the initialization of the linear parameter $\hat{\w}$. We assume $\w_0 = \mbf{0}_d$ for simplicity, and define $d_e = 2d+2$.
Our setting is similar to the setting in \citet{bai2023transformers} where multi-layer transformers are constructed to do explicit multi-step GD on the weight vector $\hat{\w}$. We separate the input example space and the weight vector space as in \citet{bai2023transformers} (the $\{\mathbf{p}_i\}_{i\in[N+1]}$) in order to facilitate training. Moreover, we add a dummy token (an extra 1) at the end of each token similar to what \citet{bai2023transformers} did in their input sequence format.

\subsection{Linear Self-attention Layer}
\label{subsec: linear self-attn layer}
We consider a one-layer linear self-attention (LSA) module with residual connection, following the setting in \citet{zhang2023trained, ahn2023linear, gatmiry2024can}: we remove the $\text{softmax}(\cdot)$ non-linearity, consolidate the projection and value matrix into a single matrix $\V\in\R^{d_e\times d_e}$, and merge the key and query matrices into $\W\in \R^{d_e\times d_e}$. We denote
\begin{equation}
    \label{eq: full model}
    f_{\mathrm{LSA}}(\Z;\V, \W) = \Z + \V \Z \cdot\frac{\Z^\top \W \Z}{n}
\end{equation}
The prediction of the transformer will be the last token of the output sequence, namely
\begin{equation}
    f_{\mathrm{LSA}}(\Z;\V, \W)_{[:, -1]} = \Z_{[:, -1]} + \V \Z \cdot\frac{\Z^\top \W \Z_{[:, -1]}}{n}
\end{equation}
Since the first ($d+$1) entries of the full weight tokens $(\bzero,0,\w,1)$ are zero, only part of the $\W$ and $\V$ affect the prediction.
We can rewrite the parameter $\V,\W$ into block matrices
\begin{align*}
    \V =\begin{bmatrix}
        \V_{11}&\V_{12}&\V_{13}&\V_{14}\\
        \V_{21}&v_{22}&\V_{23}&v_{24}\\
        \V_{31}&\V_{32}&\V_{33}&\V_{34}\\
        \V_{41}&v_{42}&\V_{43}&v_{44}
    \end{bmatrix},
    \W =\begin{bmatrix}
        \W_{11}&\W_{12}&\W_{13}&\W_{14}\\
        \W_{21}&w_{22}&\W_{23}&w_{24}\\
        \W_{31}&\W_{32}&\W_{33}&\W_{34}\\
        \W_{41}&w_{42}&\W_{43}&w_{44}
    \end{bmatrix}\in \R^{(2d+2)\times (2d+2)}
\end{align*}
where the block matrices are in the following shape ($i,j\in\{1,2\}$): $$\V_{{2i-1,2j-1}},\W_{{2i-1,2j-1}}\in \R^{d\times d}; \V_{{2i-1,2j}},\W_{{2i-1,2j}},\V_{{2i,2j-1}}^\top,\W_{{2i,2j-1}}^\top\in \R^{d\times1}; v_{2i,2j},w_{2i,2j}\in \R.$$ In the following sections, we will show only $\V_{31}$, $\W_{13}$, and $w_{24}$ affects the prediction. We will further prove that all other entries are always zero along the training trajectory if initialized at zero.

\subsection{Chain-of-Thought Prompting}
\label{subsec: cot prompting}
In language modeling tasks, transformers have been proven to be versatile in various downstream tasks. However, transformers struggle to solve mathematical or scientific problems with one single generation, where several reasoning steps are required. CoT was then proposed to make transformers learn to generate intermediate results auto-regressively before reaching the answer. 

With CoT, we allow the transformer to generate $k$ steps before it outputs the final prediction $\hat{\w}_k$ for the ground-truth $\w^*$. Specifically, given the generated input sequence $\hat{\Z}_i$ at the $i$-th step of generation, we have $f_{\mathrm{LSA}}(\hat{\Z}_i)_{[:,-1]}$ as the prediction of the next token (($i$+1)-th token), and append it to the end of the current sequence s.t. $\hat{\Z}_{i+1} = \left[\hat{\Z_i}, f_{\mathrm{LSA}}(\hat{\Z_i})_{[:,-1]}\right]$. After $k$ generation steps, the CoT process induces $k$ intermediate sequences $\{\hat{\Z_i}\}_{i=1}^k$ in the following form:
\begin{equation}
    \hat{\Z_i} = \begin{bmatrix}
         \x_1&\cdots&\x_n & 0 & \star &\cdots&\star\\
         y_1&\cdots&y_n & 0 & \star &\cdots&\star\\
         0&\cdots&0 & {\w}_0 & \hat{\w}_1 &\cdots&\hat{\w}_i\\
         0&\cdots&0 & 1 & 1 &\cdots &1
    \end{bmatrix}\in \R^{d_e\times (n+i+1)}, i \in [k] \tag{Inference}
\end{equation}

Here, we define $\hat{\w}_{i}:=f_{\mathrm{LSA}}(\hat{\Z}_{i-1})_{[d+2:2d+1,-1]}$ as the $i$-th step prediction for the weight vector. The other entries in the same column are irrelevant and we denote them as $\star$.
Finally, the transformer inputs the last generated sequence $\hat{\Z}_k$ back to the transformer once again to generate the final output $\hat{\w}_{k+1}:=f_{\mathrm{LSA}}(\hat{\Z}_k)_{[d+2:2d+1,-1]}$ as the prediction of the weight vector $\w^*$.

Different from the inference time generation, the training process is similar to pre-training on the ground-truth sequence to predict the next token. Specifically, we input the transformer with CoT ground-truth sequences $\Z_i$:
\begin{equation}
    \Z_i = \begin{bmatrix}
         \x_1&\cdots&\x_n & 0 & 0 &\cdots&0\\
         y_1&\cdots&y_n & 0 & 0 &\cdots&0\\
         0&\cdots&0 & {\w}_0 & \w_1 &\cdots&\w_i\\
         0&\cdots&0 & 1 & 1 &\cdots &1
    \end{bmatrix}\in \R^{d_e\times (n+i+1)}, i \in [k] \tag{Training}
\end{equation}
where $\w_{i} = \w_{i-1} - \eta \cdot \frac{\X(\X^\top \w_{i-1}-\y^\top)}{n}$ is the ground-truth intermediate weight vector after $i$ gradient steps on the linear regression objective. Each gradient step adopts a fixed learning rate $\eta$ for all possible training instances $\{\X,\w\}$ when generating the ground-truth sequence $\Z_i$.
Note that $\Z_i$ is the corresponding ground-truth sequence of $\hat{\Z}_i$. 

In the training objective for the $i$-th step, the transformer is required to predict the next token ${\Z_{i+1}}_{[:,-1]}:=(\bzero_d,0,\w_{i+1},1)$ given the $i$-th ground-truth intermediate sequence $\Z_i$. Finally, we predict the final ground-truth weight vector $\w^*$ with the final intermediate sequence $\Z_k$. The CoT \textbf{training} objective given a sample prompt $\X, \y$ then becomes:
\begin{equation}    \label{eqn: cot objective}
    \ell^{\mathrm{CoT}}(\X,\w^*; \V, \W) = \frac{1}{2}\sum_{i=0}^{k}\left\|f_{\mathrm{LSA}}(\Z_i)_{[:,-1]} - (\mbf{0}_{d},0,\w_{i+1},1)\right\|^2
\end{equation}
Here we denote $\w_{k+1}:=\w^*$ for clarity. Following \citet{zhang2023trained, nichani2024transformers, kim2024transformers, tian_scan_2023, chen2024training, gatmiry2024can}, we consider the gradient flow dynamics over the population loss of the CoT objective:
\begin{equation}\label{eqn: cot loss}
        \mathcal{L}^{\mathrm{CoT}}(\V, \W) = \E_{\x_i\sim\mathcal{N}(0,\bI_d),\w^*\sim \mathcal{N}(0,\bI_d)}\qty[\ell^{\mathrm{CoT}}(\X,\w^*; \V, \W)]
\end{equation}
For clarity, we write the expectation as $\E_{\X,\w^*}[\cdot]$. The following differential equation gives the gradient flow dynamics of the parameters:
$$\frac{\mathrm{d}\mbf{\theta}}{\mathrm{d}t}=-\nabla \mathcal{L}^{\mathrm{CoT}}(\mbf{\theta}),\text{  }\quad \mbf{\theta} := (\V, \W).$$
When measuring the performance after training, we apply the CoT \textbf{inference} procedure to generate $k$ intermediate sequences $\{\hat{\Z_i}\}_{i=1}^k$ and consider the final output token $f(\hat{\Z}_k)_{[:,-1]}$ by inputting the last generated sequence $\hat{\Z}_k$. The performance evaluation is measured on the error between the final output $f(\hat{\Z}_k)_{[:,-1]}$ and the ground-truth $\w^*$:
\begin{equation}\label{eqn: eval loss}
    \mathcal{L}^{\mathrm{Eval}}(\V, \W) = \frac{1}{2}\E_{\X,\w^*}\qty[\left\|f_{\mathrm{LSA}}(\hat{\Z}_k)_{[:,-1]} - (\mbf{0}_{d},0,\w^*,1)\right\|^2]
\end{equation}
When CoT prompting is not used ($k=0$), the evaluation loss $\mathcal{L}^{\mathrm{Eval}}$ is equivalent to $\mathcal{L}^{\mathrm{CoT}}$.
\section{Expressiveness Improvement with Chain of Thought}
\label{sec: expressiveness improvement of cot}
In this section, we theoretically explore the performance gap on our data model between transformers with CoT and those without. We first prove that a one-layer transformer without CoT can only implement a one-step GD and cannot recover the ground-truth, while it can near-exactly predict the ground-truth parameter with CoT by implementing multi-step GD.
\subsection{One-layer Transformer cannot recover ground-truth}
For the ICL linear regression task, the optimal prediction given by a one-layer linear transformer is equivalent to a single step of GD on the MSE objective of linear regression \citep{mahankali2023one}. What about our task on predicting the ground-truth weight vector $\w^*$ in context? The following theorem proves that the optimal solution is still a one-step GD solution. 
\begin{theorem}[Lower bound without CoT]
If the global minimizer of $\mathcal{L}^{\mathrm{Eval}}(\V,\W)$ is $(\V^*,\W^*)$, the corresponding one-layer transformer $f_{\mathrm{LSA}}(\Z_0)_{[:,-1]}$ implements one step GD on a linear model with some learning rate $\eta^* = \frac{n}{n+d+1}$ and the transformer outputs $(\bzero_d, 0, \frac{\eta^*}n \X\y^\top, 1)$.
\label{main thm: lower bound for tf without cot}
\end{theorem}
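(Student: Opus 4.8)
Since $k=0$ we have $\mathcal{L}^{\mathrm{Eval}}(\V,\W)=\tfrac12\,\E_{\X,\w^*}\big\|f_{\mathrm{LSA}}(\Z_0)_{[:,-1]}-(\bzero_d,0,\w^*,1)\big\|^2$. The plan is: (i) expand $f_{\mathrm{LSA}}(\Z_0)_{[:,-1]}$ in the $2\times 2$ block structure of $\V,\W$ and isolate the few parameters that enter it; (ii) reduce the minimization to a convex quadratic in an effective $d\times d$ matrix; (iii) solve that quadratic using two Gaussian moment identities.

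For (i): because $\w_0=\bzero_d$, the last column of $\Z_0$ is $(\bzero_d,0,\bzero_d,1)^\top$, so $\W\Z_{0,[:,-1]}$ equals the last column of $\W$, and after left-multiplying by $\Z_0^\top$ only $\W_{14},w_{24},w_{44}$ survive. A direct computation gives
\[
f_{\mathrm{LSA}}(\Z_0)_{[:,-1]}=(\bzero_d,0,\bzero_d,1)^\top+\V\,u,\qquad u=\tfrac1n\big(\X\X^\top\W_{14}+w_{24}\X\y^\top;\ \y\X^\top\W_{14}+w_{24}\|\y\|^2;\ \bzero_d;\ w_{44}\big).
\]
Since $u_{[d+2:2d+1]}=\bzero_d$, the $i$-th output block depends only on the $i$-th row-block of $\V$; and writing $A:=\tfrac1n\X\X^\top$ and using $\X\y^\top=\X\X^\top\w^*$, the weight block is $\hat\w=\V_{31}A\W_{14}+w_{24}\V_{31}A\w^*+({\w^*}^\top A\W_{14})\V_{32}+w_{24}({\w^*}^\top A\w^*)\V_{32}+\tfrac{w_{44}}{n}\V_{34}$.

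For (ii): the row-blocks $1,2,4$ of $\V$ only enter the nonnegative loss terms for the non-weight output coordinates, which are driven to $0$ by zeroing those row-blocks; hence at any global minimizer the non-weight output coordinates equal their targets $\bzero_d,0,1$, and the remaining problem is $\min\E\|\hat\w-\w^*\|^2$ over $\V_{31},\V_{32},\V_{34},\W_{14},w_{24},w_{44}$. Grouping $\hat\w$ by parity in $\w^*$ at fixed $\X$, the odd part is $MA\w^*$ with $M:=w_{24}\V_{31}+\V_{32}\W_{14}^\top\in\R^{d\times d}$ and the even part collects the $\w^*$-free and quadratic-in-$\w^*$ terms; since $\w^*\sim\mathcal N(0,\bI_d)$ is symmetric and independent of $\X$, these two pieces are $L^2$-orthogonal, so $\E\|\hat\w-\w^*\|^2=\E\|MA\w^*-\w^*\|^2+\E\|\text{even}\|^2$. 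Taking $w_{24}=1,\V_{31}=M,\V_{32}=\bzero,\W_{14}=\bzero,w_{44}=0$ realizes any prescribed $M$ with vanishing even part, so $\inf\mathcal L^{\mathrm{Eval}}=\tfrac12\min_M\E\|MA\w^*-\w^*\|^2$, and by strict convexity in $M$ every global minimizer has even part $\equiv0$ and $M$ equal to the unique minimizer. For (iii): with $\w^*\perp\X$, $\E\|MA\w^*-\w^*\|^2=\Tr\!\big(M^\top M\,\E[A^2]\big)-2\Tr\!\big(M\,\E[A]\big)+d$; the identities $\E[A]=\bI_d$ and $\E[A^2]=\tfrac{n+d+1}{n}\bI_d$ (the latter from $\E[\|\x\|^2\x\x^\top]=(d+2)\bI_d$ and $\E[\x_i\x_i^\top\x_j\x_j^\top]=\bI_d$ for $i\neq j$) reduce this to $\tfrac{n+d+1}{n}\|M\|_F^2-2\Tr(M)+d$, minimized at $M=\tfrac{n}{n+d+1}\bI_d=:\eta^*\bI_d$. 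Thus $\hat\w=\eta^*A\w^*=\tfrac{\eta^*}{n}\X\X^\top\w^*=\tfrac{\eta^*}{n}\X\y^\top$, which is exactly one gradient step on $\tfrac1{2n}\|\X^\top\w-\y^\top\|^2$ from $\w_0=\bzero_d$ with step size $\eta^*=\tfrac{n}{n+d+1}$, and the output is $(\bzero_d,0,\tfrac{\eta^*}{n}\X\y^\top,1)$.

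I expect the main obstacle to be the bookkeeping in steps (i)–(ii): carefully tracking the block structure of $\V,\W$ to confirm that only $\V_{31},\V_{32},\V_{34},\W_{14},w_{24},w_{44}$ matter, that the non-weight output coordinates decouple and must hit their targets at any minimizer, and that the parity argument legitimately lets one discard the even-in-$\w^*$ part; the moment computation in step (iii) is routine.
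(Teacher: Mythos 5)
Your proof is correct, but it takes a genuinely different route from the paper's. The paper adopts the ``target-substitution'' trick of \citet{mahankali2023one}: it posits the target $\frac{\eta^*}{n}\X\y^\top$ in advance, replaces $\w^*$ by this quantity in $\mathcal L^{\mathrm{Eval}}$, and then verifies that the two loss functions have identical gradients (so they differ by a constant) by matching the differential term-by-term in each parameter block $\V_{31},\V_{32},\V_{34},\W_{14},w_{24},w_{44}$, invoking the Gaussian symmetries $\E[\w^*]=0$, $\E[\w^*{\w^*}^\top]=\bI$, $\E[\X\X^\top]=n\bI$, $\E[(\X\X^\top)^2]=n(n+d+1)\bI$ in each case. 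Since the substituted loss is manifestly minimized to zero by the explicit construction in \eqref{eqn: one-step construction}, the conclusion follows. Your argument instead constructs the minimizer from scratch: you show that the three non-weight output blocks decouple and can be zeroed independently, then observe that the weight output splits cleanly into a $\w^*$-odd part $M A\w^*$ with $M:=w_{24}\V_{31}+\V_{32}\W_{14}^\top$ and a $\w^*$-even remainder that is $L^2$-orthogonal, reducing the whole problem to the strictly convex scalarized quadratic $\tfrac{n+d+1}{n}\|M\|_F^2-2\Tr(M)+d$ over $M\in\R^{d\times d}$ whose unique minimizer is $\eta^*\bI$. The paper's route is shorter once you already know the answer; yours is arguably more transparent (it \emph{derives} $\eta^*$ rather than verifying it) and makes uniqueness of the output immediate from strict convexity and the $\E\|\text{even}\|^2=0$ condition. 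One small point worth spelling out if you formalize this: for the even part to vanish a.s.\ while $M=\eta^*\bI$ has full rank, one is forced to $\V_{32}=\bzero$ (since $w_{24}({\w^*}^\top A\w^*)\V_{32}$ cannot be cancelled for generic $\w^*$ unless $w_{24}\V_{32}=\bzero$, and $w_{24}=0$ would make $M$ rank one); this does not affect your conclusion about the output but closes the loop on feasibility.
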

We briefly present the high-level intuitions in the proof and the detailed proof is deferred to \Cref{appendix subsec: proof of lower bound}. We use a similar technique in \citet{mahankali2023one} when proving the optimality of 
one-step GD in the ICL task. The key strategy of the proof is to replace $(\bzero_d, 0, \w^*, 1)$ in the evaluation loss $\mathcal{L}^{\mathrm{Eval}}(\V,\W)$ (\Cref{eqn: eval loss}) with $(\bzero_d, 0, \frac{\eta^*} n \X\y^\top, 1)$ in the following form. 
$$\mathcal{L}^{\mathrm{Eval}}(\V, \W) = \frac{1}2\E\qty[\left\|f_{\mathrm{LSA}}(\Z_0)_{[:,-1]}  - \qty(\mbf{0}_{d},0,{\frac{\eta^*}{n} \X\y^\top}, 1)\right\|^2] + C$$
In order to prove this equation above, we show the gradient of the original loss \Cref{eqn: eval loss} and this formula are identical.
We first obtain the closed-form formula of the expected gradient for both sides with regard to $\X, \w^*$. Then we use the symmetric property of the distribution of $\X, \w^*$ to simplify the gradient expressions, and eventually prove them equal.

The equivalent form of loss indicates that the evaluation loss only depends on the $\ell_2$ distance between the output of the linear self-attention module and $\qty(\mbf{0}_{d},0,{\frac{\eta^*} n \X\y^\top}, 1)$. 
Therefore, any $(\V,\W)$ is a global minimizer of this loss function if and only if the output of $f_{\mathrm{LSA}}(\Z_k)_{[:,-1]}$ is $(\bzero_d, 0, \frac{\eta^*} n \X\y^\top, 1)$. Meanwhile, one can assign
\begin{align}\label{eqn: one-step construction}
    \V^* =\begin{bmatrix}
        \bzero&\bzero&\bzero&\bzero\\
        \bzero&\bzero&\bzero&\bzero\\
        -\eta^*\bI&\bzero&\bzero&\bzero\\
        \bzero&\bzero&\bzero&\bzero
    \end{bmatrix},
    \W^* =\begin{bmatrix}
        \bzero&\bzero&\bI&\bzero\\
        \bzero&0&\bzero&-1\\
        \bzero&\bzero&\bzero&\bzero\\
        \bzero&0&\bzero&0
    \end{bmatrix}
\end{align}
and the one-layer transformer achieves the optimal solution, which concludes the proof.  

Is a one-step gradient solution good enough? Most of the previous ICL work \citet{zhang2023trained,ahn2023linear,gatmiry2024can} consider the number of examples $n\to +\infty$ when $d$ is fixed. In this case, the one-step GD solution can perfectly find the ground-truth weight vector $\w^*$. 
However, a simple corollary of this theorem indicates that the one-step solution has a non-negligible error when there are limited samples, e.g. $n=\widetilde{\Theta}(d)$. This number of examples $n$ is required to guarantee the reconstruction of $\w^*\in \R^d$.
\begin{corollary}
\label{main corollary: significant error for 1-step}
    For any parameters $(\V,\W)$ in the one-layer transformer, $\mathcal{L}^{\mathrm{Eval}}(\V, \W)\ge \Theta\qty(\frac{d^2}{n}).$
    Moreover, if $n = \Tilde{\Theta}(d), \mathcal{L}^{\mathrm{Eval}}(\V, \W)=\tilde{\Theta}(d)\xrightarrow{d\to +\infty}+\infty$.
\end{corollary}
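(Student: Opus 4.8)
The plan is to bypass any re-optimization and instead read the bound off the \emph{optimal value} of $\mathcal{L}^{\mathrm{Eval}}$, which is already pinned down by \Cref{main thm: lower bound for tf without cot}. Its proof establishes the identity
\[
\mathcal{L}^{\mathrm{Eval}}(\V,\W) = \frac12\,\E_{\X,\w^*}\qty[\norm{f_{\mathrm{LSA}}(\Z_0)_{[:,-1]} - \qty(\bzero_d,0,\tfrac{\eta^*}{n}\X\y^\top,1)}^2] + C,
\qquad C := \frac12\,\E_{\X,\w^*}\norm{\tfrac{\eta^*}{n}\X\y^\top - \w^*}^2,
\]
with $\eta^* = \tfrac{n}{n+d+1}$. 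Since the squared-error term is nonnegative, $\mathcal{L}^{\mathrm{Eval}}(\V,\W)\ge C$ for \emph{every} $(\V,\W)$, so it suffices to evaluate $C$ in closed form and then read off its order of magnitude.

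For the closed form I would first invoke the noiseless model \Cref{eq: data distribution}, i.e.\ $\y^\top = \X^\top\w^*$, to write $\tfrac{\eta^*}{n}\X\y^\top = \eta^*\widehat{\bSigma}\,\w^*$ with $\widehat{\bSigma}:=\tfrac1n\X\X^\top$ the empirical covariance. Taking $\E_{\w^*}$ first and using $\E[\w^*\w^{*\top}]=\bI_d$ turns the vector norm into a Frobenius norm,
\[
C = \frac12\,\E_\X\norm{\eta^*\widehat{\bSigma}-\bI_d}_F^2
= \frac12\,\E_\X\qty[(\eta^*)^2\Tr(\widehat{\bSigma}^2) - 2\eta^*\Tr(\widehat{\bSigma}) + d].
\]
Then I would compute the two moments: $\E[\Tr(\widehat{\bSigma})]=\tfrac1n\sum_i\E\norm{\x_i}^2=d$, and, expanding $\Tr(\widehat{\bSigma}^2)=\tfrac1{n^2}\sum_{i,j}(\x_i^\top\x_j)^2$ and using the Gaussian identities $\E\norm{\x_i}^4 = d(d+2)$ and $\E[(\x_i^\top\x_j)^2]=d$ for $i\ne j$, one gets $\E[\Tr(\widehat{\bSigma}^2)] = \tfrac{d(n+d+1)}{n}$. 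Substituting these together with $\eta^*=\tfrac{n}{n+d+1}$ and simplifying (the $(\eta^*)^2\Tr(\widehat{\bSigma}^2)$ and $-2\eta^*\Tr(\widehat{\bSigma})$ terms collapse) yields the exact value $\mathcal{L}^{\mathrm{Eval}}(\V,\W)\ge C = \tfrac{d(d+1)}{2(n+d+1)}$.

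The asymptotic claims then follow by inspection: $C = \Omega\!\big(\tfrac{d^2}{n+d}\big)$, which is $\Theta(d^2/n)$ in the relevant regime $n = \Omega(d)$ (at least $n\gtrsim d$ samples are needed to identify $\w^*\in\R^d$); and if $n=\tilde\Theta(d)$ then $n+d+1=\tilde\Theta(d)$, hence $C = \tilde\Theta(d)\to\infty$ as $d\to\infty$. I do not anticipate a real obstacle here: the only mildly delicate step is the fourth-moment computation of $\E[\Tr(\widehat{\bSigma}^2)]$ (keeping the $i=j$ and $i\ne j$ contributions separate), and one should be careful to invoke the equivalent-loss identity proved for \Cref{main thm: lower bound for tf without cot} rather than re-deriving optimality of the one-step construction; everything else is elementary algebra.
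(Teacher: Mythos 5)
Your proposal is correct and follows essentially the same route as the paper: both lower-bound $\mathcal{L}^{\mathrm{Eval}}$ by the optimal value characterized in \Cref{main thm: lower bound for tf without cot} (you read this off as the constant $C$ in the equivalent-loss decomposition, the paper plugs in the minimizer $(\V^*,\W^*)$ directly — logically the same step), and then compute that value via the second moment of the Wishart matrix $\X\X^\top$. Your simplification to the closed form $C=\tfrac{d(d+1)}{2(n+d+1)}$ matches the paper's expression $\tfrac12\bigl(d-2\eta^*d+\tfrac{(\eta^*)^2}{n}(n+d+1)d\bigr)$, and the asymptotic readings are identical.
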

\begin{proof}
    By \Cref{main thm: lower bound for tf without cot}, we directly calculate the evaluation loss on the global optimum:
    $$\mathcal{L}^{\mathrm{Eval}}(\V, \W)\ge \frac{1}{2}\E_{\X,\w^*}\left\|\frac{\eta^*}{n} \X\X^\top \w^*-\w^*\right\|^2=\frac{1}{2}\E_{\X}\trace{\qty(\bI-\frac{\eta^*}{n} \X\X^\top)^2}$$
    since $\E_{\w^*}\qty[\w^*{\w^*}^\top]=\bI$. Apply $\E\qty[\X\X^\top]=n\bI$ and $\E\qty[(\X\X^\top)^2]=n(n+d+1)\bI$,
    $$\frac{1}{2}\E_{\X}\trace{\qty(\bI-\frac{\eta^*}{n} \X\X^\top)^2}=\frac{1}{2}\qty(d-2\eta^* d+\frac{{\eta^*}^2}{n}(n+d+1)d)=\Theta\qty(\frac{d^2}{n})$$
    and we finish the proof by substituting $n$ with $\Tilde{\Theta}(d)$.
\end{proof}

\subsection{One-layer Transformer with CoT Can Implement Multi-step GD}
The previous subsection shows that the one-step solution by the one-layer transformer without CoT is not the endgame. Nevertheless, CoT can become the savior for this simple transformer because it enables the transformer to generate several intermediate computation steps to improve the final performance. The following theorem shows that with the reinforcement of CoT, 
there exists a one-layer transformer that can perform multi-step GD using intermediate generations. 
We show that $\Theta(\log d)$ steps of CoT can remarkably improve the performance, reducing the error from ${\Theta}(\frac{d}{\poly\log d})$ to $O(1/\poly d)$. With constant learning rate, $\Theta(\log d)$ steps of GD is also necessary to reconstruct $\w^*$ accurately. The proof is deferred to \Cref{appendix subsec: proof of construction of cot}.
\begin{theorem}[Informal]\label{main thm: construction for tf with cot} 
There exists $\V^*$ and $\W^*$ s.t. $f_{\mathrm{LSA}}(\Z_k)_{[:,-1]}$ outputs $(\bzero_d,0,\w_k,1)$ where $\w_k:=\qty(\bI-(\bI-\frac{\eta}{n}\X\X^\top)^k)\w^*$ is the $k$-step GD solution with learning rate $\eta$ on a linear regression model. Moreover, if $n = \Tilde{\Omega}(d)$, $k=\Omega(\log d)$, $\eta\in (0.1,1)$, then the evaluation loss
\begin{equation}\label{eqn: evaluation upper bound with cot}
    \mathcal{L}^{\mathrm{Eval}}(\V^*, \W^*)=\frac{1}{2}\E_{\X,\w^*}\qty[\left\|\qty(\bI-\frac{\eta}{n}\X\X^\top)^{k+1}\w^*\right\|^2]\le O\qty(\frac{1}{\poly(d)})
\end{equation}
\end{theorem}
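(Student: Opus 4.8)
The plan is to split the claim into two independent parts: (a) exhibiting $\V^*,\W^*$ for which one application of $f_{\mathrm{LSA}}$ implements exactly one gradient-descent step, so that the CoT procedure realizes $k+1$ GD iterations and produces the stated loss identity; and (b) bounding that loss by $O(1/\poly(d))$ via a spectral estimate. For (a) I would take $\V^*,\W^*$ to be the one-step optimizer of \Cref{eqn: one-step construction} with the fixed CoT learning rate $\eta$ in place of the rate $\eta^*$ appearing there — i.e.\ $\W^*$ has $(1,3)$-block $\bI_d$ and $(2,4)$-block $-1$ and all other blocks zero, while $\V^*$ has $(3,1)$-block $-\eta\bI_d$ and all other blocks zero — and prove the one-step lemma: for any $\Z$ whose first $n$ columns are $(\x_j,y_j,\bzero_d,0)^\top$, whose later columns all vanish in rows $1,\dots,d+1$, and whose last column is $(\bzero_d,0,\w,1)^\top$, one has $f_{\mathrm{LSA}}(\Z;\V^*,\W^*)_{[:,-1]}=(\bzero_d,0,\w-\tfrac{\eta}{n}\X(\X^\top\w-\y^\top),1)^\top$. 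This is a direct block computation: $\W^*(\bzero_d,0,\w,1)^\top=(\w,-1,\bzero_d,0)^\top$, so the attention scores $\tfrac1n\Z^\top\W^*\Z_{[:,-1]}$ are $\tfrac1n(\x_j^\top\w-y_j)$ on the data columns and $0$ elsewhere; $\V^*$ maps each data column to $(\bzero_d,0,-\eta\x_j,0)^\top$ and kills every other column; summing and adding $\Z_{[:,-1]}$ gives the claimed output, which (using $\y^\top=\X^\top\w^*$) is one GD step with step size $\eta$. Since this output vanishes in rows $1,\dots,d+1$, induction shows the inference sequences coincide with the training sequences, $\hat\Z_i=\Z_i$, so the CoT trajectory is the GD recursion $\w_i-\w^*=A(\w_{i-1}-\w^*)$ with $A:=\bI_d-\tfrac{\eta}{n}\X\X^\top$; with $\w_0=\bzero_d$ this yields $\w_i=(\bI_d-A^i)\w^*$, the final step gives $\hat\w_{k+1}=\w_{k+1}=(\bI_d-A^{k+1})\w^*$, and hence $\mathcal{L}^{\mathrm{Eval}}(\V^*,\W^*)=\tfrac12\E_{\X,\w^*}\norm{A^{k+1}\w^*}^2$, matching \Cref{eqn: evaluation upper bound with cot}.

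For (b) I would first integrate out $\w^*$: since $\w^*\sim\N(0,\bI_d)$ is independent of $\X$ and $A$ is symmetric, $\E_{\w^*}\norm{A^{k+1}\w^*}^2=\Tr(A^{2(k+1)})=\sum_{i=1}^d(1-\eta\mu_i)^{2(k+1)}$ where $\mu_1\ge\dots\ge\mu_d\ge0$ are the eigenvalues of $\tfrac1n\X\X^\top$, so the loss is $\tfrac12\E_{\X}\sum_i(1-\eta\mu_i)^{2(k+1)}$. Then I would condition on whether the sample covariance concentrates. For $n=\Tilde{\Omega}(d)$ (say $n\ge Cd\log d$), standard non-asymptotic Gaussian-matrix bounds (Davidson--Szarek / Vershynin) give $\norm{\tfrac1n\X\X^\top-\bI_d}_{\mathrm{op}}\le\eps_d$ with $\eps_d=o(1)$ on an event of probability $\ge 1-d^{-10}$; there all $\mu_i\in[1-\eps_d,1+\eps_d]$, and a short case check on the endpoints of $\eta\in(0.1,1)$ shows $\max_i|1-\eta\mu_i|\le\rho:=0.9+\eps_d<1$ for large $d$, so $\sum_i(1-\eta\mu_i)^{2(k+1)}\le d\rho^{2(k+1)}$, which is $\le d^{-p}$ once $k=\Omega(\log d)$ with a large enough constant (depending on $p$ and $\rho$). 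On the complementary event (probability $\le d^{-10}$) I would use the crude bound $\sum_i(1-\eta\mu_i)^{2(k+1)}\le d(1+\norm{\tfrac1n\X\X^\top}_{\mathrm{op}})^{2(k+1)}$ together with Cauchy--Schwarz and the sub-exponential tail of $\norm{\tfrac1n\X\X^\top}_{\mathrm{op}}$ — whose moments of polylogarithmic order are $\poly(d)$ when $n=\Tilde{\Omega}(d)$ and $k=\Theta(\log d)$ — to conclude that this event contributes $\le d^{-5}\poly(d)=o(1/\poly(d))$. Adding the two contributions gives $\mathcal{L}^{\mathrm{Eval}}(\V^*,\W^*)=O(1/\poly(d))$.

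Since the construction is essentially \Cref{eqn: one-step construction} iterated, the only delicate point in part (a) is the observation that this particular $(\V^*,\W^*)$ writes only into the weight-vector block, so the ``$\star$'' entries produced during CoT inference are genuinely zero and the inference trajectory really does match the training trajectory. The main obstacle is therefore the rare-event term in part (b): where $\tfrac1n\X\X^\top$ deviates from $\bI_d$, $\norm{A}_{\mathrm{op}}$ can exceed $1$ and is raised to the $k$-dependent power $2(k+1)$, so one must carefully coordinate the confidence level of the spectral concentration, the number of CoT steps $k=\Theta(\log d)$, and the target polynomial degree $p$ so that the tail term stays $o(1/\poly(d))$. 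Finally, I would note that the same spectral picture yields the matching necessity quoted before the theorem: a constant fraction of the $\mu_i$ cluster near $1$, so $\Tr(A^{2j})\gtrsim d(1-\eta-o(1))^{2j}$, which forces $j=\Omega(\log d)$ steps to drive the error below $1/\poly(d)$ whenever $\eta$ is bounded away from $1$.
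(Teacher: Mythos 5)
Your proof is correct, and part (a) is essentially the same as the paper's: the same choice of $\V^*,\W^*$ (i.e.\ \Cref{eqn: one-step construction} with $\eta$ in place of $\eta^*$), followed by an induction showing that each application of $f_{\mathrm{LSA}}$ implements one GD step. You are in fact slightly more careful than the paper here: you explicitly verify that the $\star$ entries produced at inference time are genuinely zero (because $\V^*$ writes only into the weight block and the residual copies $(\bzero_d,0,\cdot,1)$ forward), so the inference trajectory $\hat\Z_i$ literally equals the training trajectory $\Z_i$. The paper's proof runs the induction directly on $\Z_i$ and silently uses this equality, so your extra observation is worth having. Incidentally, you also correct a small slip in the theorem statement by landing on $\hat\w_{k+1}=(\bI-A^{k+1})\w^*$ rather than $\w_k$.

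Part (b) is where you depart from the paper. The paper first integrates out $\w^*$ to get $\tfrac12\trace\,\E_\X\big[(\bI-\eta\S)^{2k+2}\big]$, and then invokes a purpose-built concentration lemma (\Cref{lemma: concentration 4}) which expands $(\bI-\eta\S)^{2k+2}$ binomially around $\dS=\S-\bI$, keeps the leading $(1-\eta)^{2k+2}\bI$ term, and controls all higher-order terms via a truncation-plus-tail-integral argument, yielding $\E[(\bI-\eta\S)^{2k+2}]=(1-\eta)^{2k+2}(1+\delta)\bI$ with $\delta=O(k^2d/n)$ and hence a bound $\le d(1-\eta)^{2(k+1)}$. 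You instead split the expectation of the trace directly on the concentration event $\{\norm{\S-\bI}_{op}\le\eps_d\}$: on the good event you bound each eigenvalue factor by $\rho=(1-\eta)+\eta\eps_d<1$ and get $d\rho^{2(k+1)}$; on the bad event you invoke Cauchy--Schwarz with the sub-exponential moment bound on $\norm{\S}_{op}^{O(\log d)}$. Both arguments are sound and both rest on the same operator-norm concentration of the Wishart matrix; the trade-off is that your version is more elementary and self-contained (no need for the binomial-expansion bookkeeping in the paper's lemma), while the paper's version yields a cleaner multiplicative-form estimate $\E[(\bI-\eta\S)^{m}]\approx(1-\eta)^m\bI$ that it then reuses repeatedly in the dynamics analysis (e.g.\ in \Cref{lemma: gradients of the reduced model}). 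For this theorem alone your route is perfectly adequate and arguably simpler; the paper's lemma is built for heavier downstream use. One minor point to spell out if you write this up: you need $\E[\trace(A^{2(k+1)})^2]=\poly(d)$, which follows by bounding $\trace(A^{2(k+1)})\le d\,\max(1,\norm{\S}_{op})^{2(k+1)}$ and then controlling $\E[\norm{\S}_{op}^{O(\log d)}]$ by the sub-exponential tail of $\norm{\S}_{op}$ when $n=\tilde\Omega(d)$; this is straightforward but should be stated rather than gestured at.
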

With the one-step GD solution in \Cref{main thm: lower bound for tf without cot}, the proof is straightforward: we assign the parameters $(\V,\W)$ in the same form of \Cref{eqn: one-step construction}, with the $\eta^*$ replaced by $\eta$. However, now the transformer is allowed to generate $k$ steps before reaching the final output. 
We can inductively calculate the $i$-th step of generation, showing that the output is exactly the $i$-th gradient step:
$$f_{\mathrm{LSA}}(\Z_{i-1})_{[:,-1]} = (\bzero_d, 0,\w_i,1), \quad i=1,2,...,k+1$$
After $k$+1 steps, we have the final output $\qty(\bI-(\bI-\frac{\eta}{n}\X\X^\top)^{k+1})\w^*$ by induction
and the evaluation loss becomes \Cref{eqn: evaluation upper bound with cot}. By \Cref{lemma: concentration 4}, the final loss is upper bounded by  $O\qty(\frac{1}{\poly(d)})$. This is strictly better than a one-step GD solution by comparing with \Cref{main corollary: significant error for 1-step}.

Now we theoretically display the expressivity improvement of transformers brought by CoT. In the following sections, we will further prove that \textbf{this separation is learnable} simply by gradient flow.

\section{Gradient Dynamics over Chain of Thought}
\label{sec: global convergence}
In this section, we go beyond the construction and prove our convergence result on the CoT objective. We show that the final solution found by gradient flow is approximately our construction in \Cref{main thm: construction for tf with cot}, which is significantly better than the one-step gradient descent solution without CoT. 

\subsection{Main Results}
According to our construction in \Cref{main thm: construction for tf with cot}, we use the following specific initialization to zero out the irrelevant blocks while keeping the essential blocks $\W_{13},\V_{31},$ and $w_{24}$.
\begin{assumption}[Initialization]
    Let $\sigma>0$ be a parameter. We assume the initialization of the parameters satisfies that
    \begin{align*}
    \V =\begin{bmatrix}
        \bzero&\bzero&\bzero&\bzero\\
        \bzero&0&\bzero&0\\
        \V_{31}(0)&\bzero&\bzero&\bzero\\
        \bzero&0&\bzero&0
    \end{bmatrix},
    \W =\begin{bmatrix}
        \bzero&\bzero&\W_{13}(0)&\bzero\\
        \bzero&0&\bzero&w_{24}\\
        \bzero&\bzero&\bzero&\bzero\\
        \bzero&0&\bzero&0
    \end{bmatrix}
\end{align*}
Here $\W_{13}(0)=\sum_{i=1}^d \lambda^{\W}_i\bu_i\bu_i^\top$ and $\V_{31}(0)=\sum_{i=1}^d \lambda^{\V}_i\bu_i\bu_i^\top$ are symmetric and simultaneously diagonalizable, $\lambda_i^{\V}\le-\sigma, \lambda_i^{\W}\in[\sigma, \frac{1}2]$. Further, we fix $w_{24}=-1$ for all $t>0$.
\label{assumption: initialization}
\end{assumption}
This initialization follows \citet{chen2024training} by assuming $\V_{31}$ and $\W_{13}$ share the same set of eigenvectors. It is close to the particular symmetric random initialization schemes discussed in \citet{zhang2023trained} with a scaling factor $\sigma$. We use this specific initialization to zero out the irrelevant blocks, and we fix $w_{24}=-1$ to break the homogeneity of the model and avoid multiple global minimizers. Those simplification facilitates the analysis of the complex dynamical system.

Now we prove that under appropriate initialization, gradient flow will nearly converge to the global minimizer. We provide a proof sketch in the next subsection. See  \Cref{appendix: proof of main thm} for details.
\begin{theorem}[Informal, Global Convergence]
    Suppose $n=\tilde{\Omega}(d)$, $\eta\in(0.1,0.9)$, $k = \Theta(\log d)$. Under \Cref{assumption: initialization} with $\sigma=\Theta(1)$, if we run gradient flow on the population loss in \Cref{eqn: cot objective}, then after time $t=O\qty(\log d+\log\frac
    1\epsilon)$, we have $\mathcal{L}^\mathrm{CoT}(t)\le \epsilon$ for any $\epsilon\in\qty(\frac{1}{\poly(d)},1)$.
    \label{informal main thm: global convergence}
\end{theorem}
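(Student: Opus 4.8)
The plan is to analyze the gradient flow in terms of the scalar/operator quantities that survive the zero-initialization structure, and to run a two-phase argument: a first phase where the essential blocks $\W_{13}, \V_{31}$ race toward (a rescaled version of) the one-step-GD construction of \Cref{main thm: lower bound for tf without cot}, followed by a second phase of local convergence toward the multi-step-GD construction of \Cref{main thm: construction for tf with cot}. The first task is a \emph{structural lemma}: under \Cref{assumption: initialization} the off-diagonal blocks stay zero along the trajectory, so the dynamics reduce to an ODE on $(\W_{13}, \V_{31})$ (with $w_{24}\equiv -1$ fixed). This is proved by differentiating $\mathcal{L}^{\mathrm{CoT}}$, using that $\Z_i$ has zeros in its first $d+1$ rows of the weight-token columns, and checking that the gradient with respect to every ``irrelevant'' block vanishes identically whenever all irrelevant blocks are zero — so that subspace is invariant. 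Because $\W_{13}(0)$ and $\V_{31}(0)$ are simultaneously diagonalizable with a common eigenbasis $\{\bu_i\}$, and the population loss is rotationally symmetric in $\w^*$ and in the columns of $\X$, this common eigenbasis is also preserved; we can then decouple the matrix ODE into $d$ \emph{scalar} ODEs in the eigenvalue pairs $(\lambda^{\W}_i(t), \lambda^{\V}_i(t))$ — in fact, by symmetry, all $i$ obey the same planar system up to the spectral statistics of $\X\X^\top$.

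\textbf{The core computation} is then to write out this planar ODE explicitly. One expands $f_{\mathrm{LSA}}(\Z_i)_{[:,-1]}$ in the reduced coordinates: the weight-prediction block of the output at step $i$ is a matrix polynomial in $M := \tfrac{1}{n}\X\X^\top$ and in the product $P(t) := \V_{31}(t)\W_{13}(t)$ applied to $\w^*$, while the target is $\w_{i+1} = (\bI - (\bI-\eta M)^{i+1})\w^*$. Taking expectations over $\w^*\sim\mathcal{N}(0,\bI)$ turns every term into a trace in $M$, and then $\E_\X[\cdot]$ over the Wishart-type statistics of $M$ (moments $\E[M]=\bI$, $\E[M^2]=(1+\tfrac{d+1}{n})\bI$, etc., as already used in \Cref{main corollary: significant error for 1-step}) collapses everything to a finite-dimensional gradient system. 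The upshot should be that the population loss is, up to lower-order terms in $d/n$, a function of the scalar $g(t) := \lambda^{\W}(t)\,\lambda^{\V}(t)$ (the ``effective learning rate per CoT step''), and the gradient flow drives $g(t)$ from its small initial value $g(0) = \lambda^{\W}_i(0)\lambda^{\V}_i(0) = \Theta(\sigma^2)$ toward the target value $-\eta/1$... more precisely toward the value that makes the reduced output equal the $k$-step GD iterate; I would track $g(t)$ via a scalar Riccati-like ODE $\dot g = -\,c(g)\,(g - g^\star) + (\text{error})$ with $c(g)>0$ in the relevant range, giving exponential approach after a short warm-up, and separately control the ``shape'' variable $\lambda^{\W}/|\lambda^{\V}|$ to show it stays bounded away from $0$ and $\infty$ so that $g$ is not driven to a degenerate fixed point.

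\textbf{The two-phase structure} enters because the loss $\mathcal{L}^{\mathrm{CoT}}$ sums $k+1$ terms, and the $i$-th term is minimized when the single-step map $\w\mapsto$ (output) equals $\w\mapsto \w - \eta M\w$; so \emph{all} CoT targets are simultaneously fit by the \emph{same} one-step construction \Cref{eqn: one-step construction}. Thus Phase 1 is: show the dynamics first converge to a neighborhood of that shared one-step minimizer — this is where I would invoke a landscape/PL-type inequality on the reduced system, or simply the monotone scalar ODE above, to get $\mathcal{L}^{\mathrm{CoT}}(t)$ small in time $O(\log d + \log\tfrac1\epsilon)$. Phase 2 is a \emph{local} argument near that minimizer: linearize, check the Hessian of the reduced loss is positive definite (using that $w_{24}$ is pinned, which kills the scaling degeneracy), and conclude linear convergence to the fixed point, which by \Cref{main thm: construction for tf with cot} realizes $k$-step GD and hence $\mathcal{L}^{\mathrm{Eval}} = O(1/\poly d)$. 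The initialization constraints $\lambda^{\V}_i \le -\sigma$ and $\lambda^{\W}_i\in[\sigma,\tfrac12]$ are exactly what guarantees $g(0)<0$ with the right sign and magnitude so the flow starts in the basin; $\sigma=\Theta(1)$ keeps the warm-up time $O(\log d)$ rather than $\poly(d)$.

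\textbf{The main obstacle} I anticipate is the honest bookkeeping of the population gradient: $f_{\mathrm{LSA}}(\Z_i)$ is quadratic in $\Z_i$, the CoT targets $\w_{i+1}$ are degree-$(i{+}1)$ polynomials in $M$, and squaring-and-differentiating produces high-degree matrix polynomials in $M$ whose $\X$-expectations are not just $\bI$-multiples but carry $O(d/n)$ corrections that must be shown to be genuinely negligible (this is where $n = \tilde\Omega(d)$ is used) and, crucially, \emph{sign-controlled} so they don't flip the attractivity of the fixed point. A secondary difficulty is ensuring the off-diagonal-blocks-stay-zero lemma is actually true for \emph{all} $k+1$ CoT terms simultaneously — each term could in principle generate a nonzero gradient into, say, $\V_{33}$ or $w_{44}$, and verifying the cancellation requires carefully using both the block-sparsity of $\Z_i$ and the fact that the target's first $d{+}1$ coordinates are zero. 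I would isolate these two points as standalone lemmas (the invariance lemma and the ``effective scalar dynamics'' lemma) and relegate the moment computations to the appendix, keeping the main-text sketch at the level of the planar ODE for $g(t)$.
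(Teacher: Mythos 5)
Your structural lemma (irrelevant blocks stay zero under Assumption 3.1) and the decomposition into the shared eigenbasis of $\W_{13},\V_{31}$ are exactly the paper's first steps, and the two-phase plan roughly mirrors the paper's Stage~1/Stage~2 decomposition. But the core analytic simplification you propose---reducing to a one-dimensional Riccati ODE for $g(t):=\lambda^{\W}(t)\lambda^{\V}(t)$---does not go through, and this is not a repairable bookkeeping issue but a structural one.

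The reason is visible already in the reduced loss. After the structural reduction, the residual at the $i$-th CoT step has the form
\begin{equation*}
\bigl(\widetilde{\V}\S\widetilde{\W}+\eta\S\bigr)\,\w_i\;-\;\bigl(\widetilde{\V}+\eta\bI\bigr)\S\,\w^*,
\end{equation*}
where $\S=\tfrac1n\X\X^\top$, $\widetilde{\W}=\W_{13}$, $\widetilde{\V}=\V_{31}$. The first bracket only cares about the \emph{product} $\widetilde{\V}\widetilde{\W}$, but the second bracket penalizes $\widetilde{\V}$ \emph{by itself}: it vanishes only at $\widetilde{\V}=-\eta\bI$. So the minimizer is the specific pair $\widetilde{\V}=-\eta\bI,\ \widetilde{\W}=\bI$, not the one-parameter family $\widetilde{\V}\widetilde{\W}=-\eta\bI$, and the population loss is genuinely a function of two variables per eigendirection, not one. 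Concretely, the eigenvalue flow the paper derives (its informal Lemma~4.1) reads
\begin{align*}
\frac{\mathrm{d}\lambda_j^{\widetilde{\V}}}{\mathrm{d}t}&=-\Bigl[(k{+}1)(1-\lambda_j^{\widetilde{\W}})^2+\tfrac{2}{\eta}\lambda_j^{\widetilde{\W}}(1-\lambda_j^{\widetilde{\W}})+\tfrac{1}{\eta(2-\eta)}(\lambda_j^{\widetilde{\W}})^2\Bigr]\lambda_j^{\widetilde{\V}}+\tfrac{1-\eta}{2-\eta}\lambda_j^{\widetilde{\W}}-1+\delta_j^{\widetilde{\V}},\\
\frac{\mathrm{d}\lambda_j^{\widetilde{\W}}}{\mathrm{d}t}&=\Bigl[k{+}1-\tfrac1\eta\Bigr](\lambda_j^{\widetilde{\V}})^2(1-\lambda_j^{\widetilde{\W}})+\tfrac{1-\eta}{\eta(2-\eta)}(\lambda_j^{\widetilde{\V}})^2\lambda_j^{\widetilde{\W}}+\tfrac{1-\eta}{2-\eta}\lambda_j^{\widetilde{\V}}-\delta_j^{\widetilde{\W}},
\end{align*}
and no rewriting of this in terms of $g=\lambda^{\V}\lambda^{\W}$ closes to a 1-D ODE: $\dot g$ contains $\lambda^{\V}$ and $\lambda^{\W}$ separately through the $-1$ constant term and the $\tfrac{1-\eta}{2-\eta}\lambda^{\W}$, $\tfrac{1-\eta}{2-\eta}\lambda^{\V}$ terms. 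Your ``shape'' variable $\lambda^{\W}/|\lambda^{\V}|$ is therefore not merely a nondegeneracy safeguard; it is a genuinely coupled state. The paper's Stage~1 is precisely a two-phase planar analysis ($\lambda^{\W}_j\to 1{-}o(1)$ first, then $\lambda^{\V}_j\to -\eta$), and no 1-D monotone/Riccati surrogate substitutes for it. A secondary sign slip: under Assumption~3.1, $\lambda^{\V}\le-\sigma<0$ and $\lambda^{\W}\ge\sigma>0$, so $g(0)=-\Theta(\sigma^2)$, not $+\Theta(\sigma^2)$.

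Your Phase~2 (``linearize and check the Hessian is positive definite'') also underestimates the real obstacle, which is the cross-direction interaction terms $\delta_j^{\widetilde{\V}},\delta_j^{\widetilde{\W}}$. Stage~1 only gets each direction within $O(1/\mathrm{poly}\log d)$ of optimum, so the aggregate squared error is still $\widetilde{\Theta}(d)$; a bare Hessian PD claim would have to be quantified carefully in $d$, and the $O(1/\log^2 d)$ interaction terms are not a priori dominated at that scale. The paper makes the local argument go through by proving a structural decomposition of $\delta_j^{\widetilde{\V}},\delta_j^{\widetilde{\W}}$: each interaction term is shown to be a small multiple of individual residuals such as $(\lambda_j^{\widetilde{\V}}+\eta)$, $(1-\lambda_j^{\widetilde{\W}})$, or trace-averaged residuals, so that summing over $j$ yields a PL-type inequality
\begin{equation*}
\frac{\mathrm{d}}{\mathrm{d}t}\Bigl(\tr\bigl[(\Lbd^{\widetilde{\V}}{+}\eta\bI)^2\bigr]+\tr\bigl[(\bI{-}\Lbd^{\widetilde{\W}})^2\bigr]\Bigr)
\;\le\;-\tfrac{1}{2\eta(2-\eta)}\tr\bigl[(\Lbd^{\widetilde{\V}}{+}\eta\bI)^2\bigr]-\tfrac{\eta^2}{2}(k{+}1)\tr\bigl[(\bI{-}\Lbd^{\widetilde{\W}})^2\bigr]+O\bigl((1{-}\eta)^k\bigr),
\end{equation*}
with constants independent of $d$. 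That structural bound (not mere smallness of $\|\delta\|$) is what makes the $O(\log d+\log\tfrac1\epsilon)$ convergence rate dimension-free. Without it, your Phase~2 is not established. The Wishart moment corrections you flag as ``$O(d/n)$ corrections to be shown negligible'' are precisely these interaction terms, and showing they are negligible in aggregate requires their structural form, not just an operator-norm bound.

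So: structural lemma, eigenbasis decoupling, and the two-stage skeleton are all consistent with the paper, but (i) the dynamics are irreducibly two-dimensional per eigendirection---the one-step-GD fixed point is $(\widetilde{\V},\widetilde{\W})=(-\eta\bI,\bI)$, not ``$g=-\eta$''---and (ii) the local convergence step needs a residual-structured PL argument, not a bare Hessian linearization, to survive the $\widetilde\Theta(d)$-scale aggregation of small per-direction errors.
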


\subsection{Proof Ideas}

In this subsection, we briefly outline the proof of \Cref{informal main thm: global convergence}. 

Before analyzing the dynamics, we first prove that under \Cref{assumption: initialization}, the gradient dynamics only depend on the parameter blocks $\W_{13}(t),\V_{31}(t),w_{24}$, while other blocks stay zero (\Cref{lemma: irrelevant blocks keep zero}). This is because the Gaussian data assumption makes sure the gradients of these blocks are zero once initialized at zero, except for $\W_{13}(t),\V_{31}(t),w_{24}$. By this lemma, we can simplify the linear self-attention formula and consider the following equivalent yet simplified loss (we denote $\widetilde{\W}:=\W_{13},\widetilde{\V}:=\V_{31}$, and $w_{24}$ is fixed as $-1$.):
\begin{align*}
    \mathcal{L}^{\mathrm{CoT}}(\mbf{\theta})={}&\frac{1}{2}\E_{\X,\w^*}\sum\limits_{i=0}^{k-1}\norm{\frac{1}{n}(\widetilde{\V}\X\X^\top\widetilde{\W}+\eta\X\X^\top)\w_i-\frac{1}{n}(\widetilde{\V}+\eta\bI)\X\X^\top\w^*}_2^2\\
    &+\frac{1}{2}\E_{\X,\w^*}\norm{\qty(\bI+\frac{1}{n}\widetilde{\V}\X\X^\top\widetilde{\W})\w_k-\qty(\frac{1}{n}\widetilde{\V}\X\X^\top+\bI)\w^*}_2^2
\end{align*}
For ease of presentation, we denote $\S:= \frac{1}{n}\X\X^\top$. To analyze the gradient dynamics, we first need to compute the exact closed-form gradient instead of keeping the expectation. However, the formula involves the $i$-th step weight vector $\w_i=\qty(\bI-\qty(\bI-\eta\S)^i)\w^*$, involving the higher order moments of the Wishart matrix $\S$\footnote{To deal with the similar problem, \citet{gatmiry2024can} proposed a simple combinatorial method to estimate the expectation. We use the same technique to get a certain form of the expectation (see \Cref{appendix: supplementary lemmas}), but the bound is not tight enough to get the desired results. See discussion in \Cref{appendix subsec: discussion}.} whose closed form is hard to obtain. In our paper, we provide a tighter estimate compared to previous work \citep{gatmiry2024can} using the concentration of the Wishart matrix $\S$ \citep{vershynin2018high} when $n=\Theta(d\poly\log d)$ to estimate the expectation. In particular, we use the exponential decaying tail probability bound for the operator norm of the error $\dS:=\S-\bI$. For example, when estimating the expectation $\E[\qty(\bI-\eta\S)^i]$, we can decompose the expectation into two cases: when $\norm{\dS}_{op}$ is small, $\qty(\bI-\eta\S)^i\approx (1-\eta)^i\bI$; when $\norm{\dS}$ is larger than a threshold, the rest part of the expectation can be controlled by integrating the exponential decaying tail probability.\footnote{This method can keep the $(1-\eta)^i$ factor to prevent introducing unwanted estimation errors when $i$ is large.} The concentration lemmas are provided in \Cref{appendix: supplementary lemmas}.

The motivation behind a better concentration estimation is to ensure nearly independent dynamics along different eigenspaces $\{\bu_i\}_{i=1}^d$ of $\widetilde{\W}$ and $\widetilde{\V}$. As an extreme case, we consider $n\to \infty$ and $\S$ converges to $\bI$ almost surely. Now the gradient component on the $\bu_i\bu_i^\top$ subspace is only dependent on $\lambda_i^{\widetilde{\V}}$ and $\lambda_i^{\widetilde{\W}}$ without any other $\lambda_j^{\widetilde{\V}},\lambda_j^{\widetilde{\W}},j\ne i$ involved. That means there is no interaction between two different subspaces, i.e. the dynamics are independent. However, some interactions are introduced since the concentration error $\dS\ne 0$ when $n$ is finite.
Therefore, the improved characterization of the expected gradient is essential to upper bound the interaction between the dynamics of different eigenspaces $\{\bu_i\}_{i=1}^d$, leading to a nearly independent evolution at initialization.
This independence property motivates us to conduct the following stage-wise analysis:

\paragraph{Stage 1: $\widetilde{\W},\widetilde{\V}$ converges to near-optimal.}
In this stage, the dynamics along each direction $\bu_i$ stay nearly independent. Specifically, we can expand the gradient flow dynamics for $\widetilde{\V}$, $\widetilde{\W}$ and project them into the eigenspaces $\bu_i\bu^\top_i$ to get the dynamics of the eigenvalues $\lambda^{\widetilde{\V}}_i:=\bu_i^\top\widetilde{\V}\bu_i$, $\lambda^{\widetilde{\W}}_i:=\bu_i^\top\widetilde{\W}\bu_i$. The dynamics of eigenvalues are characterized by the following \Cref{informal lemma: gradient components of the reduced model} where we can prove that the interaction terms between different subspaces are bounded by $O(1/\log^2d)$.
\begin{lemma}[Informal version of \Cref{lemma: gradient components of the reduced model}]
    \label{informal lemma: gradient components of the reduced model}
    The dynamics of $\lambda^{\widetilde{\V}}_i$ and $\lambda^{\widetilde{\W}}_i$ are given by the following equations with $\abs{\delta^{\widetilde{\V}}_j}\le O\qty(\frac{1}{\log^2 d}), \abs{\delta^{\widetilde{\W}}_j}\le O\qty(\frac{1}{\log^2 d})$:
    \begin{align*}
        \frac{\mathrm{d}\lambda_j^{\widetilde{\V}}}{\mathrm{d}t}=&-\qty[\qty(k+1)\qty(1-\lambda_j^{\widetilde{\W}})^2+\frac{2}{\eta}\lambda_j^{\widetilde{\W}}\qty(1-\lambda_j^{\widetilde{\W}})+\frac{{\lambda_j^{\widetilde{\W}}}^2}{\eta(2-\eta)}]\lambda_j^{\widetilde{\V}}+\frac{1-\eta}{2-\eta}\lambda_j^{\widetilde{\W}}-1+\delta^{\widetilde{\V}}_j\\
        \frac{\mathrm{d}\lambda_j^{\widetilde{\W}}}{\mathrm{d}t}&=\qty[k+1-\frac{1}{\eta}]{\lambda_j^{\widetilde{\V}}}^2\qty(1-\lambda_j^{\widetilde{\W}})+\frac{1-\eta}{\eta(2-\eta)}{\lambda_j^{\widetilde{\V}}}^2\lambda_j^{\widetilde{\W}}+\frac{1-\eta}{2-\eta}\lambda_j^{\widetilde{\V}}-\delta^{\widetilde{\W}}_j.
    \end{align*}
\end{lemma}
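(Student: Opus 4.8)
The plan is to differentiate the simplified population CoT loss and reduce the matrix gradient flow to $d$ scalar ODEs, one per eigendirection, up to a small coupling term. By \Cref{lemma: irrelevant blocks keep zero} together with the block structure of \Cref{assumption: initialization}, $\widetilde{\W}(t)$ and $\widetilde{\V}(t)$ remain symmetric and simultaneously diagonalizable along the trajectory with a common eigenbasis $\{\bu_j\}_{j=1}^{d}$; write $a_j:=\bu_j^\top\widetilde{\V}\bu_j$ and $b_j:=\bu_j^\top\widetilde{\W}\bu_j$. First I would express $\nabla_{\widetilde{\V}}\mathcal{L}^{\mathrm{CoT}}$ and $\nabla_{\widetilde{\W}}\mathcal{L}^{\mathrm{CoT}}$ as expectations over $\X,\w^*$ of matrix polynomials in $\S=\tfrac1n\X\X^\top$, substituting $\w_i=(\bI-(\bI-\eta\S)^i)\w^*$ and using $\E[\w^*{\w^*}^\top]=\bI$ to integrate out $\w^*$. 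Projecting the two resulting matrix identities onto $\bu_j\bu_j^\top$ gives $\tfrac{\mathrm d a_j}{\mathrm d t}=-\bu_j^\top\nabla_{\widetilde{\V}}\mathcal{L}^{\mathrm{CoT}}\bu_j$ and $\tfrac{\mathrm d b_j}{\mathrm d t}=-\bu_j^\top\nabla_{\widetilde{\W}}\mathcal{L}^{\mathrm{CoT}}\bu_j$; I then split each right-hand side into a ``mean-field'' part --- the value obtained by replacing $\S$ with $\bI$ --- plus a remainder, and define $\delta^{\widetilde{\V}}_j,\delta^{\widetilde{\W}}_j$ to be exactly those remainders.

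For the mean-field part, note that when $\S=\bI$ the per-sample loss decouples across eigendirections: on direction $j$ the $i$-th intermediate weight has coordinate $(1-(1-\eta)^i){\w^*}_j$, so after taking $\E$ over ${\w^*}_j\sim\mathcal{N}(0,1)$ the $j$-th contribution becomes $\tfrac12\sum_{i=0}^{k-1}(a_j(b_j-1)-(a_jb_j+\eta)(1-\eta)^i)^2+\tfrac12(a_j(b_j-1)-(1+a_jb_j)(1-\eta)^k)^2$. Differentiating in $a_j$ and in $b_j$, summing the geometric series $\sum_{i=0}^{k-1}(1-\eta)^i=\tfrac{1-(1-\eta)^k}{\eta}$ and $\sum_{i=0}^{k-1}(1-\eta)^{2i}=\tfrac{1-(1-\eta)^{2k}}{\eta(2-\eta)}$, and discarding the exponentially small tails $(1-\eta)^k=O(1/\poly(d))$ (legitimate since $k=\Theta(\log d)$ and $\eta$ is bounded away from $0$, with these tails folded into $\delta_j$), collapses the expressions exactly into $-[(k+1)(1-b_j)^2+\tfrac2\eta b_j(1-b_j)+\tfrac{b_j^2}{\eta(2-\eta)}]a_j+\tfrac{1-\eta}{2-\eta}b_j-1$ and into the stated companion expression for $b_j$. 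The ``$+1$'' inside ``$k+1$'' is the contribution of the final $\w_k\!\to\!\w^*$ term of the loss; everything else is routine bookkeeping of geometric sums.

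It remains to bound the remainders. Write $\S=\bI+\dS$; the gradient of each of the $k+1$ summands of $\mathcal{L}^{\mathrm{CoT}}$ is a matrix polynomial $P_i(\S)$ whose value at $\S=\bI$ is the corresponding mean-field term, so $\delta_j$ is a signed sum over $i$ of the quantities $\bu_j^\top(\E_\X[P_i(\bI+\dS)]-P_i(\bI))\bu_j$. Since $\E[\dS]=0$, the term linear in $\dS$ vanishes in expectation, so each $P_i$ contributes at order $\E\norm{\dS}_{op}^2$ --- and, crucially, with a constant uniformly bounded in $i$. To see that uniform bound I would expand $(\bI-\eta\S)^i=\sum_{m\ge0}\binom{i}{m}(1-\eta)^{i-m}(-\eta)^m\dS^m$ and observe that for $m\le2$ the weight $\binom{i}{m}(1-\eta)^{i-m}$ is at most a constant depending only on $\eta$ (its maximum over $i$), so the $\dS$- and $\dS^2$-parts of $\w_i$ have size $O(\norm{\dS}_{op})$ and $O(\norm{\dS}_{op}^2)$ with $i$-independent constants; the stage-1 invariant $a_j,b_j=O(1)$ makes all remaining polynomial coefficients $O(1)$. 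Then the Wishart concentration estimates of \Cref{appendix: supplementary lemmas} give, for $n=\tilde{\Omega}(d)$, that $\norm{\dS}_{op}$ has exponentially decaying tails and $\E\norm{\dS}_{op}^2=O(d/n)$ (the rare event $\{\norm{\dS}_{op}\text{ large}\}$ being killed by integrating the tail), whence $|\delta^{\widetilde{\V}}_j|,|\delta^{\widetilde{\W}}_j|=O((k+1)\,d/n)$; taking the hidden polylogarithmic factor in $n=\tilde{\Omega}(d)$ large enough (so that $d/n\lesssim1/\log^3 d$) gives the claimed $O(1/\log^2 d)$, uniformly over $j$.

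The delicate part is this last step. A naive binomial expansion of $(\bI-\eta\S)^i$ produces combinatorial factors that grow with $i\le k=\Theta(\log d)$, and bounding each power of $\dS$ crudely would leave $\delta_j$ only $O(\poly\log d/\poly\log d)$ --- not even $o(1)$, let alone $O(\log^{-2}d)$. The fix is to retain the $(1-\eta)^{i-m}$ geometric weights on the $\dS^m$ monomials so that, once $\E\dS=0$ removes the first-order term, the surviving $m=2$ contributions have size independent of $i$; one must also use the \emph{exponential} (not merely polynomial-moment) tail bound on $\norm{\dS}_{op}$ so that the $m\ge3$ terms are negligible, and track the benign factor of $k+1$ from the outer loss sum, absorbed into the polylog slack in $n$. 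Making all of these decays line up uniformly over the $d$ eigendirections is the heart of the argument; the mean-field computation, by comparison, is elementary.
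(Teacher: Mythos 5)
Your proposal is correct and follows the same high-level route as the paper's proof of \Cref{lemma: gradient components of the reduced model} (the formal version of this lemma): project the matrix gradient flow onto the shared eigenbasis $\{\bu_j\}$ (relying on \Cref{lemma: irrelevant blocks keep zero} and the gradient's $\U\Lbd'\U^\top$ form to keep $\widetilde{\V},\widetilde{\W}$ simultaneously diagonalizable), compute the leading term by setting $\S=\bI$, and absorb the deviation into $\delta_j$ controlled by Wishart concentration of $\dS=\S-\bI$. Your mean-field per-direction loss $\tfrac12\sum_{i=0}^{k-1}\bigl(a_j(b_j-1)-(a_jb_j+\eta)(1-\eta)^i\bigr)^2+\tfrac12\bigl(a_j(b_j-1)-(1+a_jb_j)(1-\eta)^k\bigr)^2$ is exactly the population CoT loss along direction $j$ at $\S=\bI$, and differentiating it and summing the two geometric series reproduces, after algebraic simplification, precisely the coefficients $(k+1)(1-b_j)^2+\tfrac2\eta b_j(1-b_j)+\tfrac{b_j^2}{\eta(2-\eta)}$ and $(k+1-\tfrac1\eta)a_j^2(1-b_j)+\tfrac{1-\eta}{\eta(2-\eta)}a_j^2 b_j+\tfrac{1-\eta}{2-\eta}a_j$ from the statement.

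The one genuine difference is in how the error is bounded, and it is worth noting because your version is actually a bit sharper. The paper's concentration lemmas (\Cref{lemma: concentration 1}--\Cref{lemma: concentration 4}) first factor out $(1-\eta)^k$ and then bound the relative remainder via $\binom{k}{j}\le k^j$, which yields a per-summand relative error of $O(k^2 d/n)$; this is why the paper takes $n=\Theta(d\log^5 d)$ to drive each per-step $\bxi$ down to $O(1/\log^3 d)$ before summing over the $k+1$ outer terms. Your observation that the combined weight $\binom{i}{m}(1-\eta)^{i-m}\eta^m$ is uniformly bounded over $i$ for every fixed $m$ (i.e., working with absolute rather than relative error) removes the $k^2$ factor, giving $O(d/n)$ per summand and $O((k+1)d/n)$ total, which would allow a slightly weaker hidden polylog in $n=\tilde\Omega(d)$ to reach the same $O(1/\log^2 d)$. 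Both arguments hinge on the same two ingredients you identify — $\E[\dS]=0$ killing the linear term, and the exponential operator-norm tail of $\dS$ handling the $m\ge3$ contributions — so this is a refinement rather than a new method. Note also that the paper's formal lemma proves additional structural facts about $\Delta^{\widetilde{\V}},\Delta^{\widetilde{\W}}$ (their decomposition in terms of $\Lbd^{\widetilde{\V}}+\eta\bI$, $\bI-\Lbd^{\widetilde{\W}}$, and trace quantities), but those are needed only for the Stage-2 local-convergence argument, not for the operator-norm statement you were asked to prove, so your omission of them is appropriate.
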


This nearly independent evolution along each eigenvector $\bu_i$ enables us to analyze the individual dynamics of $\lambda^{\widetilde{\V}}_i$ and $\lambda^{\widetilde{\W}}_i$ at the beginning of training.
Under \Cref{assumption: initialization}, $\lambda_j^{\widetilde{\V}}, \lambda_j^{\widetilde{\W}}$ are initialized $\Theta(1)$. 
By \Cref{informal lemma: gradient components of the reduced model}, we prove by induction that the eigenvalues will go through two phases: (1) $\lambda_j^{\widetilde{\V}}$ increases yet stay smaller than $-O\qty(\frac{1}{k(1-\lambda_j^{\widetilde{\W}})})$, while $\lambda_j^{\widetilde{\W}}$ increases to $1-o\qty(1)$. (2) $\lambda_j^{\widetilde{\W}}$ stays $o(1)$-close to 1, and $\lambda_j^{\widetilde{\V}}$ also converges to $o(1)$-close to $-\eta$. 
Here all $o(1)$ terms are some $O(1/\log^{c} d)$ terms for some constant $c>0$. That implies that the distance between the eigenvalues and the target $\qty|\lambda_j^{\widetilde{\V}}+\eta|,\qty|\lambda_j^{\widetilde{\W}}-1|$ converge to $O(1/\log^{c} d)$ for $j\in[d]$ at the end of Stage 1.

\paragraph{Stage 2: Local convergence.} One may expect that after Stage 1, the transformer can approximate gradient steps quite accurately since the parameter $\widetilde{\V},\widetilde{\W}$ are both $o(1)$-close to ground-truth along each direction $\bu_i$. Unfortunately, the sum of error in $d$ directions can still be $\widetilde{\Theta}(d)$ since we can only reduce the error to $O(1/\poly\log d)$ in each direction. So for now, the solution cannot recover the weight vector $\w^*$ at this stage. 
To address this issue, we further consider the exact form of the interaction terms $\delta_j^{\widetilde{\W}},\delta_j^{\widetilde{\V}}$ and analyze the local convergence. By fine-grained expansion of the error terms, we notice that $\delta_j^{\widetilde{\W}}$ and $\delta_j^{\widetilde{\V}}$ are always coupled with some individual residual like $(1-\lambda_j^{\widetilde{\W}})$, $(\eta+\lambda_j^{\widetilde{\V}})$, or some weighted average or those individual residuals. Meanwhile, the coefficient of the residual in the interaction terms is still upper bounded by $O(1/\poly\log d)$. That enables us to derive some gradient lower bound similar to PL-conditions (\Cref{lemma: local convergence}) when $\widetilde{\V},\widetilde{\W}$ are close to the ground-truth, leading to local convergence to near-optimal at a linear rate. 

The final training error is some $O(\frac{1}{\poly d})$, which depends on the inference step $k$ and ground-truth $\eta$. Note that the optimal loss value is also at least polynomially small in $d$ given $\Theta(\log d)$ CoT steps.
Therefore, now we can conclude that the transformer can learn to implement multi-step GD when given intermediate ground-truth states after optimizing the CoT loss with gradient flow.
\subsection{Out-of-distribution Generalization at Inference}
In this section, we prove that after training, the transformer not only correctly predicts the weight vector in context with CoT generation, but also can generalize out-of-distribution (OOD). The following theorem shows that the trained transformer obtained from \Cref{informal main thm: global convergence} with CoT generalizes over other problem instances when the input example sequence has an OOD covariance, as long as the covariance is not too ill-conditioned. Here $\mathcal{L}^\mathrm{Eval}_{\bSigma}$ is defined as the OOD evaluation loss in \cref{eqn: eval loss} with the in-context examples $\x_i\sim\mathcal{N}(0,\bSigma)$ and weight vector $\w^*\sim \mathcal{N}(0,\bI)$:
$$\mathcal{L}^{\mathrm{Eval}}_{\bSigma}(\V, \W) = \frac{1}{2}\E_{\x_i\sim \mathcal{N}(0,\bSigma),\w^*}\qty[\left\|f_{\mathrm{LSA}}(\hat{\Z}_k)_{[:,-1]} - (\mbf{0}_{d},0,\w^*,1)\right\|^2]$$
\begin{theorem}
    [Informal, \Cref{appendix theorem: evaluation}]
    Suppose $n=\Tilde{\Omega}(d)$, $\eta\in(0.1,0.9)$, $k' = \Theta(\log d)$. Assume the out-of-distribution covariance is well-conditioned: $\frac{\delta}{\eta}\le \lambda_{\min}(\bSigma)\le\lambda_{\max}(\bSigma)\le\frac{2-\delta}{\eta}$ for some constant $\delta>0$. Then after training in \Cref{informal main thm: global convergence}, we have $\mathcal{L}^\mathrm{Eval}_{\bSigma}(t)\le \epsilon$ for any $\epsilon\in\qty(\frac{1}{\poly(d)},1)$.
    \label{main theorem: evaluation}
\end{theorem}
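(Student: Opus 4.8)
The plan is to show that the transformer trained in \Cref{informal main thm: global convergence}, when executed for $k'$ autoregressive CoT steps on an out-of-distribution prompt, approximately performs $k'+1$ iterations of gradient descent with step size $\eta$ on the linear regression loss, where now the empirical covariance $\S:=\tfrac1n\X\X^\top$ concentrates around $\bSigma$ rather than $\bI$. The well-conditioning hypothesis $\tfrac{\delta}{\eta}\le\lambda_{\min}(\bSigma)\le\lambda_{\max}(\bSigma)\le\tfrac{2-\delta}{\eta}$ is precisely what guarantees $\|\bI-\eta\S\|_{op}\le 1-\tfrac{\delta}{2}$ with overwhelming probability over $\X$, so that both the residual of ideal GD and the accumulated approximation error decay geometrically in $k'$.

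First I would invoke the proof of \Cref{informal main thm: global convergence}, instantiated with $\epsilon=1/\poly(d)$ (which its statement permits), to conclude that the trained parameters have the form $\widetilde{\W}=\bI+\Delta_{\W}$, $\widetilde{\V}=-\eta\bI+\Delta_{\V}$, $w_{24}=-1$, every other block identically zero, and --- since the common eigenbasis $\{\bu_i\}$ stays frozen along the flow --- $\|\Delta_{\W}\|_{op}=\max_j|\lambda_j^{\widetilde{\W}}-1|\le 1/\poly(d)$ and $\|\Delta_{\V}\|_{op}=\max_j|\lambda_j^{\widetilde{\V}}+\eta|\le 1/\poly(d)$. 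Next, because the irrelevant blocks vanish, a direct evaluation of $f_{\mathrm{LSA}}$ on the last column shows (i) the generated sequences $\hat{\Z}_i$ have the clean form, i.e. zeros in the first $d+1$ coordinates of the appended tokens and a $1$ in the last (the $\star$ entries are in fact $0$, since the only nonzero block $\V_{31}$ of $\V$ writes solely into the weight-coordinate rows), and (ii) the weight-coordinate update is the affine map $\hat{\w}_{i+1}=\hat{\w}_i+\widetilde{\V}\S(\widetilde{\W}\hat{\w}_i-\w^*)$ for every $i$. In particular the last output column agrees with the target on all coordinates outside the weight block, so $\mathcal{L}^{\mathrm{Eval}}_{\bSigma}=\tfrac12\E\|\hat{\w}_{k'+1}-\w^*\|^2$.

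The core step is an error-propagation estimate against the exact GD iterates $\w_{i+1}^{\bSigma}=(\bI-\eta\S)\w_i^{\bSigma}+\eta\S\w^*$, $\w_0^{\bSigma}=\bzero$, for which $\w_{k'+1}^{\bSigma}-\w^*=-(\bI-\eta\S)^{k'+1}\w^*$. Setting $\bm{e}_i:=\hat{\w}_i-\w_i^{\bSigma}$ and subtracting recursions gives $\bm{e}_{i+1}=(\bI-\eta\S)\bm{e}_i+\bm{r}_i$ with $\bm{r}_i=\Delta_{\V}\S(\widetilde{\W}\hat{\w}_i-\w^*)-\eta\S\Delta_{\W}\hat{\w}_i$. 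On the event $\mathcal{E}=\{\|\S-\bSigma\|_{op}\le\tfrac{\delta}{2\eta}\}\cap\{\|\w^*\|\le 2\sqrt d\}$, whose complement has probability $e^{-\Omega(d)}$ by Wishart concentration and Gaussian norm concentration (this is where $n=\widetilde{\Omega}(d)$ enters), we get $\|\bI-\eta\S\|_{op}\le 1-\tfrac{\delta}{2}$ and $\|\S\|_{op}\le\tfrac2\eta$; a short induction over the $k'=\Theta(\log d)$ steps keeps $\|\hat{\w}_i\|\le 5\sqrt d$, hence $\|\bm{r}_i\|\le O(\sqrt d)(\|\Delta_{\V}\|_{op}+\|\Delta_{\W}\|_{op})\le 1/\poly(d)$, and summing the contracted geometric series yields $\|\bm{e}_{k'+1}\|\le\tfrac2\delta\max_i\|\bm{r}_i\|\le 1/\poly(d)$. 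Combined with $\|(\bI-\eta\S)^{k'+1}\w^*\|\le(1-\tfrac{\delta}{2})^{k'+1}\cdot 2\sqrt d\le 1/\poly(d)$ for $k'=\Theta(\log d)$ (with the hidden constant depending on $\delta$), this gives $\|\hat{\w}_{k'+1}-\w^*\|\le 1/\poly(d)$ on $\mathcal{E}$. On $\mathcal{E}^c$ the contribution to $\mathcal{L}^{\mathrm{Eval}}_{\bSigma}$ is handled by Cauchy--Schwarz with the exponentially small $\Pr[\mathcal{E}^c]$ and polynomial moment bounds on $\|\hat{\w}_{k'+1}\|$ (a degree-$O(\log d)$ polynomial in $\S$ with constant-norm coefficients, applied to $\w^*$), which is negligible. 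Hence $\mathcal{L}^{\mathrm{Eval}}_{\bSigma}\le 1/\poly(d)\le\epsilon$ for every $\epsilon\in(\tfrac{1}{\poly(d)},1)$.

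The main obstacle, I expect, is extracting the operator-norm bounds $\|\Delta_{\V}\|_{op},\|\Delta_{\W}\|_{op}=1/\poly(d)$ from the loss bound of \Cref{informal main thm: global convergence}: one either re-uses its Stage-2 local-convergence analysis, which already controls each eigenvalue $\lambda_j^{\widetilde{\V}},\lambda_j^{\widetilde{\W}}$ individually and thus yields a per-coordinate (hence operator-norm) bound directly, or proves an accompanying local PL / strong-convexity lower bound $\mathcal{L}^{\mathrm{CoT}}\gtrsim\|\Delta_{\V}\|_F^2+\|\Delta_{\W}\|_F^2$ near the optimum. A secondary, routine obstacle is re-deriving the Wishart concentration and moment estimates of \Cref{appendix: supplementary lemmas} for the anisotropic covariance $\bSigma$, so that the contraction factor $1-\tfrac{\delta}{2}$ and the negligibility of $\mathcal{E}^c$ are rigorous. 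With these two ingredients, the geometric error-propagation argument is short.
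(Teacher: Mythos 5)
Your proposal is correct and rests on the same core idea as the paper's proof in Appendix C.4: decompose the output error into the residual of exact $(k'{+}1)$-step GD plus an accumulated prediction error, and show both decay geometrically under the well-conditioning hypothesis $\frac{\delta}{\eta}\le \lambda_{\min}(\bSigma)\le\lambda_{\max}(\bSigma)\le\frac{2-\delta}{\eta}$. The difference is in how the accumulated error is organized. You propagate via the \emph{ideal} GD operator $(\bI-\eta\S)$ and carry a residual $\bm r_i$ that depends on the transformer's own iterates $\hat{\w}_i$, which forces an auxiliary induction controlling $\|\hat{\w}_i\|$. The paper instead propagates via the transformer's \emph{actual} affine map $(\bI+\widetilde{\V}\S\widetilde{\W})$, yielding the clean telescoping identity
\begin{equation*}
f_{\theta}(\hat{\w}_{k'}) - \w^* \;=\; (\w_{k'+1}-\w^*) \;+\; \sum_{i=0}^{k'} \bigl(\bI+\widetilde{\V}\S\widetilde{\W}\bigr)^{\,i}\,\Delta^{\mathrm{pred}}_{k'-i+1},
\end{equation*}
where $\Delta^{\mathrm{pred}}_{j}$ is the one-step approximation error evaluated on the \emph{ground-truth} iterate $\w_{j-1}$, not on $\hat{\w}_{j-1}$. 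This eliminates the need to control $\|\hat{\w}_i\|$, and the bound $\|\bI+\widetilde{\V}\S\widetilde{\W}\|_{op}\le 1-\Theta(\delta)$ follows from the same concentration of $\S$ around $\bSigma$. A second cosmetic difference is the tail-handling style: you condition on a high-probability event $\mathcal{E}$ and treat $\mathcal{E}^c$ via Cauchy--Schwarz, whereas the paper folds the tail into its Wishart-concentration lemmas (Lemmas C.7--C.8), which bound the relevant expectations directly. Both are fine.

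You also correctly identify the one genuinely non-routine ingredient: extracting operator-norm bounds $\|\widetilde{\V}+\eta\bI\|_{op},\ \|\widetilde{\W}-\bI\|_{op}\le 1/\poly(d)$ from the training theorem, which a priori only bounds the CoT loss. The paper's proof does exactly what you suggest: it invokes the Stage-2 local-convergence analysis, which tracks each eigenvalue $\lambda_j^{\widetilde{\V}},\lambda_j^{\widetilde{\W}}$ and therefore delivers a per-coordinate (hence operator-norm) bound rather than only a Frobenius/trace bound. One caveat worth noting if you were to write this out rigorously: the displayed Stage-2 differential inequality in the paper controls traces, so passing to per-eigenvalue control does require the observation that each eigenpair evolves under essentially the same contracting ODE (which Stage 1 has already set up). Your alternative suggestion of a local PL lower bound $\mathcal{L}^{\mathrm{CoT}}\gtrsim\|\widetilde{\V}+\eta\bI\|_F^2+\|\widetilde{\W}-\bI\|_F^2$ would give only Frobenius control, which is not sufficient by itself; the eigenvalue-by-eigenvalue argument is the right one.

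Finally, your observation that the $\star$ entries of the generated sequence are actually zero (since only $\V_{31}$ is nonzero, the attention output lives entirely in the weight-coordinate rows, and the residual connection preserves the $0$'s and the trailing $1$) is a detail the paper passes over silently but is needed to justify that the inference-time recursion $\hat{\w}_{i+1}=\hat{\w}_i+\widetilde{\V}\S(\widetilde{\W}\hat{\w}_i-\w^*)$ holds exactly; good catch.
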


Note that this theorem covers both in-distribution (when $\eta=\delta$) and OOD tasks at evaluation, indicating that the transformer is trained to implement a general iterative optimization algorithm. Moreover, the inference step number $k'$ in this theorem can go beyond the training CoT steps $k$, achieving better estimation for $\w^*$.

One may think once the next-token-prediction training loss $\mathcal{L}^{\mathrm{CoT}}$ converges to zero based on ground-truth CoT data, the transformer naturally learns to do multi-step reasoning at inference, i.e. $\mathcal{L}^{\mathrm{Eval}}$ is small. However, at the $i$-th generation step, the transformer is predicting the next weight token 
$\hat{\w}_{i+1}$ based on the previous generation $\hat{\w}_i$ instead of the ground-truth intermediate step $\w_i$. It is possible that prediction error for each step accumulates or even increases exponentially. 
In this theorem, we expand the sum of all the prediction errors at each step and show a converging series of errors throughout the inference process. That ensures we can achieve any $O(\frac
1{\poly(d)})$-small evaluation loss when we have $k'=\Theta(\log d)$ reasoning steps. The detailed proof is provided in \Cref{appendix subsec: ood}.

\subsection{Improvement for the Looped Transformer}
In this section, we demonstrate that our proof technique improves the optimization result in \citet{gatmiry2024can} for looped transformers within the linear regression in-context learning setting. For details on the linear regression ICL setup, the analysis of training dynamics, and further discussion, please refer to \Cref{appendix subsection: improve loop tf}.

Notably, the looped transformer in \citep{gatmiry2024can} that implements multi-step gradient descent is shown to be no better than the transformer performing a single gradient descent step. Specifically, Theorem 4.2 in \citet{gatmiry2024can} requires a lower bound on the final loss as $\frac{d^{5/2}L\cdot 4^L}{\sqrt{n}}$, where $L$ denotes the number of loops. In contrast, a one-layer transformer without looping, as presented in \citet{mahankali2023one}, can achieve a loss of $\Theta(d^2/n)$ by executing one gradient descent step, which is asymptotically better compared to the multi-step approach of \citep{gatmiry2024can} for all choices of $n$, $d$, and $L$.

The following theorem shows that our techniques enable looped transformers to outperform their non-looped counterparts in the ICL setting described in \citet{gatmiry2024can}.\footnote{Our focus here is on the simplified setting with $\bSigma = \bI$, as in \citet{gatmiry2024can}, though our approach readily extends to other covariance matrices.}

\begin{theorem}[Informal, \Cref{thm: global convergence for loop tf}]
    Suppose $n=\Tilde{\Omega}(d)$, $L = \Theta(\log d)$ and $\|\A(0)\|=O(1)$. Suppose we run the gradient flow with respect to the loss $\mathcal{L}\qty(\A)$. Then for any $ \xi >\Theta\qty(\qty(\frac{L^2d\log^2 d}{n})^L)$, after time $t \geq \Omega\qty(\frac{1}{L^2}(\frac{d}{\xi})^{\frac{L-1}{L}})$, we have $\mathcal{L}(\A(t)) \leq \xi.$
\end{theorem}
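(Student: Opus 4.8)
Recall from \Cref{appendix subsection: improve loop tf} that, after the reductions used there, the $L$-loop transformer with parameter $\A$ implements on each in-context instance the preconditioned gradient iteration $\w_{\ell+1}=\w_\ell-\A\S(\w_\ell-\w^*)$, where $\S:=\frac1n\X\X^\top$ and $\w_0=\bzero$; hence $\w_L-\w^*=-(\bI-\A\S)^L\w^*$ and
\[
\mathcal{L}(\A)=\tfrac12\,\E_{\X,\w^*}\norm{(\bI-\A\S)^L\w^*}^2=\tfrac12\,\E_{\X}\Tr\!\big[(\bI-\A\S)^L(\bI-\S\A^\top)^L\big].
\]
The plan is to split this into an exactly solvable idealized part plus a Wishart correction. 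Assuming $\A(0)$ symmetric — a property preserved by the flow, since transposing inside the trace and cyclic permutation give $\mathcal{L}(\A)=\mathcal{L}(\A^\top)$, so that $\nabla\mathcal{L}(\A)=(\nabla\mathcal{L}(\A^\top))^\top$ is symmetric at symmetric $\A$ — it suffices to track symmetric $\A$. Writing $\S=\bI+\dS$ and expanding $(\bI-\A\S)^L=(\bI-\A-\A\dS)^L$, the $\dS$-free term is deterministic and the single-$\dS$ terms vanish in expectation, so
\[
\mathcal{L}(\A)=\tfrac12\,\Tr\!\big[(\bI-\A)^{2L}\big]+R(\A),
\]
with $R(\A)$ the expectation of the terms containing $\ge 2$ copies of $\dS$. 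Using the exponential-tail operator-norm bounds for $\dS$ from \Cref{appendix: supplementary lemmas} — in the sharpened form that retains a contraction factor inside each of the $\Theta(\log d)$ matrix powers, avoiding the $4^L$-type loss of \citet{gatmiry2024can} — I would show that on the region $\norm{\A}=O(1)$ both $|R(\A)|$ and $\norm{\nabla R(\A)}$ are at most $\poly(d,L)\cdot(d\log^2 d/n)$ times a bounded power of $\norm{\bI-\A\S}$, hence that $R$ contributes at most the floor $\Theta\!\big((L^2 d\log^2 d/n)^L\big)$ to the loss.

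For the idealized flow, since $\nabla\big(\tfrac12\Tr[(\bI-\A)^{2L}]\big)=-L(\bI-\A)^{2L-1}$, gradient flow reads $\dot\A=L(\bI-\A)^{2L-1}+\bE(\A)$ with $\bE(\A)=-\nabla R(\A)$, and I would track the scalar surrogate $\Phi(t):=\Tr[(\bI-\A(t))^{2L}]$. Ignoring $\bE$, $\dot\Phi=-2L^2\Tr[(\bI-\A)^{4L-2}]$, and writing $\mu_i$ for the (real) eigenvalues of $\bI-\A$, the power-mean inequality $\sum_i\mu_i^{4L-2}\ge d^{-(L-1)/L}\big(\sum_i\mu_i^{2L}\big)^{(2L-1)/L}$ gives the PL-type bound $\dot\Phi\le -2L^2 d^{-(L-1)/L}\Phi^{2-1/L}$. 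Integrating this differential inequality yields $\Phi(t)\le d\,(2L(L-1)t)^{-L/(L-1)}$ for \emph{every} finite initialization, so the idealized loss drops below $\xi/2$ once $t\ge\Omega\!\big(\frac1{L^2}(d/\xi)^{(L-1)/L}\big)$ — exactly the claimed time scale, and essentially independent of $\A(0)$ (the hypothesis $\norm{\A(0)}=O(1)$ is needed only to keep $R$ controlled). A negligible $O(\poly\log d)$-time initial sub-phase, analogous to Stage 1 of \Cref{informal main thm: global convergence}, first brings every eigenvalue of $\A$ into, say, $[\tfrac12,\tfrac32]$.

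It remains to transfer this to the true flow. Differentiating $\Phi$ along $\dot\A=L(\bI-\A)^{2L-1}+\bE(\A)$ gives $\dot\Phi=-2L^2\Tr[(\bI-\A)^{4L-2}]-2L\Tr[(\bI-\A)^{2L-1}\bE(\A)]$; as in \Cref{informal lemma: gradient components of the reduced model}, I expect the perturbation term to be coupled to a power of $\bI-\A$, so that it reads as $(\poly\log d/n)$ times a quantity comparable to $\Tr[(\bI-\A)^{4L-2}]$ plus the floor. A bootstrap argument then maintains $\dot\Phi\le -L^2 d^{-(L-1)/L}\Phi^{2-1/L}$ as long as $\Phi$ exceeds the floor $\Theta\!\big((L^2 d\log^2 d/n)^L\big)$ and $\A$ stays in $\norm{\A}=O(1)$ (which the decay itself enforces), so that integrating the comparison ODE reproduces the $d\,(L^2 t)^{-L/(L-1)}$ decay up to constants; combining with the floor bound on $|R|$ yields $\mathcal{L}(\A(t))=\tfrac12\Phi(t)+R(\A(t))\le\xi$ for all $t\ge\Omega\!\big(\frac1{L^2}(d/\xi)^{(L-1)/L}\big)$ and all $\xi$ above the floor. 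The main obstacle is precisely this error control: one must verify that the $\Theta(\log d)$-fold products of $\bI-\A\S$ appearing in $\nabla R$ amplify the $O(\sqrt{d/n})$ fluctuations of $\dS$ by no more than $\poly(d)$ rather than $4^L$, and that these errors neither accumulate nor push $\A$ out of the good region over the very long, power-law rather than exponential, horizon $t\asymp(d/\xi)^{(L-1)/L}$.
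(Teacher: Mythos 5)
Your approach is genuinely different from the paper's. The paper applies its sharpened Wishart estimate \Cref{lemma: gradient estimate of looped transformer} to write $\nabla_{\A}\mathcal{L}(\A)=-2L(\bI-\A)^{2L-1}-\|\bI-\A\|^{2L-1}\Delta_{2L}$ with $\|\Delta_{2L}\|_{op}\le O(L/\log d)$, projects the flow onto the single eigenvector $\bu$ of $\A$ achieving $\abs{1-\lambda}=\|\bI-\A\|$, obtains $\frac{\mathrm{d}(1-\lambda)^2}{\mathrm{d}t}\le -3L(1-\lambda)^{2L}$, integrates the scalar power ODE, and at the end applies the crude bound $\mathcal{L}(\A)\le 2d\|\bI-\A\|^{2L}$. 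You instead track the scalar $\Phi=\Tr[(\bI-\A)^{2L}]$ and invoke the power-mean inequality to manufacture the PL condition $\dot\Phi\le -2L^2 d^{-(L-1)/L}\Phi^{2-1/L}$, integrating to the same time bound. Your version is cleaner in one respect: it bounds the loss directly (up to the Wishart remainder) rather than via the lossy $d\|\bI-\A\|^{2L}$ estimate, and makes the power-law origin of the $t\asymp (d/\xi)^{(L-1)/L}$ time scale transparent.

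However, there is a genuine gap in the perturbed part of your argument, and it is not the one you flag. With $\bE:=-\nabla R$, the true dynamics give
\[
\dot\Phi=-4L^2\Tr[(\bI-\A)^{4L-2}]-2L\,\Tr[(\bI-\A)^{2L-1}\bE],
\]
and your plan requires $\abs{\Tr[(\bI-\A)^{2L-1}\bE]}$ to be dominated by $\Tr[(\bI-\A)^{4L-2}]$ (up to $O(1/\log d)$). But the Wishart concentration produces the error in the form $\bE=\|\bI-\A\|^{2L-1}\,\Delta$ with $\Delta$ small only in operator norm (this is exactly the structure \Cref{technique lemma: loop 1} and \Cref{lemma: gradient estimate of looped transformer} deliver, because the higher-order $\dS$ terms are bounded by powers of $\|\bI-\A\|_{op}$, not by per-eigenvalue powers of $\bI-\A$). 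So the perturbation is of order $\|\bI-\A\|^{2L-1}\,\Tr[\abs{\bI-\A}^{2L-1}]\cdot O(L/\log d)$, and writing $\mu_1\ge\cdots\ge\mu_d\ge 0$ for the eigenvalues of $\abs{\bI-\A}$, one has $\mu_1^{2L-1}\sum_i\mu_i^{2L-1}\ge\sum_i\mu_i^{4L-2}$ with a gap that grows as the spectrum spreads out; the ratio can reach $\Theta(2^{2L})\gg\log d$ (e.g.\ $\mu_1=1$, $\mu_2=\cdots=\mu_d=1/2$). So the error term need not be subdominant, and the bootstrap "maintain $\dot\Phi\le -L^2 d^{-(L-1)/L}\Phi^{2-1/L}$" breaks down unless you can prove a strictly sharper, per-eigenvalue error bound $\bE=(\bI-\A)^{2L-1}\widetilde{\Delta}$ with $\widetilde\Delta$ commuting with $\A$ and small in operator norm — which the paper's concentration machinery does not give and which fails to follow by simply normalizing the expansion by $(\bI-\A)^{2L-1}$, since that inverse is ill-conditioned precisely when some $\mu_i$ are small. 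The paper circumvents this by only ever comparing the error to the worst eigenvalue direction, where $\|\bI-\A\|^{2L-1}$ and $(1-\lambda)^{2L-1}$ coincide, and then absorbing the factor $d$ at the very end.

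Two smaller remarks. Your rewrite $\mathcal{L}(\A)=\tfrac12\E\Tr[(\bI-\A\S)^L(\bI-\S\A^\top)^L]$ is not literally the same as the form $\E\Tr[(\bI-\A^{1/2}\S\A^{1/2})^{2L}]$ in \Cref{def: loop loss}: the two agree under the idealization $\S=\bI$ and for the leading term after expansion, but they are not related by a cyclic permutation in general (the similarity transforms by $\A^{\pm 1/2}$ do not cancel), so you should work with the established form. Also, the initial "Stage 1 warm-up to bring all eigenvalues into $[\tfrac12,\tfrac32]$" is unnecessary here: the power ODE you derive (and the paper's scalar ODE) forgets its initial condition after $\Omega(1/L^2)$ time whenever $\|\A(0)\|=O(1)$, so no separate phase is needed, unlike in the CoT analysis where two coupled matrices have to be aligned first.
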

The theorem implies that as long as the number of loops satisfies $L = \Omega(\log d)$, the global minimizer of the looped transformer can achieve an arbitrarily small loss that is polynomial in $d$. Moreover, the asymptotic loss is strictly better than that of the non-looped one-layer transformer when $L\ge 2$, thus establishing the separation between looped and non-looped transformers.

\section{Experiments}
\label{sec: experiments}
In this section, we introduce our experimental setup on our in-context weight vector prediction task to numerically validate our theoretical results. Specifically, we show that parameters of the transformer match the prediction of our theory when optimized over the CoT loss. Furthermore, we present the gap of evaluation loss $\+L^{\mathrm{Eval}}$ in \cref{eqn: eval loss} between transformers with and without CoT. 

\paragraph{Experimental Setup} We train the transformer architecture in \Cref{eq: full model} on the synthetic data. The data distribution follows our in-context weight prediction task in \Cref{eq: data distribution}. In particular, we choose the token dimensions $d=10$, number of in-context examples $n=20$, and GD learning rate $\eta=0.4$ for generating the ground-truth intermediate states. We use a batch size $B=1000$ and run Adam with learning rate $\alpha=0.001$ for $\tau=750$ iterations. More details refer to \Cref{appendix: experimental details}.

\paragraph{Global convergence}
Our experiments show that the structure that weights of the full model exhibit is consistent with \Cref{main thm: construction for tf with cot}. At final convergence, all of the entries of $\W$ converge to zero except the elements on the diagonal in the top-right corner block (the red box in the heatmap of $\W$, \Cref{fig: heatmap}), while all the entries of $\V$ are near zero except elements on the diagonal in the bottom-left corner (the red box in the heatmap of $\V$, \Cref{fig: heatmap}). Also, the pattern shows $\W_{13}=\alpha\bI, w_{24}=-\alpha,$ and $\V_{31}=-\frac{\eta}{\alpha} \bI$ with some scaling factor $\alpha$,\footnote{In \Cref{fig: heatmap}, $\alpha>0$ while all $\alpha\ne 0$ works for the construction. Empirically, the sign of $\alpha$ depends on the random initialization, and both positive and negative solutions exist. } which is equivalent to the construction stated in \Cref{main thm: construction for tf with cot} and \Cref{informal main thm: global convergence}. That means the transformer implements one step of gradient descent $\qty(\bzero_d,0,-\frac{\eta}{n}\X\X^\top\qty(\w_i-\w^*),0)$ before the residual connection, and the autoregressive CoT process enables model to perform multi-step GD.

\paragraph{Performance improvement}
We empirically verify the evaluation loss gap between transformers with and without CoT shown by \Cref{main thm: lower bound for tf without cot} and \Cref{main thm: construction for tf with cot}. Our experiments in \Cref{fig: eval loss} demonstrate that the evaluation loss of transformers with CoT converges to near zero even when $k= 10$. In comparison, the optimal expected loss that the one-layer linear transformer can achieve (the \textcolor{pink}{\textbf{pink}} dashed line, from \Cref{main corollary: significant error for 1-step}) is much larger than any of the model that applies multiple steps of computation.
We also observe that evaluation loss at convergence keeps decreasing when the number of reasoning steps $k$ increases from $10$ to $40$, which is consistent with \Cref{formal main thm: global convergence} where larger $k$ allows for smaller error $\epsilon$.
\begin{figure}[ht]
    \centering
    \begin{minipage}{0.5\textwidth}
        \centering
        \includegraphics[width=\linewidth]{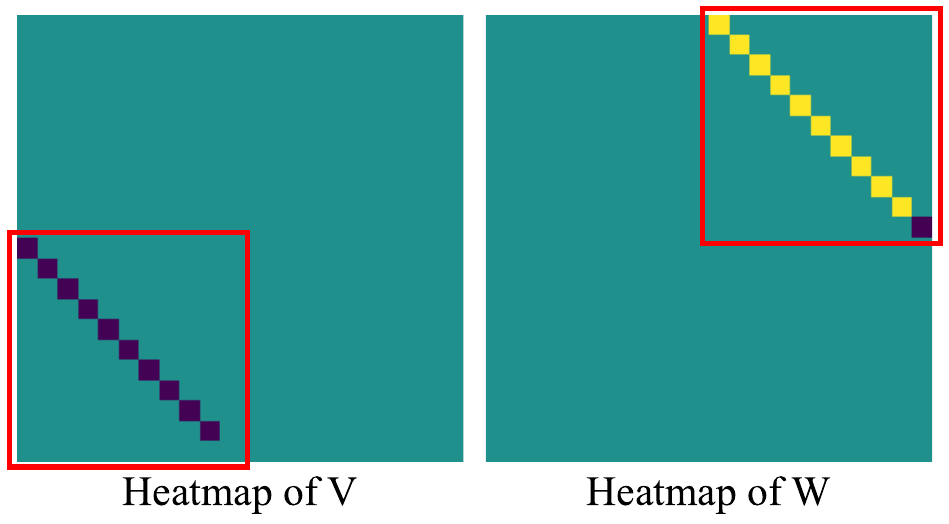}
        \caption{\label{fig: heatmap}\textbf{Model weights:} We present the heatmap of the weights of the trained transformer. We initialize $\V,\W$ randomly at $t=0$, where $n=20$, $d=10$ and $k=20$. After training, all entries of $\V$ and $\W$ converge to zero except the two blocks highlighted in the red box. Moreover, the pattern matches the theoretical results.}
    \end{minipage}
    \hfill
    \begin{minipage}{0.45\textwidth}
        \centering
        \includegraphics[width=\linewidth]{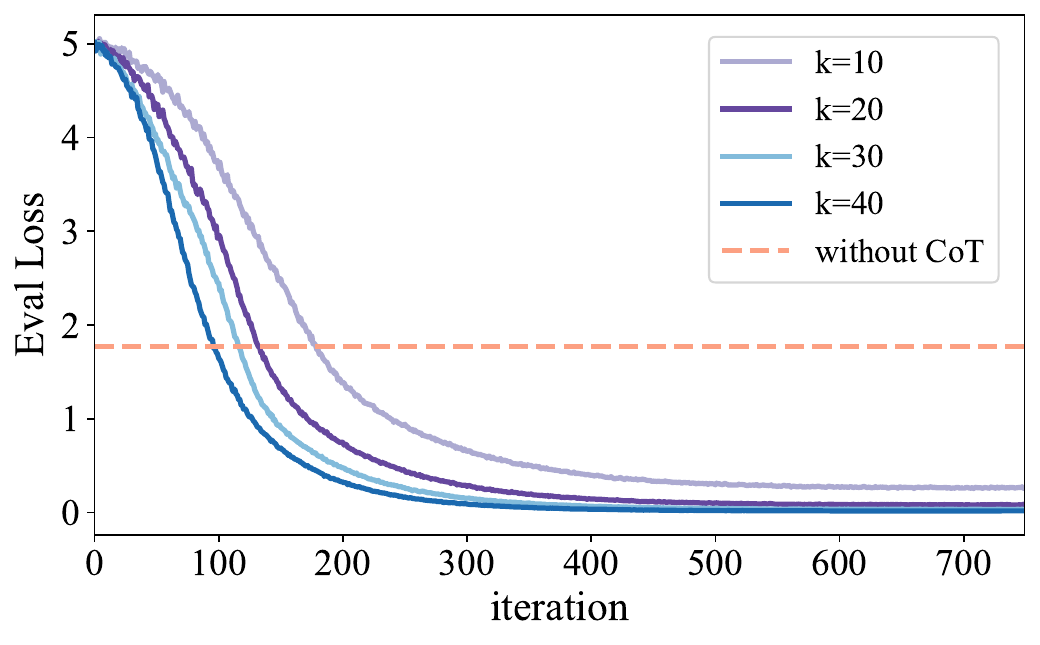}
        \caption{\label{fig: eval loss}\textbf{$k$-step v.s. 1-step:} We plot the evaluation loss $\+L^{\mathrm{Eval}}$ when $n=20$, $d=10$. We randomly initialize the transformer. For transformers with CoT, loss converges to near zero while transformers without CoT cannot. Moreover, the loss at convergence decreases when $k$ increases.}
    \end{minipage}
\end{figure}

\section{Conclusion}
\label{sec: conclusion}
This paper investigates the training dynamics of transformers when the Chain of Thought (CoT) prompting is introduced. By focusing on the in-context weight prediction task, 
our theoretical results demonstrate that transformers can learn to implement iterative algorithms like multi-step GD with the enhancement of CoT, highlighting the essential role of CoT in multi-step reasoning tasks.
Our empirical findings corroborate these theoretical insights, indicating that CoT prompting provides significant performance benefits.

There are still many open problems. Can we move beyond population loss on the in-context weight prediction task and show a sample complexity guarantee? Can CoT empower the transformer to acquire compositional reasoning capability instead of doing the same iterative steps?

\section*{Acknowledgement}
JDL acknowledges support of the NSF CCF 2002272, NSF IIS 2107304, NSF CIF 2212262, ONR Young Investigator Award, and NSF CAREER Award 2144994.
\bibliography{main}
\newpage
\appendix

\section{Discussion and limitation}
\subsection{Related works on Expressiveness}
\label{subsec: appendix discussion}
Our work is closely related to the previous works in multi-step GD using multi-layer attention layers, including \cite{bai2023transformers, fu2023transformers, ding2023causallm, ahn2024transformers, giannou2023looped, gatmiry2024can}. These works guarantee that transformers are \textbf{expressive enough} to do in-context learning by implementing gradient descent, and they serve as the foundation of our work which focuses on \textbf{optimization}. Most of them focus on the in-context learning setup as the testbed so we naturally follow the setup to understand the advantage of CoT.

Most of the above works on \textbf{expressiveness} focus on those iterative algorithms, e.g. (pre-conditioned) gradient descent on various objectives \citep{bai2023transformers, ahn2024transformers, ding2023causallm}, Newton methods/matrix inverse \citep{giannou2023looped}, etc. Those papers have similar constructive proof techniques using multi-layer transformers: they construct a basic block(s) to represent one step of some iterative algorithm and stack them up to do multi-steps of that algorithm. Sometimes the blocks can be even the same, which means a ``looped" transformer, i.e. implementing the same transformer blocks several times as a loop, can express those algorithms.
In our warm-up construction for a better understanding of the setup, we use similar techniques to construct the linear transformer that allows auto-regressive generation to iteratively implement the block. However, we require the practical auto-regressive setting, which is novel in the literature.

Most importantly, despite the close relation between our work and those previous expressiveness papers, our work mainly focuses on the \textbf{optimization} perspective. It is a big step beyond expressiveness because there is no guarantee that one can algorithmically find the constructed solutions in the previous work. \citet{ahn2024transformers, gatmiry2024can} are the only two papers related to optimization of multi-layer transformers over in-context linear regression setup. \citet{ahn2024transformers} analyzed the global optimizer/critical points for multi-layer transformers, but they didn't prove that any gradient-based algorithm can reach those solutions. Compared to all the works above, our proof techniques for the main theorems are completely orthogonal and \textbf{not} straightforward extensions of the previous papers like \citet{bai2023transformers}.

\citet{gatmiry2024can} is the most related work to us. They also proved some results on \textbf{learning} to implement multi-step GD by looped transformer. We will highlight the differences and \textbf{our novel contributions} of our work in the next section.

\subsection{Discussion on \citet{gatmiry2024can}}
\label{appendix subsec: discussion}
In this section, we compare our work with \citet{gatmiry2024can}. We begin by outlining the similarities and connections between the two works before highlighting our theoretical contributions in contrast to \citet{gatmiry2024can}.

Both \citet{gatmiry2024can} and our study analyze the dynamics of a one-layer linear transformer in the context of a linear regression task, demonstrating that transformers can implement multi-step gradient descent. We adopt similar architectural frameworks to those in \citet{zhang2023trained, ahn2024transformers, ahn2023linear, mahankali2023one}, as well as several other works. The key connection between our work and \citet{gatmiry2024can} lies in the observation that both looped transformers and transformers with CoT prompting through autoregressive generation are capable of naturally implementing iterative algorithms like gradient descent.

However, our data model and training objective are intrinsically different from those in \citet{gatmiry2024can}, leading to distinct insights. While \citet{gatmiry2024can} focuses on an ICL setting for linear regression tasks involving examples and a query, our task is centered on predicting the ground-truth weight vector $\w^*$ within context, i.e. in-context weight prediction. The final converging solutions are totally different, even though they both are equivalent to some type of GD. From the perspective of the training objective, \citet{gatmiry2024can} uses a standard squared loss over the ICL objective. In contrast, we use a sum of squared losses across all intermediate steps, corresponding to the CoT loss defined in \Cref{eqn: cot loss}. Therefore, we highlight the effectiveness in improving the performance of the CoT prompting on a shallow transformer, while \citet{gatmiry2024can} stress a multi-layer transformer with shared weights (looped transformer) can do multi-step GD through the layers.

From a technical perspective, \citet{gatmiry2024can} fix the outer layer and train only the matrix $\A$, which is analogous to our matrix $\W$. In contrast, our work allows for training both layers of the transformer, providing a stronger analysis of training dynamics. Our proof strategy is also novel, given that our training dynamics are more complicated: obtaining our final solution requires solving a challenging $d$-dimensional dynamical system, whereas prior work in ICL reduces the outer layer to a scalar.

\textbf{As a more profound theoretical contribution,} \textbf{we rigorously establish a clear performance gap between the one-layer transformer without CoT and the ones with CoT.} Specifically, the one-layer transformer without CoT is restricted to a single step of GD, with the final error $\Theta(d/\poly\log d)$, while a one-layer transformer with CoT can achieve a $O(1/\poly d)$ loss with only $\Theta(\log d)$ steps. On the other hand, \citet{gatmiry2024can}
do not show their transformer implementing the multi-step GD can outperform the transformer with one-step GD. According to Theorem 4.2, the looped transformer in their setting can only provably get the final loss down to $\frac{d^{5/2}L\cdot 4^L}{\sqrt{n}}$, where $L$ is the number of loops. However, a one-layer transformer can achieve $\Theta(d^2/n)$ loss by implementing one-step of GD, \textbf{asymptotically better than the multi-step solution in \citet{gatmiry2024can}}. 

\subsection{Looped Transformer Learns to Implement Multi-step Gradient Descent (Improved Version)}\label{appendix subsection: improve loop tf}
In this section, we also discuss how our estimation technique improves the result in \citet{gatmiry2024can} for looped transformers.
The gap between our analysis lies in our different methods of calculating the terms in the gradient concerning Wishart matrices. 
For intuition, we introduce \textbf{the novel expectation calculation method} in Section~\ref{sec: global convergence}, which asymptotically improves the estimation of higher moments of Wishart matrices in \citet{gatmiry2024can}. We adopt the combinatorial technique in \citet{gatmiry2024can} to compute the form of $\E\qty[\S \Lbd \S^k \bGamma \S^{k'}]$, but when we calculate the expected gradient we use the concentration tail bound technique to calculate the expectation.

We will demonstrate that applying our techniques proves looped transformers outperform their counterparts without looping in the ICL setting. Here, we follow the notations, models and loss defined in \citet{gatmiry2024can}, where $\bSigma$ denotes the task covariance of the Gaussian input $\x_i$'s, and $\S$ denotes the empirical covariance matrix $\S:= \frac{1}{n}\X\X^\top$.\footnote{In \citep{gatmiry2024can}, the covariance of the input is denoted as $\Sigma^*$ while the empirical covariance matrix is denoted as $\Sigma$. To maintain consistency in our main results, we adhere to our notation.}
We focus on the simple setting with $\bSigma = \bI$, but the same technique should also be applied to cases with other covariance matrices.

\paragraph{Settings for linear regression ICL and looped transformer} To keep this paper self-contained, we briefly go through the definition of the in-context learning on linear regression task, and the looped transformer architecture that solves the problem.

\textbf{In-context Learning for Linear Regression} For the in-context learning task, we consider the prompt format
\begin{align*}
    \Z = \begin{pmatrix} \x_1 & \x_2 & \cdots & \x_n & \x_{\text{query}} \\ y_1 & y_2 & \cdots & y_n & 0 \end{pmatrix} \in \mathbb{R}^{(d+1) \times (N+1)}
\end{align*}
where the data sequence is sampled from a linear regression task where the ground-truth 
\begin{equation}
    \w^*\sim\mathcal{N}(0,\bI_d)\quad \x_i\sim \mathcal{N}(0, \bSigma)\quad y_i ={\w^*}^\top \x_i \text{ for all } i\in[n].
\end{equation} The goal of in-context learning is to predict the correct label $y_{\text{query}}:={\w^*}^\top\x_{\text{query}}$ given a query $\x_{\text{query}}$ and the previous example pairs $(\x_i,y_i)$.

\textbf{Linear attention and looped transformer} Recall the linear self-attention architecture is
\begin{equation}
    \text{Attn}_{\V, \W}(\Z) = \V \Z \cdot{\Z^\top \W \Z}.
\end{equation}
The looped transformer inputs the previous output of the $\ell$-th layer to the $(\ell+1)$-th layer:
$$\Z^{\ell+1} = \Z^\ell - \frac{1}{n} \text{Attn}_{\V,\W}(\Z^\ell) \quad \text{for} \quad \ell = 0, 1, \ldots, L-1.
$$
By reusing the same set of attention parameters $\V,\W$ for all layers. That is equivalent to recursively iterating the same transformer block, so it is called \textbf{looped} transformer. 

Following the setting in \citep{gatmiry2024can}, we parameterize the model as follows:
\begin{align*}
    \W := \begin{pmatrix} \A & 0 \\ 0 & 0 \end{pmatrix}, \quad \V := \begin{pmatrix} 0_{d \times d} & 0 \\ 0 & 1 \end{pmatrix}.
\end{align*}
We consider the final output of the looped transformer as follows:
$$\text{TF}_L(\Z^0; \V,\W) = -\Z^{L}_{(d+1,n+1)}.$$
Here $L$ is the loop number of the transformer.
The training objective is
$$\mathcal{L}(\A) = \E_{\w^*,\X}\qty[\qty(\text{TF}_L(\Z^0; \V,\W)-y_{\text{query}})^2]$$
\textbf{Remark. } In the original paper of \citet{gatmiry2024can}, they also have a parameter $\bu$ in the bottom-left block of $\V$. However, it is not used in the optimization result, so we ignore that in this section.

\paragraph{Improved analysis for looped transformer}
In the following lemmas, we show an improved analysis for looped transformer. We first list some technical lemmas from \citet{gatmiry2024can}, computing the equivalent expression of population loss and the gradient expression.
\begin{lemma}[Corollary A.4 in \citet{gatmiry2024can}]
    \label{def: loop loss}The loss for loop Transformer is
    \begin{equation*}
        \mathcal{L}\qty(\A)=\E\qty[\tr\qty(\bI-\A^{\frac{1}{2}}\S\A^{\frac{1}{2}})^{2L}].
    \end{equation*}
\end{lemma}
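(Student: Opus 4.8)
The plan is to establish this identity by unrolling the $L$ loop iterations explicitly and reading off the query coordinate, following the standard unrolling of linear-attention layers in the in-context-learning literature. The starting point is the block structure of the parameters: since $\V$ has only a $1$ in its bottom-right entry, the map $\Z \mapsto \V\Z\cdot\Z^\top\W\Z$ has a nonzero last row only, so the first $d$ rows of $\Z^\ell$ stay frozen at the token matrix $\X = [\x_1,\dots,\x_n,\x_{\text{query}}]$ throughout the loop. Writing the last row of $\Z^\ell$ as $(\vy^\ell)^\top$, one computes $\Z^\top\W\Z = \X^\top\A\X$ and hence the scalar-per-token recursion $(\vy^{\ell+1})^\top = (\vy^\ell)^\top(\bI - \frac{1}{n}\X^\top\A\X)$, which gives $(\vy^L)^\top = (\vy^0)^\top(\bI - \frac{1}{n}\X^\top\A\X)^L$, so that $\text{TF}_L(\Z^0) = -(\vy^L)_{n+1}$ is an explicit quadratic-in-$\A$ functional of the prompt.

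Next I would convert this into a preconditioned gradient-descent residual. Iterating the elementary identity $\X(\bI - \frac{1}{n}\X^\top\A\X) = (\bI - \frac{1}{n}\X\X^\top\A)\X$ and substituting $y_i = (\w^*)^\top\x_i$, one identifies
\[
\text{TF}_L(\Z^0) - y_{\text{query}} \;=\; -\,\x_{\text{query}}^\top(\bI - \A\S)^L\w^* \;+\; r_{\X,\w^*},
\]
where $\S = \frac{1}{n}\X\X^\top$ and $r_{\X,\w^*}$ collects the corrections coming from the query label being clamped to $0$ in $\vy^0$ and from $\x_{\text{query}}$ itself appearing inside $\X^\top\A\X$. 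Thus, up to $r_{\X,\w^*}$, the loop runs $L$ steps of gradient descent on the in-context least-squares objective with preconditioner $\A$ starting from $\w_0 = \bzero$, whose error after $L$ steps is exactly $(\bI - \A\S)^L\w^*$.

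Then I would take expectations. Squaring the display and using independence of $\x_{\text{query}}$ and $\w^*$ together with $\E[\x_{\text{query}}\x_{\text{query}}^\top] = \bSigma = \bI$ and $\E[\w^*(\w^*)^\top] = \bI$ reduces $\mathcal{L}(\A)$ to $\E_\S[\tr((\bI - \S\A)^L(\bI - \A\S)^L)]$. Finally I would symmetrize: since $\bI - \A\S = \A^{1/2}(\bI - \A^{1/2}\S\A^{1/2})\A^{-1/2}$ and $\A^{1/2}\S\A^{1/2}$ is symmetric, cyclicity of the trace collapses the last expression to $\E_\S[\tr((\bI - \A^{1/2}\S\A^{1/2})^{2L})]$, which is the claim.

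The step I expect to be the real obstacle is the second one, i.e.\ making the query-column term $r_{\X,\w^*}$ precise and showing it is harmless: because the per-layer map acts on all $n+1$ label coordinates and $\x_{\text{query}}$ sits inside $\X^\top\A\X$, the iteration is not literally gradient descent on the $n$ context examples, so one must either carry $r_{\X,\w^*}$ through all $L$ layers and bound it (by $O(\poly(L,d)/n)$) or absorb $\x_{\text{query}}$ into $\S$ so that the identity becomes exact. The remaining ingredients — the block-structure collapse, the commutation identity, and the Gaussian second-moment identities — are routine.
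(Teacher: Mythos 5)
The paper does not prove this lemma; it is imported verbatim as Corollary~A.4 of \citet{gatmiry2024can}, so there is no internal argument to compare against. Your unrolling and the conversion to the preconditioned GD residual are the intended moves, and you correctly flag that the ``real obstacle'' is the query-column correction $r_{\X,\w^*}$: in the original formulation this is killed exactly by the standard mask that zeroes the query key/value slot, which also makes $\S$ the context-only covariance. Reaching $\mathcal{L}(\A)=\E_\S\qty[\tr\qty((\bI-\S\A)^L(\bI-\A\S)^L)]$ is therefore a correct intermediate step.

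The genuine gap is the final ``cyclicity of the trace collapses'' step, which is false. Write $M:=\bI-\A^{1/2}\S\A^{1/2}$. The two factors are conjugate to $M^L$ by \emph{opposite} powers of $\A^{1/2}$, namely $(\bI-\S\A)^L=\A^{-1/2}M^L\A^{1/2}$ and $(\bI-\A\S)^L=\A^{1/2}M^L\A^{-1/2}$, so their product is $\A^{-1/2}M^L\A\,M^L\A^{-1/2}$ and its trace is $\tr\qty(M^L\A\,M^L\A^{-1})$. The $\A$ wedged between the two copies of $M^L$ cannot be removed by cyclic permutation; the expression equals $\tr\qty(M^{2L})$ only if $\A$ commutes with $M$ (equivalently with $\S$), which does not hold for a fixed $\A$ and a random $\S$. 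A concrete check with $L=1$, $\A=\diag(4,1)$, $\S=\begin{pmatrix}1&1\\1&1\end{pmatrix}$ gives $\tr\qty((\bI-\S\A)(\bI-\A\S))=26$ while $\tr\qty((\bI-\A^{1/2}\S\A^{1/2})^{2})=17$, and the discrepancy also survives the Wishart expectation: for $L=1$ it equals $\frac{1}{n}\qty(d\,\tr(\A^2)-\tr(\A)^2)\ge 0$, strictly positive unless $\A\propto\bI$. Bridging from your intermediate form to the stated one requires an additional structural ingredient from \citet{gatmiry2024can}'s setup, not a trace-cyclicity identity.
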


\begin{lemma}[Equation (24) in \citet{gatmiry2024can}]    \label{lemma: loop derivative}
    The derivative of the loss can be written as
    \begin{equation*}
        \nabla_{\A}\mathcal{L}\qty(\A)=-\sum_{i=0}^{2L-1}\E\qty[\qty(\bI-\S\A)^i\S\qty(\bI-\S\A)^{2L-1-i}].
    \end{equation*}
\end{lemma}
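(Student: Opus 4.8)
This statement is Equation~(24) of \citet{gatmiry2024can}, so one route is simply to cite it; for completeness I would re-derive it directly from the loss formula in \Cref{def: loop loss} by matrix calculus. The plan has three steps: (i) pass the gradient through the expectation over $\S$; (ii) for fixed $\S$, differentiate the $2L$-fold matrix power using the product rule; (iii) re-average over $\S$.

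For step~(i), I would note that $\A^{1/2}\S\A^{1/2}$ is similar to $\S\A$ (conjugate by $\A^{1/2}$ on the positive-definite cone, extended by continuity to the PSD cone), so the integrand satisfies $\tr\qty(\bI-\A^{1/2}\S\A^{1/2})^{2L}=\tr\qty(\bI-\S\A)^{2L}$, which is a polynomial in the Gaussian entries of $\X$ whose coefficients are smooth and locally bounded in $\A$. Dominated convergence on a neighbourhood of any fixed $\A$ then justifies $\nabla_\A\E_\S\qty[\tr\qty(\bI-\S\A)^{2L}]=\E_\S\qty[\nabla_\A\tr\qty(\bI-\S\A)^{2L}]$.

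For step~(ii), fixing $\S$ and writing $\qty(\bI-\S\A)^{2L}=\prod_{j=1}^{2L}\qty(\bI-\S\A)$, a perturbation $\A\mapsto\A+t\mathbf{E}$ changes each of the $2L$ factors by $-t\,\S\mathbf{E}$, so the first-order change of the product is $\sum_{i=0}^{2L-1}\qty(\bI-\S\A)^i\,(-\S\mathbf{E})\,\qty(\bI-\S\A)^{2L-1-i}$. Differentiating the trace of this expression, reading off the matrix that pairs with $\mathbf{E}$, and using the commutation identity $\qty(\bI-\S\A)\S=\S\qty(\bI-\A\S)$ to place each summand in the stated one-sided form, leads to $\nabla_\A\tr\qty(\bI-\S\A)^{2L}=-\sum_{i=0}^{2L-1}\qty(\bI-\S\A)^i\S\qty(\bI-\S\A)^{2L-1-i}$; taking $\E_\S$ of both sides yields the claim.

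The only point that needs real care is the exchange of expectation and gradient in step~(i): the loss is naturally defined only for PSD $\A$ via the factor $\A^{1/2}$, so one must either work on the PSD cone through the similarity reduction above or verify that the dominating envelope is integrable uniformly near the iterate of interest. The remaining manipulations are routine matrix algebra, and the full computation appears in \citet{gatmiry2024can}.
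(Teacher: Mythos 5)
The paper offers no proof of this lemma at all — it is imported verbatim as Equation~(24) of \citet{gatmiry2024can} — so your primary route of citing it is exactly what the paper does. The problem is with your "for completeness" derivation, which does not actually produce the stated formula.

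The gap is in step~(ii). The product rule gives the first-order change of the matrix $M^{2L}$ (with $M=\bI-\S\A$) as $\sum_{i=0}^{2L-1}M^i(-\S\mathbf{H})M^{2L-1-i}$ for a perturbation $\mathbf{H}$, but as soon as you take the trace — which you must, since the loss is a scalar — cyclic invariance collapses every summand to the same term: $\tr\qty(M^i(-\S\mathbf{H})M^{2L-1-i})=-\tr\qty(M^{2L-1}\S\mathbf{H})$. Reading off the matrix that pairs with $\mathbf{H}$ therefore yields $\nabla_{\A}\tr\qty(M^{2L})=-2L\,M^{2L-1}\S$ (a one-sided product), not the two-sided sum $-\sum_i M^i\S M^{2L-1-i}$. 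The identity $(\bI-\S\A)\S=\S(\bI-\A\S)$ only slides $\S$ through powers of $M$; it cannot convert one expression into the other, and in fact the two expressions genuinely differ even after taking the expectation over $\S$. For $L=1$ one computes, using $\E[\S^2]=\frac{n+d+1}{n}\bI$ and $\E[\S\A\S]=\frac{n+1}{n}\A+\frac{\tr(\A)}{n}\bI$,
\begin{equation*}
\sum_{i=0}^{1}\E\qty[M^i\S M^{1-i}]-2\,\E\qty[M\S]=-\frac{d}{n}\A+\frac{\tr(\A)}{n}\bI\neq \bzero
\end{equation*}
for generic symmetric $\A$. So carrying out your steps (i)–(iii) faithfully proves a different identity than the one claimed, and the final sentence of your step~(ii) asserts an algebraic passage that does not exist.

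The deeper issue is that the stated sum is the gradient of the loss in its \emph{original} form in \citet{gatmiry2024can}, where the $2L$ factors of $(\bI-\S\A)$ sit on both sides of a fixed middle object (schematically $\E\,w^{*\top}(\bI-\A\S)^L(\cdots)(\bI-\S\A)^{L}w^{*}$), so that the product rule genuinely produces $2L$ \emph{distinct} matrices when the gradient is read off term by term; that structure is destroyed once you pass to the similarity-reduced scalar $\tr\qty(\bI-\S\A)^{2L}$ and invoke cyclicity. If you want a self-contained proof, you must start from that pre-reduction form of the loss rather than from \Cref{def: loop loss}; otherwise, the clean and correct option is to rely on the citation alone.
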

By using our techniques, we strengthen the conclusion and obtain the following theorem.

\begin{theorem}\label{thm: global convergence for loop tf}
    Suppose $n={\Omega}(dL^2\log^2d)$ and $\|\A(0)\|=O(1)$. Consider the gradient flow with respect to the loss $\mathcal{L}\qty(\A)$:
    \begin{equation*}
        \frac{\mathrm{d}}{\mathrm{d}t}\A\qty(t)=-\nabla_{\A}\mathcal{L}\qty(\A\qty(t)).
    \end{equation*}
    Then, for any $ \xi >\Theta\qty(\qty(\frac{L^2d\log^2 d}{n})^L)$, after time $t \geq \Omega\qty(\frac{1}{L^2}\qty(\frac{d}{\xi})^{\frac{L-1}{L}})$, we have $\mathcal{L}(\A(t)) \leq \xi.$ In particular, given any polynomially small $\xi > \Theta(\frac{1}{\poly(d)})$, there exists $L=\Theta(\log d)$, the final loss $\mathcal{L}(\A(t))\le \xi$. 
\end{theorem}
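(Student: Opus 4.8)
\textbf{Proof proposal for Theorem~\ref{thm: global convergence for loop tf}.}

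The plan is to run a one-dimensional eigenvalue analysis of the gradient flow on $\A$, leveraging the fact that both Lemma~\ref{def: loop loss} and Lemma~\ref{lemma: loop derivative} are spectral: if $\A(0)$ is diagonalized by some orthonormal basis, then in the idealized case $\S = \bI$ the flow keeps $\A(t)$ diagonal in that basis, and each eigenvalue $a$ evolves by $\dot a = -2L\,(1-a)^{2L-1}$, which drives $a \to 1$. The real dynamics differ from this only through the fluctuation $\dS = \S - \bI$, so the first step is to use the improved expectation-estimation technique from Section~\ref{sec: global convergence} — combining the combinatorial formula for $\E[\S\Lbd\S^k\bGamma\S^{k'}]$ with the exponential tail bound on $\|\dS\|_{op}$ valid for $n = \Omega(dL^2\log^2 d)$ — to write $\nabla_\A\mathcal{L}(\A)$ as the ``clean'' spectral gradient plus an error term whose operator norm is controlled by $O\big((L^2 d\log^2 d / n)$-type factors$\big)$ times the relevant residual $\|\bI - \A\|$ (or a weighted average of powers thereof). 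The key point, exactly as in the main CoT analysis, is that this error is \emph{coupled} to the residual $(1-a)$ rather than being an additive constant, so it cannot stall convergence away from the optimum.

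Second, I would show the flow stays in a good region: since $\|\A(0)\| = O(1)$ and the clean gradient pushes every eigenvalue monotonically toward $1$ (with the error term too small to reverse this once the residual is not tiny), a short invariant-region argument gives $\|\A(t)\| = O(1)$ for all $t$ and $\lambda_{\min}(\A(t))$ bounded below, so all the concentration estimates remain uniformly valid along the trajectory. Third, I would convert the residual dynamics into a scalar ODE differential inequality. Writing $r(t)$ for the worst-case residual $\max_i |1 - a_i(t)|$ (or better, working with $\mathcal{L}(t) = \E\tr(\bI - \A^{1/2}\S\A^{1/2})^{2L}$ directly via Lemma~\ref{def: loop loss}), the clean part of the gradient yields $\dot{\mathcal{L}} \le -c L \,\mathcal{L}^{(2L-1)/(2L)} + (\text{error})$; the error contributes at most an additive $\Theta((L^2 d\log^2 d/n)^L)$ floor, which is why $\xi$ must exceed that threshold. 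Integrating $\dot{u} \le -cLu^{(2L-1)/(2L)}$ from $u(0) = \Theta(d)$ (since $\mathcal{L}(0) = \E\tr(\bI-\A^{1/2}\S\A^{1/2})^{2L} = O(d)$ under $\|\A(0)\|=O(1)$) gives $u(t) \le \xi$ once $t \gtrsim \frac{1}{L^2}(d/\xi)^{(L-1)/L}$ — the stated time bound — after which one checks the error floor is below $\xi$ by the assumption on $\xi$. The final ``in particular'' claim follows by noting that for $n = \Omega(dL^2\log^2 d)$ the floor $(L^2 d\log^2 d/n)^L$ is $\le (1/2)^L$, which is polynomially small in $d$ once $L = \Theta(\log d)$, so any target $\xi > \Theta(1/\poly(d))$ is reachable with an appropriate such $L$.

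The main obstacle I anticipate is making the polynomial-order powers $(2L-1)$ in the gradient interact cleanly with the $\dS$ expansion: each factor of $(\bI - \S\A)$ expands into a term in $\A$ plus a term in $\dS$, and a naive expansion of $\E[(\bI-\S\A)^i\,\S\,(\bI-\S\A)^{2L-1-i}]$ produces exponentially-in-$L$ many cross terms — precisely the $4^L$ blowup that weakens Theorem~4.2 of \citet{gatmiry2024can}. The fix, and the delicate part of the argument, is to resum these terms: rather than bounding each cross term separately, one keeps the dominant $(1-a)^{2L-1}$ behavior intact (using that $\|\dS\|_{op}$ is small with overwhelming probability so $\A^{1/2}\S\A^{1/2}$ stays close to $\A$ in the relevant norm) and bounds the \emph{aggregate} deviation by a single geometric-type sum whose ratio is $O(L\|\dS\|_{op})$, giving the $(L^2 d\log^2 d / n)^L$ scaling rather than $4^L$. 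Getting this resummation tight — controlling $\E[\|\dS\|_{op}^j]$ against the $j$-th power of the threshold for all $j$ up to $\sim L$ simultaneously, via the integrated tail bound — is the technical heart of the improvement over \citet{gatmiry2024can}, and is exactly where the Section~\ref{sec: global convergence} concentration lemmas do the work.
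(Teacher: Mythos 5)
Your overall plan matches the paper's: diagonalize $\A$, use the concentration-with-tail-bound estimate (Lemma~\ref{technique lemma: loop 1} / Lemma~\ref{lemma: gradient estimate of looped transformer}) to write $\nabla_{\A}\mathcal{L}(\A) = -2L(\bI-\A)^{2L-1} - \|\bI-\A\|^{2L-1}\Delta_{2L}$ with $\|\Delta_{2L}\|_{op} = O(L/\log d)$, turn the flow into a scalar ODE for the worst residual, integrate, and then feed the residual bound back into Lemma~\ref{def: loop loss}. Your discussion of how to resum the cross terms with ratio $O(L\|\dS\|_{op})$ instead of paying a $4^L$ combinatorial cost is exactly the content of Lemma~\ref{technique lemma: loop 1}, so that part is sound.

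There are two slips, one cosmetic and one substantive. Cosmetic: the clean eigenvalue flow is $\dot a = +2L(1-a)^{2L-1}$, not $-2L(1-a)^{2L-1}$; the gradient flow $\dot\A = -\nabla_\A\mathcal{L}$ picks up the minus sign from Lemma~\ref{lemma: loop derivative}, giving $\dot\A = 2L(\bI-\A)^{2L-1}+\cdots$, and this is what pushes each eigenvalue toward $1$ (for both $a<1$ and $a>1$, since $2L-1$ is odd). Substantive: your proposed scalar ODE $\dot{\mathcal{L}} \le -cL\,\mathcal{L}^{(2L-1)/(2L)}$ has the wrong exponent. The loss dissipation along gradient flow is $\dot{\mathcal{L}} = -\|\nabla_\A\mathcal{L}\|_F^2 \approx -4L^2\tr\big[(\bI-\A)^{4L-2}\big]$, and the Jensen/power-mean relation between $\tr[(\bI-\A)^{4L-2}]$ and $\mathcal{L}\approx\tr[(\bI-\A)^{2L}]$ gives
\[
\dot{\mathcal{L}} \;\lesssim\; -\,L^2\, d^{(1-L)/L}\,\mathcal{L}^{(2L-1)/L},
\]
with exponent $(2L-1)/L$, not $(2L-1)/(2L)$ (you appear to have conflated $\dot{\mathcal{L}}=-\|\nabla\mathcal{L}\|^2$ with $\|\nabla\mathcal{L}\|\sim L\,\mathcal{L}^{(2L-1)/(2L)}$). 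The exponent matters: $(2L-1)/(2L)<1$ would make $\mathcal{L}$ hit zero in finite time $O(d^{1/(2L)}) = O(1)$ for $L=\Theta(\log d)$, contradicting the theorem's $\Omega\big(\tfrac{1}{L^2}(d/\xi)^{(L-1)/L}\big)$ time scale, whereas $(2L-1)/L>1$ reproduces exactly the stated power-law convergence. The paper sidesteps this by tracking $v=(1-\lambda)^2$ for the worst eigenvalue, obtaining $\dot v \le -3Lv^{L}$ and solving directly, then plugging $\|\bI-\A\|^{2L}$ back into $\mathcal{L}\le 2d\|\bI-\A\|^{2L}$; you can follow either route once the exponent is corrected. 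Also, the floor $\Theta\big((L^2d\log^2 d/n)^L\big)$ is not an additive error term in $\dot{\mathcal{L}}$; it comes from the constraint $\|\bI-\A\|\gtrsim\sqrt{L^2d\log^2 d/n}$ needed for Lemma~\ref{technique lemma: loop 1} to hold, which caps how small $\mathcal{L}$ can be driven while the estimate remains valid.
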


\textbf{Remark.} This result also gets arbitrary polynomially small loss as the case in the CoT setting, establishing the separation between looped transformer and the transformer without loop.
\begin{proof}
    Denote $\lambda_i$ to be the $i$-th eigenvalue of $\A$ and $\bu_i$ to be the corresponding eigenvector. Let 
    \begin{equation*}
    k=\argmax_{i}\abs{1-\lambda_i},\quad \lambda := \lambda_k.
    \end{equation*}
    Suppose $\bu\in\argmax_{\bu_i,i\in[d]}|1-\lambda_i|$ is the corresponding eigenvector.  Apply \Cref{lemma: gradient estimate of looped transformer}, we have
    \begin{equation*}
        \frac{\mathrm{d}\A}{\mathrm{d}t}=2L\qty(\bI-\A)^{2L-1}+\|\bI-\A\|^{2L-1}\Delta_{2L}.
    \end{equation*}
    Multiply $\bu$ on both sides (note that $\bu$ is a fixed direction, so the time-differential is zero), we obtain
    \begin{equation*}
        \frac{\mathrm{d}\qty(\A\bu)}{\mathrm{d}t}=2L\qty(\bI-\A)^{2L-1}\bu+\|\bI-\A\|^{2L-1}\Delta_{2L}\bu.
    \end{equation*}
    Note that the gradient of $\A$ has the same eigenvectors $\bu_i$ as $\A$. Therefore, $\Delta_{2L}$ have the same eigenvector as $\A$ and we have
    \begin{equation*}
        \frac{\mathrm{d}(1-\lambda)}{\mathrm{d}t} = \frac{\mathrm{d}\bu^\top (\bI-\A)\bu}{\mathrm{d}t}=-2L\qty(1-\lambda)^{2L-1}-(1-\lambda)^{2L-1}\bu^\top \Delta_{2L}\bu.
    \end{equation*}
    By \Cref{lemma: gradient estimate of looped transformer}, we have $\|\Delta_{2L}\|\le O(\frac{L}{\log d})\le L$. The dynamics of $(1-\lambda)^2$ become:
    \begin{align*}
        \frac{\mathrm{d}(1-\lambda)^2}{\mathrm{d}t} &=-4L\qty(1-\lambda)^{2L}-2(1-\lambda)^{2L}\bu^\top \Delta_{2L}\bu\le -3L(1-\lambda)^{2L}.
    \end{align*}
    We can therefore upper bound the difference between $\lambda$ and 1 by solving the ODE:
    $$(1-\lambda(t))^2\le \qty((1-\lambda(0))^{2-2L} + 3L(L-1)t)^{-\frac{1}{L-1}}.$$
    
    When the training time $t\ge \Omega\qty(\frac{1}{L^2}\qty(\frac{d}{\xi})^{\frac{L-1}{L}}),$ the largest eigenvalue of $\bI-\A$, i.e. $$\norm{\bI-\A}^2=O\qty(\qty(\frac{\xi}{d})^{1/L}).$$ To satisfy the condition on \Cref{lemma: gradient estimate of looped transformer}, the $\xi$ should be lower bounded by $\Theta\qty(d\qty(\frac{L^2d\log^2 d}{n})^L)$. 
    
    Now consider the loss expression:
    \begin{align*}
        \mathcal{L}\qty(\A)&=\E\qty[\tr\qty(\bI-\A^{\frac{1}{2}}\S\A^{\frac{1}{2}})^{2L}]\\
        &=\E\qty[\tr\qty(\qty(\bI-\S\A)^{2L})]\\
        &=\tr\E\qty(\qty(\bI-\S\A)^{2L})\\
        &=\tr\qty(\qty(\bI-\A)^{2L}+\|\bI-\A\|^{2L}\Delta)\tag{By \Cref{technique lemma: loop 1}}\\
        &\le 2d\|\bI-\A\|^{2L}\le O\qty(\xi).
    \end{align*}
    By the computation above, $\mathcal{L}(\A)\le \Theta\qty(\xi).$ In particular, when $L = c\log d$ and $n\ge 2(L^2d\log^2 d)$ with $c>1$, the loss is smaller than $O(d^{c-1})$. Hence, only $O(\log d)$ steps of looping can achieve arbitrary polynomially small loss $O(\frac{1}{\poly(d)})$.
\end{proof}

\textbf{Remark.} If we have $\Theta(\log d)$ steps of GD using this looped transformer, we can get an arbitrary polynomially small loss. It is a huge improvement compared to \citet{gatmiry2024can}, and this result successfully establishes a separation between the looped transformer and the ones without the loop.

\begin{lemma}
    Assume $n=\tilde{\Omega}\qty(dL^2)$, $\|\bI-\A\|_{op}\ge\Theta\qty(\sqrt{\frac{L^2d\log^2 d}{n}})$.
    \begin{equation*}
        \E\qty[\qty(\bI-\S\A)^i\S\qty(\bI-\S\A)^{2L-1-i}]=\qty(\bI-\A)^{2L-1}+\norm{\bI-\A}^{2L-1}\Delta
    \end{equation*}
    where $\Delta$ has $O\qty(\frac{1}{\log d})$-operator norm.
    \label{technique lemma: loop 1}
\end{lemma}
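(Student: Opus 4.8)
The plan is to treat the empirical covariance $\S=\frac{1}{n}\X\X^\top$ as a small, \emph{centered} perturbation of the identity and to expand the entire product around $\dS:=\S-\bI$. Two facts about $\dS$ drive the argument: it is centered, $\E[\dS]=\bzero$, and, because $n=\tilde\Omega(dL^2)$, it is strongly concentrated — the standard Gaussian singular-value tail bound (e.g.\ \citet{vershynin2018high}), integrated against $m u^{m-1}$, gives $\E\|\dS\|_{op}^{m}\le(C\sqrt{d/n})^{m}$ for every $m\le 2L$. I will also use $\|\A\|_{op}=O(1)$ and $\|\bI-\A\|_{op}=O(1)$, which hold throughout the regime of interest (cf.\ the trajectory control inside the proof of \Cref{thm: global convergence for loop tf}).

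Write $\bI-\S\A=(\bI-\A)-\dS\A$ and $\S=\bI+\dS$, so that $M:=(\bI-\S\A)^i\,\S\,(\bI-\S\A)^{2L-1-i}$ is a product of $2L$ matrix factors — the $2L-1$ copies of $\bI-\S\A$ together with the single middle $\S$ — each of which is the sum of a ``clean'' summand ($\bI-\A$, resp.\ $\bI$) and a ``dirty'' summand ($-\dS\A$, resp.\ $\dS$). Distributing gives $M=\sum_{T\subseteq[2L]}M_T$, where $M_T$ takes the dirty summand in exactly the slots indexed by $T$. First I would peel off the two leading layers of this expansion. The term $M_\emptyset=(\bI-\A)^i\,\bI\,(\bI-\A)^{2L-1-i}=(\bI-\A)^{2L-1}$ is exactly the claimed main term. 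For every $T$ with $|T|=1$, the term has the form $D_1\,(\pm\dS\A\ \text{or}\ \dS)\,D_2$ with $D_1,D_2$ \emph{deterministic} products of $(\bI-\A)$'s and $\bI$, so $\E[M_T]=D_1\,(\pm\E[\dS]\A\ \text{or}\ \E[\dS])\,D_2=\bzero$ by centering. Hence $\E[M]=(\bI-\A)^{2L-1}+\sum_{|T|\ge2}\E[M_T]$, and it remains to control the second-order-and-higher remainder.

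For $|T|=m\ge2$ the factor $M_T$ contains at most $m$ copies of $\A$, at least $2L-1-m$ copies of $\bI-\A$, and exactly $m$ copies of $\dS$, so $\|M_T\|_{op}\le (C_0)^{m}\,\|\bI-\A\|_{op}^{2L-1-m}\,\|\dS\|_{op}^{m}$ with $C_0=O(1)$ depending only on $\|\A\|_{op}$ and $\|\bI-\A\|_{op}$; taking expectations and applying the moment bound, $\|\E[M_T]\|_{op}\le (C_0C)^{m}\,\|\bI-\A\|_{op}^{2L-1}\,\big(\sqrt{d/n}/\|\bI-\A\|_{op}\big)^{m}$. Summing over the $\binom{2L}{m}\le(2L)^m$ subsets of size $m$ turns the remainder into a geometric series:
\[
\big\|\,\E[M]-(\bI-\A)^{2L-1}\,\big\|_{op}\ \le\ \|\bI-\A\|_{op}^{2L-1}\sum_{m\ge2} r^{m},
\qquad r:=\frac{2LC_0C\,\sqrt{d/n}}{\|\bI-\A\|_{op}}.
\]
The hypothesis $\|\bI-\A\|_{op}\ge\Theta\big(\sqrt{L^2 d\log^2 d/n}\big)=\Theta\big(L\sqrt d\,\log d/\sqrt n\big)$ is precisely what forces $r=O(1/\log d)\le\frac12$; then $\sum_{m\ge2}r^m\le 2r^2=O(1/\log^2 d)$, and putting $\Delta:=\|\bI-\A\|_{op}^{-(2L-1)}\big(\E[M]-(\bI-\A)^{2L-1}\big)$ yields $\|\Delta\|_{op}=O(1/\log d)$, as claimed. (If needed downstream, rotational invariance of $\S$ further shows that $\E[M]$, and hence $\Delta$, commutes with $\A$.)

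The main obstacle is the combinatorial size of the expansion: it has $2^{2L}=4^L$ terms, each a priori of magnitude $\approx\|\bI-\A\|_{op}^{2L-1}$, so comparing them with the target error one at a time is hopeless. Two observations save the argument, and together they are the ``novel expectation-calculation'' technique flagged in \Cref{sec: global convergence}: (i) the $\binom{2L}{1}$ first-order terms \emph{vanish identically} because $\dS$ is centered — ordinary concentration of $\|\dS\|_{op}$, which would only make them small, is not enough here; and (ii) every factor of $\dS$ beyond the first costs a multiplicative $r=O(1/\log d)<1$, so the remainder collapses to a geometric series dominated by its $m=2$ term, and it is exactly the assumed lower bound on $\|\bI-\A\|_{op}$ that makes $r$ this small (which is also why the hypothesis needs $n=\tilde\Omega(dL^2)$, so that such an $\A$ with $\|\A\|_{op}=O(1)$ can exist). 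The one computation requiring care is the moment estimate $\E\|\dS\|_{op}^m\le(C\sqrt{d/n})^m$ at all orders $m\le 2L=\Theta(\log d)$, obtained by integrating the Gaussian singular-value tail.
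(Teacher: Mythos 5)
Your proof is correct and follows the same overall strategy as the paper's: expand the product of $2L$ factors in powers of $\dS=\S-\bI$, observe that every term with exactly one $\dS$ has zero mean, and collapse the $\binom{2L}{m}$-term higher-order remainder into a geometric series whose ratio $r\sim L\sqrt{d/n}/\|\bI-\A\|_{op}$ is $O(1/\log d)$ precisely because of the stated lower bound on $\|\bI-\A\|_{op}$. The one place you genuinely deviate from the paper is the device for taking the expectation of the remainder: you push $\E[\cdot]$ inside term by term via the uniform moment bound $\E\|\dS\|_{op}^m\le(C\sqrt{d/n})^m$ for all $m\le 2L=\Theta(\log d)$, whereas the paper first bounds the random quantity $\|\widetilde\Delta\|$ pointwise in terms of $\|\dS\|$, splits at the threshold $C'/\log d$, and integrates the sub-Gaussian tail of $\|\dS\|$ from \citet{vershynin2018high}. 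Your moment-bound route is slightly cleaner, sidesteps the somewhat awkward threshold-choosing step, and even yields the marginally stronger bound $O(1/\log^2 d)$ (still within the lemma's claim). For completeness you should spell out the moment estimate you invoke: using the tail bound, the bulk of $\int_0^\infty m u^{m-1}\Pr(\|\dS\|_{op}>u)\,\mathrm{d}u$ is carried by $u\lesssim\sqrt{d/n}$, while the contribution from $u\gtrsim\sqrt{d/n}$ is $e^{-\Omega(d)}$, which for $m=O(\log d)\ll\sqrt d$ is negligible; this is standard but not entirely trivial when $m$ grows with $d$. Finally, note that both your argument and the paper's implicitly use $\|\A\|_{op}=O(1)$ (needed to make your $C_0=O(1)$) and $\|\bI-\A\|_{op}=O(1)$, neither of which appears among the lemma's hypotheses; you correctly flag that these hold along the gradient-flow trajectory in \Cref{thm: global convergence for loop tf}, where the lemma is actually invoked.
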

\begin{proof}
    Denote $\delta\S=\S-\bI$. Then we expand the term in the expectation:
    \begin{equation*}
        \qty(\bI-\S\A)^i\S\qty(\bI-\S\A)^{2L-1-i}=\qty(\bI-\A-\delta\S\cdot\A)^i\qty(\bI+\delta\S)\qty(\bI-\A-\delta\S\cdot\A)^{2L-1-i}
    \end{equation*}
    Now take expectation to both sides. Note that $\E\qty[\delta\S]=\bzero$, so all terms containing first order $\delta\S$ vanish. We denote
    \begin{align*}
        \norm{\bI-\A}^{2L-1}\widetilde{\Delta}={}&\qty(\bI-\S\A)^i\S\qty(\bI-\S\A)^{2L-1-i}-\qty(\bI-\A)^i\qty(\bI+\delta\S)\qty(\bI-\A)^{2L-1-i}\\
        &-i\qty(\bI-\A)^{i-1}\cdot\delta\S\cdot\A\qty(\bI-\A)^{2L-1-i}-\qty(2L-1-i)\qty(\bI-\A)^{2L-2}\cdot\delta\S\cdot\A
    \end{align*}
    to be the sum of all higher order terms (the degree of $\delta\S\ge 2$).
    We can estimate the expectation using similar technique as in \Cref{lemma: concentration 1}. 
    \begin{align*}
        \|\widetilde{\Delta}\|_{op}\le{}&\sum_{k=2}^{2L-1}\binom{2L-1}{k}\frac{\norm{\qty(\bI-\A)^{2L-1-k}\qty(\delta\S\cdot\A)^k}}{\norm{\bI-\A}^{2L-1}}\tag{Term 1}\\
        +&\sum_{k=1}^{2L-1-i}\binom{2L-1-i}{k}\frac{\norm{\qty(\bI-\A)^{i}\delta\S\qty(\bI-\A)^{2L-1-i-k}\qty(\delta\S\cdot\A)^k}}{\norm{\bI-\A}^{2L-1}}\tag{Term 2}\\
        +&\sum_{k=1}^{i}\sum_{l=0}^{2L-1-i}\binom{i}{k}\binom{2L-1-i}{l}\frac{\norm{\qty(\bI-\A)^{i-k}\qty(\delta\S\cdot\A)^k\delta\S\qty(\bI-\A)^{2L-1-i-l}\qty(\delta\S\cdot\A)^l}}{\norm{\bI-\A}^{2L-1}}\tag{Term 3}
    \end{align*}
    To get an estimate of the operator norm, we bound each term (Term 1 to Term 3) respectively.
    
    \textbf{Term 1:}
    \begin{equation*}
        \sum_{k=2}^{2L-1}\binom{2L-1}{k}\frac{\norm{\qty(\bI-\A)^{2L-1-k}\qty(\delta\S\cdot\A)^k}}{\norm{\bI-\A}^{2L-1}}\le\sum_{k=2}^{2L-1}\qty(\qty(2L-1)\frac{\norm{\delta\S}\cdot\norm{\A}}{\norm{\bI-\A}})^k
    \end{equation*}
    Note that $\|\delta\S\|$ is of order $O(\sqrt{\frac{d}{n}})$ with high probability, the term in the middle is less than 1 and the dominating term of the error is $\qty(\qty(2L-1)\frac{\norm{\delta\S}\cdot\norm{\A}}{\norm{\bI-\A}})^2=O(\frac{1}{\log d})$.
    
    \textbf{Term 2:}
    \begin{align*}
        &\sum_{k=1}^{2L-1-i}\binom{2L-1-i}{k}\frac{\norm{\qty(\bI-\A)^{i}\delta\S\qty(\bI-\A)^{2L-1-i-k}\qty(\delta\S\cdot\A)^k}}{\norm{\bI-\A}^{2L-1}}\\
        \le{}&\frac{\norm{\bI-\A}}{\norm{\A}}\sum_{k=1}^{2L-1-i}\qty(\qty(2L-1-i)\frac{\norm{\delta\S}\cdot\norm{\A}}{\norm{\bI-\A}})^{k+1}
    \end{align*}
    \textbf{Term 3:}
    \begin{align*}
        &\sum_{k=1}^{i}\sum_{l=0}^{2L-1-i}\binom{i}{k}\binom{2L-1-i}{l}\frac{\norm{\qty(\bI-\A)^{i-k}\qty(\delta\S\cdot\A)^k\delta\S\qty(\bI-\A)^{2L-1-i-l}\qty(\delta\S\cdot\A)^l}}{\norm{\bI-\A}^{2L-1}}\\
        \le&\frac{\norm{\bI-\A}}{\norm{\A}}\sum_{k=1}^{i}\qty(i\frac{\norm{\delta\S}\cdot\norm{\A}}{\norm{\bI-\A}})^k\cdot\sum_{l=0}^{2L-1-i}\qty(\qty(2L-1-i)\frac{\norm{\delta\S}\cdot\norm{\A}}{\norm{\bI-\A}})^{l+1}
    \end{align*}
    Now we upper bound the operator norm of the error term $\Delta:=\E\qty[\widetilde{\Delta}]$:
    \begin{align*}
        \norm{\Delta}={}&\norm{\E\qty[\widetilde{\Delta}]}=\E\qty[\|\widetilde{\Delta}\|]\\
        ={}&\E\qty[\|\widetilde{\Delta}\|\qty(\indi\qty{\|\widetilde{\Delta}\|\le \frac{C'}{\log d}}+\indi\qty{\|\widetilde{\Delta}\|> \frac{C'}{\log d}})]\\
        \le{}&\frac{C'}{\log d}+\int_{\frac{C'}{\log d}}^\infty\Pr\qty[\|\widetilde{\Delta}\|\ge s]\mathrm{d}s.
    \end{align*}
    When $\|\tdelta\|\ge s$ where $s\ge \frac{C'}{\log d}$, we can first upper bound the $\|\tdelta\|$ with $\|\dS\|$ using : there exists some constant $C_1>0$ s.t.
    \begin{align*}
        \|\tdelta\|\le \max\qty((C_1 L \|\dS\|)^2,\qty(C_1 L \|\dS\|)^{2L+4}).
    \end{align*}
    Therefore, when $\|\tdelta\|\geq s$, $\|\dS\|\ge \min\{\frac{s^{1/2}}{C_1L},\frac{s^{1/(2L+4)}}{C_1L}\}$. To apply the tail bound, we need to make sure we pick some $s'$ such that $\max\qty(\delta,\delta^2)\le \min\{\frac{s^{1/2}}{C_1L},\frac{s^{1/(2L+4)}}{C_1L}\}$ to upper bound the integral of probability, where $\delta = C(\sqrt{\frac{d}{n}}+\frac{s'}{\sqrt{n}})$. Now since $s>\frac{C'}{\log d}$, $\min\{\frac{s^{1/2}}{C_1L},\frac{s^{1/(2L+4)}}{C_1L}\}\ge C_\alpha\frac{1}{\log^{\frac{3}{2}}d}$ for some constant $C_\alpha$. Therefore, we just need $\max\{\frac{s'}{\sqrt{n}},\frac{s'^2}{n}\} \le \min\{\frac{s^{1/2}}{C_1L},\frac{s^{1/(2L+4)}}{C_1L}\}, \text{ i.e. } s'\le\min\qty{C_2\frac{s^{1/(2L+4)}\sqrt{n}}{L}, C_3\frac{s^{1/(4L+8)}\sqrt{n}}{\sqrt{L}},C_4\frac{\sqrt{sn}}{L},C_5\frac{s^{1/4}\sqrt{n}}{\sqrt{L}}}.$
    
    Applying the tail bound (\ref{eqn: operator norm tail bound}) with $s'= \min\qty{C_2\frac{s^{1/(2L+4)}\sqrt{n}}{L}, C_3\frac{s^{1/(4L+8)}\sqrt{n}}{\sqrt{L}},C_4\frac{\sqrt{sn}}{L},C_5\frac{s^{1/4}\sqrt{n}}{\sqrt{L}}}$ where $C_2,C_3,C_4,C_5$ are some constant, we have the error term for the tail expectation,
    \begin{align*}
        \int_{\frac{C'}{\log d}}^\infty\Pr[\|\tdelta\|\ge s]\mathrm{d}s&\le \int_{\frac{C'}{\log d}}^\infty\Pr[\|\dS\|\ge \min\qty{\frac{s^{1/2}}{C_1L},\frac{s^{1/(2L+4)}}{C_1L}}]\mathrm{d}s\\&\le 2\int_{\frac{C'}{\log d}}^\infty\exp{-s'^2}\mathrm{d}s.
    \end{align*}
    Now we estimate the upper bound of error with $$s'^2 = \min\qty{C_2^2\cdot \frac{s^{1/\qty(L+2)}}{L^2}n, C_3^2\cdot \frac{s^{1/(2L+4)}}{L}n, C_4^2\cdot\frac{{sn}}{L^2},C_5^2\cdot\frac{\sqrt{s} n}{{L}}}.$$

    For the first term, let $x=\frac{C_2^2 n}{L^2}s^{1/\qty(L+2)}$:
    \begin{align*}
        &2\int_{\frac{C'}{\log d}}^\infty\exp{-C_2^2\cdot \frac{s^{1/\qty(L+2)}}{L^2}n}\mathrm{d}s\\
        ={}&2\qty(L+2)\int_{\frac{C_2^2n}{L^2}\qty(\frac{C'}{\log d})^{\frac{1}{L+2}}}^\infty\qty(\frac{L^2}{C_2^2n})^{L+2}\exp{-x}x^{L+1}\mathrm{d}x\\
        \le{}& 2\qty(L+2)\cdot\qty(\frac{L^2}{C_2^2n})^{L+2}\cdot \qty(\frac{C_2^2n}{L^2}\qty(\frac{C'}{\log d})^{\frac{1}{L+2}})^{L+1}\exp{-\frac{C_2^2n}{L^2}\qty(\frac{C'}{\log d})^{\frac{1}{L}}}
        \le{} \frac{1}{\log d}.
    \end{align*}
    The second term, let $x=C_3^2\cdot \frac{s^{1/(2L+4)}}{L}n$: 
    \begin{align*}
        &2\int_{\frac{C'}{\log d}}^\infty\exp{-C_3^2\cdot \frac{s^{1/(2L+4)}}{L}n}\mathrm{d}s\\
        ={}&4\qty(L+2)\int_{\frac{C_3^2n}{L}\qty(\frac{C'}{\log d})^{\frac{1}{2L+4}}}^\infty\qty(\frac{L}{C_3^2n})^{2L+4}\exp{-x}x^{2L+3}\mathrm{d}x\\
        \le{}& 4\qty(L+2)\cdot\qty(\frac{L}{C_3^2n})^{2L+4}\cdot \qty(\frac{C_3^2n}{L}\qty(\frac{C'}{\log d})^{\frac{1}{2L+4}})^{2L+3}\exp{-\frac{C_3^2n}{L}\qty(\frac{C'}{\log d})^{\frac{1}{2L+4}}}
        \le{} \frac{1}{\log d}.
    \end{align*}
    For the third term, let $x=\frac{C_4^2 sn}{L^2}$:
    \begin{align*}
        2\int_{\frac{C'}{\log d}}^\infty\exp{-\frac{C_4^2 sn}{k^2}}\mathrm{d}s
        ={}&2\int_{\frac{C'}{\log d}\cdot\frac{C_4^2n}{L^2}}^\infty\frac{L^2}{C_4^2n}\exp{-x}\mathrm{d}x\\
        \le{}&\frac{2L^2}{C_4^2n}\exp{-\frac{C'}{\log d}\cdot\frac{C_4^2n}{L^2}} 
        \le{} \frac{1}{\log d}.
    \end{align*}
    The fourth term, let $x=C_5^2\cdot \frac{s^{1/2}}{L}n$: 
    \begin{align*}
        &2\int_{\frac{C'}{\log d}}^\infty\exp{-C_5^2\cdot \frac{s^{1/2}}{L}n}\mathrm{d}s\\
        ={}&\frac{4L^2}{n^2C_5^4}\int_{C_5^2\frac{n}{L}\qty(\frac{C'}{\log d})^{1/2}}^\infty\exp{-x}x\mathrm{d}x\\
        \le{}&\frac{4L^2}{n^2C_5^4}\cdot C_5^2\frac{n}{L}\qty(\frac{C'}{\log d})^{1/2}\exp{-C_5^2\frac{n}{L}\qty(\frac{C'}{\log d})^{1/2}}
        \le{} \frac{1}{\log d}.
    \end{align*}
    
    Therefore, we plug this error back into the upper bound of $\|\Delta\|$:
    \begin{align*}
        \|\Delta\|
        &\le \frac{C'}{\log d}+\int_{\frac{C'}{\log d}}^\infty\Pr[\|\tdelta\|\ge s]\mathrm{d}s\\
        &\le \frac{C'}{\log d}+\int_{\frac{C'}{\log d}}^\infty\Pr[\|\dS\|\ge \min\{\frac{s^{1/2}}{C_1k},\frac{s^{1/(2L+4)}}{C_1k}\}]\mathrm{d}s\le O\qty(\frac{1}{\log d}).
    \end{align*}
\end{proof}

\begin{lemma}
    \label{lemma: gradient estimate of looped transformer}
    Assume $n=\tilde{\Omega}\qty(dL^2)$, for any $\A$ that $\norm{\bI-\A}\ge\Theta\qty(\sqrt{\frac{L^2d\log^2 d}{n}})$, we have the following gradient estimate:
    \begin{equation*}
        \nabla_{\A}\mathcal{L}\qty(\A)=-2L\qty(\bI-\A)^{2L-1}-\|\bI-\A\|^{2L-1}\Delta_{2L}.
    \end{equation*}
    where $\|\Delta_{2L}\|\le O\qty(\frac{L}{\log d})$.
\end{lemma}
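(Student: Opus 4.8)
The plan is to obtain this estimate by aggregating Lemma~\ref{technique lemma: loop 1} across the $2L$ summands of the exact gradient formula of Lemma~\ref{lemma: loop derivative}. First I would recall
$$\nabla_{\A}\mathcal{L}(\A)=-\sum_{i=0}^{2L-1}\E\qty[(\bI-\S\A)^i\S(\bI-\S\A)^{2L-1-i}],$$
which is a sum of exactly $2L$ terms. The two hypotheses here, $n=\tilde\Omega(dL^2)$ and $\norm{\bI-\A}_{op}\ge\Theta(\sqrt{L^2d\log^2 d/n})$, are precisely those of Lemma~\ref{technique lemma: loop 1}, so it applies to every summand and yields, for each $i\in\{0,\dots,2L-1\}$,
$$\E\qty[(\bI-\S\A)^i\S(\bI-\S\A)^{2L-1-i}]=(\bI-\A)^{2L-1}+\norm{\bI-\A}^{2L-1}\Delta^{(i)},\qquad \norm{\Delta^{(i)}}\le \frac{C}{\log d},$$
where $C$ is an absolute constant.

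Summing over $i$ then gives
$$\nabla_{\A}\mathcal{L}(\A)=-2L(\bI-\A)^{2L-1}-\norm{\bI-\A}^{2L-1}\sum_{i=0}^{2L-1}\Delta^{(i)},$$
so setting $\Delta_{2L}:=\sum_{i=0}^{2L-1}\Delta^{(i)}$ produces the claimed decomposition, with $\norm{\Delta_{2L}}\le\sum_{i=0}^{2L-1}\norm{\Delta^{(i)}}\le 2L\cdot\frac{C}{\log d}=O(L/\log d)$. As a byproduct, since the distribution of $\S$ is rotationally invariant, each of these expectations is in fact a polynomial in $\A$, so $\nabla_{\A}\mathcal{L}(\A)$ — and hence $\Delta_{2L}$ — commutes with $\A$ and shares its eigenvectors, which is the form in which the lemma is used in Theorem~\ref{thm: global convergence for loop tf}.

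All the genuine analytic work sits inside Lemma~\ref{technique lemma: loop 1}: expanding $(\bI-\S\A)^i$ in powers of $\delta\S:=\S-\bI$, dropping the first-order term via $\E[\delta\S]=\bzero$, splitting the expectation of the higher-order remainder according to whether $\norm{\delta\S}$ is below or above a polylog threshold, and integrating the sub-exponential operator-norm tail bound for the Wishart error. Consequently there is no substantive obstacle in the present lemma; the only point that genuinely needs care is the bookkeeping — confirming that the per-summand error constant in Lemma~\ref{technique lemma: loop 1} does not blow up with $i$ (it does not, because the geometric series in Terms~1--3 of its proof are all controlled by powers of $(2L-1)\norm{\delta\S}\norm{\A}/\norm{\bI-\A}$ with both $i$ and $2L-1-i$ at most $2L-1$), so that applying the triangle inequality to $2L$ terms costs only the advertised factor $L$.
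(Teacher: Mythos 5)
Your proposal matches the paper's proof essentially verbatim: invoke the exact gradient formula of Lemma~\ref{lemma: loop derivative}, apply Lemma~\ref{technique lemma: loop 1} term by term, and sum the $2L$ per-term error matrices to lose a factor of $L$ in the operator norm. Your extra observation about the uniformity of the constant in $i$ and about $\Delta_{2L}$ commuting with $\A$ is correct and is the right thing to check, but it does not change the route.
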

\begin{proof}
    
    We use our technique to estimate the derivative of the loss. By \Cref{lemma: loop derivative}, we have
    \begin{equation*}
        \nabla_{\A}\mathcal{L}\qty(\A)=-\sum_{i=0}^{2L-1}\E\qty[\qty(\bI-\S\A)^i\S\qty(\bI-\S\A)^{2L-1-i}].
    \end{equation*}
    Apply \Cref{technique lemma: loop 1} to each term in the summation, we have
    \begin{equation*}
        \E\qty[\qty(\bI-\S\A)^i\S\qty(\bI-\S\A)^{2L-1-i}]=\qty(\bI-\A)^{2L-1}+\|\bI-\A\|^{2L-1}\Delta_i.
    \end{equation*}
    where $\Delta_i$ has $O\qty(\frac{1}{\log d})$-operator norm.
    Denote $\Delta_{2L}=\sum_{i=0}^{2L-1}\Delta_i$ and add all terms together, we obtain
    \begin{equation*}
        \nabla_{\A}\mathcal{L}\qty(\A)=-2L\qty(\bI-\A)^{2L-1}-\|\bI-\A\|^{2L-1}\Delta_{2L}.
        \label{eq: loop derivative}
    \end{equation*}
    where $\Delta_{2L}$ has $O\qty(\frac{L}{\log d})$-operator norm.
\end{proof}

\subsection{Limitation and future directions}
\paragraph{Architecture and parameterization} In this work, we use the single-layer linear transformer to analyze the training dynamics. Moreover, we adopt the same reparameterization and similar initialization in previous works \citep{zhang2023trained, tian2023scan, chen2024training, mahankali2023one, ahn2024transformers}. It deviates from the practical softmax attention with $\Q,\K,\V$ parameterization and random initialization, which is a limitation of this work. 

However, analyzing the linear counterpart of the model before targeting the more difficult practical models is common in the development of learning theory. As for linear attention, the connection between linear attention and softmax attention is also partially justified by the empirical observations in \citet{ahn2023linear}. Analyzing the dynamics using more practical architectures will be a very important and fundamental future direction.

\paragraph{Population loss and sample complexity} Following most of the previous work, we use population loss when analyzing the training trajectory instead of using finite sample loss. This modification is to simplify the analysis and focus on the population dynamics without noise. A possible future step is to generalize this analysis to a finite sample setting and train the model with online SGD.

\paragraph{CoT on iterative tasks} In this work, we mainly focus on \textbf{iterative} tasks, one of the simplest forms where multi-step CoT can help yield better performance. That serves as the initial step towards understanding why CoT helps reasoning following the first principle.
As a limitation, though CoT can empower the transformer to acquire compositional reasoning capability instead of doing the same iterative step, it is a much harder question beyond our paper's scope. It is a very important future direction and definitely worth further exploring.

\section{Proofs of theorems in \Cref{sec: expressiveness improvement of cot}}
In this section, we prove the expressiveness results of the linear transformers with and without CoT. In \Cref{appendix subsec: proof of lower bound}, we prove that a one-layer linear transformer without CoT can only obtain the one-step gradient descent solution. In \Cref{appendix subsec: proof of construction of cot}, we prove that there exists a one-layer linear transformer that implements multi-step gradient descent with the CoT prompting. As corollaries, there exists a separation between the one-step and multi-step solutions. 
\subsection{Proof of \Cref{main thm: lower bound for tf without cot}}
\label{appendix subsec: proof of lower bound}
We first restate the theorem: 

\begin{theorem}[Lower bound without CoT]
If the global minimizer of $\mathcal{L}^{\mathrm{Eval}}(\V,\W)$ is $(\V^*,\W^*)$, the corresponding one-layer transformer $f_{\mathrm{LSA}}(\Z_0)_{[:,-1]}$ implements one step GD on a linear model with some learning rate $\eta = \frac{n}{n+d+1}$ and the transformer outputs $\frac{\eta}{n} \X\y^\top$.
    \label{appendix thm: lower bound for tf without cot}
\end{theorem}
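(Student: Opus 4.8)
The goal is to characterize the global minimizer of $\mathcal{L}^{\mathrm{Eval}}(\V, \W)$ when $k = 0$, i.e., without CoT. The plan is to follow the strategy sketched in the main text: show that $\mathcal{L}^{\mathrm{Eval}}$ is, up to an additive constant independent of $(\V, \W)$, equal to a ``shifted'' quadratic
$$
\widetilde{\mathcal{L}}(\V, \W) = \frac{1}{2}\E_{\X, \w^*}\qty[\norm{f_{\mathrm{LSA}}(\Z_0)_{[:,-1]} - \qty(\bzero_d, 0, \tfrac{\eta^*}{n}\X\y^\top, 1)}^2] + C,
$$
where $\eta^* = \tfrac{n}{n+d+1}$. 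Once this identity is established, the set of global minimizers is exactly those $(\V, \W)$ for which $f_{\mathrm{LSA}}(\Z_0)_{[:,-1]} = \qty(\bzero_d, 0, \tfrac{\eta^*}{n}\X\y^\top, 1)$ for (almost) all $\X$, and the explicit construction in \Cref{eqn: one-step construction} (with $\eta^*$ in place of $\eta$) attains this, so the claim follows.

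First I would write out $f_{\mathrm{LSA}}(\Z_0)_{[:,-1]}$ explicitly in block form. Since the last token of $\Z_0$ is $(\bzero_d, 0, \bzero_d, 1)$ (using $\w_0 = \bzero_d$), only the columns of $\W$ and rows of $\V$ acting on the nonzero coordinates survive, so $f_{\mathrm{LSA}}(\Z_0)_{[:,-1]}$ is an affine function of the empirical moments $\tfrac1n\X\X^\top$, $\tfrac1n\X\y^\top$, and $\tfrac1n\y\y^\top$, with coefficients given by blocks of $\V, \W$. Next I would compute the gradient of the original loss $\mathcal{L}^{\mathrm{Eval}}$ (with target $(\bzero_d, 0, \w^*, 1)$) and the gradient of $\widetilde{\mathcal{L}}$ (with target $(\bzero_d, 0, \tfrac{\eta^*}{n}\X\y^\top, 1)$) with respect to each block of $\V$ and $\W$, in closed form as expectations over $\X, \w^*$. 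The two gradients differ only through the cross term $\E_{\X,\w^*}[\,(\text{linear form in }\V,\W)\cdot(\text{target})\,]$, so it suffices to show
$$
\E_{\X, \w^*}\qty[\,g(\X)\,{\w^*}^\top] \;=\; \E_{\X}\qty[\,g(\X)\,\tfrac{\eta^*}{n}(\X\X^\top)\,]
$$
for the relevant matrix-valued functions $g(\X)$ arising as $\partial f_{\mathrm{LSA}}(\Z_0)_{[:,-1]} / \partial(\cdot)$; here I used $y_i = {\w^*}^\top\x_i$ so $\X\y^\top = \X\X^\top\w^*$ and $\E_{\w^*}[\w^*{\w^*}^\top] = \bI$. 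The key computational inputs are the Wishart moment identities $\E[\X\X^\top] = n\bI$ and $\E[(\X\X^\top)^2] = n(n+d+1)\bI$ (as already used in the proof of \Cref{main corollary: significant error for 1-step}), together with the rotational invariance of the law of $(\X, \w^*)$: for any orthogonal $\mbf{O}$, $(\mbf{O}\X, \mbf{O}\w^*)$ has the same distribution, which forces all the relevant expectations to be scalar multiples of $\bI$ (or to vanish by odd symmetry $\X \mapsto -\X$). This symmetry argument is what pins down the constant $\eta^*$: matching the scalar coefficient on both sides gives $\eta^* \cdot \tfrac1n \cdot n(n+d+1) / n = 1$ on the relevant block... more precisely $\eta^* = n/(n+d+1)$.

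Having shown the two loss functions have identical gradients everywhere, and noting $\widetilde{\mathcal{L}} - C$ is a genuine (convex) quadratic in the affine parameters whose minimum value is $0$ and is attained, I would conclude $\mathcal{L}^{\mathrm{Eval}} = \widetilde{\mathcal{L}}$ up to the constant $C$, hence they share the same global minimizers. Finally I would verify directly that the construction \Cref{eqn: one-step construction} yields $f_{\mathrm{LSA}}(\Z_0)_{[:,-1]} = \qty(\bzero_d, 0, \tfrac{\eta^*}{n}\X\X^\top\w^*, 1) = \qty(\bzero_d, 0, \tfrac{\eta^*}{n}\X\y^\top, 1)$, which is one step of gradient descent with learning rate $\eta^*$ from initialization $\w_0 = \bzero_d$ on the least-squares objective $\tfrac1{2n}\norm{\X^\top\w - \y}^2$. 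The main obstacle I anticipate is the bookkeeping in the block-matrix gradient computation combined with correctly invoking the distributional symmetry to reduce every expectation to a scalar: one has to be careful that all the ``off-target'' blocks of $\V, \W$ really do contribute identically to both losses (so that their stationarity conditions coincide and force them to zero at the optimum), and that no fourth-moment term of $\X$ beyond $\E[(\X\X^\top)^2]$ sneaks in — which it does not, precisely because the weight-token coordinates of $\Z_0$ are zero, keeping $f_{\mathrm{LSA}}(\Z_0)_{[:,-1]}$ affine rather than quadratic in the data moments.
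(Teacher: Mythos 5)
Your proposal follows essentially the same route as the paper's proof: rewrite $\mathcal{L}^{\mathrm{Eval}}$ (at $k=0$) as the squared distance to the one-step-GD target $\qty(\bzero_d,0,\tfrac{\eta^*}{n}\X\y^\top,1)$ plus a constant, establish this by matching the gradients of the two losses using $\E[\X\X^\top]=n\bI$, $\E[(\X\X^\top)^2]=n(n+d+1)\bI$, $\E[\w^*]=\bzero$, $\E[\w^*{\w^*}^\top]=\bI$ (and odd/rotational symmetry), and then observe the construction of \Cref{eqn: one-step construction} attains zero in the shifted loss. The paper carries this out block by block ($\V_{31},\V_{32},\V_{34},\W_{14},w_{24},w_{44}$), which you compress into a single schematic identity. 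Two small inaccuracies worth flagging: (i) the differential $\mathrm{d}f_{\mathrm{LSA}}(\Z_0)_{[d+2:2d+1,-1]}$ depends on $\w^*$ as well as $\X$ (through $\y=\X^\top\w^*$ appearing in $\V_{32},\W_{14},w_{24}$-terms), so the ``$g(\X)$'' you match against $\tfrac{\eta^*}{n}\X\X^\top$ is really $g(\X,\w^*)$, and the odd-in-$\w^*$ cancellations have to be invoked block by block as the paper does — you allude to this but don't spell it out; (ii) $\widetilde{\mathcal{L}}$ is not convex in $(\V,\W)$ (the model is bilinear in them), but convexity is not needed — equality of gradients on a connected domain gives equality up to a constant, and the construction shows the shifted loss attains its lower bound $0$. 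Neither point changes the substance of the argument.
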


\begin{proof}
    Recall the loss expression in \Cref{eqn: cot objective} when $k=0$,
    \begin{align*}
        \mathcal{L}(\V, \W) &= \frac{1}{2}\E_{\X,\w^*}\left\|f_{\mathrm{LSA}}(\Z_0)_{[:,-1]} - (\mbf{0}_{d},0,{\w^*}, 1)\right\|^2\\
        &=\frac{1}{2}\E_{\X,\w^*}\left\|\V \Z_0 \cdot\frac{\Z_0^\top \W {\Z_0}_{[:, -1]}}{n} - (\mbf{0}_{d},0,{\w^*}, 0)^\top\right\|^2\tag{since $\w_0=\bzero_d$.}
    \end{align*}
    The key insight of the proof is to replace the $\w^*$ with the one-step GD solution $\frac{\eta}{n} \X\y^\top$, 
    $$\mathcal{L}(\V, \W) = \frac{1}2\E\qty[\left\|\V \Z_0 \cdot\frac{\Z_0^\top \W {\Z_0}_{[:, -1]}}{n}  - \qty(\mbf{0}_{d},0,{\frac{\eta}{n} \X\y^\top}, 0)^\top\right\|^2] + C$$
    After proving this property, we can conclude that the optimal solution without CoT is exactly the one-step solution $\frac{\eta}{n} \X\y^\top$. We prove this result by showing the gradient of those two loss functions are the same.
    
    First, before calculating the gradient, we extract the identical parts of the loss. Notice that the ground-truth entries are all zero in $i = 1, 2, \cdots, d, d+1, 2d+2$ positions in both expressions. Therefore, that part of error is the norm of the output $f_{\mathrm{LSA}}(\Z_0)_{[:,-1]}$ in those corresponding entries:
    $$\frac{1}{2}\E\qty[\left\|\V \Z_0 \cdot\frac{\Z_0^\top \W {\Z_0}_{[1:d+1, -1]}}{n}\right\|^2]+\frac{1}{2}\E\qty[\left\|\V \Z_0 \cdot\frac{\Z_0^\top \W {\Z_0}_{[2d+2, -1]}}{n}\right\|^2]$$
    which is the same for both expressions. Therefore, we just need to consider $$f_{\mathrm{LSA}}(\Z_0)_{[d+2:2d+1,-1]}=\V_{[d+2:2d+1,:]}\Z_0 \cdot\frac{\Z_0^\top \W {\Z_0}_{[:, -1]}}{n}
    ,$$which corresponds to the ground-truth signals. Here $\V_{[d+2:2d+1,:]} = \begin{bmatrix}
        \V_{31},\V_{32},\V_{33},\V_{34}\\
    \end{bmatrix}$. We only need to prove that 
    $$\E\left\|f_{\mathrm{LSA}}(\Z_0)_{[d+2:2d+1,-1]}-\w^*\right\|^2 = \E\left\|f_{\mathrm{LSA}}(\Z_0)_{[d+2:2d+1,-1]}-\frac{\eta}{n} \X\X^\top\w^*\right\|^2 + C$$
    for some constant $C$. 

    We show the gradients of both sides are the same, and equivalently the differential of both sides should be the same. The differential of L.H.S. is
    \begin{align*}
        &\mathrm{d}\qty(\E\left\|f_{\mathrm{LSA}}(\Z_0)_{[d+2:2d+1,-1]}-\w^*\right\|^2)\\
        ={}&2\E\qty[(f_{\mathrm{LSA}}(\Z_0)_{[d+2:2d+1,-1]}-\w^*)^\top \mathrm{d}f_{\mathrm{LSA}}(\Z_0)_{[d+2:2d+1,-1]}]
    \end{align*}
    and the differential of R.H.S. is
    \begin{align*}
        &\mathrm{d}\qty(\E\left\|f_{\mathrm{LSA}}(\Z_0)_{[d+2:2d+1,-1]}-\frac{\eta}{n}\X\X^\top\w^*\right\|^2)\\
        ={}&2\E\qty[(f_{\mathrm{LSA}}(\Z_0)_{[d+2:2d+1,-1]}-\frac{\eta}{n}\X\X^\top\w^*)^\top \mathrm{d}f_{\mathrm{LSA}}(\Z_0)_{[d+2:2d+1,-1]}]
    \end{align*} 
    Therefore, we only need to prove that
    \begin{equation}
        \E\qty[{\w^*}^\top \mathrm{d}f_{\mathrm{LSA}}(\Z_0)_{[d+2:2d+1,-1]}]=\E\qty[\qty(\frac{\eta}{n}\X\X^\top\w^*)^\top \mathrm{d}f_{\mathrm{LSA}}(\Z_0)_{[d+2:2d+1,-1]}]
    \label{appendix eq: lower bound eqn}
    \end{equation}
We expand this expression $f_{\mathrm{LSA}}(\Z_0)_{[d+2:2d+1,-1]}$ (Note that now we don't have the assumption of initialization):
    \begin{align*}
        &\V_{[d+2:2d+1,:]} \Z_0 \cdot\frac{\Z_0^\top \W {\Z_0}_{[:, -1]}}{n}\\
        ={}&\frac{1}{n}\begin{bmatrix}
        \V_{31}&\V_{32}&\V_{33}&\V_{34}\\
    \end{bmatrix}\begin{bmatrix}
        \X&\bzero\\
        \y&0\\
        \bzero_{d\times n}&\w_0\\
        \bzero_{1\times n}&1
    \end{bmatrix}
    \begin{bmatrix}
        \X^\top & \y^\top & \bzero_{n\times d} &\bzero_n\\
        \bzero_{1\times d} & 0 & \w_0^\top & 1
    \end{bmatrix}\W \begin{bmatrix}
        \bzero\\
        0\\
        \w_0\\
        1
    \end{bmatrix}\\
    ={}&\frac{1}{n}\begin{bmatrix}
        \V_{31}&\V_{32}&\V_{33}&\V_{34}\\
    \end{bmatrix}\begin{bmatrix}
        \X\X^\top&\X\y^\top&\bzero_{d\times d}&\bzero_d\\
        \y\X^\top&\y\y^\top&\bzero_{1\times d}&0\\
        \bzero_{d\times d}&\bzero_d&\bzero_{d\times d}&\bzero_d\\
        \bzero_{1\times d}&0&\bzero_{1\times d}&1
    \end{bmatrix}
    \begin{bmatrix}
        \W_{14}\\
        w_{24}\\
        \W_{34}\\
        w_{44}
    \end{bmatrix}\tag{since $\w_0=\bzero_d$}\\
    ={}&\frac{1}{n}\begin{bmatrix}
        \V_{31}&\V_{32}&\V_{33}&\V_{34}\\
    \end{bmatrix}
    \begin{bmatrix}
        \X\X^\top \W_{14}+w_{24}\X\y^\top \\
        \y\X^\top \W_{14}+w_{24}\y\y^\top\\
        \bzero_d\\
        w_{44}
    \end{bmatrix}\\
    ={}&\frac{1}{n}\qty(\V_{31}+\V_{32}{\w^*}^\top)\X\X^\top(\W_{14}+w_{24}\w^*)+\frac{\V_{34}w_{44}}{n}\tag{$\y=\X^\top\w^*.$}
    \end{align*}
    and the differential of $f_{\mathrm{LSA}}(\Z_0)_{[d+2:2d+1,-1]}$ is
    \begin{align*}
        &\mathrm{d}f_{\mathrm{LSA}}(\Z_0)_{[d+2:2d+1,-1]}\\
        ={}&\mathrm{d}\qty(\frac{1}{n}\qty(\V_{31}+\V_{32}{\w^*}^\top)\X\X^\top(\W_{14}+w_{24}\w^*))+\mathrm{d}\frac{\V_{34}w_{44}}{n}\\
        ={}&\frac{1}{n}\qty(\mathrm{d}\V_{31}+\mathrm{d}\V_{32}{\w^*}^\top)\X\X^\top(\W_{14}+w_{24}\w^*)+\frac{1}{n}(\mathrm{d}\V_{34}\cdot w_{44}+\V_{34}\mathrm{d}w_{44})\\
        &+\frac{1}{n}\qty(\V_{31}+\V_{32}{\w^*}^\top)\X\X^\top(\mathrm{d}\W_{14}+\mathrm{d}w_{24}\w^*)
    \end{align*}
Now, to prove \Cref{appendix eq: lower bound eqn}, we compare the differential for each parameter on both sides. For all parameter, we start from the left side and prove it equal to the right.

\textbf{$\V_{31}$:} The $\V_{31}$ term of differential in $\mathrm{d}f_{\mathrm{LSA}}(\Z_0)_{[d+2:2d+1,-1]}$ is $\frac{1}{n}\mathrm{d}\V_{31}\X\X^\top(\W_{14}+w_{24}\w^*)$,
\begin{align*}
    &\E\qty[{\w^*}^\top \cdot\frac{1}{n}\mathrm{d}\V_{31}\X\X^\top(\W_{14}+w_{24}\w^*)]\\
    ={}&\E\qty[\trace\qty({\w^*}^\top \cdot\frac{1}{n}\mathrm{d}\V_{31}\X\X^\top(\W_{14}+w_{24}\w^*))]\tag{It is a scalar in the trace.}\\
    ={}&\E\qty[\trace\qty(\frac{1}{n}\mathrm{d}\V_{31}\X\X^\top(\W_{14}+w_{24}\w^*){\w^*}^\top)]\\
    ={}&\E\qty[\trace\qty(\mathrm{d}\V_{31} w_{24})]\tag{$\E[\X\X^\top]=n\bI_d,\E[\w^*]=0, \E[\w^*{\w^*}^\top]=\bI_d.$}\\
    ={}&\E\qty[\trace\qty(\frac{\eta}{n^2} \cdot\mathrm{d}\V_{31}w_{24}\X\X^\top\X\X^\top )]\tag{$\E[\qty(\X\X^\top)^2]=n(n+d+1)\bI_d,\eta=\frac{n}{n+d+1}.$}\\
    ={}&\E\qty[\trace\qty(\frac{\eta}{n^2} \cdot\X\X^\top\mathrm{d}\V_{31}\X\X^\top\qty(\W_{14}+w_{24}\w^*){\w^*}^\top) \tag{$\E[\w^*]=0, \E[\w^*{\w^*}^\top]=\bI_d.$}]\\
    ={}&\E\qty[(\frac{\eta}{n}\X\X^\top\w^*)^\top \cdot\frac{1}{n}\mathrm{d}\V_{31}\X\X^\top(\W_{14}+w_{24}\w^*)]
\end{align*}
So those two $\mathrm{d}\V_{31}$ terms are identical.

\textbf{$\V_{32}$:} The $\V_{32}$ term of differential in $\mathrm{d}f_{\mathrm{LSA}}(\Z_0)_{[d+2:2d+1,-1]}$ is $\frac{\mathrm{d}\V_{32}}{n}{\w^*}^\top\X\X^\top(\W_{14}+w_{24}\w^*)$,
\begin{align*}
    &\E\qty[{\w^*}^\top \cdot\frac{\mathrm{d}\V_{32}}{n} {\w^*}^\top\X\X^\top(\W_{14}+w_{24}\w^*)]\\
    ={}&\E\qty[\trace\qty({\w^*}^\top \cdot\frac{\mathrm{d}\V_{32}}{n} {\w^*}^\top\X\X^\top(\W_{14}+w_{24}\w^*))]\tag{It is a scalar in the trace.}\\
    ={}&\E\qty[\trace\qty(\frac{\mathrm{d}\V_{32}}{n} {\w^*}^\top\X\X^\top(\W_{14}+w_{24}\w^*){\w^*}^\top)]\\
    ={}&\E\qty[\trace\qty(\frac{\mathrm{d}\V_{32}}{n} {\w^*}^\top\X\X^\top\W_{14}{\w^*}^\top)]\tag{$\E[\w^*]=\bzero$ and ${\w^*}^\top \X\X^\top\w^*{\w^*}^\top$ is odd}\\
    ={}&\E\qty[\trace\qty(\frac{\mathrm{d}\V_{32}}{n} \W_{14}^\top\X\X^\top{\w^*}{\w^*}^\top)]\tag{$\W_{14}^\top\X\X^\top\w^*$ is a scalar.}\\
    ={}&\E\qty[\trace\qty(\mathrm{d}\V_{32} \W_{14}^\top)]\tag{$\E[\X\X^\top]=n\bI_d, \E[\w^*{\w^*}^\top]=\bI_d.$}\\
    ={}&\E\qty[\trace\qty(\frac{\eta}{n^2} \cdot\mathrm{d}\V_{32}\W_{14}^\top\X\X^\top\X\X^\top )]\tag{$\E[\qty(\X\X^\top)^2]=n(n+d+1)\bI_d,\eta=\frac{n}{n+d+1}.$}\\
    ={}&\E\qty[\trace\qty(\frac{\eta}{n^2} \cdot\X\X^\top\mathrm{d}\V_{32}{\w^*}^\top\X\X^\top\qty(\W_{14}+w_{24}\w^*){\w^*}^\top) \tag{$\E[\w^*]=0, \E[\w^*{\w^*}^\top]=\bI_d.$}]\\
    ={}&\E\qty[(\frac{\eta}{n}\X\X^\top\w^*)^\top \cdot\frac{1}{n}\mathrm{d}\V_{32}{\w^*}^\top\X\X^\top(\W_{14}+w_{24}\w^*)]
\end{align*}
So those two $\mathrm{d}\V_{32}$ terms are identical.

\textbf{$\V_{34}$:} The $\V_{34}$ term of differential in $\mathrm{d}f_{\mathrm{LSA}}(\Z_0)_{[d+2:2d+1,-1]}$ is $\frac{1}{n}\mathrm{d}\V_{34}w_{44}$,
\begin{align*}
    \E\qty[{\w^*}^\top \frac{1}{n}\mathrm{d}\V_{34}w_{44}] = 0 = \E\qty[(\frac{\eta}{n}\X\X^\top\w^*)^\top \frac{1}{n}\mathrm{d}\V_{34}w_{44}]
\end{align*}
since $\E[\w^*]=\bzero_d$. Therefore those two are equal.

\textbf{$\W_{14}$:} The $\W_{14}$ term of differential in $\mathrm{d}f_{\mathrm{LSA}}(\Z_0)_{[d+2:2d+1,-1]}$ is $\frac{1}{n}\qty(\V_{31}+\V_{32}{\w^*}^\top)\X\X^\top\mathrm{d}\W_{14}$,
\begin{align*}
    &\E\qty[{\w^*}^\top \cdot\frac{1}{n}\qty(\V_{31}+\V_{32}{\w^*}^\top)\X\X^\top\mathrm{d}\W_{14}]\\
    ={}&\E\qty[\trace\qty({\w^*}^\top \cdot\frac{1}{n}\qty(\V_{31}+\V_{32}{\w^*}^\top)\X\X^\top\mathrm{d}\W_{14})]\tag{It is a scalar in the trace.}\\
    ={}&\E\qty[\trace\qty(\frac{1}{n}\qty({\w^*}^\top\V_{32}{\w^*}^\top)\X\X^\top\mathrm{d}\W_{14})]\tag{$\E[\w^*]=\bzero_d.$}\\
    ={}&\E\qty[\trace\qty(\frac{1}{n}\qty(\V_{32}^\top{\w^*}{\w^*}^\top)\X\X^\top\mathrm{d}\W_{14})]\tag{$\V_{32}^\top \w^*$ is a scalar.}\\
    ={}&\E\qty[\trace\qty(\V_{32}^\top\mathrm{d}\W_{14})]\tag{$\E[\X\X^\top]=n\bI_d, \E[\w^*{\w^*}^\top]=\bI_d.$}\\
    ={}&\E\qty[\trace\qty(\frac{\eta}{n^2} \cdot\X\X^\top\V_{32}^\top\X\X^\top\mathrm{d}\W_{14}) \tag{$\E[\qty(\X\X^\top)^2]=n(n+d+1)\bI_d,\eta=\frac{n}{n+d+1}.$}]\\
    ={}&\E\qty[(\frac{\eta}{n}\X\X^\top\w^*)^\top \cdot\frac{1}{n}\qty(\V_{31}+\V_{32}{\w^*}^\top)\X\X^\top\mathrm{d}\W_{14}]
\end{align*}
Thus the two $\mathrm{d}\W_{14}$ terms are the same.

\textbf{$w_{24}$:} The $w_{24}$ term in $\mathrm{d}f_{\mathrm{LSA}}(\Z_0)_{[d+2:2d+1,-1]}$ is $\frac{1}{n}\qty(\V_{31}+\V_{32}{\w^*}^\top)\X\X^\top\mathrm{d}w_{24}\w^*$,
\begin{align*}
    &\E\qty[{\w^*}^\top \cdot\frac{1}{n}\qty(\V_{31}+\V_{32}{\w^*}^\top)\X\X^\top\w^*\mathrm{d}w_{24}]\\
    ={}&\E\qty[\trace\qty({\w^*}^\top \cdot\frac{1}{n}\qty(\V_{31}+\V_{32}{\w^*}^\top)\X\X^\top\w^*\mathrm{d}w_{24})]\tag{It is a scalar in the trace.}\\
    ={}&\E\qty[\trace\qty(\frac{1}{n}\qty({\w^*}^\top\V_{31})\X\X^\top{\w^*}\mathrm{d}w_{24})]\tag{$\E[\w^*]=\bzero_d.$}\\
    ={}&\E\qty[\trace\qty(\V_{31}\mathrm{d}w_{24})]\tag{$\E[\X\X^\top]=n\bI_d, \E[\w^*{\w^*}^\top]=\bI_d.$}\\
    ={}&\E\qty[\trace\qty(\frac{\eta}{n^2} \cdot\X\X^\top\V_{31}\X\X^\top\mathrm{d}w_{24}) \tag{$\E[\qty(\X\X^\top)^2]=n(n+d+1)\bI_d,\eta=\frac{n}{n+d+1}.$}]\\
    ={}&\E\qty[(\frac{\eta}{n}\X\X^\top\w^*)^\top \cdot\frac{1}{n}\qty(\V_{31}+\V_{32}{\w^*}^\top)\X\X^\top\w^*\mathrm{d}w_{24}]
\end{align*}
Therefore the differential for $w_{24}$ are the same.

\textbf{$w_{44}$:}
The $w_{44}$ term of differential in $\mathrm{d}f_{\mathrm{LSA}}(\Z_0)_{[d+2:2d+1,-1]}$ is $\frac{1}{n}\V_{34}\mathrm{d}w_{44}$,
\begin{align*}
    \E\qty[{\w^*}^\top \frac{1}{n}\V_{34}\mathrm{d}w_{44}] = 0 = \E\qty[(\frac{\eta}{n}\X\X^\top\w^*)^\top \frac{1}{n}\V_{34}\mathrm{d}w_{44}]
\end{align*}
since $\E[\w^*]=\bzero_d$. Therefore those two are also equal.

In conclusion, \Cref{appendix eq: lower bound eqn} holds since all the differential terms are equal. Therefore, $\exists C$
$$\E\left\|f_{\mathrm{LSA}}(\Z_0)_{[d+2:2d+1,-1]}-\w^*\right\|^2 = \E\left\|f_{\mathrm{LSA}}(\Z_0)_{[d+2:2d+1,-1]}-\frac{\eta}{n} \X\X^\top\w^*\right\|^2 + C$$
which finishes our proof.
\end{proof}

\subsection{Proof of \Cref{main thm: construction for tf with cot}}
\label{appendix subsec: proof of construction of cot}
Here we restate the \Cref{main thm: construction for tf with cot} and provide the detailed proof.
\begin{theorem}
    Suppose $n = \Theta(d\log^5 d)$, $k\ge C\log d $, $\eta\in (0.1,0.9)$.
There exists $\V^*$ and $\W^*$ s.t. $f_{\mathrm{LSA}}(\Z_k)_{[:,-1]}$ outputs $(\bzero_d,0,\w_{k+1},1)$ where $\w_i:=\qty(\bI-(\bI-\frac{\eta}{n}\X\X^\top)^i)\w^*$ is the $k$-step GD solution with learning rate $\eta$ on a linear regression model. Moreover, the evaluation loss
\begin{equation}
    \mathcal{L}^{\mathrm{Eval}}(\V^*, \W^*)=\frac{1}{2}\E_{\X,\w^*}\qty[\left\|\qty(\bI-\frac{\eta}{n}\X\X^\top)^{k+1}\w^*\right\|^2]\le \frac{1}{d^{C\log(\frac{1}{1-\eta})}}
\end{equation}
\end{theorem}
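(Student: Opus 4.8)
The statement has two parts: an expressivity claim — a one-layer linear transformer with suitable weights runs $k+1$ steps of gradient descent on the in-context regression objective — and an analytic estimate — after $\Theta(\log d)$ such steps the residual is polynomially small in $d$. I would prove them in that order. For the construction I take $\V^*,\W^*$ exactly as in \Cref{eqn: one-step construction}, with $\eta^*$ replaced by the given $\eta$, so that the only nonzero blocks are $\V_{31}=-\eta\bI$, $\W_{13}=\bI$, and $w_{24}=-1$. Writing $\S:=\tfrac1n\X\X^\top$ and $\w_i:=(\bI-(\bI-\eta\S)^i)\w^*$, the core of this part is an induction on $i$ establishing
\[
f_{\mathrm{LSA}}(\Z_{i-1})_{[:,-1]} = (\bzero_d,0,\w_i,1),\qquad i=1,\dots,k+1 .
\]

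Each inductive step is the block computation already carried out in the proof of \Cref{main thm: lower bound for tf without cot}: with these weights $\W(\Z_{i-1})_{[:,-1]}=(\w_{i-1},-1,\bzero_d,0)$, so $(\Z_{i-1})^\top\W(\Z_{i-1})_{[:,-1]}$ returns the regression residuals $\X^\top\w_{i-1}-\y^\top=\X^\top(\w_{i-1}-\w^*)$ on the first $n$ columns and $0$ on the weight columns; then $\V\Z_{i-1}$ places $-\eta\S(\w_{i-1}-\w^*)$ in the weight slot, and the residual connection adds $\w_{i-1}$, yielding $\w_i=(\bI-\eta\S)\w_{i-1}+\eta\S\w^*$, which matches the claimed closed form by a one-line sub-induction (base case $\w_0=\bzero_d$). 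Since the generated intermediate tokens coincide with the ground-truth ones, the CoT inference sequences equal the training sequences, $\hat{\Z}_i=\Z_i$; hence feeding $\hat{\Z}_k$ back once more outputs $(\bzero_d,0,\w_{k+1},1)$, and as the evaluation target differs from the output only in the weight slot,
\[
\mathcal{L}^{\mathrm{Eval}}(\V^*,\W^*)=\tfrac12\,\E_{\X,\w^*}\big\|(\bI-\eta\S)^{k+1}\w^*\big\|^2 .
\]

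For the estimate I integrate out $\w^*\sim\mathcal{N}(0,\bI)$, using $\E_{\w^*}\|(\bI-\eta\S)^{k+1}\w^*\|^2=\trace\big((\bI-\eta\S)^{2(k+1)}\big)$, and split the $\X$-expectation on the Wishart concentration event $\mathcal{E}=\{\|\S-\bI\|_{\mathrm{op}}\le\epsilon\}$ with $\epsilon=\Theta(\sqrt{d\log d/n})=\Theta(1/\log^2 d)$ (here $n=\Theta(d\log^5 d)$). On $\mathcal{E}$ all eigenvalues of $\S$ lie in $[1-\epsilon,1+\epsilon]$, so (using $\eta\in(0.1,0.9)$ and $\epsilon$ small) $\|\bI-\eta\S\|_{\mathrm{op}}\le 1-\eta+\eta\epsilon<1$ and hence $\trace((\bI-\eta\S)^{2(k+1)})\le d(1-\eta+\eta\epsilon)^{2(k+1)}\le 2d(1-\eta)^{2(k+1)}$, the last step using $(k+1)\epsilon=o(1)$. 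On $\mathcal{E}^c$ I use the crude bound $\trace((\bI-\eta\S)^{2(k+1)})\le d(1+\eta+\eta\|\S-\bI\|_{\mathrm{op}})^{2(k+1)}$ and integrate it against the sub-exponential operator-norm tail for Wishart matrices \citep{vershynin2018high}; since $k=O(\log d)$ the prefactor is at most polynomial in $d$ on the relevant range while the probability weight is $e^{-\Omega(n\epsilon^2)}=d^{-\Omega(d)}$, so this contribution is negligible. This two-case argument is precisely \Cref{lemma: concentration 4}. Combining the cases with $k+1\ge C\log d$ gives $\mathcal{L}^{\mathrm{Eval}}(\V^*,\W^*)\le 2d(1-\eta)^{2(k+1)}+d^{-\Omega(d)}\le d^{-C\log(1/(1-\eta))}$ for $C$ a sufficiently large universal constant, which is polynomially below the one-step lower bound $\widetilde{\Theta}(d)$ of \Cref{main corollary: significant error for 1-step}.

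The expressivity half is routine block bookkeeping, essentially the computation behind \Cref{main thm: lower bound for tf without cot} run autoregressively. The real obstacle is the estimate, for two reasons. First, $(1-\eta+\eta\epsilon)^{2(k+1)}$ is raised to a power $\Theta(\log d)$, so the concentration error $\epsilon$ must be forced below $\Theta(1/\log d)$; this is exactly why the theorem needs $n=\Theta(d\log^5 d)$ rather than merely $\widetilde{\Omega}(d)$, and why a coarser Wishart moment bound (as in \citet{gatmiry2024can}) would not yield a polynomially small final loss. Second, on the bad event the bound $(1+\eta+\eta\|\S-\bI\|_{\mathrm{op}})^{2(k+1)}$ grows with $\|\S-\bI\|_{\mathrm{op}}$, so one must verify the operator-norm tail decays fast enough to absorb it; that tail integral, built on the Wishart concentration of \citet{vershynin2018high}, is the crux of \Cref{lemma: concentration 4}.
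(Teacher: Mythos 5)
Your proposal is correct and follows essentially the same route as the paper: identical weight construction (equation \ref{eqn: one-step construction} with $\eta^*\to\eta$), the same induction showing the $i$-th generated token equals $(\bzero_d,0,\w_i,1)$ (which makes $\hat{\Z}_i=\Z_i$, a point the paper leaves implicit but worth stating), reduction to $\tfrac12\E\trace((\bI-\eta\S)^{2(k+1)})$, and the two-case Wishart concentration estimate — which, as you correctly note, is exactly what Lemma \ref{lemma: concentration 4} packages up. Your version is actually slightly more careful than the paper's about the exponent (the paper writes $(1-\eta)^k$ where $(1-\eta)^{2k+2}$ is meant) and about the constant $C$ needing to be large enough, but the argument is the same.
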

\begin{proof}
    We construct $\V^*$ and $\W^*$ in the following way,
    \begin{align}
    \V^* =\begin{bmatrix}
        \bzero&\bzero&\bzero&\bzero\\
        \bzero&\bzero&\bzero&\bzero\\
        -\eta\bI&\bzero&\bzero&\bzero\\
        \bzero&\bzero&\bzero&\bzero
    \end{bmatrix},
    \W^* =\begin{bmatrix}
        \bzero&\bzero&\bI&\bzero\\
        \bzero&0&\bzero&-1\\
        \bzero&\bzero&\bzero&\bzero\\
        \bzero&0&\bzero&0
    \end{bmatrix}
    \end{align}
    Now the transformer is allowed to generate $k$ steps before reaching the final output. 
    We can inductively calculate the $i$-th step of generation, showing that the output is exactly the parameter after $i$-th gradient step ($i=1,2,...,k+1$):
    \begin{align*}
        f_{\mathrm{LSA}}(\Z_{i})_{[:,-1]} &=(\bzero_d, 0,\w_{i},1) +\V \Z_i \cdot\frac{\Z_i^\top \W {\Z_i}_{[:, -1]}}{n}\\ 
        &=(\bzero_d, 0,\w_{i},1)+\frac{1}{n}\qty(
        \bzero_{d},
        0,
        \V_{31}(t)\X\X^\top \qty(\W_{13}(t)\w_i-\w^*),
        0)\\
        &=(\bzero_d, 0,\w_{i},1) + (\bzero_d, 0,-\frac{\eta}{n}\X\X^\top(\w_i-\w^*),0)\\
        &=(\bzero_d, 0,\w_{i+1},1)
    \end{align*}
    After $k$+1 steps, we have the final output $\qty(\bI-(\bI-\frac{\eta}{n}\X\X^\top)^{k+1})\w^*$ by induction
    and the evaluation loss becomes \Cref{eqn: evaluation upper bound with cot}. By \Cref{lemma: concentration 4}, the final loss is 
    \begin{align*}
        &\frac{1}{2}\E_{\X,\w^*}\qty[\left\|\qty(\bI-\frac{\eta}{n}\X\X^\top)^{k+1}\w^*\right\|^2]\\={}&\frac{1}{2}\E_{\X,\w^*}\qty[\trace\qty(\bI-\frac{\eta}{n}\X\X^\top)^{2k+2}]\tag{$\E[\w^*{\w^*}^\top]=\bI.$}\\
        ={}&\frac12\trace{\E_{\X,\w^*}\qty[\qty(\bI-\frac{\eta}{n}\X\X^\top)^{2k+2}]}\\
        ={}&\frac{1}{2}\trace((1-\eta)^k (1+\delta)\bI)\tag{By \Cref{lemma: concentration 4}}\\
        \le{}&d(1-\eta)^k\le d^{-C\log(\frac{1}{1-\eta})}.
    \end{align*}
\end{proof}
\section{Proof of \Cref{informal main thm: global convergence}}
\subsection{Gradient computation of the full model over the CoT objective}
In this appendix, we compute the gradient of the full model given the Assumption~\ref{assumption: initialization} and prove the equivalence between the dynamics of the full model and a simplified model.
Throughout the appendix, we denote the $\S=\frac{1}{n}\X\X^\top$ for simplicity. And recall the $i$-th step of the linear classifier is $\w_i=(\bI-(\bI-\eta\S)^i)\w^*$.

In \Cref{subsec: linear self-attn layer}, we have the full attention model
\begin{equation*}
    f_{\mathrm{LSA}}(\Z;\V, \W)_{[:, -1]} = \Z_{[:, -1]} + \V \Z \cdot\frac{\Z^\top \W \Z_{[:, -1]}}{n}
\end{equation*}
and the Chain of Thought (CoT) objective
\begin{equation*}
        \mathcal{L}^{\mathrm{CoT}}(\V, \W) = \E_{\X,\w^*}\qty[\frac{1}{2}\sum_{i=0}^{k}\left\|f_{\mathrm{LSA}}(\Z_i)_{[:,-1]} - (\mbf{0}_{d},0,\w_{i+1},1)\right\|^2]
\end{equation*}

We define the error for the $i$-th step $$\mathcal{L}_i := \frac{1}{2}\E_{\X,\w^*}\left\|f_{\mathrm{LSA}}(\Z_i)_{[:,-1]} - (\mbf{0}_{d},0,\w_{i+1},1)\right\|^2$$
By linearity of expectation, we know the gradient of the CoT objective is the sum of gradients of all CoT steps: $\nabla \mathcal{L}^{\mathrm{CoT}} =\sum_{i=1}^k \nabla\mathcal{L}_i$. Now we can calculate the gradients of $\V, \W$ based on the loss of each CoT step:
\begin{lemma}[Gradients of the full model]
    The gradient of $\V, \W$ are given by the following equations:
    \begin{align*}
        \nabla_{\V}\mathcal{L} &=\frac{1}{n}
        \E_{\X,\w^*}\sum_{i=0}^k\qty(\V \Z_i \cdot\frac{\Z_i^\top \W {\Z_i}_{[:, -1]}}{n} - (\mbf{0}_{d},0,\w_{i+1}-\w_i,0)^\top) {\Z_i}_{[:, -1]}^\top \W^\top \Z_i \Z_i^\top \\
        \nabla_{\W}\mathcal{L} &=\frac{1}{n}\E_{\X,\w^*}\sum_{i=0}^k\Z_i\Z_i^\top\V^\top\qty(\V \Z_i \cdot\frac{\Z_i^\top \W {\Z_i}_{[:, -1]}}{n} - (\mbf{0}_{d},0,\w_{i+1}-\w_i,0)^\top) {\Z_i}_{[:, -1]}^\top 
    \end{align*}
    \label{lemma: gradients of the full model}
\end{lemma}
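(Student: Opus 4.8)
The plan is to treat \Cref{lemma: gradients of the full model} as a direct exercise in matrix calculus: differentiate each summand $\mathcal{L}_i$ of the CoT objective separately and add up by linearity of the gradient. First I would record the structural fact that makes the computation clean: in the \emph{training} objective \Cref{eqn: cot objective} the sequences $\Z_i$ are the ground-truth CoT sequences, assembled from $\x_1,\dots,\x_n$, $y_1,\dots,y_n$ and the exact iterates $\w_0,\dots,\w_i$ produced by gradient descent with the \emph{fixed} step size $\eta$; hence $\Z_i$, the iterates $\w_j$, and the regression target $(\mbf{0}_d,0,\w_{i+1},1)$ are all independent of the trainable parameters $(\V,\W)$ — exactly what distinguishes training from inference. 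Using ${\Z_i}_{[:, -1]}=(\mbf{0}_d,0,\w_i,1)$, I would rewrite the $i$-th residual as
$$ g_i \;:=\; f_{\mathrm{LSA}}(\Z_i)_{[:,-1]} - (\mbf{0}_d,0,\w_{i+1},1) \;=\; \V\Z_i\cdot\frac{\Z_i^\top\W\,{\Z_i}_{[:, -1]}}{n} - (\mbf{0}_d,0,\w_{i+1}-\w_i,0)^\top , $$
so that $\mathcal{L}_i=\tfrac12\E_{\X,\w^*}\norm{g_i}^2$ and $\nabla\mathcal{L}^{\mathrm{CoT}}=\sum_{i=0}^k\nabla\mathcal{L}_i$.

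Next I would compute the two partial differentials of $g_i$: holding $\W$ fixed, $\mathrm{d}g_i=(\mathrm{d}\V)\,\Z_i\,\tfrac1n\Z_i^\top\W\,{\Z_i}_{[:, -1]}$; holding $\V$ fixed, $\mathrm{d}g_i=\V\Z_i\,\tfrac1n\Z_i^\top(\mathrm{d}\W)\,{\Z_i}_{[:, -1]}$. Starting from $\mathrm{d}\!\left(\tfrac12\norm{g_i}^2\right)=g_i^\top\,\mathrm{d}g_i$ and using cyclicity of the trace together with the identification ``$\mathrm{d}F=\Tr(M^\top\mathrm{d}X)\Rightarrow\nabla_XF=M$'', I would read off
$$ \nabla_\V\!\left(\tfrac12\norm{g_i}^2\right)=\tfrac1n\,g_i\,{\Z_i}_{[:, -1]}^\top\W^\top\Z_i\Z_i^\top,\qquad \nabla_\W\!\left(\tfrac12\norm{g_i}^2\right)=\tfrac1n\,\Z_i\Z_i^\top\V^\top g_i\,{\Z_i}_{[:, -1]}^\top . $$
Substituting the expression for $g_i$, summing over $i=0,\dots,k$, and exchanging $\E_{\X,\w^*}$ with $\nabla$ (legitimate since the integrand is polynomial in the parameters and all moments of the Gaussian $\X,\w^*$ are finite, so one may differentiate under the expectation on compact parameter neighbourhoods) yields precisely the two displayed formulas.

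I expect no genuine obstacle; the statement is computational. The only points requiring a little care are (i) bookkeeping of the row/column conventions implicit in writing $(\mbf{0}_d,0,\w_{i+1}-\w_i,0)^\top$ and checking that every product has the right $d_e\times d_e$ shape, (ii) remembering that ${\Z_i}_{[:, -1]}$ is a fixed column of $\Z_i$ and hence varies with neither $\V$ nor $\W$, and (iii) the (standard) justification for differentiating under the expectation sign.
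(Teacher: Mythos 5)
Your proposal is correct and follows essentially the same route as the paper's proof: decompose by linearity, compute the differential of each $\mathcal{L}_i$ via $\mathrm{d}(\tfrac12\norm{g_i}^2)=g_i^\top\mathrm{d}g_i$, and read off $\nabla_\V$ and $\nabla_\W$ from the trace identity. The extra remarks you add (that $\Z_i$ and the targets are fixed w.r.t.\ $(\V,\W)$, and the justification for differentiating under the expectation) are correct and only make explicit what the paper leaves implicit.
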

\begin{proof}
    The loss is given by \cref{eqn: cot loss}:
    \begin{equation*}
        \mathcal{L}^{\mathrm{CoT}}(\V, \W) = \E_{\X,\w^*}\qty[\frac{1}{2}\sum_{i=0}^{k}\left\|f_{\mathrm{LSA}}(\Z_i)_{[:,-1]} - (\mbf{0}_{d},0,\w_{i+1},1)\right\|^2]=\sum_{i=1}^k \mathcal{L}_i
\end{equation*}
Take differential of the loss for the $i$-th step $\mathcal{L}_i$ and we have
\begin{align*}
    \mathrm{d}\mathcal{L}_i &= \E_{\X,\w^*}\qty(f_{\mathrm{LSA}}(\Z_i)_{[:,-1]} - (\mbf{0}_{d},0,\w_{i+1},1))^\top\mathrm{d}f_{\mathrm{LSA}}(\Z_i)_{[:,-1]}\\
    &=\E_{\X,\w^*}\qty(f_{\mathrm{LSA}}(\Z_i)_{[:,-1]} - (\mbf{0}_{d},0,\w_{i+1},1))^\top\mathrm{d}\qty(\V \Z_i \cdot\frac{\Z_i^\top \W {\Z_i}_{[:, -1]}}{n})\\
    &=\E_{\X,\w^*}\qty(f_{\mathrm{LSA}}(\Z_i)_{[:,-1]} - (\mbf{0}_{d},0,\w_{i+1},1))^\top\mathrm{d}\qty(\V) \Z_i \cdot\frac{\Z_i^\top \W {\Z_i}_{[:, -1]}}{n}\\
    &\quad+\E_{\X,\w^*}\qty(f_{\mathrm{LSA}}(\Z_i)_{[:,-1]} - (\mbf{0}_{d},0,\w_{i+1},1))^\top\V\Z_i \cdot\frac{\Z_i^\top \mathrm{d}\W {\Z_i}_{[:, -1]}}{n}\\
\end{align*}
Then the gradients of $\W,\V$ of the $\mathcal{L}_i$ are:
\begin{align*}
    \nabla_{\V}\mathcal{L}_i &=\frac{1}{n}\E_{\X,\w^*}\qty(f_{\mathrm{LSA}}(\Z_i)_{[:,-1]} - (\mbf{0}_{d},0,\w_{i+1},1)) {\Z_i}_{[:, -1]}^\top \W^\top \Z_i \Z_i^\top \\
    &=\frac{1}{n}\E_{\X,\w^*}\qty(\V \Z_i \cdot\frac{\Z_i^\top \W {\Z_i}_{[:, -1]}}{n} - (\mbf{0}_{d},0,\w_{i+1}-\w_i,0)^\top) {\Z_i}_{[:, -1]}^\top \W^\top \Z_i \Z_i^\top \\
    \nabla_{\W}\mathcal{L}_i &=\frac{1}{n}\E_{\X,\w^*}\Z_i\Z_i^\top\V^\top\qty(f_{\mathrm{LSA}}(\Z_i)_{[:,-1]} - (\mbf{0}_{d},0,\w_{i+1},1)) {\Z_i}_{[:, -1]}^\top \\
    &=\frac{1}{n}\E_{\X,\w^*}\Z_i\Z_i^\top\V^\top\qty(\V \Z_i \cdot\frac{\Z_i^\top \W {\Z_i}_{[:, -1]}}{n} - (\mbf{0}_{d},0,\w_{i+1}-\w_i,0)^\top) {\Z_i}_{[:, -1]}^\top 
\end{align*}
Take the sum of the two equations above from $i=0$ to $k$, and we finish the proof.
\end{proof}

Now we consider the gradient flow (GF) trajectory (note that $\w_{24}$ is fixed under Assumption~\ref{assumption: initialization}):
\begin{equation*}
\frac{\mathrm{d}\mbf{\theta}}{\mathrm{d}t}=-\nabla \mathcal{L}^{\mathrm{CoT}}(\mbf{\theta}),\text{  }\quad \mbf{\theta} := (\V, \W\backslash \{\w_{24}\}).
\end{equation*}
Recall the block matrix form of $\V,\W$:
\begin{align*}
    \V =\begin{bmatrix}
        \V_{11}&\V_{12}&\V_{13}&\V_{14}\\
        \V_{21}&v_{22}&\V_{23}&v_{24}\\
        \V_{31}&\V_{32}&\V_{33}&\V_{34}\\
        \V_{41}&v_{42}&\V_{43}&v_{44}
    \end{bmatrix},
    \W =\begin{bmatrix}
        \W_{11}&\W_{12}&\W_{13}&\W_{14}\\
        \W_{21}&w_{22}&\W_{23}&w_{24}\\
        \W_{31}&\W_{32}&\W_{33}&\W_{34}\\
        \W_{41}&w_{42}&\W_{43}&w_{44}
    \end{bmatrix}
\end{align*}
According to the construction in \Cref{main thm: construction for tf with cot}, the blocks $\W_{13},\V_{31},w_{24}$ are the only relevant parameter blocks, while the others should be zeroed out. Next, we prove that if we initialize those irrelevant blocks to 0, then they will stay at 0 along the gradient descent trajectory.
\begin{lemma}\label{lemma: irrelevant blocks keep zero}
    Under the Assumption~\ref{assumption: initialization}, when the linear transformer is trained under GF, we have for all $t>0$, the parameters $\V(t),\W(t)$ have the following form:
    \begin{align*}
    \V(t) =\begin{bmatrix}
        \bzero&\bzero&\bzero&\bzero\\
        \bzero&0&\bzero&0\\
        \V_{31}(t)&\bzero&\bzero&\bzero\\
        \bzero&0&\bzero&0
    \end{bmatrix},
    \W(t) =\begin{bmatrix}
        \bzero&\bzero&\W_{13}(t)&\bzero\\
        \bzero&0&\bzero&-1\\
        \bzero&\bzero&\bzero&\bzero\\
        \bzero&0&\bzero&0
    \end{bmatrix}
\end{align*}
\end{lemma}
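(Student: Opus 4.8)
The plan is to exhibit the set of parameters of the stated form as an invariant set of the gradient flow and then conclude by uniqueness of ODE solutions. Let $\mathcal{M}$ denote the affine subspace of pairs $(\V,\W)$ in which every block of $\V$ except $\V_{31}$ vanishes, every block of $\W$ except $\W_{13}$ and $w_{24}$ vanishes, and $w_{24}=-1$; by \Cref{assumption: initialization}, $(\V(0),\W(0))\in\mathcal{M}$. Since the population loss $\mathcal{L}^{\mathrm{CoT}}$ is obtained from a polynomial in the entries of $\V,\W$ by taking Gaussian expectations, it is itself a polynomial, hence $C^\infty$ and locally Lipschitz, so the gradient flow $\dot{\mbf{\theta}}=-\nabla\mathcal{L}^{\mathrm{CoT}}$ on $\mbf{\theta}=(\V,\W\setminus\{w_{24}\})$ has a unique maximal solution. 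It therefore suffices to show that $-\nabla\mathcal{L}^{\mathrm{CoT}}$ is tangent to the linear subspace $\mathcal{M}$ at every point of $\mathcal{M}$, i.e. that at such points $\nabla_{\V}\mathcal{L}^{\mathrm{CoT}}$ is supported only on the $\V_{31}$ block and $\nabla_{\W}\mathcal{L}^{\mathrm{CoT}}$ only on the $\W_{13}$ block (the $w_{24}$ entry is frozen and needs no control); then the trajectory starting in $\mathcal{M}$ stays in $\mathcal{M}$.

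To verify this tangency I would substitute the structured forms of $(\V,\W)\in\mathcal{M}$ and of the training sequences $\Z_i$ into the gradient formulas of \Cref{lemma: gradients of the full model}. With these parameters one computes $f_{\mathrm{LSA}}(\Z_i)_{[:,-1]}=(\bzero_d,0,\;\w_i+\tfrac1n\V_{31}\X\X^\top(\W_{13}\w_i-\w^*),\;1)$, so the residual $r_i:=f_{\mathrm{LSA}}(\Z_i)_{[:,-1]}-(\bzero_d,0,\w_{i+1},1)$ is supported only on the third (weight) block. Two structural facts then do the work. First, a \emph{support} argument: $r_i$ lives on block $3$, ${\Z_i}_{[:,-1]}$ lives on blocks $3$ and $4$, $\Z_i\Z_i^\top$ is block-diagonal with respect to the split $\{1,2\}\sqcup\{3,4\}$ of the token coordinates (rows $1,2$ of $\Z_i$ are nonzero only on the $n$ data columns, rows $3,4$ only on the weight columns), and $\V^\top,\W^\top$ are themselves block-sparse on $\mathcal{M}$; tracing these through the outer-product structure in \Cref{lemma: gradients of the full model} forces $\nabla_{\V}\mathcal{L}^{\mathrm{CoT}}$ to vanish outside block row $3$ and, after expanding ${\Z_i}_{[:,-1]}^\top\W^\top\Z_i\Z_i^\top=(\w_i^\top\W_{13}^\top\X\X^\top+w_{24}{\w^*}^\top\X\X^\top,\ \ast,\ \bzero,\ 0)$, outside block columns $1,2$; likewise it forces $\nabla_{\W}\mathcal{L}^{\mathrm{CoT}}$ to vanish outside blocks $(1,3),(1,4),(2,3),(2,4)$. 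Second, a \emph{parity} argument kills the surviving unwanted blocks $\V_{32},\W_{14},\W_{23}$: using $\w_i=(\bI-(\bI-\eta\S)^i)\w^*$ and $\y=\X^\top\w^*$, the vectors $\w_i,\y,r_i$ are all \emph{linear} in $\w^*$, while the accompanying scalar factors appearing in those three blocks (e.g. $\w_i^\top\W_{13}^\top\X\y^\top+w_{24}\|\y\|^2$ for $\V_{32}$, and ${\w^*}^\top\X\X^\top\V_{31}^\top r_i$ for $\W_{23}$) are quadratic in $\w^*$, so each integrand is an odd-degree polynomial in the coordinates of $\w^*$ and has zero expectation for every fixed $\X$ since $\w^*\sim\mathcal{N}(0,\bI)$ is centered. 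The only blocks whose gradient survives are $\V_{31}$ (paired with a factor quadratic in $\w^*$) and $\W_{13}$, exactly as claimed.

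Once the tangency is established, $-\nabla\mathcal{L}^{\mathrm{CoT}}$ restricted to the trainable parameters is a smooth vector field tangent to the linear subspace $\mathcal{M}$, so by Picard--Lindel\"of the gradient-flow trajectory with $(\V(0),\W(0))\in\mathcal{M}$ remains in $\mathcal{M}$ for all $t>0$, which is precisely the stated form of $\V(t),\W(t)$.

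I expect the main obstacle to be the block-matrix bookkeeping in the second paragraph: writing $f_{\mathrm{LSA}}(\Z_i)_{[:,-1]}$, ${\Z_i}_{[:,-1]}^\top\W^\top\Z_i\Z_i^\top$, and $\Z_i\Z_i^\top\V^\top r_i{\Z_i}_{[:,-1]}^\top$ explicitly enough to see, block by block, which entries vanish by the support argument and which require the parity argument. None of this is deep, but it is where essentially all the computation sits; the polynomiality of the loss and the ODE uniqueness step are routine.
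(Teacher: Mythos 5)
Your proposal is correct and takes essentially the same approach as the paper's proof: you compute the same residual $r_i$ supported on the third block, then use the same two-step argument — a block-sparsity/support argument to show the gradient is automatically confined to the blocks $\V_{31},\V_{32},\W_{13},\W_{14},\W_{23}$, followed by an oddness-in-$\w^*$ argument to kill $\V_{32},\W_{14},\W_{23}$. The only cosmetic difference is that you explicitly package the conclusion as invariance of an affine subspace under a locally Lipschitz vector field (Picard--Lindel\"of), whereas the paper leaves that last step implicit; the substantive computations are identical.
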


\begin{proof}
    To prove this lemma, we prove that when the irrelevant blocks are 0, the gradients $\nabla_{\V}\mathcal{L}_i, \nabla_{\W}\mathcal{L}_i$ for those blocks are always 0 and they never update the corresponding parameter block. Also, note that $w_{24}=-1$ for all $t>0$.

    First, we calculate the output of the linear self-attention $\V \Z_i \cdot\frac{\Z_i^\top \W {\Z_i}_{[:, -1]}}{n}$:
    \begin{align*}
        &\V \Z_i \cdot\frac{\Z_i^\top \W {\Z_i}_{[:, -1]}}{n}\\
        ={}&\frac{1}{n}\begin{bmatrix}
        \bzero&\bzero&\bzero&\bzero\\
        \bzero&0&\bzero&0\\
        \V_{31}(t)&\bzero&\bzero&\bzero\\
        \bzero&0&\bzero&0
    \end{bmatrix}\begin{bmatrix}
        \X&\bzero&\bzero&\cdots&\bzero\\
        \y&0&0&\cdots&0\\
        \bzero_{d\times n}&\w_0&\w_1&\cdots&\w_i\\
        \bzero_{1\times n}&1&1&\cdots&1
    \end{bmatrix}
    \Z_i^\top\W \begin{bmatrix}
        \bzero\\
        0\\
        \w_i\\
        1
    \end{bmatrix}\\
    ={}&\frac{1}{n}\begin{bmatrix}
        \bzero_{d\times n}&\bzero_{d}&\cdots&\bzero_d\\
        \bzero_{1\times n}&0&\cdots&0\\
        \V_{31}(t)\X&\bzero_{d}&\cdots&\bzero_d\\
        \bzero_{1\times n}&0&\cdots&0
    \end{bmatrix}
    \begin{bmatrix}
        \X&\bzero&\bzero&\cdots&\bzero\\
        \y&0&0&\cdots&0\\
        \bzero_{d\times n}&\w_0&\w_1&\cdots&\w_i\\
        \bzero_{1\times n}&1&1&\cdots&1
    \end{bmatrix}^\top
    \begin{bmatrix}
        \W_{13}(t)\w_i\\
        -1\\
        \bzero_d\\
        0
    \end{bmatrix}\\
    ={}&\frac{1}{n}\begin{bmatrix}
        \bzero_{d\times n}&\bzero_{d}&\cdots&\bzero_d\\
        \bzero_{1\times n}&0&\cdots&0\\
        \V_{31}(t)\X&\bzero_{d}&\cdots&\bzero_d\\
        \bzero_{1\times n}&0&\cdots&0
    \end{bmatrix}
    \begin{bmatrix}
        \X^\top \W_{13}(t)\w_i-\y^\top\\
        \bzero_{i+1}\\
    \end{bmatrix}
    ={}\frac{1}{n}\begin{bmatrix}
        \bzero_{d}\\
        0\\
        \V_{31}(t)\X\X^\top \qty(\W_{13}(t)\w_i-\w^*)\\
        0
    \end{bmatrix}
    \end{align*}
    The last line is because $\y^\top = \X^\top \w^*$.
    Now, we consider the gradient for $\V$:
    \begin{align*}
        \nabla_{\V}\mathcal{L}_i &=\frac{1}{n}
        \E_{\X,\w^*}\qty[\qty(\V \Z_i \cdot\frac{\Z_i^\top \W {\Z_i}_{[:, -1]}}{n} - (\mbf{0}_{d},0,\w_{i+1}-\w_i,0)^\top) {\Z_i}_{[:, -1]}^\top \W^\top \Z_i \Z_i^\top ]\\
        &=\frac{1}{n^2}\begin{bmatrix}
        \bzero_{d}\\
        0\\
        \V_{31}(t)\X\X^\top \qty(\W_{13}(t)\w_i-\w^*) - n(\w_{i+1}-\w_i)\\
        0
    \end{bmatrix}{\Z_i}_{[:, -1]}^\top \W^\top \Z_i \Z_i^\top\\
        &=\frac{1}{n^2}\E_{\X,\w^*}\begin{bmatrix}
        \bzero_{d}\\
        0\\
        \V_{31}(t)\X\X^\top \qty(\W_{13}(t)\w_i-\w^*) - n(\w_{i+1}-\w_i)\\
        0
    \end{bmatrix}\begin{bmatrix}
        \w_i^\top \W_{13}^\top(t)\\
        -1\\
        \bzero_d\\
        0
    \end{bmatrix}^\top \Z_i \Z_i^\top\\
    &=\frac{1}{n^2}\E_{\X,\w^*}\begin{bmatrix}
        \bzero_{d}\\
        0\\
        \V_{31}(t)\X\X^\top \qty(\W_{13}(t)\w_i-\w^*) - n(\w_{i+1}-\w_i)\\
        0
    \end{bmatrix}\begin{bmatrix}
        \w_i^\top \W_{13}^\top(t)\X\X^\top -\y\X^\top\\
        \w_i^\top \W_{13}^\top(t)\X\y^\top -\y\y^\top\\
        \bzero_d\\
        0
    \end{bmatrix}^\top \\
    &=\begin{bmatrix}
        \bzero&\bzero&\bzero&\bzero\\
        \bzero&0&\bzero&0\\
        \nabla_{\V_{31}}\mathcal{L}_i(t)&\nabla_{\V_{32}}\mathcal{L}_i(t)&\bzero&\bzero\\
        \bzero&0&\bzero&0
    \end{bmatrix}
    \end{align*}
 Therefore, we know all blocks of the gradient are zero except the positions of $\V_{31}$ and $\V_{32}$. 
  
Now look at $\nabla_{\V_{32}}\mathcal{L}_i$:
\begin{align*}
    \nabla_{\V_{32}}\mathcal{L}_i ={}& \frac{1}{n^2} \E_{\X,\w^*}\left[\qty(\V_{31}(t)\X\X^\top \qty(\W_{13}(t)\w_i-\w^*) - n(\w_{i+1}-\w_i))\right.\\
    &\quad \quad\quad\qquad\left.\qty(\w_i^\top \W_{13}^\top(t)\X\y^\top -\y\y^\top)\right]\\
    ={}& \frac{1}{n^2} \E_{\X,\w^*}\left[\qty(\V_{31}(t)\X\X^\top \qty(\W_{13}(t)\w_i-\w^*) - n(\w_{i+1}-\w_i))\right.\\
    &\quad \quad\quad\qquad\left.\qty(\w_i^\top \W_{13}^\top(t)\X\X^\top\w^* -{\w^*}^\top\X\X^\top\w^*)\right]
\end{align*}
Note that $\w_i = \qty(\bI-(\bI-\eta\S)^i)\w^*$ for all $i\in[k]$, and $\w_{k+1}=\w^*$. Therefore, for all $i\in\{0,1,\cdots,k+1\}$ the formula inside the expectation is an odd function of $\w^*$. Since $\w^*\sim\mathcal{N}(0,\bI_d)$, the expectation should be $\bzero_{d}$.

Similarly, we calculate the gradient of the $\W$:
\begin{align*}
    &\nabla_{\W}\mathcal{L}_i =\frac{1}{n}\E_{\X,\w^*}\qty[\Z_i\Z_i^\top\V^\top\qty(\V \Z_i \cdot\frac{\Z_i^\top \W {\Z_i}_{[:, -1]}}{n} - (\mbf{0}_{d},0,\w_{i+1}-\w_i,0)^\top) {\Z_i}_{[:, -1]}^\top ]\\
    &=\frac{1}{n^2}\E_{\X,\w^*}\qty[\Z_i\Z_i^\top\V^\top \begin{bmatrix}
        \bzero_{d}\\
        0\\
        \V_{31}(t)\X\X^\top \qty(\W_{13}(t)\w_i-\w^*) - n(\w_{i+1}-\w_i)\\
        0
    \end{bmatrix}{\Z_i}_{[:, -1]}^\top ]\\
    &=\frac{1}{n^2}\E_{\X,\w^*}\begin{bmatrix}
        \bzero_{d\times d}&0&\X\X^\top\V_{31}(t)^\top&0\\
        \bzero_{d\times d}&0&\y\X^\top\V_{31}(t)^\top&0\\
        \bzero_{d\times d}&0&\bzero_{d\times d}&0\\
        \bzero_{d\times d}&0&\bzero_{d\times d}&0\\
    \end{bmatrix} \begin{bmatrix}
        \bzero_{d}\\
        0\\
        \V_{31}(t)\X\X^\top \qty(\W_{13}(t)\w_i-\w^*) - n(\w_{i+1}-\w_i)\\
        0
    \end{bmatrix}{\Z_i}_{[:, -1]}^\top \\
    &=\frac{1}{n^2}\E_{\X,\w^*}\qty[\begin{bmatrix}
        \X\X^\top\V_{31}(t)^\top\V_{31}(t)\X\X^\top \qty(\W_{13}(t)\w_i-\w^*) - n(\w_{i+1}-\w_i)\\
        \y\X^\top\V_{31}(t)^\top\V_{31}(t)\X\X^\top \qty(\W_{13}(t)\w_i-\w^*) - n(\w_{i+1}-\w_i)\\
        \bzero_d\\
        0
    \end{bmatrix}\begin{bmatrix}
        \bzero_d\\
        0\\
        \w_i\\
        1
    \end{bmatrix}^\top ]\\
    &=\begin{bmatrix}
        \bzero&\bzero&\nabla_{\W_{13}}\mathcal{L}_i(t)&\nabla_{\W_{14}}\mathcal{L}_i(t)\\
        \bzero&0&\nabla_{\W_{23}}\mathcal{L}_i(t)&\nabla_{w_{24}}\mathcal{L}_i(t)\\
        \bzero&\bzero&\bzero&\bzero\\
        \bzero&0&\bzero&0
    \end{bmatrix}
\end{align*}
Since we fix $w_{24}$, we only consider the remaining three blocks. First, we consider the gradient of the vector block $\W_{14}$:
\begin{align*}
    \nabla_{\W_{14}}\mathcal{L}_i(t) &= \frac{1}{n^2}\E_{\X,\w^*}\qty[\X\X^\top\V_{31}(t)^\top\V_{31}(t)\X\X^\top \qty(\W_{13}(t)\w_i-\w^*) - n(\w_{i+1}-\w_i)].
\end{align*}
Notice that the $\X\X^\top\V_{31}(t)^\top\V_{31}(t)\X\X^\top \qty(\W_{13}(t)\w_i-\w^*) - n(\w_{i+1}-\w_i)$ is odd in $\w^*$. Therefore the expectation is $\bzero_d$. Similarly, we consider the other block $\W_{23}$:
\begin{align*}
    \nabla_{\W_{23}}\mathcal{L}_i(t) &= \frac{1}{n^2}\E_{\X,\w^*}\qty[\qty(\y\X^\top\V_{31}(t)^\top\V_{31}(t)\X\X^\top \qty(\W_{13}(t)\w_i-\w^*) - n(\w_{i+1}-\w_i))\w_i^\top]\\
    &=\frac{1}{n^2}\E_{\X,\w^*}\qty[\qty({\w^*}^\top\X\X^\top\V_{31}(t)^\top\V_{31}(t)\X\X^\top \qty(\W_{13}(t)\w_i-\w^*) - n(\w_{i+1}-\w_i))\w_i^\top]\\
    &=\bzero_{1\times d}.
\end{align*}
In conclusion, all the blocks have zero gradient except $\V_{31},\W_{13}$ given that they are all zero matrices. Under Assumption~\ref{assumption: initialization}, all the irrelevant blocks remain zero matrices for all $t\geq 0$.
\end{proof}

By \Cref{lemma: irrelevant blocks keep zero}, we prove that along the gradient flow trajectory under Assumption~\ref{assumption: initialization}, the objective of the linear self-attention model with residual connection can be equivalently transform to the following simplified form.
\begin{lemma}
    Under Assumption~\ref{assumption: initialization}, we have the training objective 
    \begin{equation*}
        \mathcal{L}^{\mathrm{CoT}}(\V, \W) =\frac{1}{2}\E_{\X,\w^*}\qty[\sum_{i=0}^{k}\left\|\V_{31}(\S\W_{13}\w_i-\S\w^*) - \Delta\w_i\right\|^2]\\
    \end{equation*}
    where $\S=\frac{1}{n}\X\X^\top$ and $ \Delta \w_i:=\w_{i+1}-\w_i$, $i=0,1...,k$ is the residual for each step $i$.
\label{lemma: simplified model loss}
\end{lemma}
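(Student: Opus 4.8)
The plan is to prove the identity by direct substitution, since the heavy lifting has already been carried out in \Cref{lemma: irrelevant blocks keep zero}: once we know that along the gradient flow trajectory under \Cref{assumption: initialization} the only nonzero blocks of $\V,\W$ are $\V_{31}(t)$, $\W_{13}(t)$, and the fixed scalar $w_{24}=-1$, the last column of the attention output collapses to a single $d$-dimensional expression. So the reduction is essentially a computation of $f_{\mathrm{LSA}}(\Z_i)_{[:,-1]}$ under the reduced parameterization, followed by subtracting the $i$-th target token.

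First I would recall from the proof of \Cref{lemma: irrelevant blocks keep zero} that, under \Cref{assumption: initialization}, for every CoT step $i$ the attention output satisfies
$$\V\Z_i\cdot\frac{\Z_i^\top\W\,{\Z_i}_{[:,-1]}}{n}=\bigl(\bzero_d,\,0,\,\V_{31}\S(\W_{13}\w_i-\w^*),\,0\bigr)^\top,$$
using that ${\Z_i}_{[:,-1]}=(\bzero_d,0,\w_i,1)^\top$, that $\W$ contracts against this token to give $(\W_{13}\w_i,\,-1,\,\bzero_d,\,0)^\top$ (this is where $w_{24}=-1$ enters), that $\V\Z_i$ only picks up $\V_{31}\X$ in its third block, and finally that $\y=\X^\top\w^*$ converts $\X(\X^\top\W_{13}\w_i-\y)$ into $\X\X^\top(\W_{13}\w_i-\w^*)=n\,\S(\W_{13}\w_i-\w^*)$. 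Adding the residual connection ${\Z_i}_{[:,-1]}$ then gives $f_{\mathrm{LSA}}(\Z_i)_{[:,-1]}=\bigl(\bzero_d,\,0,\,\w_i+\V_{31}\S(\W_{13}\w_i-\w^*),\,1\bigr)^\top$.

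Next I would subtract the target $(\bzero_d,0,\w_{i+1},1)^\top$. All the zero coordinates and the trailing $1$ cancel exactly, so only the middle $d$-block survives:
$$f_{\mathrm{LSA}}(\Z_i)_{[:,-1]}-(\bzero_d,0,\w_{i+1},1)^\top=\bigl(\bzero_d,\,0,\,\V_{31}(\S\W_{13}\w_i-\S\w^*)-\Delta\w_i,\,0\bigr)^\top,$$
where $\Delta\w_i:=\w_{i+1}-\w_i$; for $i=k$ this uses $\w_{k+1}=\w^*$, consistent with the convention in \Cref{eqn: cot objective}. Taking the squared Euclidean norm, summing over $i=0,\dots,k$, and applying $\E_{\X,\w^*}$ yields the claimed expression for $\mathcal{L}^{\mathrm{CoT}}(\V,\W)$.

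No genuine obstacle is expected here: the statement is a bookkeeping consequence of \Cref{lemma: irrelevant blocks keep zero}. The only care needed is to track which rows of the block matrices survive the multiplications and to absorb the $1/n$ factor into $\S=\frac1n\X\X^\top$. The point of the reduction is that it replaces the full $d_e\times d_e$ parameter dynamics by a system in the two symmetric matrices $\V_{31},\W_{13}$, which is precisely what makes the eigenvalue-wise dynamics analysis in the subsequent sections tractable.
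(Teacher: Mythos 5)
Your proof is correct and follows essentially the same route as the paper's: apply \Cref{lemma: irrelevant blocks keep zero} to restrict to the nonzero blocks, compute the last column of the attention output, use $\y=\X^\top\w^*$ to rewrite everything in terms of $\S=\frac1n\X\X^\top$, and then subtract the residual connection and the target token so that only the middle $d$-block survives. Nothing is missing.
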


\begin{proof}
    Given the following CoT objective,
\begin{equation*}
    \mathcal{L}^{\mathrm{CoT}}(\V, \W) = \frac{1}{2}\E_{\X,\w^*}\qty[\sum_{i=0}^{k}\left\|f_{\mathrm{LSA}}(\Z_i)_{[:,-1]} - (\mbf{0}_{d},0,\w_{i+1},1)\right\|^2]
\end{equation*}
By \Cref{lemma: irrelevant blocks keep zero}, we plug in the $\V,\W$ expressions and get:
\begin{align*}
    f_{\mathrm{LSA}}(\Z_i)_{[:,-1]} - (\mbf{0}_{d},0,\w_{i+1},1)
={}&\V \Z_i \cdot\frac{\Z_i^\top \W {\Z_i}_{[:, -1]}}{n} - (\mbf{0}_{d},0,\w_{i+1}-\w_i,0)^\top\\
={}&\qty(\bzero_d,0,\frac{1}{n}\V_{31}\qty(\X\X^\top\W_{13}\w_i-\X\y^\top)-\Delta \w_i,0)
\end{align*}
Since $\y^\top = \X^\top \w^*$, we have 
$$f_{\mathrm{LSA}}(\Z_i)_{[:,-1]} - (\mbf{0}_{d},0,\w_{i+1},1) =\qty(\bzero_d,0,\V_{31}\qty(\S^\top\W_{13}\w_i-\S\w^*)-\Delta \w_i,0)^\top$$
Put it back to the loss expression and we complete the proof.
\end{proof}

Now the chain of thought loss can be rewritten into the form by \Cref{lemma: simplified model loss}, we can directly calculate the gradient update 
using the simplified loss for clarity. We denote the only relevant blocks $\widetilde{\W}:=\W_{13}$ and $\widetilde{\V}:=\V_{31}$. Moreover, we can further expand the CoT loss with $\Delta\w_i=-\eta\cdot\frac{\X\X^\top}{n}(\w_i-\w^*)$ for $i\in\{0,1,\cdots,k-1\}$, and $\Delta \w_k=\w^*-\w_{k}$. That leads to the following expression of the CoT loss:
\begin{equation}\label{eqn: reduced CoT loss}
    \begin{aligned}
    \mathcal{L}^{\mathrm{CoT}}(\mbf{\theta})={}&\frac{1}{2}\E_{\X,\w^*}\sum\limits_{i=0}^{k-1}\norm{\w_i+\widetilde{\V}\S\qty(\widetilde{\W}\w_i-\w^*)-\w_{i+1}}_2^2\\
    &+\frac{1}{2}\E_{\X,\w^*}\norm{\w_k+\widetilde{\V}\S\qty(\widetilde{\W}\w_k-\w^*)-\w^*}_2^2
\end{aligned}
\end{equation}

Observe that the final loss only depends on the $(d+2)$ to $(2d+2)$ entries of the transformer's output, indicating we can simplify the model a bit and prune out the irrelevant part. We can define a simplified one-layer transformer to get the loss form above, where the dynamics of the equivalent model is exactly the same with the original dynamics of $\W_{13}$ and $\V_{31}$. Accordingly, the last token input of the transformer for $i$-th step becomes $\w_i$ and the label becomes $\w_{i+1}$ since the other entries in the original input/label $(\bzero,0,\w_i,1)$ do not affect prediction.
\begin{definition}[Reduced transformer]
    \label{def: reduced model}
    Let $\mbf{\theta}=(\widetilde{\V},\widetilde{\W})$. Define
    \begin{equation*}
        f_{\mbf{\theta}}(\X,\Z_i)=\w_i+\widetilde{\V}\S\qty(\widetilde{\W}\w_i-\w^*)
    \end{equation*}
    to be the reduced model of the one-layer transformer in \Cref{eq: full model}. For ease of presentation, we denote $f_{\mbf{\theta}}(\w_i):=f_{\mbf{\theta}}(\X,\Z_i)$.
\end{definition}
In the following sections, we will consider the equivalent form of transformer. Here we present the gradient with regard to the reduced model. For clarification, throughout this section we will denote $\w_{k+1}:=\qty(\bI-\qty(\bI-\eta\S)^{k+1})\w^*$ as the $(k+1)$-th update, and $\w^*$ is the ground-truth. 
\begin{lemma}
    \label{lemma: gradient of reduced model with expectation}
    The gradient of $\widetilde{\V}$ and $\widetilde{\W}$ are given by the following expectations:
    \begin{align*}
        \frac{\partial\mathcal{L}}{\partial \widetilde{\V}}={}&\sum_{i=0}^{k}\E\qty[\qty(f_\theta\qty(\w_i)-\w_{i+1})\qty(\w_i^\top\widetilde{\W}^\top-{\w^*}^T)\S]+\E\qty[\qty(\w_{k+1}-\w^*)\qty(\w_k^\top\widetilde{\W}^\top-{\w^*}^T)\S],\\
        \frac{\partial\mathcal{L}}{\partial \widetilde{\W}}={}&\sum_{i=0}^{k}\E\qty[\S\widetilde{\V}^\top\qty(f_\theta\qty(\w_i)-\w_{i+1})\w_i^\top]+\E\qty[\S\widetilde{\V}^\top\qty(\w_{k+1}-\w^*)\w_k^\top].
    \end{align*}
\end{lemma}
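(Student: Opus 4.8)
The plan is to differentiate the reduced loss in \Cref{eqn: reduced CoT loss} term by term, exploiting only that the reduced model $f_{\mbf{\theta}}(\w_i)=\w_i+\widetilde{\V}\S(\widetilde{\W}\w_i-\w^*)$ of \Cref{def: reduced model} is affine in $\widetilde{\V}$ with $\widetilde{\W}$ fixed, and affine in $\widetilde{\W}$ with $\widetilde{\V}$ fixed. The iterates $\w_0,\dots,\w_k$ and the targets $\w^*$ and $\w_{k+1}=(\bI-(\bI-\eta\S)^{k+1})\w^*$ depend only on $\X,\w^*$ and not on the trainable parameters, and every entry of the per-instance loss is a polynomial in $\X,\w^*$ with finite Gaussian moments, so $\nabla$ commutes with $\E_{\X,\w^*}$. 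It therefore suffices to differentiate the per-instance squared norms and take expectations at the end.

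First I would handle the interior terms $\tfrac12\norm{f_{\mbf{\theta}}(\w_i)-\w_{i+1}}_2^2$ for $i=0,\dots,k-1$. For $\widetilde{\V}$, write $f_{\mbf{\theta}}(\w_i)-\w_{i+1}=\widetilde{\V}\vb_i+(\w_i-\w_{i+1})$ with $\vb_i:=\S(\widetilde{\W}\w_i-\w^*)$ independent of $\widetilde{\V}$; the identity $\nabla_{\mA}\tfrac12\norm{\mA\vb-\vc}_2^2=(\mA\vb-\vc)\vb^\top$ gives $\nabla_{\widetilde{\V}}=(f_{\mbf{\theta}}(\w_i)-\w_{i+1})\vb_i^\top=(f_{\mbf{\theta}}(\w_i)-\w_{i+1})(\w_i^\top\widetilde{\W}^\top-{\w^*}^\top)\S$, using $\S^\top=\S$ since $\S=\tfrac1n\X\X^\top$. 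For $\widetilde{\W}$, write $f_{\mbf{\theta}}(\w_i)-\w_{i+1}=\widetilde{\V}\S\widetilde{\W}\w_i+(\w_i-\widetilde{\V}\S\w^*-\w_{i+1})$ and apply $\nabla_{\B}\tfrac12\norm{\mA\B\vc-\vd}_2^2=\mA^\top(\mA\B\vc-\vd)\vc^\top$ with $\mA=\widetilde{\V}\S$, $\vc=\w_i$, giving $\nabla_{\widetilde{\W}}=\S^\top\widetilde{\V}^\top(f_{\mbf{\theta}}(\w_i)-\w_{i+1})\w_i^\top=\S\widetilde{\V}^\top(f_{\mbf{\theta}}(\w_i)-\w_{i+1})\w_i^\top$. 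These are exactly the $i$-th summands of the two claimed formulas.

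Next I would treat the final term $\tfrac12\norm{f_{\mbf{\theta}}(\w_k)-\w^*}_2^2$. The same two rules (now with target $\w^*$) produce $\widetilde{\V}$- and $\widetilde{\W}$-gradients $(f_{\mbf{\theta}}(\w_k)-\w^*)(\w_k^\top\widetilde{\W}^\top-{\w^*}^\top)\S$ and $\S\widetilde{\V}^\top(f_{\mbf{\theta}}(\w_k)-\w^*)\w_k^\top$. The one nonroutine move is to decompose the residual $f_{\mbf{\theta}}(\w_k)-\w^*=(f_{\mbf{\theta}}(\w_k)-\w_{k+1})+(\w_{k+1}-\w^*)$, using the section's convention that $\w_{k+1}$ is the $(k{+}1)$-st GD iterate rather than $\w^*$; by linearity the first piece supplies precisely the $i=k$ term of each sum, extending the summation range to $i=0,\dots,k$, while the second piece contributes the separate terms $\E[(\w_{k+1}-\w^*)(\w_k^\top\widetilde{\W}^\top-{\w^*}^\top)\S]$ and $\E[\S\widetilde{\V}^\top(\w_{k+1}-\w^*)\w_k^\top]$. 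Adding the interior contributions to this decomposed final one and taking expectations yields the two stated identities.

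Since both computations are elementary matrix calculus, I do not anticipate a genuine obstacle; the points needing care are keeping the affine-in-one-argument structure straight so the chain rule is applied with the correct direction vectors $\S(\widetilde{\W}\w_i-\w^*)$ and $\w_i$, and respecting the change of convention for $\w_{k+1}$ between \Cref{eqn: cot objective} (where $\w_{k+1}=\w^*$) and the present section (where it denotes the $(k{+}1)$-st GD step), which is exactly what makes the final-term residual split cleanly into an extra summand plus a $(\w_{k+1}-\w^*)$ correction.
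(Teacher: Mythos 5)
Your proposal is correct and follows essentially the same route as the paper: differentiate the reduced CoT loss term by term using standard matrix calculus, then rewrite the final residual as $(f_{\mbf{\theta}}(\w_k)-\w_{k+1})+(\w_{k+1}-\w^*)$ to absorb one piece into the summation and leave the $(\w_{k+1}-\w^*)$ correction. The explicit affine-in-one-argument decompositions and the note on the $\w_{k+1}$ convention change are consistent with the paper's (terser) computation.
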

\begin{proof} Given the equivalent CoT loss in \Cref{eqn: reduced CoT loss}, we take the gradient with regard to $\widetilde{\V}$ of the loss and we have
    \begin{align*}
        \frac{\partial\mathcal{L}}{\partial \widetilde{\V}}={}&\sum_{i=0}^{k-1}\E\qty[\qty(f_\theta\qty(\w_i)-\w_{i+1})\qty(\w_i^\top\widetilde{\W}^\top-{\w^*}^T)\S]+\E\qty[\qty(f_\theta\qty(\w_k)-\w^*)\qty(\w_k^\top\widetilde{\W}^\top-{\w^*}^T)\S]\\
        ={}&\sum_{i=0}^{k}\E\qty[\qty(f_\theta\qty(\w_i)-\w_{i+1})\qty(\w_i^\top\widetilde{\W}^\top-{\w^*}^T)\S]+\E\qty[\qty(\w_{k+1}-\w^*)\qty(\w_k^\top\widetilde{\W}^\top-{\w^*}^T)\S]
    \end{align*}
    The second step is because we subtract $\E\qty[\qty(f_\theta\qty(\w_k)-\w_{k+1})\qty(\w_k^\top\widetilde{\W}^\top-{\w^*}^T)\S]$ from the second term and put it into the summation. 
Similarly, the partial derivative of $\widetilde{\W}$ should be: 
    \begin{align*}
        \frac{\partial\mathcal{L}}{\partial \widetilde{\W}}={}&\sum_{i=0}^{k-1}\E\left[\S\widetilde{\V}^\top(f_\theta(\w_i)-\w_{i+1})\w_i^\top\right]+\E\qty[\S\widetilde{\V}^\top(f_\theta(\w_k)-\w^*)\w_k^\top]\\
        ={}&\sum_{i=0}^{k}\E\qty[\S\widetilde{\V}^\top\qty(f_\theta\qty(\w_i)-\w_{i+1})\w_i^\top]+\E\qty[\S\widetilde{\V}^\top\qty(\w_{k+1}-\w^*)\w_k^\top]
    \end{align*}
    Therefore we complete the proof.
\end{proof}
    
\subsection{Gradient characterization over the CoT objective}
In this section, we compute the exact gradient for the reduced model parameters to facilitate analysis on the dynamics. For clarification, throughout this section we will denote $\w_{k+1}:=\qty(\bI-\qty(\bI-\eta\S)^{k+1})\w^*$ as the $(k+1)$-th update, and $\w^*$ is the ground-truth. 

We first compute our gradients for the simplfied model defined in \Cref{def: reduced model}, which is equivalent to the full model's dynamics.
Recall that under \cref{assumption: initialization}, we have $\widetilde{\V},\widetilde{\W}$ are simultaneously diagonalizable, with the orthonormal basis $\{\bu_i\}_{i=1}^d$. We denote the orthogonal matrix formed by the basis as $\U$. We will observe that $\bu_i$ are always the eigenvector of $\widetilde{\V},\widetilde{\W}$, so we denote $\widetilde{\V} = \U\Lbd^{\widetilde{\V}}\U^\top, \widetilde{\W} = \U\Lbd^{\widetilde{\W}}\U^\top.$
For clarity, we ignore the timestamp when calculating the gradients and dynamics.

We present an accurate estimate of the gradient in the following \Cref{lemma: gradients of the reduced model}. We intensively use the concentration lemma in \Cref{appendix: supplementary lemmas} to separate the main terms dominating the gradient flow dynamics, and some bounded error terms that may complicate the analysis. We also call the error terms as `interaction terms', since they contain the interactions between two subspaces $\bu_i\bu_i^\top$ and $\bu_j\bu_j^\top$. The structure of the interaction terms $\Delta^{\widetilde{\V}}, \Delta^{\widetilde{\W}}$ are further characterized in this lemma, which is essential for the final local convergence analysis. 
\begin{lemma}
    Suppose $n=\Theta(d\log^5d)$, $\eta\in(0.1,0.9)$, $k = \lceil c\log d\rceil$. Under \Cref{assumption: initialization}, if we run gradient flow on the population loss in \Cref{eqn: cot loss}, then
    the gradient of $\widetilde{\V}$ and $\widetilde{\W}$ are characterized by the following equations:
    \begin{align*}
        \U^\top\frac{\partial\mathcal{L}}{\partial\widetilde{\V}}\U=&\qty[\qty(k+1-\frac{2}{\eta}+\frac{1}{\eta(2-\eta)}){\Lbd^{\widetilde{\W}}}^2-2\qty(k+1-\frac{1}{\eta})\Lbd^{\widetilde{\W}}+\qty(k+1)\bI]\Lbd^{\widetilde{\V}}\\
        &-\frac{1-\eta}{2-\eta}\Lbd^{\widetilde{\W}}+\bI+\Delta^{\widetilde{\V}},\\
        \U^\top\frac{\partial\mathcal{L}}{\partial\widetilde{\W}}\U=&\qty(k+1-\frac{2}{\eta}+\frac{1}{\eta(2-\eta)}){\Lbd^{\widetilde{\V}}}^2\Lbd^{\widetilde{\W}}-\qty(k+1-\frac{1}{\eta}){\Lbd^{\widetilde{\V}}}^2-\frac{1-\eta}{2-\eta}\Lbd^{\widetilde{\V}}+\Delta^{\widetilde{\W}}.
    \end{align*}
    where the error terms (interaction terms) $\norm{\Delta^{\widetilde{\V}}}_{op}\le{}O\qty(\frac{1}{\log^2 d}),
        \norm{\Delta^{\widetilde{\W}}}_{op}\le{}O\qty(\frac{1}{\log^2 d})$. Moreover, there exist diagonal matrices $\A^{\widetilde{\V}},\B^{\widetilde{\V}},\A^{\widetilde{\W}},\B^{\widetilde{\W}}$ with $O\qty(\frac{1}{\log^2 d})$-operator norm, $\C^{\widetilde{\V}}, \D^{\widetilde{\V}},\C^{\widetilde{\W}},\D^{\widetilde{\W}},\bE^{\widetilde{\W}}$ with $O\qty(\frac{1}{d\log^2 d})$-operator norm and $\bE^{\widetilde{\V}},\F^{\widetilde{\W}}$ with $O\qty(\qty(1-\eta)^k)$-operator norm s.t. the error terms $\Delta^{\widetilde{\V}}, \Delta^{\widetilde{\W}}$ can be written as
        \begin{align*}
        \Delta^{\widetilde{\V}}={}&\qty(\Lbd^{\widetilde{\V}}+\eta\bI)\A^{\widetilde{\V}}+\qty(\bI-\Lbd^{\widetilde{\W}})\B^{\widetilde{\V}}+\tr\qty(\qty(\bI-\Lbd^{\widetilde{\W}})\Lbd^{\widetilde{\W}})\C^{\widetilde{\V}}+\tr\qty(\bI-\Lbd^{\widetilde{\W}})\D^{\widetilde{\V}}+\bE^{\widetilde{\V}},\\
         \Delta^{\widetilde{\W}}={}&\qty(\Lbd^{\widetilde{\V}}+\eta\bI)\A^{\widetilde{\W}}+\qty(\bI-\Lbd^{\widetilde{\W}})\B^{\widetilde{\W}}+\tr\qty(\bI-\Lbd^{\widetilde{\W}})\C^{\widetilde{\W}}+\tr\qty((\Lbd^{\widetilde{\V}}+\eta\bI)\Lbd^{\widetilde{\V}})\D^{\widetilde{\W}}\\
         &+\tr\qty((\bI-\Lbd^{\widetilde{\W}}){\Lbd^{\widetilde{\V}}}^2)\bE^{\widetilde{\W}}+\F^{\widetilde{\W}}.
    \end{align*}
    
    \label{lemma: gradients of the reduced model}
\end{lemma}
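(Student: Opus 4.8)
The plan is to begin from the closed-form expectations for $\partial\mathcal{L}/\partial\widetilde{\V}$ and $\partial\mathcal{L}/\partial\widetilde{\W}$ supplied by \Cref{lemma: gradient of reduced model with expectation}, substitute $f_\theta(\w_i)=\w_i+\widetilde{\V}\S(\widetilde{\W}\w_i-\w^*)$ together with $\w_i=(\bI-(\bI-\eta\S)^i)\w^*$ and $\w_{k+1}=(\bI-(\bI-\eta\S)^{k+1})\w^*$, and then integrate out $\w^*$ and $\S$ in that order. Every summand is quadratic in $\w^*$ (a product of two factors, each linear in $\w^*$), so taking $\E_{\w^*}$ simply replaces $\w^*{\w^*}^\top$ by $\bI$ and leaves expectations of matrix polynomials in $\S,\widetilde{\V},\widetilde{\W}$. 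Under \Cref{assumption: initialization} the matrices $\widetilde{\V},\widetilde{\W}$ are diagonal in the fixed orthonormal basis $\U$; conjugating the whole identity by $\U$ and using that $\U^\top\S\U$ has the same law as $\S$ (rotational invariance of the Gaussian design) reduces everything to the case $\widetilde{\V}=\Lbd^{\widetilde{\V}}$, $\widetilde{\W}=\Lbd^{\widetilde{\W}}$ diagonal.

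Next I would extract the principal term by concentrating $\S$ at $\bI$. Writing $\dS:=\S-\bI$ and setting $\S=\bI$ turns $(\bI-\eta\S)^i$ into the scalar $(1-\eta)^i$ and $\w_i$ into $(1-(1-\eta)^i)\w^*$, so the step-$i$ summand becomes a polynomial in $\Lbd^{\widetilde{\V}},\Lbd^{\widetilde{\W}}$ whose coefficients are among $(1-\eta)^i$, $1-(1-\eta)^i$, $(1-(1-\eta)^i)^2$. Summing over $i=0,\dots,k$ gives $\sum_i(1-\eta)^i=\tfrac1\eta+O((1-\eta)^k)$, $\sum_i(1-(1-\eta)^i)=(k+1)-\tfrac1\eta+O((1-\eta)^k)$ and $\sum_i(1-(1-\eta)^i)^2=(k+1)-\tfrac2\eta+\tfrac1{\eta(2-\eta)}+O((1-\eta)^k)$, and these three geometric sums assemble exactly into the stated principal parts — e.g. for $\widetilde{\V}$ the $\widetilde{\V}$-linear block $[(k+1-\tfrac2\eta+\tfrac1{\eta(2-\eta)}){\Lbd^{\widetilde{\W}}}^2-2(k+1-\tfrac1\eta)\Lbd^{\widetilde{\W}}+(k+1)\bI]\Lbd^{\widetilde{\V}}$ and the $\widetilde{\V}$-free block $-\tfrac{1-\eta}{2-\eta}\Lbd^{\widetilde{\W}}+\bI$ coming from the lone $-\eta\S(\w_i-\w^*)$ contribution. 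The tails $O((1-\eta)^k)$ of these sums, plus the mismatch of the $i=k$ term (whose target is $\w^*$ rather than $\w_{k+1}$), get pushed into $\bE^{\widetilde{\V}}$ and $\F^{\widetilde{\W}}$.

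Everything that remains — the difference between the true expectation and its $\S=\bI$ value — constitutes $\Delta^{\widetilde{\V}},\Delta^{\widetilde{\W}}$, and this is where I would work hardest. Expanding $(\bI-\eta\S)^a=\sum_j\binom{a}{j}(1-\eta)^{a-j}(-\eta\dS)^j$ and multiplying out, the order-zero part in $\dS$ is the leading term already handled, the order-one part vanishes since $\E[\dS]=0$, and the order-$\ge2$ part is bounded via the concentration lemmas of \Cref{appendix: supplementary lemmas}: $\|\dS\|_{op}=O(\sqrt{d/n})$ with an exponentially decaying tail, so a second-order term with combinatorial prefactor $O(i^2)$ contributes $O(i^2 d/n)$, and summing $i\le k=\Theta(\log d)$ with $n=\Theta(d\log^5 d)$ yields the claimed $O(1/\log^2 d)$ bound (higher orders in $\dS$ and the tail integral are lower order). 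For the finer structure, the key point is that at the exact solution $\Lbd^{\widetilde{\V}}=-\eta\bI,\ \Lbd^{\widetilde{\W}}=\bI$ one has $f_\theta(\w_i)=\w_{i+1}$ identically in $\S$ for every $i\le k-1$, so each step-$i$ piece of $\Delta$ vanishes there; substituting $\Lbd^{\widetilde{\V}}=-\eta\bI+(\Lbd^{\widetilde{\V}}+\eta\bI)$ and $\Lbd^{\widetilde{\W}}=\bI-(\bI-\Lbd^{\widetilde{\W}})$ and re-expanding forces every surviving monomial to carry at least one residual factor $(\Lbd^{\widetilde{\V}}+\eta\bI)$ or $(\bI-\Lbd^{\widetilde{\W}})$, which is then factored out into $\A,\B$; the trace-weighted pieces $\C,\D,\bE^{\widetilde{\W}}$ arise from Wishart second moments of the form $\E[\dS\,M\,\dS]=\tfrac1n(\tr(M)\bI+M)$ for diagonal $M$ (obtained by the combinatorial moment method used for $\E[\S\Lbd\S^k\bGamma\S^{k'}]$), whose $\tfrac1n\tr(\cdot)$ factor together with the $k$-dependent prefactors gives the $O(1/(d\log^2 d))$ budget, while the $i=k$ residual $\w_{k+1}-\w^*=-(\bI-\eta\S)^{k+1}\w^*$ of size $O((1-\eta)^{k+1})$ produces $\bE^{\widetilde{\V}},\F^{\widetilde{\W}}$.

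I expect the main obstacle to be exactly this last step: carrying out the $\dS$-expansion and the residual substitution simultaneously while tracking (i) the combinatorial prefactors coming from up to $\Theta(k)$ matrix factors summed over the $k+1$ CoT steps, and (ii) the precise shape of each leftover monomial, so that it can be matched to one of the prescribed forms (residual-times-coefficient, or $\tr$-of-residual-times-coefficient) with the stated operator-norm budgets. No single estimate is hard; the difficulty is the disciplined bookkeeping needed to land exactly on the decomposition claimed in the lemma, which is what the subsequent local-convergence argument (\Cref{lemma: local convergence}) relies on.
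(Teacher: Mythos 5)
Your proposal follows the same route as the paper's proof: start from Lemma~\ref{lemma: gradient of reduced model with expectation}, substitute the reduced-model residual $f_\theta(\w_i)-\w_{i+1}=(\widetilde{\V}\S\widetilde{\W}+\eta\S)\w_i-(\widetilde{\V}+\eta\bI)\S\w^*$, integrate out $\w^*$, pass to the eigenbasis $\U$ by rotational invariance, concentrate $\S$ around $\bI$ using the tail bound on $\|\dS\|$, and read off the leading coefficients from the geometric sums $\sum_i(1-\eta)^i$, $\sum_i(1-(1-\eta)^i)$, $\sum_i(1-(1-\eta)^i)^2$, which you compute correctly. The structural decomposition of $\Delta^{\widetilde{\V}},\Delta^{\widetilde{\W}}$ is the only place where your framing differs from the paper's: you invoke the abstract observation that $f_\theta(\w_i)=\w_{i+1}$ holds identically in $\S$ at $(\Lbd^{\widetilde{\V}},\Lbd^{\widetilde{\W}})=(-\eta\bI,\bI)$ so every surviving monomial must carry a residual, whereas the paper achieves this directly by algebraically rewriting $\widetilde{\V}\S\widetilde{\W}+\eta\S=(\widetilde{\V}+\eta\bI)\S\widetilde{\W}+\eta\S(\bI-\widetilde{\W})$ before applying the concentration lemmas, so that each of Terms~1--6 in the expansion already has an explicit $(\widetilde{\V}+\eta\bI)$ or $(\bI-\widetilde{\W})$ factor attached, and the trace-weighted residuals then come from the specific $\alpha_2\tr(\Lbd)\bGamma+\alpha_3\tr(\bGamma)\Lbd+\alpha_4\tr(\Lbd)\tr(\bGamma)\bI+\alpha_5\tr(\Lbd\bGamma)\bI$ form proved in Lemmas~\ref{lemma: concentration 2}--\ref{lemma: concentration 3}. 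Your abstract argument explains \emph{why} such a factorization must exist, but it does not by itself yield the particular residual shapes appearing in the statement (e.g.\ $\tr((\bI-\Lbd^{\widetilde{\W}})\Lbd^{\widetilde{\W}})$ vs.\ $\tr((\bI-\Lbd^{\widetilde{\W}}){\Lbd^{\widetilde{\V}}}^2)$), so the bookkeeping you flag as the main obstacle is genuinely where the proof lives; the paper's explicit pre-factoring is precisely the mechanism that makes that bookkeeping tractable.
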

\begin{proof}
    Recall the gradients formula of $\widetilde{\V}$ and $\widetilde{\W}$ by \Cref{lemma: gradient of reduced model with expectation}:
    \begin{align*}
        \frac{\partial\mathcal{L}}{\partial \widetilde{\V}}={}&
        \sum_{i=0}^{k}\E\qty[\qty(f_\theta\qty(\w_i)-\w_{i+1})\qty(\w_i^\top\widetilde{\W}^\top-{\w^*}^T)\S]+\E\qty[\qty(\w_{k+1}-\w^*)\qty(\w_k^\top\widetilde{\W}^\top-{\w^*}^T)\S]\\
        \frac{\partial\mathcal{L}}{\partial \widetilde{\W}}={}&\sum_{i=0}^{k}\E\qty[\S\widetilde{\V}^\top\qty(f_\theta\qty(\w_i)-\w_{i+1})\w_i^\top]+\E\qty[\S\widetilde{\V}^\top\qty(\w_{k+1}-\w^*)\w_k^\top]
    \end{align*}
    We expand the reduced model $f_\theta\qty(\w_i)$ in \Cref{def: reduced model}, and get the residual term
    \begin{align*}
        f_\theta(\w_i)-\w_{i+1}={}&\w_i+\widetilde{\V}\S\qty(\widetilde{\W}\w_i-\w^*)-\w_{i+1}\\
        ={}&\widetilde{\V}\S\qty(\widetilde{\W}\w_i-\w^*)+\eta\S\qty(\w_i-\w^*)\\
        ={}&\qty(\widetilde{\V}\S\widetilde{\W}+\eta\S)\w_i-\qty(\widetilde{\V}+\eta\bI)\S\w^*
    \end{align*}
    Substitute $f_\theta(\w_i)-\w_{i+1}$ term in the dynamics by the equation above, we have
    \begin{align*}
        \frac{\partial\mathcal{L}}{\partial \widetilde{\V}}={}&\sum_{i=0}^{k}\E\qty[\qty(\widetilde{\V}\S\widetilde{\W}+\eta\S)\qty(\bI-\qty(\bI-\eta\S)^i)^2\widetilde{\W}^\top\S]-\sum_{i=0}^{k}\E\qty[\qty(\widetilde{\V}\S\widetilde{\W}+\eta\S)\qty(\bI-\qty(\bI-\eta\S)^i)\S]\\
        &-\sum_{i=0}^{k}\E\qty[\qty(\widetilde{\V}+\eta\bI)\S\qty(\bI-\qty(\bI-\eta\S)^i)\widetilde{\W}^\top\S]+\sum_{i=0}^{k}\E\qty[\qty(\widetilde{\V}+\eta\bI)\S^2]\\
        &-\E\qty[\qty(\bI-\eta\S)^{k+1}\qty(\qty(\bI-\qty(\bI-\eta\S)^k)\widetilde{\W}^\top-\bI)]\\
        ={}&\sum_{i=0}^{k}\qty(\widetilde{\V}+\eta\bI)\E\qty[\S\widetilde{\W}\qty(\bI-\qty(\bI-\eta\S)^i)^2\widetilde{\W}^\top\S]\tag{Term 1}\\&+\eta\sum_{i=0}^{k}\E\qty[\S\qty(\bI-\widetilde{\W})\qty(\bI-\qty(\bI-\eta\S)^i)^2\widetilde{\W}^\top\S]\tag{Term 2}\\
        &-\sum_{i=0}^{k}\qty(\widetilde{\V}+\eta\bI)\E\qty[\S\widetilde{\W}\qty(\bI-\qty(\bI-\eta\S)^i)\S]\tag{Term 3}\\
        &-\eta\sum_{i=0}^{k}\E\qty[\S\qty(\bI-\widetilde{\W})\qty(\bI-\qty(\bI-\eta\S)^i)\S]\tag{Term 4}\\
        &-\sum_{i=0}^{k}\qty(\widetilde{\V}+\eta\bI)\E\qty[\S\qty(\bI-\qty(\bI-\eta\S)^i)\widetilde{\W}^\top\S]\tag{Term 5}\\
        &+\sum_{i=0}^{k}\qty(\widetilde{\V}+\eta\bI)\E\qty[\S^2]\tag{Term 6}\\
        &-\E\qty[\qty(\bI-\eta\S)^{k+1}\qty(\qty(\bI-\qty(\bI-\eta\S)^k)\widetilde{\W}^\top-\bI)].\tag{Term 7}
    \end{align*}
    To get an accurate estimate of the gradient, we apply \Cref{technical lemma: concentration 1}, \Cref{technical lemma: concentration 2} respectively to each of the terms (Term 1 to Term 7) and separate the interaction terms introduced by the moments of Wishart matrix, which is bounded by $O\qty(\frac{1}{\log^3 d})$.
    
    Consider Term 7 and the $i$-th term in the summation of Term 1 to Term 6. By \Cref{technical lemma: concentration 1} and \Cref{technical lemma: concentration 2}, there exist diagonal matrices $\bxi_j,j\in\qty[6]$ satisfying $\norm{\bxi_j}_{op}\le O\qty(\frac{1}{\log^3 d})$ such that
    \begin{align*}
        \E\qty[\S\widetilde{\W}\qty(\bI-\qty(\bI-\eta\S)^i)^2\widetilde{\W}^\top\S]={}&\U\qty[\qty(1-\qty(1-\eta)^k)^2{\Lbd^{\widetilde{\W}}}^2+\bxi_1]\U^\top\\
        \E\qty[\S\qty(\bI-\widetilde{\W})\qty(\bI-\qty(\bI-\eta\S)^i)^2\widetilde{\W}^\top\S]={}&\U\qty[\qty(1-\qty(1-\eta)^k)^2\qty(\bI-\Lbd^{\widetilde{\W}})\Lbd^{\widetilde{\W}}+\bxi_2]\U^\top\\
        \E\qty[\S\widetilde{\W}\qty(\bI-\qty(\bI-\eta\S)^i)\S]={}&\U\qty[\qty(1-\qty(1-\eta)^k)\Lbd^{\widetilde{\W}}+\bxi_3]\U^\top\\
        \E\qty[\S\qty(\bI-\widetilde{\W})\qty(\bI-\qty(\bI-\eta\S)^i)\S]={}&\U\qty[\qty(1-\qty(1-\eta)^k)\qty(\bI-\Lbd^{\widetilde{\W}})+\bxi_4]\U^\top\\
        \E\qty[\S\qty(\bI-\qty(\bI-\eta\S)^i)\widetilde{\W}^\top\S]={}&\U\qty[\qty(1-\qty(1-\eta)^k)\Lbd^{\widetilde{\W}}+\bxi_5]\U^\top\\
        \E\qty[\S^2]={}&\U\qty(\bI+\bxi_6)\U^\top
    \end{align*}
    By \Cref{lemma: concentration 4}, there exists diagonal matrix $\bxi_7$ satisfying $\norm{\bxi_7}_{op}\le O\qty(\qty(1-\eta)^k)$ such that 
    \begin{equation*}
        \E\qty[\qty(\bI-\eta\S)^{k+1}\qty(\qty(\bI-\qty(\bI-\eta\S)^k)\widetilde{\W}^\top-\bI)]=\U\bxi_7\U^\top.
    \end{equation*}
    Moreover, there exist $\alpha_1,\alpha_2\le O\qty(\frac{1}{\log^3 d})$, $\alpha_3,\alpha_4,\alpha_5\le O\qty(\frac{1}{d\log^3 d})$ such that 
    \begin{equation*}
        \bxi_2={}\qty(\alpha_1\Lbd^{\widetilde{\W}}+\alpha_2\bI)\qty(\bI-\Lbd^{\widetilde{\W}})+\tr\qty(\bI-\Lbd^{\widetilde{\W}})\qty(\alpha_3\Lbd^{\widetilde{\W}}+\alpha_4\bI)+\alpha_5\tr\qty(\qty(\bI-\Lbd^{\widetilde{\W}})\Lbd^{\widetilde{\W}})\bI,
    \end{equation*}
    and exist $\beta_1\le O\qty(\frac{1}{\log^3 d})$, $\beta_2\le O\qty(\frac{1}{d\log^3 d})$ such that
    \begin{equation*}
        \bxi_4={}\beta_1\qty(\bI-\Lbd^{\widetilde{\W}})+\beta_2\tr\qty(\bI-\Lbd^{\widetilde{\W}})\bI.
    \end{equation*}
    We define $\Delta_{i}^{\widetilde{\V}}$ as the sum of all the interaction terms $\qty(\Lbd^{\widetilde{\V}}+\eta\bI)\qty(\bxi_1-\bxi_3-\bxi_5+\bxi_6)+\eta\qty(\bxi_2-\bxi_4)$ for the $i$-th term in the summation of dynamics of $\widetilde{\V}$. From the analysis above, there exist diagonal matrices $\A_{i}^{\widetilde{\V}}$, $\B_{i}^{\widetilde{\V}}$, $\C_{i}^{\widetilde{\V}}$, $\D_{i}^{\widetilde{\V}}$ with their operator norm $O\qty(\frac{1}{\log^3 d})$, such that (note every matrix is diagonal, so they commute)
    \begin{align*}
        \Delta_{i}^{\widetilde{\V}}={}\ &\qty(\Lbd^{\widetilde{\V}}+\eta\bI)\A_{i}^{\widetilde{\V}}+O\qty(\frac{1}{d})\tr\qty(\qty(\bI-\Lbd^{\widetilde{\W}})\Lbd^{\widetilde{\W}})\B_{i}^{\widetilde{\V}}\\
        {}+{}&\qty(\bI-\Lbd^{\widetilde{\W}})\C_{i}^{\widetilde{\V}}+O\qty(\frac{1}{d})\tr\qty(\bI-\Lbd^{\widetilde{\W}})\D_{i}^{\widetilde{\V}}.
    \end{align*}
    We define $\Delta_{-1}^{\widetilde{\V}}$ as the interaction term brought by Term 7 since there is no summation in Term 7. It is obvious that $\norm{\Delta_{-1}^{\widetilde{\V}}}_{op}\le O\qty(\qty(1-\eta)^k)$.
    
    Now we denote
    \begin{equation*}
        \widehat{\Delta}^{\widetilde{\V}}=\sum_{i=0}^k\Delta_{i}^{\widetilde{\V}}-\Delta_{-1}^{\widetilde{\V}}
    \end{equation*}
    to be the sum of all interaction term of the dynamics of $\Lbd^{\widetilde{\V}}$. From the definition of $\Delta_{i}^{\widetilde{\V}}$ and $\Delta_{-1}^{\widetilde{\V}}$ above, there exist diagonal matrices $\A^{\widetilde{\V}}$,  $\B^{\widetilde{\V}}$, $\C^{\widetilde{\V}}$, $\D^{\widetilde{\V}}$ and $\bE_0^{\widetilde{\V}}$ satisfying $\norm{\A^{\widetilde{\V}}}, \norm{\C^{\widetilde{\V}}}\le O\qty(\frac{1}{\log^2 d})$, $ \norm{\B^{\widetilde{\V}}},\norm{\D^{\widetilde{\V}}}\le O\qty(\frac{1}{d\log^2 d})$ and $ \norm{\bE_0^{\widetilde{\V}}}\le O\qty(\qty(1-\eta)^k)$ such that (because $k = \Theta(\log d)$)
    \begin{equation*}
        \widehat{\Delta}^{\widetilde{\V}}={}\qty(\Lbd^{\widetilde{\V}}+\eta\bI)\A^{\widetilde{\V}}+\tr\qty(\qty(\bI-\Lbd^{\widetilde{\W}})\Lbd^{\widetilde{\W}})\B^{\widetilde{\V}}+\qty(\bI-\Lbd^{\widetilde{\W}})\C^{\widetilde{\V}}+\tr\qty(\bI-\Lbd^{\widetilde{\W}})\D^{\widetilde{\V}}+\bE_0^{\widetilde{\V}}
    \end{equation*}
    Sum up all the seven terms together and we have
    \begin{align*}
        \frac{\partial\mathcal{L}}{\partial \widetilde{\V}}
        ={}&\U\qty[\qty(k+1-\frac{2\qty(1-\qty(1-\eta)^{k+1})}{\eta}+\frac{1-\qty(1-\eta)^{2k+2}}{\eta(2-\eta)})\Lbd^{\widetilde{\W}}\qty(\Lbd^{\widetilde{\V}}\Lbd^{\widetilde{\W}}+\eta\bI)]\U^\top\\
        &-\U\qty[\qty(k+1-\frac{1-\qty(1-\eta)^{k+1}}{\eta})\qty(\Lbd^{\widetilde{\V}}\Lbd^{\widetilde{\W}}+\eta\bI)]\U^\top\\
        &-\U\qty[\qty(k+1-\frac{1-\qty(1-\eta)^{k+1}}{\eta})\Lbd^{\widetilde{\W}}\qty(\Lbd^{\widetilde{\V}}+\eta\bI)]\U^\top\\
        &+\U\qty[\qty(k+1)\qty(\Lbd^{\widetilde{\V}}+\eta\bI)]\U^\top+\U\widehat{\Delta}^{\widetilde{\V}}\U^\top
    \end{align*}
    Denote $\bE_1^{\widetilde{\V}}$ to be the sum of all $O\qty(\qty(1-\eta)^k)$ terms in the dynamics of $\widetilde{\V}$:
    \begin{align*}
        \bE_1^{\widetilde{\V}}={}&\qty(\frac{2\qty(1-\eta)^{k+1}}{\eta}-\frac{\qty(1-\eta)^{2k+2}}{\eta\qty(2-\eta)})\Lbd^{\widetilde{\W}}\qty(\Lbd^{\widetilde{\V}}\Lbd^{\widetilde{\W}}+\eta\bI)-\frac{\qty(1-\eta)^{k+1}}{\eta}\qty(2\Lbd^{\widetilde{\V}}\Lbd^{\widetilde{\W}}+\eta\Lbd^{\widetilde{\W}}+\eta\bI)
    \end{align*}
    Denote $\bE^{\widetilde{\V}}=\bE_0^{\widetilde{\V}}+\bE_1^{\widetilde{\V}}$ and denote  $\Delta^{\widetilde{\V}}=\widehat{\Delta}^{\widetilde{\V}}+\bE_1^{\widetilde{\V}}$, we have
    \begin{align*}
        \U^\top\frac{\partial\mathcal{L}}{\partial\widetilde{\V}}\U={}&\qty[\qty(k+1-\frac{2}{\eta}+\frac{1}{\eta(2-\eta)}){\Lbd^{\widetilde{\W}}}^2-2\qty(k+1-\frac{1}{\eta})\Lbd^{\widetilde{\W}}+\qty(k+1)\bI]\Lbd^{\widetilde{\V}}\\
        &-\frac{1-\eta}{2-\eta}\Lbd^{\widetilde{\W}}+\bI+\Delta^{\widetilde{\V}}
    \end{align*}
    Moreover, $\Delta^{\widetilde{\V}}$ has the form
    \begin{align*}
        \Delta^{\widetilde{\V}}={}&\qty(\Lbd^{\widetilde{\V}}+\eta\bI)\A^{\widetilde{\V}}+\tr\qty(\qty(\bI-\Lbd^{\widetilde{\W}})\Lbd^{\widetilde{\W}})\B^{\widetilde{\V}}+\qty(\bI-\Lbd^{\widetilde{\W}})\C^{\widetilde{\V}}+\tr\qty(\bI-\Lbd^{\widetilde{\W}})\D^{\widetilde{\V}}+\bE^{\widetilde{\V}}
    \end{align*}
     Similar to the calculation of the dynamics of $\widetilde{\V}$, we can also have
    \begin{align*}
        \frac{\partial\mathcal{L}}{\partial \widetilde{\W}}
        ={}&\sum_{i=0}^{k}\E\qty[\S\widetilde{\V}^\top\qty(f_\theta\qty(\w_i)-\w_{i+1})\w_i^\top]+\E\qty[\S\widetilde{\V}^\top\qty(\w_{k+1}-\w^*)\w_k^\top]\\
        ={}&\sum_{i=0}^{k}\E\qty[\S\widetilde{\V}^\top(\widetilde{\V}\S\widetilde{\W}+\eta\S)\qty(\bI-\qty(\bI-\eta\S)^i)^2]-\sum_{i=0}^{k}\E\qty[\S\widetilde{\V}^\top\qty(\widetilde{\V}+\eta\bI)\S\qty(\bI-\qty(\bI-\eta\S)^i)]\\
        &-\E\qty[\S\widetilde{\V}^\top\qty(\bI-\eta\S)^{k+1}\qty(\bI-\qty(\bI-\eta\S)^k)]\\
        ={}&\sum_{i=0}^{k}\E\qty[\S\widetilde{\V}^\top\qty(\widetilde{\V}\S\qty(\widetilde{\W}-\bI)+\qty(\V+\eta\bI)\S)\qty(\bI-\qty(\bI-\eta\S)^i)^2]\\
        &-\sum_{i=0}^{k}\E\qty[\S\widetilde{\V}^\top\qty(\widetilde{\V}+\eta\bI)\S\qty(\bI-\qty(\bI-\eta\S)^i)]-\E\qty[\S\widetilde{\V}^\top\qty(\bI-\eta\S)^{k+1}\qty(\bI-\qty(\bI-\eta\S)^k)]\\
        ={}&\sum_{i=0}^{k}\E\qty[\S\widetilde{\V}^\top\widetilde{\V}\S\qty(\widetilde{\W}-\bI)\qty(\bI-\qty(\bI-\eta\S)^i)^2]+\sum_{i=0}^{k}\E\qty[\S\widetilde{\V}^\top\qty(\widetilde{\V}+\eta\bI)\S\qty(\bI-\qty(\bI-\eta\S)^i)^2]\\
        &-\sum_{i=0}^{k}\E\qty[\S\widetilde{\V}^\top\qty(\widetilde{\V}+\eta\bI)\S\qty(\bI-\qty(\bI-\eta\S)^i)]-\E\qty[\S\widetilde{\V}^\top\qty(\bI-\eta\S)^{k+1}\qty(\bI-\qty(\bI-\eta\S)^k)]
    \end{align*}
    We apply \Cref{technical lemma: concentration 3} and \Cref{technical lemma: concentration 4} to each term, similarly define $\Delta_{i}^{\widetilde{\W}}$ for $i\in[k]\cup\{0\}$ as the sum of all interaction terms for the $i$-th term in the smmation of dynamics of $\widetilde{\W}$. There exists diagonal matrics $\A_{i}^{\widetilde{\W}},\B_{i}^{\widetilde{\W}},\C_{i}^{\widetilde{\W}},\D_{i}^{\widetilde{\W}},\bE_{i}^{\widetilde{\W}}$ with their operator norm $O\qty(\frac{1}{\log^3 d})$, such that
    \begin{align*}
        \Delta^{\widetilde{\W}}_{i}={}&\qty(\Lbd^{\widetilde{\V}}+\eta\bI)\A_{i}^{\widetilde{\W}}+\qty(\bI-\Lbd^{\widetilde{\W}})\B_{i}^{\widetilde{\W}}+O\qty(\frac{1}{d})\tr\qty(\bI-\Lbd^{\widetilde{\W}})\C_{i}^{\widetilde{\W}}\\
        &+O\qty(\frac{1}{d})\tr\qty((\Lbd^{\widetilde{\V}}+\eta\bI)\Lbd^{\widetilde{\V}})\D_{i}^{\widetilde{\W}}+O\qty(\frac{1}{d})\tr\qty((\bI-\Lbd^{\widetilde{\W}}){\Lbd^{\widetilde{\V}}}^2)\bE_{i}^{\widetilde{\W}}
    \end{align*}
    We define $\Delta_{-1}^{\widetilde{\W}}$ as the interaction term brought by the last term \begin{equation*}
        \E\qty[\S\widetilde{\V}^\top\qty(\bI-\eta\S)^{k+1}\qty(\bI-\qty(\bI-\eta\S)^k)].
    \end{equation*} It is clear that $\norm{\Delta_{-1}^{\widetilde{\W}}}_{op}\le O\qty(\qty(1-\eta)^k)$.
    Similarly denote
    \begin{equation*}
         \widehat{\Delta}^{\widetilde{\W}}=\sum\limits_{i=0}^k\Delta_{i}^{\widetilde{\W}}-\Delta_{-1}^{\widetilde{\W}},
    \end{equation*}
    then there exist diagonal matrices $\A^{\widetilde{\W}}$,  $\B^{\widetilde{\W}}$, $\C^{\widetilde{\W}}$, $\D^{\widetilde{\W}}$, $\bE^{\widetilde{\W}}$, $\F_0^{\widetilde{\W}}$ satisfying 
    $\norm{\A^{\widetilde{\W}}},\norm{\B^{\widetilde{\W}}}\le O\qty(\frac{1}{\log^2 d})$, $\norm{\C^{\widetilde{\W}}},\norm{\D^{\widetilde{\W}}},\norm{\bE^{\widetilde{\W}}}\le O\qty(\frac{1}{d\log^2 d})$, $\norm{\F_0^{\widetilde{\W}}}\le O\qty(\qty(1-\eta)^k)$ such that
    \begin{align*}
        \widehat{\Delta}^{\widetilde{\W}}
        ={}&\qty(\Lbd^{\widetilde{\V}}+\eta\bI)\A^{\widetilde{\W}}+\qty(\bI-\Lbd^{\widetilde{\W}})\B^{\widetilde{\W}}+\tr\qty(\bI-\Lbd^{\widetilde{\W}})\C^{\widetilde{\W}}\\
        &+\tr\qty((\Lbd^{\widetilde{\V}}+\eta\bI)\Lbd^{\widetilde{\V}})\D^{\widetilde{\W}}+\tr\qty((\bI-\Lbd^{\widetilde{\W}}){\Lbd^{\widetilde{\V}}}^2)\bE^{\widetilde{\W}}+\F_0^{\widetilde{\W}}.
    \end{align*}
    Denote $\F_1^{\widetilde{\W}}$ to be the sum of all $O\qty(\qty(1-\eta)^k)$ terms in the dynamics of $\widetilde{\W}$,  $\F^{\widetilde{\W}}=\F_0^{\widetilde{\W}}+\F_1^{\widetilde{\W}}$ and $\Delta^{\widetilde{\W}}=\widehat{\Delta}^{\widetilde{\W}}+\F_1^{\widetilde{\W}}$.
    Thus we have
    \begin{align*}
        \frac{\partial\mathcal{L}}{\partial \widetilde{\W}}={}&\sum_{i=0}^{k}\U\qty(1-\qty(1-\eta)^i)^2\Lbd^{\widetilde{\V}}\qty(\Lbd^{\widetilde{\V}}\Lbd^{\widetilde{\W}}+\eta\bI)\U^\top-\sum_{i=0}^{k}\U\qty(1-\qty(1-\eta)^i)\Lbd^{\widetilde{\V}}\qty(\Lbd^{\widetilde{\V}}+\eta\bI)\U^\top+\U\Delta^{\widetilde{\W}}\U^\top\\
        ={}&\U\qty[\qty(k+1-\frac{2\qty(1-\qty(1-\eta)^{k+1})}{\eta}+\frac{1-\qty(1-\eta)^{2k+2}}{\eta(2-\eta)})\Lbd^{\widetilde{\V}}\qty(\Lbd^{\widetilde{\V}}\Lbd^{\widetilde{\W}}+\eta\bI)]\U^\top\\
        &-\U\qty[\qty(k+1-\frac{1-\qty(1-\eta)^{k+1}}{\eta})\Lbd^{\widetilde{\V}}\qty(\Lbd^{\widetilde{\V}}+\eta\bI)]\U^\top+\U\widehat{\Delta}^{\widetilde{\W}}\U^\top\\
        ={}&\U\qty[\qty(k+1-\frac{2}{\eta}+\frac{1}{\eta(2-\eta)}){\Lbd^{\widetilde{\V}}}^2\Lbd^{\widetilde{\W}}-\qty(k+1-\frac{1}{\eta}){\Lbd^{\widetilde{\V}}}^2-\frac{1-\eta}{2-\eta}\Lbd^{\widetilde{\V}}+\Delta^{\widetilde{\W}}]\U^\top
    \end{align*}
    Moreover, $\Delta^{\widetilde{\W}}$ has the form
    \begin{align*}
        \widehat{\Delta}^{\widetilde{\W}}
        ={}&\qty(\Lbd^{\widetilde{\V}}+\eta\bI)\A^{\widetilde{\W}}+\qty(\bI-\Lbd^{\widetilde{\W}})\B^{\widetilde{\W}}+\tr\qty(\bI-\Lbd^{\widetilde{\W}})\C^{\widetilde{\W}}\\
        &+\tr\qty((\Lbd^{\widetilde{\V}}+\eta\bI)\Lbd^{\widetilde{\V}})\D^{\widetilde{\W}}+\tr\qty((\bI-\Lbd^{\widetilde{\W}}){\Lbd^{\widetilde{\V}}}^2)\bE^{\widetilde{\W}}+\F^{\widetilde{\W}}.
    \end{align*}
    
    Since $\norm{\A+\B}_{op}\le\norm{\A}_{op}+\norm{\B}_{op}$ and $\norm{\A\B}_{op}\le\norm{\A}_{op}\norm{\B}_{op}$, it is obvious that
    \begin{align*}
        \norm{\Delta^{\widetilde{\V}}}_{op}\le{}{}O\qty(\frac{1}{\log^2 d}),\quad 
        \norm{\Delta^{\widetilde{\W}}}_{op}\le{}{}O\qty(\frac{1}{\log^2 d})
    \end{align*}
\end{proof}

After obtaining the estimation of the gradient by \cref{lemma: gradients of the reduced model}, we can decompose the gradient updates into the dynamics along each eigenspace $\bu_i$, which can be characterized by the following lemma.  
\begin{lemma}
    Suppose $\widetilde{\V}=\sum_{j=1}^d\lambda_j^{\widetilde{\V}}\bu_j\bu_j^\top, \widetilde{\W}=\sum_{j=1}^d\lambda_j^{\widetilde{\W}}\bu_j\bu_j^\top$. The dynamics of the eigenvalues of $\widetilde{\V}$ and $\widetilde{\W}$ are given by the following equations:
    \begin{align*}
        \frac{\mathrm{d}\lambda_j^{\widetilde{\V}}}{\mathrm{d}t}={}&-\qty[\qty(k+1)\qty(1-\lambda_j^{\widetilde{\W}})^2+\frac{2}{\eta}\lambda_j^{\widetilde{\W}}\qty(1-\lambda_j^{\widetilde{\W}})+\frac{1}{\eta(2-\eta)}{\lambda_j^{\widetilde{\W}}}^2]\lambda_j^{\widetilde{\V}}+\frac{1-\eta}{2-\eta}\lambda_j^{\widetilde{\W}}-1+\delta^{\widetilde{\V}}_j\\
        &\frac{\mathrm{d}\lambda_j^{\widetilde{\W}}}{\mathrm{d}t}={}\qty(k+1-\frac{1}{\eta}){\lambda_j^{\widetilde{\V}}}^2\qty(1-\lambda_j^{\widetilde{\W}})+\frac{1-\eta}{\eta(2-\eta)}{\lambda_j^{\widetilde{\V}}}^2\lambda_j^{\widetilde{\W}}+\frac{1-\eta}{2-\eta}\lambda_j^{\widetilde{\V}}-\delta^{\widetilde{\W}}_j
    \end{align*}
    where $\abs{\delta^{\widetilde{\V}}_j}\le O\qty(\frac{1}{\log^2 d}), \abs{\delta^{\widetilde{\W}}_j}\le O\qty(\frac{1}{\log^2 d})$.
    \label{lemma: gradient components of the reduced model}
\end{lemma}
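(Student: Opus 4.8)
The plan is to obtain the eigenvalue dynamics as a direct corollary of \Cref{lemma: gradients of the reduced model}, the key point being that simultaneous diagonalizability is preserved along the gradient flow so that the $2d$-dimensional dynamics decouples into the scalar equations claimed.

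First I would observe that the gradient matrices produced in \Cref{lemma: gradients of the reduced model} are themselves diagonal in the frame $\U$: each of $\U^\top\frac{\partial\mathcal{L}}{\partial\widetilde{\V}}\U$ and $\U^\top\frac{\partial\mathcal{L}}{\partial\widetilde{\W}}\U$ is a polynomial in the diagonal matrices $\Lbd^{\widetilde{\V}},\Lbd^{\widetilde{\W}}$ together with the error matrices $\Delta^{\widetilde{\V}},\Delta^{\widetilde{\W}}$, which by the explicit structural forms in that lemma are again diagonal functions of $\Lbd^{\widetilde{\V}},\Lbd^{\widetilde{\W}}$. Hence the set of pairs $(\widetilde{\V},\widetilde{\W})$ simultaneously diagonalized by the fixed orthonormal basis $\{\bu_j\}_{j=1}^d$ of \Cref{assumption: initialization} is invariant under the vector field $-\nabla\mathcal{L}$; since the flow starts in this set, uniqueness of solutions of the gradient-flow ODE forces it to remain there for all $t\ge 0$. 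In particular the eigenvectors $\bu_j$ do not rotate, $\U$ is time-independent, and it is legitimate to track the scalars $\lambda_j^{\widetilde{\V}}(t):=\bu_j^\top\widetilde{\V}(t)\bu_j$ and $\lambda_j^{\widetilde{\W}}(t):=\bu_j^\top\widetilde{\W}(t)\bu_j$.

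Next I would project the flow onto the $\bu_j\bu_j^\top$ direction: $\frac{\mathrm{d}\lambda_j^{\widetilde{\V}}}{\mathrm{d}t}=-\qty(\U^\top\frac{\partial\mathcal{L}}{\partial\widetilde{\V}}\U)_{jj}$ and likewise for $\widetilde{\W}$, and then read off the $(j,j)$ entries from \Cref{lemma: gradients of the reduced model}. For $\widetilde{\V}$ this gives the coefficient $\qty(k+1-\tfrac2\eta+\tfrac1{\eta(2-\eta)})(\lambda_j^{\widetilde{\W}})^2-2\qty(k+1-\tfrac1\eta)\lambda_j^{\widetilde{\W}}+(k+1)$ multiplying $\lambda_j^{\widetilde{\V}}$, plus the affine part $-\tfrac{1-\eta}{2-\eta}\lambda_j^{\widetilde{\W}}+1$ and the entry $(\Delta^{\widetilde{\V}})_{jj}$; a short rearrangement — write $(k+1)$ times the quadratic in $\lambda_j^{\widetilde{\W}}$ as $(k+1)(1-\lambda_j^{\widetilde{\W}})^2$ plus residual and regroup the $1/\eta$ terms — identifies this coefficient with the bracket in the statement, while the $\widetilde{\W}$ computation additionally uses $\tfrac1\eta-\tfrac1{\eta(2-\eta)}=\tfrac{1-\eta}{\eta(2-\eta)}$. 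Setting $\delta_j^{\widetilde{\V}}:=-(\Delta^{\widetilde{\V}})_{jj}$ and $\delta_j^{\widetilde{\W}}:=(\Delta^{\widetilde{\W}})_{jj}$, the bounds $\abs{\delta_j^{\widetilde{\V}}}\le\norm{\Delta^{\widetilde{\V}}}_{op}$ and $\abs{\delta_j^{\widetilde{\W}}}\le\norm{\Delta^{\widetilde{\W}}}_{op}$, both $O(1/\log^2 d)$ by \Cref{lemma: gradients of the reduced model}, follow since a diagonal entry is dominated by the operator norm.

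There is no substantive obstacle inside this lemma; it is a change of coordinates on \Cref{lemma: gradients of the reduced model} followed by elementary algebra. The one place demanding care is the invariance argument in the first step: one must use the explicit structural decompositions of $\Delta^{\widetilde{\V}}$ and $\Delta^{\widetilde{\W}}$ to certify that the error terms are diagonal in the frame $\{\bu_j\}$, so that the gradient never couples distinct eigendirections — absent this, the basis could in principle rotate with $t$ and the reduction to scalar ODEs would fail.
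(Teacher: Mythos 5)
Your proposal is correct and matches the paper's approach: the paper proves this lemma with a one-line reference to \Cref{lemma: gradients of the reduced model}, and you have simply spelled out what that reference entails — the gradient and error matrices are diagonal in the fixed frame $\U$, so the basis never rotates and projection onto $\bu_j\bu_j^\top$ decouples the flow, after which the stated form follows from the algebraic rearrangement of the quadratic in $\lambda_j^{\widetilde{\W}}$ and the identity $\tfrac1\eta-\tfrac1{\eta(2-\eta)}=\tfrac{1-\eta}{\eta(2-\eta)}$. Your explicit treatment of the invariance step is a welcome addition, since the paper only asserts it in passing in the preamble to \Cref{lemma: gradients of the reduced model}.
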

\begin{proof}
    This is directly obtained from \Cref{lemma: gradients of the reduced model}.
\end{proof}

\subsection{Proof of the main \Cref{informal main thm: global convergence}}
\label{appendix: proof of main thm}
In this section, we prove \Cref{informal main thm: global convergence}, which characterizes the CoT loss of the trained transformer. First, we restate the theorem.

\begin{theorem}[Global Convergence]
    Suppose $n=\Theta(d\log^5d)$, $\eta\in(0.1,0.9)$, $k = \lceil c\log d\rceil$, $c\log\qty(\frac{1}{1-\eta})>2$. Under \Cref{assumption: initialization} with some constant $\sigma>\frac{3\qty(1-\eta)}{\qty(2-\eta)}\frac{1}{k+1}$, if we run gradient flow on the population loss in \Cref{eqn: cot loss}, then after time $t=O\qty(\log d+\log\frac
    1\epsilon)$, we have $\mathcal{L}^\mathrm{CoT}(t)\le \epsilon$ for any $\epsilon\ge \Theta\qty(\frac{\log d}{d^{c\log\qty(\frac{1}{1-\eta})-2}})$.
    \label{formal main thm: global convergence}
\end{theorem}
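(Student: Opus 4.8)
The plan is to push the gradient flow through the reduction and per-eigenvalue decomposition already prepared in this appendix, carry out a two-stage analysis of the resulting scalar ODEs, and then convert parameter closeness into a loss bound. First I would invoke \Cref{lemma: irrelevant blocks keep zero} to restrict attention to $\widetilde{\V}=\V_{31}$, $\widetilde{\W}=\W_{13}$, and \Cref{lemma: simplified model loss} to replace $\mathcal{L}^{\mathrm{CoT}}$ with the reduced loss \Cref{eqn: reduced CoT loss}. Because $\widetilde{\V}(0),\widetilde{\W}(0)$ are simultaneously diagonalizable and, by \Cref{lemma: gradients of the reduced model}, the gradients stay diagonal in the fixed orthonormal basis $\{\bu_j\}$, the flow preserves this basis and decouples into $d$ planar systems in $(\lambda_j^{\widetilde{\V}},\lambda_j^{\widetilde{\W}})$ given by \Cref{lemma: gradient components of the reduced model}, interacting only through the uniformly $O(1/\log^2 d)$ terms $\delta_j^{\widetilde{\V}},\delta_j^{\widetilde{\W}}$. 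The per-direction target, matching the construction of \Cref{main thm: construction for tf with cot}, is $(\lambda_j^{\widetilde{\V}},\lambda_j^{\widetilde{\W}})=(-\eta,1)$; writing $r_j:=\eta+\lambda_j^{\widetilde{\V}}$ and $s_j:=1-\lambda_j^{\widetilde{\W}}$, the final loss will be controlled by $\sum_j(r_j^2+s_j^2)$ plus the irreducible term $\Theta(d(1-\eta)^{2k})=\Theta(d^{1-2c\log(1/(1-\eta))})$ coming from $\E\|(\bI-\eta\S)^{k+1}\w^*\|^2$ via \Cref{lemma: concentration 4}.

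\textbf{Stage 1.} Treating $\delta_j^{\widetilde{\V}},\delta_j^{\widetilde{\W}}$ as uniform perturbations, I would analyze each planar system by a time-bootstrap/induction argument in two phases. Under \Cref{assumption: initialization}, $\lambda_j^{\widetilde{\W}}(0)\in[\sigma,1/2]$ and $\lambda_j^{\widetilde{\V}}(0)\le-\sigma$, and the lower bound $\sigma>\frac{3(1-\eta)}{2-\eta}\cdot\frac{1}{k+1}$ ensures the drift of $\lambda_j^{\widetilde{\W}}$ has the right sign at $t=0$. In Phase 1, $\lambda_j^{\widetilde{\W}}$ increases toward $1$, driven mainly by the $\frac{1-\eta}{2-\eta}\lambda_j^{\widetilde{\V}}$ term, while $\lambda_j^{\widetilde{\V}}$ stays $\Theta(1)$-away from $0$ but not yet near $-\eta$; in Phase 2, once $s_j=o(1)$ the $\widetilde{\V}$-dynamics become a fast linear contraction of rate $\Theta(k)=\Theta(\log d)$, pulling $\lambda_j^{\widetilde{\V}}$ within $O(1/\log^{c'} d)$ of $-\eta$ while $\lambda_j^{\widetilde{\W}}$ stays $O(1/\log^{c'} d)$-close to $1$. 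Since the perturbation bound is uniform in $j$, this runs for all $j$ at once, so at the end of Stage 1 one has $r_j,s_j=O(1/\poly\log d)$ for every $j$, reached in time $O(\log d)$.

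\textbf{Stage 2 and assembly.} At the end of Stage 1 the crude bound $\sum_j(r_j^2+s_j^2)=\widetilde{\Theta}(d)\cdot O(1/\poly\log d)$ is still too large, so I would exploit the refined decomposition of $\Delta^{\widetilde{\V}},\Delta^{\widetilde{\W}}$ in \Cref{lemma: gradients of the reduced model}: every interaction term is a product of a small diagonal factor ($O(1/\poly\log d)$, $O(1/d\poly\log d)$, or $O((1-\eta)^k)$) with one of the residuals $r_j$, $s_j$, or an average of residuals. Plugging \Cref{lemma: gradient components of the reduced model} into $\frac{\mathrm{d}}{\mathrm{d}t}\sum_j(r_j^2+s_j^2)$, the main terms produce a negative-definite quadratic form with constant-order eigenvalues, and the error terms, being themselves multiplied by residuals, perturb it only by $O(1/\poly\log d)$; this yields the PL-type inequality $\frac{\mathrm{d}}{\mathrm{d}t}\sum_j(r_j^2+s_j^2)\le -c_0\sum_j(r_j^2+s_j^2)+O(d(1-\eta)^{2k})$ of \Cref{lemma: local convergence}. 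Gronwall then gives $\sum_j(r_j^2+s_j^2)\le\epsilon'$ after an extra time $O(\log(1/\epsilon'))$ for any $\epsilon'\gtrsim d(1-\eta)^{2k}$; substituting back into \Cref{eqn: reduced CoT loss} and bounding the remaining expectations with the concentration lemmas of \Cref{appendix: supplementary lemmas} gives $\mathcal{L}^{\mathrm{CoT}}(t)\le\epsilon$ for $\epsilon\ge\Theta(\log d\cdot d^{2-c\log(1/(1-\eta))})$ after total time $O(\log d+\log(1/\epsilon))$.

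\textbf{Main obstacle.} The hardest part will be Stage 1 together with \Cref{lemma: local convergence}. For Stage 1 the difficulty is keeping all $d$ nonlinear planar systems inside the intended region simultaneously while only knowing the coupling is $O(1/\log^2 d)$ — this is exactly what forces the $n=\Theta(d\log^5 d)$ sample scaling and the explicit lower bound on $\sigma$. For Stage 2 the PL inequality depends entirely on the fine-grained fact that every error term in $\Delta^{\widetilde{\V}},\Delta^{\widetilde{\W}}$ carries a residual factor, so verifying it amounts to carefully tracking the combinatorial/concentration expansions behind \Cref{lemma: gradients of the reduced model}.
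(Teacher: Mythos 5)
Your outline matches the paper's proof at the structural level: the same reduction via \Cref{lemma: irrelevant blocks keep zero} and \Cref{lemma: simplified model loss}, the same per-eigenvalue decoupling through \Cref{lemma: gradients of the reduced model} and \Cref{lemma: gradient components of the reduced model}, a two-phase Stage 1 followed by a PL-type local convergence from \Cref{lemma: local convergence}, and the same final $\epsilon$ floor. However, two of your mechanistic claims about Stage 1 are substantively wrong, and the Stage 1 dynamics are exactly the part of the paper's proof that requires care.

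In Phase 1 you say $\lambda_j^{\widetilde{\W}}$ is ``driven mainly by the $\frac{1-\eta}{2-\eta}\lambda_j^{\widetilde{\V}}$ term.'' Since $\lambda_j^{\widetilde{\V}}<0$ throughout, that term is negative: it pulls $\lambda_j^{\widetilde{\W}}$ \emph{down}, not up. The positive drift actually comes from the quadratic terms $\qty(k+1-\frac{1}{\eta})\qty(\lambda_j^{\widetilde{\V}})^2\qty(1-\lambda_j^{\widetilde{\W}})$ and $\frac{1-\eta}{\eta(2-\eta)}\qty(\lambda_j^{\widetilde{\V}})^2\lambda_j^{\widetilde{\W}}$, and the technical content of Phase 1 (\Cref{lemma: upper bound of lambdaV}, \Cref{lemma: lambdaW convergence}) is precisely the invariant barrier $\lambda_j^{\widetilde{\V}}<-\frac{3(1-\eta)}{2(2-\eta)}\frac{1}{(k+1)(1-\lambda_j^{\widetilde{\W}})}$, preserved by the flow, which guarantees the quadratics beat the negative linear term and give a $\Theta(1/k)$ net drift. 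Your weaker statement that $\lambda_j^{\widetilde{\V}}$ stays ``$\Theta(1)$-away from $0$'' is also not what is maintained — the barrier allows $|\lambda_j^{\widetilde{\V}}|$ to shrink to $\Theta\qty((k+1)^{-5/12})$ as $\lambda_j^{\widetilde{\W}}$ approaches $1-(k+1)^{-7/12}$. Without identifying the right invariant, the ``time-bootstrap'' you sketch has no mechanism to close.

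In Phase 2 you claim the $\widetilde{\V}$-dynamics becomes ``a fast linear contraction of rate $\Theta(k)$.'' That is false: once $\lambda_j^{\widetilde{\W}}\approx 1$, the coefficient multiplying $\lambda_j^{\widetilde{\V}}$ is $\qty(k+1)\qty(1-\lambda_j^{\widetilde{\W}})^2+\frac{2}{\eta}\lambda_j^{\widetilde{\W}}\qty(1-\lambda_j^{\widetilde{\W}})+\frac{1}{\eta(2-\eta)}\qty(\lambda_j^{\widetilde{\W}})^2$, whose $k$-dependent piece is $(k+1)\cdot(k+1)^{-7/6}=\Theta\qty((k+1)^{-1/6})=o(1)$, so the rate is $\Theta(1)\approx\frac{1}{\eta(2-\eta)}$ (\Cref{lemma: lambdaV convergence}). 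The CoT length does not accelerate this contraction. Relatedly, your intermediate claim that the PL floor for $\sum_j(r_j^2+s_j^2)$ is $\gtrsim d(1-\eta)^{2k}$ is inconsistent with the $\epsilon\ge\Theta(\log d\cdot d^{2-c\log(1/(1-\eta))})$ you state at the end: the paper's \Cref{lemma: local convergence} only drives each residual down to $\Theta\qty(\sqrt{d(1-\eta)^k})$, so $\sum_j(r_j^2+s_j^2)$ bottoms out around $d^2(1-\eta)^k$, which is what produces the theorem's floor after \Cref{lemma: cot loss bound}. You reach the right destination, but the route you describe through Stage 1 and the floor computation would not get you there.
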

\begin{proof}
    According to the previous sections, we can reduce the original optimization problem to \Cref{eqn: reduced CoT loss}, and consider the equivalent reduced model (\Cref{def: reduced model}). By \Cref{lemma: gradients of the reduced model}, we fully characterized the gradient expression, which decomposes the gradient of $\widetilde{\V}$ and $\widetilde{\W}$ into main signal terms with large norm at initialization (terms before $\Delta^{\widetilde{\V}},\Delta^{\widetilde{\W}}$) and interaction terms ($\Delta^{\widetilde{\V}},\Delta^{\widetilde{\W}}$) with bounded norm $O(\frac{1}{\log^{2}d})$ for all $t>0$.

    The decomposition motivates us to conduct a stage-wise analysis. We first analyze the dynamics in \textbf{Stage 1} when the distance between the parameters $\widetilde{\V},\widetilde{\W}$ and the ground-truth is larger than $O(\frac1{\log^2 d})$. In this stage, the bounded error can be dominated by the signal terms in the gradient, leading to nearly independent dynamics along each direction $\bu_i$. After this stage, we enter \textbf{Stage 2} as a local convergence phase. We describe the dynamics below in detail.

    \paragraph{Stage 1}
    In the first stage, the dynamics are dominated by the main terms, and the interaction terms $\Delta^{\widetilde{\V}},\Delta^{\widetilde{\W}}$ can be somehow be ignored. Specifically, 
    by \Cref{lemma: gradient components of the reduced model}, given the dynamics of $\lambda_j^{\widetilde{\V}}, \lambda_j^{\widetilde{\W}}$:
    \begin{align*}
        \frac{\mathrm{d}\lambda_j^{\widetilde{\V}}}{\mathrm{d}t}={}&-\qty[\qty(k+1)\qty(1-\lambda_j^{\widetilde{\W}})^2+\frac{2}{\eta}\lambda_j^{\widetilde{\W}}\qty(1-\lambda_j^{\widetilde{\W}})+\frac{1}{\eta(2-\eta)}{\lambda_j^{\widetilde{\W}}}^2]\lambda_j^{\widetilde{\V}}+\frac{1-\eta}{2-\eta}\lambda_j^{\widetilde{\W}}-1+\delta^{\widetilde{\V}}_j\\
        &\frac{\mathrm{d}\lambda_j^{\widetilde{\W}}}{\mathrm{d}t}={}\qty(k+1-\frac{1}{\eta}){\lambda_j^{\widetilde{\V}}}^2\qty(1-\lambda_j^{\widetilde{\W}})+\frac{1-\eta}{\eta(2-\eta)}{\lambda_j^{\widetilde{\V}}}^2\lambda_j^{\widetilde{\W}}+\frac{1-\eta}{2-\eta}\lambda_j^{\widetilde{\V}}-\delta^{\widetilde{\W}}_j
    \end{align*}
    we can conclude that the dynamics of the eigenvalue $\lambda_j^{\widetilde{\V}}, \lambda_j^{\widetilde{\W}}$ mainly depend on themselves when the main term (terms before $\delta^{\widetilde{\W}}_j,\delta^{\widetilde{\V}}_j$) are larger than $O(\frac{1}{\log^2d})$, which is within the stage 1. That is, the dynamics within the subspace $\bu_i\bu_i^\top$ for $\widetilde{\V},\widetilde{\W}$ are almost independent with other subspaces. In this stage, we focus on the analysis of $\lambda_j^{\widetilde{\V}}, \lambda_j^{\widetilde{\W}}$ depending on their own value.
    
    The first stage can be further divided into two phases.
    \paragraph{Stage 1, Phase 1.} At the beginning of training, we have
    \begin{equation*}
        \lambda_j^{\widetilde{\V}}\qty(0)+\frac{3\qty(1-\eta)}{2\qty(2-\eta)}\frac{1}{\qty(k+1)\qty(1-\lambda_j^{\widetilde{\W}}\qty(0))}<-\sigma+\frac{3\qty(1-\eta)}{\qty(2-\eta)}\frac{1}{k+1}< 0
    \end{equation*}
    then by \Cref{lemma: upper bound of lambdaV}, we can prove an upper bound of $\lambda_j^{\widetilde{\V}}$ when $\lambda_j^{\widetilde{\W}} \le 1-\qty(k+1)^{-\frac{7}{12}}$,
    \begin{equation*}
        \lambda_j^{\widetilde{\V}}<-\frac{3\qty(1-\eta)}{2\qty(2-\eta)}\frac{1}{\qty(k+1)\qty(1-\lambda_j^{\widetilde{\W}})}
    \end{equation*}
    according to the dynamics for both sides.
    With this upper bound, we prove 
    $\frac{\mathrm{d}\lambda_j^{\widetilde{\W}}}{\mathrm{d}t}\geq O\qty(\frac{1}{k}).$
    Therefore, $\lambda_j^{\widetilde{\W}}$ will converge to $1-\qty(k+1)^{-\frac{7}{12}}$ in $t_1=O\qty(\log d)$ time (\Cref{lemma: lambdaW convergence}).
    \paragraph{Stage 1, Phase 2.} After time $t_1$, we have $\lambda_j^{\widetilde{\W}}$ very close to the ground-truth value 1. Meanwhile, the lower bound for $\lambda_j^{\widetilde{\V}}$ still holds, and it will further decrease. Specifically,
    \begin{equation*}
        \lambda_j^{\widetilde{\W}}\qty(t_1)=1-\qty(k+1)^{-\frac{7}{12}}\qquad\lambda_j^{\widetilde{\V}}\qty(t_1)<-\frac{3\qty(1-\eta)}{2\qty(2-\eta)}\frac{1}{\qty(k+1)\qty(1-\lambda_j^{\widetilde{\W}}\qty(t_1))}
    \end{equation*}
    By \Cref{lemma: lambdaW doesn't go far away from gt}, we can prove that $\lambda_j^{\widetilde{\W}}$ will stay close to $1-o(1)$:
    \begin{equation*}
        1-2\qty(k+1)^{-\frac{7}{12}}<\lambda_j^{\widetilde{\W}}\qty(t)<1+\qty(k+1)^{-\frac{7}{12}}
    \end{equation*}
    for any $t\ge t_1$. With this condition, a converging condition for $(\lambda_j^{\widetilde{\V}}+\eta)$ can be deducted from \Cref{lemma: lambdaV convergence}:
    \begin{equation*}
        \frac{\mathrm{d}\qty(\lambda_j^{\widetilde{\V}}+\eta)^2}{\mathrm{d}t}\le{}-\frac{1}{2\eta\qty(2-\eta)}\qty(\lambda_j^{\widetilde{\V}}+\eta)^2
    \end{equation*}
    \Cref{lemma: lambdaV convergence} shows that $\abs{\lambda_j^{\widetilde{\V}}+\eta}$ converges to $\qty(k+1)^{-\frac{1}{12}}$ in $t_2=O\qty(\log\log d)$ time.

    \paragraph{Stage 2.} Now the eigenvalues are already close to ground-truth:
    \begin{equation*}
        \abs{\lambda_j^{\widetilde{\V}}\qty(t_1+t_2)+\eta}=O\qty(\qty(k+1)^{-\frac{1}{12}}),\quad\abs{\lambda_j^{\widetilde{\W}}\qty(t_1+t_2)-1}\le 2\qty(k+1)^{-\frac{7}{12}}.
    \end{equation*}
    According to the expansion of the error terms in \Cref{lemma: gradients of the reduced model}, we notice that $\delta_j^{\widetilde{\W}}$ and $\delta_j^{\widetilde{\V}}$ are always coupled with some individual residual like $(\Lbd^{\widetilde{\V}}+\eta\bI)$, $(\Lbd^{\widetilde{\W}}-\bI)$, or some weighted average $\frac{1}{d}\tr\qty((\Lbd^{\widetilde{\V}}+\eta\bI)\Lbd^{\widetilde{\V}})$. Meanwhile, the coefficient of this kind of residual in the interaction terms is still upper bounded by $O(1/\log^2 d)$. That helps us to derive the PL-condition like gradient lower bound (\Cref{lemma: local convergence}):
    \begin{align*}
        &\qquad\qquad\frac{\mathrm{d}\tr\qty[\qty(\Lbd^{\widetilde{\V}}+\eta\bI)^2]}{\mathrm{d}t}+\frac{\mathrm{d}\tr\qty[\qty(\bI-\Lbd^{\widetilde{\W}})^2]}{\mathrm{d}t}\\
        \le&-\frac{1}{2\eta(2-\eta)}\tr\qty[\qty(\Lbd^{\widetilde{\V}}+\eta\bI)^2]-\frac{\eta^2}{2}\qty(k+1)\tr\qty[\qty(\bI-\Lbd^{\widetilde{\W}})^2]+\alpha
    \end{align*}
    where $\alpha=O\qty(\qty(1-\eta)^k)\ge 0$.

    By \Cref{lemma: local convergence}, we know $\abs{\lambda_j^{\widetilde{\V}}+\eta}$ and $\abs{1-\lambda_j^{\widetilde{\W}}}$ converge to $\delta\in\qty(\Theta\qty(d^{\frac{c}{2}\log\qty(1-\eta)+\frac12}),1)$ in $t_3=O\qty(\log\frac{1}{\delta})$ time. At this time, there exist diagonal matrices $\A$ and $\B$ satisfying $\norm{\A}_{op}\le\Theta\qty(1)$ and $\norm{\B}_{op}\le\Theta\qty(1)$ such that
    \begin{equation*}
        \Lbd^{\widetilde{\V}}=-\eta\bI+\delta\cdot\A\qquad\Lbd^{\widetilde{\W}}=\bI+\delta\cdot\B.
    \end{equation*}

    Now we consider the CoT loss given by \Cref{lemma: simplified model loss}
    \begin{align*}
        \mathcal{L}^{\mathrm{CoT}}(\mbf{\theta})={}&\frac{1}{2}\E_{\X,\w^*}\sum\limits_{i=0}^{k-1}\norm{(\widetilde{\V}\S\widetilde{\W}+\eta\S)\w_i-(\widetilde{\V}+\eta\bI)\S\w^*}_2^2\\
        &+\frac{1}{2}\E_{\X,\w^*}\norm{(\bI+\widetilde{\V}\S\widetilde{\W})\w_k-(\widetilde{\V}\S+\bI)\w^*}_2^2.
    \end{align*}
    Apply \Cref{lemma: cot loss bound}, we directly obtain that
    \begin{equation*}
        \mathcal{L}^{\mathrm{CoT}}(\mbf{\theta})={}O\qty(\delta^2d\log d).
    \end{equation*}
    Since $\delta\in\qty(\Theta\qty(d^{\frac{c}{2}\log\qty(1-\eta)+\frac12}),1)$, the CoT loss is smaller than $\epsilon=\Theta\qty(d^{c\log\qty(1-\eta)+2}\log d)$.
    The local convergence takes $t_3=O\qty(\log\frac{1}{\delta})=O\qty(\log\frac{1}{\epsilon})$.
    Considering all stages, at time $t=t_1+t_2+t_3=O\qty(\log d)+O\qty(\frac{1}{\epsilon})$, we have
    \begin{equation*}
        \mathcal{L}^{\mathrm{CoT}}(\mbf{\theta})\le\epsilon.
    \end{equation*}
\end{proof}
\subsubsection{Technical Lemma in \Cref{appendix: proof of main thm}}
\begin{lemma}[]    \label{lemma: negative derivative}
    Assume $\lambda_j^{\widetilde{\W}}\le 1-\qty(k+1)^{-\frac{7}{12}}$, if $-\frac{3\qty(1-\eta)}{2\qty(2-\eta)}\frac{1}{\qty(k+1)\qty(1-\lambda_j^{\widetilde{\W}})}\le\lambda_j^{\widetilde{\V}}<0$,  it holds that 
    \begin{equation}
        \frac{\mathrm{d}\qty(\lambda_j^{\widetilde{\V}}+\frac{3\qty(1-\eta)}{2\qty(2-\eta)}\frac{1}{\qty(k+1)\qty(1-\lambda_j^{\widetilde{\W}})})}{\mathrm{d}t}<0
    \end{equation}
\end{lemma}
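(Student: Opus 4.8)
The plan is to pass to the scalar dynamics of a single eigendirection and show that the auxiliary quantity $h_j:=\lambda_j^{\widetilde{\V}}+g(\lambda_j^{\widetilde{\W}})$, with $g(b):=\tfrac{3(1-\eta)}{2(2-\eta)}\tfrac{1}{(k+1)(1-b)}$, has strictly negative derivative everywhere on the stated region. Write $a:=\lambda_j^{\widetilde{\V}}$, $b:=\lambda_j^{\widetilde{\W}}$, $c_0:=\tfrac{3(1-\eta)}{2(2-\eta)}$, so that $g(b)=\tfrac{c_0}{(k+1)(1-b)}$, $g'(b)=\tfrac{c_0}{(k+1)(1-b)^2}>0$, and set $P(b):=(k+1)(1-b)^2+\tfrac{2}{\eta}b(1-b)+\tfrac{b^2}{\eta(2-\eta)}\ge 0$ for $b\in[0,1)$. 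By the chain rule $\tfrac{\mathrm dh_j}{\mathrm dt}=\tfrac{\mathrm da}{\mathrm dt}+g'(b)\tfrac{\mathrm db}{\mathrm dt}$; substituting the closed forms from \Cref{lemma: gradient components of the reduced model} and using the hypotheses $a<0$ and $|a|\le g(b)$ yields
\begin{align*}
\frac{\mathrm da}{\mathrm dt}&\le P(b)g(b)+\frac{1-\eta}{2-\eta}b-1+|\delta_j^{\widetilde{\V}}|,\\
\frac{\mathrm db}{\mathrm dt}&\le (k+1)g(b)^2(1-b)+\frac{1-\eta}{\eta(2-\eta)}g(b)^2 b+|\delta_j^{\widetilde{\W}}|,
\end{align*}
where for the second bound I discarded the nonpositive term $\tfrac{1-\eta}{2-\eta}a$ and used $a^2\le g(b)^2$.

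The crux is an exact identity for the leading term. Expanding, $P(b)g(b)=c_0(1-b)+\tfrac{2c_0 b}{\eta(k+1)}+\tfrac{c_0 b^2}{\eta(2-\eta)(k+1)(1-b)}$, and a direct calculation gives
\[
c_0(1-b)+\frac{1-\eta}{2-\eta}b-1=-\Big(1-\frac{(1-\eta)(3-b)}{2(2-\eta)}\Big).
\]
The right side is affine and strictly decreasing in $b$, hence on $b\in[0,1]$ it is at most its value at $b=0$, namely $-(1-c_0)=-\tfrac{1+\eta}{2(2-\eta)}$, a negative constant bounded away from $0$ uniformly for $\eta\in(0.1,0.9)$. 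So the leading part of $\tfrac{\mathrm da}{\mathrm dt}$ already contributes $-\Theta(1)$.

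Everything else is $o(1)$ as $d\to\infty$ (recall $k=\lceil c\log d\rceil$), hence negligible against $-(1-c_0)$. The remaining pieces of $\tfrac{\mathrm da}{\mathrm dt}$ are $\tfrac{2c_0 b}{\eta(k+1)}=O(1/\log d)$, $\tfrac{c_0 b^2}{\eta(2-\eta)(k+1)(1-b)}=O((k+1)^{-5/12})$ by $1-b\ge(k+1)^{-7/12}$, and $|\delta_j^{\widetilde{\V}}|=O(1/\log^2 d)$. For $g'(b)\tfrac{\mathrm db}{\mathrm dt}$, substituting $g(b)^2=\tfrac{c_0^2}{(k+1)^2(1-b)^2}$ and repeatedly invoking $1-b\ge(k+1)^{-7/12}$ turns the three terms into $O((k+1)^{-1/4})$, $O((k+1)^{-2/3})$, and $O((k+1)^{1/6}/\log^2 d)=O((\log d)^{-11/6})$. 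Summing everything, $\tfrac{\mathrm dh_j}{\mathrm dt}\le -(1-c_0)+o(1)<0$ for $d$ large, which is the assertion.

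I expect the main difficulty to be bookkeeping rather than insight: verifying the displayed identity and its monotonicity in $b$ cleanly, and confirming that each error term vanishes — in particular $g'(b)|\delta_j^{\widetilde{\W}}|$, which is where the exponent $\tfrac{7}{12}$ is actually used (any exponent $<\tfrac23$ suffices, since one needs $(1-b)^{-3}=o((k+1)^2)$). All of this relies on $b\in[0,1)$, so that $P(b)\ge 0$ and $3-b\le 3$, which holds throughout the phase where this lemma is invoked.
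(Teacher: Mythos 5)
Your proof is correct and follows essentially the same route as the paper's: pass to $\frac{\mathrm d}{\mathrm dt}\bigl(\lambda_j^{\widetilde{\V}}+g(\lambda_j^{\widetilde{\W}})\bigr)$ by the chain rule, substitute the eigenvalue dynamics from Lemma~\ref{lemma: gradient components of the reduced model}, bound $-P(b)\lambda_j^{\widetilde{\V}}\le P(b)g(b)$ and ${\lambda_j^{\widetilde{\V}}}^2\le g(b)^2$ via the hypothesis on $\lambda_j^{\widetilde{\V}}$, isolate the $b$-independent constant $-\tfrac{1+\eta}{2(2-\eta)}$ in the leading part, and absorb everything else into $O\bigl((k+1)^{-1/4}\bigr)$ using $1-\lambda_j^{\widetilde{\W}}\ge (k+1)^{-7/12}$ and $|\delta_j^{\widetilde{\V}}|,|\delta_j^{\widetilde{\W}}|=O(1/\log^2 d)$. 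Your observation that $\lambda_j^{\widetilde{\W}}\ge 0$ is tacitly needed (so that $P\ge 0$ and the leading expression is maximized at $\lambda_j^{\widetilde{\W}}=0$) is also implicit in the paper's bound and is justified by the initialization and the phase in which the lemma is invoked.
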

\begin{proof}
Directly consider the derivative
\begin{equation*}
    \frac{\mathrm{d}\qty(\lambda_j^{\widetilde{\V}}+\frac{3\qty(1-\eta)}{2\qty(2-\eta)}\frac{1}{\qty(k+1)\qty(1-\lambda_j^{\widetilde{\W}})})}{\mathrm{d}t}=\frac{\mathrm{d}\lambda_j^{\widetilde{\V}}}{\mathrm{d}t}+\frac{3\qty(1-\eta)}{2\qty(k+1)\qty(2-\eta)}\frac{1}{\qty(1-\lambda_j^{\widetilde{\W}})^2}\frac{\mathrm{d}\lambda_j^{\widetilde{\W}}}{\mathrm{d}t}
\end{equation*}
Substitute the derivatives with the equations in \Cref{lemma: gradient components of the reduced model}, we have
\begin{align*}
    &\frac{\mathrm{d}\lambda_j^{\widetilde{\V}}}{\mathrm{d}t}+\frac{3\qty(1-\eta)}{2\qty(k+1)\qty(2-\eta)}\frac{1}{\qty(1-\lambda_j^{\widetilde{\W}})^2}\frac{\mathrm{d}\lambda_j^{\widetilde{\W}}}{\mathrm{d}t}\\
    =&-\qty[\qty(k+1)\qty(1-\lambda_j^{\widetilde{\W}})^2+\frac{2}{\eta}\lambda_j^{\widetilde{\W}}\qty(1-\lambda_j^{\widetilde{\W}})+\frac{1}{\eta(2-\eta)}{\lambda_j^{\widetilde{\W}}}^2]\lambda_j^{\widetilde{\V}}+\frac{1-\eta}{2-\eta}\lambda_j^{\widetilde{\W}}-\qty(1+\delta^{\widetilde{\V}}_j)\\
    &+\frac{3\qty(1-\eta)}{2\qty(k+1)\qty(2-\eta)}\frac{1}{\qty(1-\lambda_j^{\widetilde{\W}})^2}\qty[\qty(k+1-\frac{1}{\eta}){\lambda_j^{\widetilde{\V}}}^2\qty(1-\lambda_j^{\widetilde{\W}})+\frac{1-\eta}{\eta(2-\eta)}{\lambda_j^{\widetilde{\V}}}^2\lambda_j^{\widetilde{\W}}+\frac{1-\eta}{2-\eta}\lambda_j^{\widetilde{\V}}-\delta^{\widetilde{\W}}_j]
\end{align*}
Since $-\frac{3\qty(1-\eta)}{2\qty(2-\eta)}\frac{1}{\qty(k+1)\qty(1-\lambda_j^{\widetilde{\W}})}\le\lambda_j^{\widetilde{\V}}<0$, we have
\begin{align*}
    &\frac{\mathrm{d}\lambda_j^{\widetilde{\V}}}{\mathrm{d}t}+\frac{3\qty(1-\eta)}{2\qty(k+1)\qty(2-\eta)}\frac{1}{\qty(1-\lambda_j^{\widetilde{\W}})^2}\frac{\mathrm{d}\lambda_j^{\widetilde{\W}}}{\mathrm{d}t}\\
    \le{}&\qty[\qty(k+1)\qty(1-\lambda_j^{\widetilde{\W}})^2+\frac{2}{\eta}\lambda_j^{\widetilde{\W}}\qty(1-\lambda_j^{\widetilde{\W}})+\frac{1}{\eta(2-\eta)}{\lambda_j^{\widetilde{\W}}}^2]\frac{3\qty(1-\eta)}{2\qty(2-\eta)}\frac{1}{\qty(k+1)\qty(1-\lambda_j^{\widetilde{\W}})}\\
    &+\frac{1-\eta}{2-\eta}\lambda_j^{\widetilde{\W}}-\qty(1+\delta^{\widetilde{\V}}_j)\\
    &+\frac{3\qty(1-\eta)}{2\qty(k+1)\qty(2-\eta)}\frac{1}{\qty(1-\lambda_j^{\widetilde{\W}})^2}\Bigg[\qty(k+1)\qty(\frac{3\qty(1-\eta)}{2\qty(2-\eta)}\frac{1}{\qty(k+1)\qty(1-\lambda_j^{\widetilde{\W}})})^2\qty(1-\lambda_j^{\widetilde{\W}})\\
    &+\frac{1-\eta}{\eta(2-\eta)}\qty(\frac{3\qty(1-\eta)}{2\qty(2-\eta)}\frac{1}{\qty(k+1)\qty(1-\lambda_j^{\widetilde{\W}})})^2\lambda_j^{\widetilde{\W}}-\delta^{\widetilde{\W}}_j\Bigg]\\
    ={}&\frac{3\qty(1-\eta)}{2\qty(2-\eta)}\qty(1-\lambda_j^{\widetilde{\W}})+\frac{1}{k+1}\frac{3\qty(1-\eta)}{\eta\qty(2-\eta)}\lambda_j^{\widetilde{\W}}+\frac{1}{\qty(k+1)\qty(1-\lambda_j^{\widetilde{\W}})}\frac{3\qty(1-\eta)}{2\eta\qty(2-\eta)^2}{\lambda_j^{\widetilde{\W}}}^2\\
    &+\frac{1-\eta}{2-\eta}\lambda_j^{\widetilde{\W}}-\qty(1+\delta^{\widetilde{\V}}_j)+\qty[\frac{3\qty(1-\eta)}{2\qty(2-\eta)}]^3\frac{1}{\qty(k+1)^2\qty(1-\lambda_j^{\widetilde{\W}})^3}\\
    &+\qty[\frac{3\qty(1-\eta)}{2\qty(2-\eta)}]^3\frac{1-\eta}{\eta(2-\eta)}\lambda_j^{\widetilde{\W}}\frac{1}{\qty(k+1)^3\qty(1-\lambda_j^{\widetilde{\W}})^4}-\frac{3\qty(1-\eta)}{2\qty(2-\eta)}\frac{1}{\qty(k+1)\qty(1-\lambda_j^{\widetilde{\W}})^2}\delta^{\widetilde{\W}}_j\\
    ={}&\qty[\frac{1-\eta}{2\qty(2-\eta)}\qty(1-\lambda_j^{\widetilde{\W}})-\frac{1}{2-\eta}]+\frac{1}{k+1}\frac{3\qty(1-\eta)}{2-\eta}\lambda_j^{\widetilde{\W}}\\
    &+\frac{1}{\qty(k+1)\qty(1-\lambda_j^{\widetilde{\W}})}\frac{3\qty(1-\eta)}{2\eta\qty(2-\eta)^2}{\lambda_j^{\widetilde{\W}}}^2-\delta^{\widetilde{\V}}_j+\qty[\frac{3\qty(1-\eta)}{2\qty(2-\eta)}]^3\frac{1}{\qty(k+1)^2\qty(1-\lambda_j^{\widetilde{\W}})^3}\\
    &+\qty[\frac{3\qty(1-\eta)}{2\qty(2-\eta)}]^3\frac{1-\eta}{\eta(2-\eta)}\lambda_j^{\widetilde{\W}}\frac{1}{\qty(k+1)^3\qty(1-\lambda_j^{\widetilde{\W}})^4}-\frac{3\qty(1-\eta)}{2\qty(2-\eta)}\frac{1}{\qty(k+1)\qty(1-\lambda_j^{\widetilde{\W}})^2}\delta^{\widetilde{\W}}_j
\end{align*}
Put in the assumption on $\lambda_j^{\widetilde{\W}}$ that $\lambda_j^{\widetilde{\W}}\le 1-\qty(k+1)^{-\frac{7}{12}}$, we have
\begin{align*}
    &\frac{\mathrm{d}\lambda_j^{\widetilde{\V}}}{\mathrm{d}t}+\frac{3\qty(1-\eta)}{2\qty(k+1)\qty(2-\eta)}\frac{1}{\qty(1-\lambda_j^{\widetilde{\W}})^2}\frac{\mathrm{d}\lambda_j^{\widetilde{\W}}}{\mathrm{d}t}\\
    \le{}&-\frac{1+\eta}{2\qty(2-\eta)}+\frac{1}{k+1}\frac{3\qty(1-\eta)}{\eta\qty(2-\eta)}+\frac{1}{\qty(k+1)^{\frac{5}{12}}}\frac{3\qty(1-\eta)}{2\eta\qty(2-\eta)^2}+\abs{\delta^{\widetilde{\V}}_j}+\qty[\frac{3\qty(1-\eta)}{2\qty(2-\eta)}]^3\frac{1}{\qty(k+1)^\frac{1}{4}}\\
    &+\qty[\frac{3\qty(1-\eta)}{2\qty(2-\eta)}]^3\frac{1-\eta}{\eta(2-\eta)}\frac{1}{\qty(k+1)^{\frac{2}{3}}}+\frac{3\qty(1-\eta)}{2\qty(2-\eta)}\abs{\delta^{\widetilde{\W}}_j}\\
    ={}&-\frac{1+\eta}{2\qty(2-\eta)}+O\qty(\frac{1}{\log^{\frac{1}{4}}d})
\end{align*}
\end{proof}
\begin{lemma}[Upper bound of $\lambda_j^{\widetilde{\V}}$]    
    \label{lemma: upper bound of lambdaV}
    Under \Cref{assumption: initialization}, if $\lambda_j^{\widetilde{\W}}\le 1-\qty(k+1)^{-\frac{7}{12}}$, it holds that 
    \begin{equation}
        \lambda_j^{\widetilde{\V}}<-\frac{3\qty(1-\eta)}{2\qty(2-\eta)}\frac{1}{\qty(k+1)\qty(1-\lambda_j^{\widetilde{\W}})}
    \end{equation}
\end{lemma}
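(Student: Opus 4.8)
The plan is a barrier (invariant--region) argument in time: show the claimed inequality holds at initialization and is propagated forward by \Cref{lemma: negative derivative}. Introduce the barrier function
\[
g_j(t) \;:=\; \lambda_j^{\widetilde{\V}}(t) + \frac{3(1-\eta)}{2(2-\eta)}\cdot\frac{1}{(k+1)\qty(1-\lambda_j^{\widetilde{\W}}(t))},
\]
so the asserted bound $\lambda_j^{\widetilde{\V}} < -\frac{3(1-\eta)}{2(2-\eta)}\frac{1}{(k+1)(1-\lambda_j^{\widetilde{\W}})}$ is precisely $g_j(t)<0$. I will prove $g_j(t)<0$ holds at $t=0$ and cannot be lost as long as $\lambda_j^{\widetilde{\W}}(t)\le 1-(k+1)^{-7/12}$.

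For the base case, \Cref{assumption: initialization} gives $\lambda_j^{\widetilde{\V}}(0)\le-\sigma$, $\lambda_j^{\widetilde{\W}}(0)\in[\sigma,\tfrac12]$, and $\sigma>\frac{3(1-\eta)}{2-\eta}\cdot\frac{1}{k+1}$. Since $1-\lambda_j^{\widetilde{\W}}(0)\ge\tfrac12$, the second term of $g_j(0)$ is at most $\frac{3(1-\eta)}{(2-\eta)(k+1)}<\sigma\le-\lambda_j^{\widetilde{\V}}(0)$, so $g_j(0)<0$; moreover $\lambda_j^{\widetilde{\W}}(0)\le\tfrac12\le 1-(k+1)^{-7/12}$ for $d$ large (recall $k=\Theta(\log d)$), so the hypothesis is in force at $t=0$.

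For the propagation step I argue by contradiction. Restrict attention to the maximal initial interval on which $\lambda_j^{\widetilde{\W}}(s)\le 1-(k+1)^{-7/12}$, and suppose $g_j$ reaches $0$ somewhere on it. By continuity of $t\mapsto(\lambda_j^{\widetilde{\V}}(t),\lambda_j^{\widetilde{\W}}(t))$ and $g_j(0)<0$, there is a first such time $t^\star>0$ with $g_j(t^\star)=0$ and $g_j(t)<0$ for $t<t^\star$, whence $\frac{\mathrm{d}g_j}{\mathrm{d}t}(t^\star)\ge0$. But $g_j(t^\star)=0$ means $\lambda_j^{\widetilde{\V}}(t^\star)=-\frac{3(1-\eta)}{2(2-\eta)}\frac{1}{(k+1)(1-\lambda_j^{\widetilde{\W}}(t^\star))}$, which is negative because $\lambda_j^{\widetilde{\W}}(t^\star)\le 1-(k+1)^{-7/12}<1$; thus the hypotheses of \Cref{lemma: negative derivative} hold (with its left inequality an equality), giving
\[
\frac{\mathrm{d}g_j}{\mathrm{d}t}(t^\star)\;\le\;-\frac{1+\eta}{2(2-\eta)}+O\qty(\frac{1}{\log^{1/4}d})\;<\;0
\]
for $d$ sufficiently large, contradicting $\frac{\mathrm{d}g_j}{\mathrm{d}t}(t^\star)\ge0$. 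Hence $g_j(t)<0$ on that interval, which is exactly the claim whenever $\lambda_j^{\widetilde{\W}}(t)\le 1-(k+1)^{-7/12}$.

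All the arithmetic is routine; the only delicate point is the continuity/``first violation time'' bookkeeping --- making sure $t^\star$ is well defined within the regime $\lambda_j^{\widetilde{\W}}\le1-(k+1)^{-7/12}$ so that the one-sided sign $\frac{\mathrm{d}g_j}{\mathrm{d}t}(t^\star)\ge0$ is legitimate --- but with the base case and the strict differential inequality of \Cref{lemma: negative derivative} in hand, I do not expect this to be a genuine obstacle.
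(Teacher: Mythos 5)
Your proposal is correct and takes essentially the same approach as the paper's: both are barrier (invariant-region) arguments that verify $g_j(0)<0$ from the initialization assumption and then invoke \Cref{lemma: negative derivative} to rule out a first crossing time where $g_j$ would reach $0$. Your phrasing via the one-sided derivative sign $\frac{\mathrm{d}g_j}{\mathrm{d}t}(t^\star)\ge 0$ at the first hitting time is just a slightly tidier bookkeeping of the same contradiction the paper derives.
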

\begin{proof}
    We prove by induction. First, check the initialization $\lambda_j^{\widetilde{\V}}\qty(0)\le-\sigma$, $\sigma\le\lambda_j^{\widetilde{\W}}\qty(0)\le\frac{1}{2}$. If $\sigma\ge\frac{3\qty(1-\eta)}{2-\eta}\frac{1}{k+1}$, then we have
    \begin{equation*}
        \lambda_j^{\widetilde{\V}}\qty(0)+\frac{3\qty(1-\eta)}{2\qty(2-\eta)}\frac{1}{\qty(k+1)\qty(1-\lambda_j^{\widetilde{\W}}\qty(0))}<-\sigma+\frac{3\qty(1-\eta)}{\qty(2-\eta)}\frac{1}{k+1}\le 0
    \end{equation*}
    If the inequality holds until some time $t_1$, that is for any $t<t_1$, we have
    \begin{equation*}
        \lambda_j^{\widetilde{\V}}\qty(t)<-\frac{3\qty(1-\eta)}{2\qty(2-\eta)}\frac{1}{\qty(k+1)\qty(1-\lambda_j^{\widetilde{\W}}\qty(t))}
    \end{equation*}
    but
    \begin{equation*}
        \lambda_j^{\widetilde{\V}}\qty(t_1)\ge-\frac{3\qty(1-\eta)}{2\qty(2-\eta)}\frac{1}{\qty(k+1)\qty(1-\lambda_j^{\widetilde{\W}}\qty(t_1))}
    \end{equation*}
    By \Cref{lemma: negative derivative}, we have
    \begin{equation*}
    \eval{\frac{\mathrm{d}\qty(\lambda_j^{\widetilde{\V}}+\frac{3\qty(1-\eta)}{2\qty(2-\eta)}\frac{1}{\qty(k+1)\qty(1-\lambda_j^{\widetilde{\W}})})}{\mathrm{d}t}}_{t=t_1}<0
        \end{equation*}
    Therefore, there exists some time $t'<t_1$ such that 
    \begin{equation*}
        \lambda_j^{\widetilde{\V}}\qty(t')\ge-\frac{3\qty(1-\eta)}{2\qty(2-\eta)}\frac{1}{\qty(k+1)\qty(1-\lambda_j^{\widetilde{\W}}\qty(t'))}
    \end{equation*}
    which is a contradiction. Hence, the proof is complete.
\end{proof}

\begin{lemma}[$\lambda_j^{\widetilde{\W}}$ converges to near optimal]
    Under \Cref{assumption: initialization}, it takes $O\qty(\log d)$ time for $\lambda_j^{\widetilde{\W}}$ to converge to $1-\qty(k+1)^{-\frac{7}{12}}$.
    \label{lemma: lambdaW convergence}
\end{lemma}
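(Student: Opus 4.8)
The plan is to prove, for each fixed coordinate $j$, a uniform positive lower bound $\frac{\mathrm{d}\lambda_j^{\widetilde{\W}}}{\mathrm{d}t}\ge \Omega(1/\log d)$ valid throughout the sub-phase in which $\lambda_j^{\widetilde{\W}}\le 1-(k+1)^{-7/12}$, and then to integrate it. Since $\lambda_j^{\widetilde{\W}}(0)\ge\sigma$ by \Cref{assumption: initialization} and $\lambda_j^{\widetilde{\W}}$ only needs to traverse the interval $[\sigma,\,1-(k+1)^{-7/12}]$, whose length is at most $1$, a growth rate of order $1/\log d$ forces the target value to be hit within time $O(\log d)$, which is exactly the assertion.

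To extract the rate, let $T$ be the first time $\lambda_j^{\widetilde{\W}}(t)$ equals $1-(k+1)^{-7/12}$ (and $T=\infty$ if this never happens). For every $t<T$ we have $\lambda_j^{\widetilde{\W}}(t)<1-(k+1)^{-7/12}$, so \Cref{lemma: upper bound of lambdaV} applies and gives $\lambda_j^{\widetilde{\V}}(t)<-\frac{3(1-\eta)}{2(2-\eta)}\cdot\frac{1}{(k+1)(1-\lambda_j^{\widetilde{\W}})}$. Abbreviating $v:=-\lambda_j^{\widetilde{\V}}>0$ and $m:=1-\lambda_j^{\widetilde{\W}}$, this reads $vm>\frac{3(1-\eta)}{2(2-\eta)(k+1)}$; multiplying through by $v>0$ yields the key product estimate $v^2m>\frac{3(1-\eta)}{2(2-\eta)(k+1)}\,v$, and (once $m\le 1$, i.e. once $\lambda_j^{\widetilde{\W}}\ge 0$) also $v>\frac{3(1-\eta)}{2(2-\eta)(k+1)}=\Omega(1/k)$.

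Next I would substitute the ODE for $\lambda_j^{\widetilde{\W}}$ from \Cref{lemma: gradient components of the reduced model}, which in this notation reads
\[
\frac{\mathrm{d}\lambda_j^{\widetilde{\W}}}{\mathrm{d}t}=\Big(k+1-\tfrac1\eta\Big)v^2m+\tfrac{1-\eta}{\eta(2-\eta)}v^2\lambda_j^{\widetilde{\W}}-\tfrac{1-\eta}{2-\eta}v-\delta_j^{\widetilde{\W}},\qquad |\delta_j^{\widetilde{\W}}|\le O(1/\log^2 d).
\]
The only potentially harmful term is $-\tfrac{1-\eta}{2-\eta}v$, and the plan is to absorb it into the leading term via the product estimate: for $d$ large enough (so $k+1>1/\eta$ and $\tfrac{1}{\eta(k+1)}$ is small) one has $(k+1-\tfrac1\eta)v^2m-\tfrac{1-\eta}{2-\eta}v\ge\big(\tfrac32(1-\tfrac1{\eta(k+1)})-1\big)\tfrac{1-\eta}{2-\eta}v\ge c_1 v$ for a constant $c_1>0$ depending only on $\eta$, the slack coming precisely from the factor $\tfrac32>1$ in the constant of \Cref{lemma: upper bound of lambdaV}. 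A short continuity bootstrap (the resulting derivative being strictly positive) shows $\lambda_j^{\widetilde{\W}}(t)\ge\lambda_j^{\widetilde{\W}}(0)\ge\sigma>0$ on $[0,T)$, so the quadratic term $\tfrac{1-\eta}{\eta(2-\eta)}v^2\lambda_j^{\widetilde{\W}}$ is nonnegative and may be dropped, while $|\delta_j^{\widetilde{\W}}|=O(1/\log^2 d)$. Hence $\frac{\mathrm{d}\lambda_j^{\widetilde{\W}}}{\mathrm{d}t}\ge c_1 v-O(1/\log^2 d)\ge\Omega(1/k)-O(1/\log^2 d)\ge\Omega(1/\log d)$, using $k=\Theta(\log d)$.

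Integrating this uniform bound over $[0,T)$ gives $\lambda_j^{\widetilde{\W}}(t)\ge\sigma+\Omega(t/\log d)$, so $\lambda_j^{\widetilde{\W}}$ must reach $1-(k+1)^{-7/12}$ by time $T=O(\log d)$, completing the proof. I expect the main obstacle to be the bookkeeping in the third step: one must check carefully that $-\tfrac{1-\eta}{2-\eta}v$ is genuinely dominated by $(k+1-\tfrac1\eta)v^2m$ — which works only because \Cref{lemma: upper bound of lambdaV} is stated with the constant $\tfrac32\cdot\tfrac{1-\eta}{2-\eta}$ rather than just $\tfrac{1-\eta}{2-\eta}$, leaving a constant fraction of room — and that the interaction error $\delta_j^{\widetilde{\W}}=O(1/\log^2 d)$ is asymptotically negligible against the $\Omega(1/\log d)$ signal, which is exactly why $k$ is taken to scale like $\log d$ rather than being constant.
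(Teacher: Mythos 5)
Your proof is correct and follows essentially the same route as the paper's: apply \Cref{lemma: upper bound of lambdaV} to obtain an upper bound on $\lambda_j^{\widetilde{\V}}$ in terms of $1-\lambda_j^{\widetilde{\W}}$, substitute into the ODE of \Cref{lemma: gradient components of the reduced model}, cancel the linear $-\frac{1-\eta}{2-\eta}v$ term against the leading $(k+1-\frac1\eta)v^2m$ term using the $\tfrac32$ slack in the bound, absorb $|\delta_j^{\widetilde{\W}}|=O(1/\log^2 d)$, and integrate the resulting $\Omega(1/\log d)$ drift. The only cosmetic difference is that you keep $v=-\lambda_j^{\widetilde{\V}}$ free and use the product inequality $v^2m>\frac{3(1-\eta)}{2(2-\eta)(k+1)}v$ rather than plugging the explicit bound into both quadratic and linear terms as the paper does; you are also slightly more explicit about the continuity bootstrap keeping $\lambda_j^{\widetilde{\W}}\ge\sigma>0$ (hence $m\le 1$), which the paper leaves implicit.
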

\begin{proof}
    Recall the gradient of $\lambda_j^{\widetilde{\W}}$ in \Cref{lemma: gradient components of the reduced model}
    \begin{equation*}
        \frac{\mathrm{d}\lambda_j^{\widetilde{\W}}}{\mathrm{d}t}=\qty(k+1-\frac{1}{\eta}){\lambda_j^{\widetilde{\V}}}^2\qty(1-\lambda_j^{\widetilde{\W}})+\frac{1-\eta}{\eta(2-\eta)}{\lambda_j^{\widetilde{\V}}}^2\lambda_j^{\widetilde{\W}}+\frac{1-\eta}{2-\eta}\lambda_j^{\widetilde{\V}}-\delta^{\widetilde{\W}}_j
    \end{equation*}
    Substitute $\lambda_j^{\widetilde{\V}}$ with \Cref{lemma: upper bound of lambdaV}, we have 
    \begin{align*}
        \frac{\mathrm{d}\lambda_j^{\widetilde{\W}}}{\mathrm{d}t}\ge{}&\qty(k+1-\frac{1}{\eta})\qty(\frac{3\qty(1-\eta)}{2\qty(2-\eta)}\frac{1}{\qty(k+1)\qty(1-\lambda_j^{\widetilde{\W}})})^2\qty(1-\lambda_j^{\widetilde{\W}})\\
        &+\frac{1-\eta}{\eta(2-\eta)}\qty(\frac{3\qty(1-\eta)}{2\qty(2-\eta)}\frac{1}{\qty(k+1)\qty(1-\lambda_j^{\widetilde{\W}})})^2\lambda_j^{\widetilde{\W}}\\
        &-\frac{1-\eta}{2-\eta}\qty(\frac{3\qty(1-\eta)}{2\qty(2-\eta)}\frac{1}{\qty(k+1)\qty(1-\lambda_j^{\widetilde{\W}})})-\delta^{\widetilde{\W}}_j\\
        \ge{}&\frac{4}{5}\qty(k+1)\qty(\frac{3\qty(1-\eta)}{2\qty(2-\eta)}\frac{1}{\qty(k+1)\qty(1-\lambda_j^{\widetilde{\W}})})^2\qty(1-\lambda_j^{\widetilde{\W}})\\
        &-\frac{1-\eta}{2-\eta}\qty(\frac{3\qty(1-\eta)}{2\qty(2-\eta)}\frac{1}{\qty(k+1)\qty(1-\lambda_j^{\widetilde{\W}})})-\abs{\delta^{\widetilde{\W}}_j}\\
        ={}&\frac{3}{10}\frac{\qty(1-\eta)^2}{\qty(2-\eta)^2}\frac{1}{\qty(k+1)\qty(1-\lambda_j^{\widetilde{\W}})}-\abs{\delta^{\widetilde{\W}}_j}\\
        \ge{}&\frac{1}{5}\frac{\qty(1-\eta)^2}{\qty(2-\eta)^2}\frac{1}{\qty(k+1)\qty(1-\lambda_j^{\widetilde{\W}})}\\
        \ge{}&\frac{1}{5}\frac{\qty(1-\eta)^2}{\qty(2-\eta)^2}\frac{1}{k+1}
    \end{align*}
    In $O\qty(\log d)$ time, $\lambda_j^{\widetilde{\W}}$ can converge to $1-\qty(k+1)^{-\frac{7}{12}}$.
\end{proof}

\begin{lemma}[]
    \label{lemma: lambdaW doesn't go far away from gt}
    Assume $\lambda_j^{\widetilde{\W}}\qty(t_1)=1-\qty(k+1)^{-\frac{7}{12}}$ and $\lambda_j^{\widetilde{\V}}\qty(t_1)<-\frac{3\qty(1-\eta)}{2\qty(2-\eta)}\frac{1}{\qty(k+1)\qty(1-\lambda_j^{\widetilde{\W}}\qty(t_1))}$, for any $t\ge t_1$ it holds that 
    \begin{equation*}
        1-2\qty(k+1)^{-\frac{7}{12}}<\lambda_j^{\widetilde{\W}}\qty(t)<1+\qty(k+1)^{-\frac{7}{12}}.
    \end{equation*}
\end{lemma}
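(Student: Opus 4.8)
The plan is a first-exit-time argument. Set $I:=\qty(1-2\qty(k+1)^{-7/12},\,1+\qty(k+1)^{-7/12})$, suppose $\lambda_j^{\widetilde{\W}}$ leaves $I$ for some $t>t_1$, and let $t^*>t_1$ be the first such exit time, so $\lambda_j^{\widetilde{\W}}(t^*)$ equals one of the two endpoints and the whole proof reduces to showing $\mathrm{d}\lambda_j^{\widetilde{\W}}/\mathrm{d}t$ has the ``wrong'' sign at $t^*$ in each case, via the eigenvalue dynamics of \Cref{lemma: gradient components of the reduced model} and the already-established control on $\lambda_j^{\widetilde{\V}}$. For the lower endpoint $\lambda_j^{\widetilde{\W}}(t^*)=1-2\qty(k+1)^{-7/12}$, note this is $\le 1-\qty(k+1)^{-7/12}$, so \Cref{lemma: upper bound of lambdaV} applies verbatim and gives $\lambda_j^{\widetilde{\V}}(t^*)<-\tfrac{3(1-\eta)}{2(2-\eta)}\qty(k+1)^{-5/12}$; substituting into the $\lambda_j^{\widetilde{\W}}$-dynamics exactly as in the proof of \Cref{lemma: lambdaW convergence} yields $\left.\mathrm{d}\lambda_j^{\widetilde{\W}}/\mathrm{d}t\right|_{t=t^*}\ge \tfrac15\tfrac{(1-\eta)^2}{(2-\eta)^2}\tfrac{1}{(k+1)\qty(1-\lambda_j^{\widetilde{\W}})}>0$, contradicting that $\lambda_j^{\widetilde{\W}}$ decreases out of $I$ at $t^*$.

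The harder case is the upper endpoint $\lambda_j^{\widetilde{\W}}(t^*)=1+\qty(k+1)^{-7/12}$, where \Cref{lemma: upper bound of lambdaV} no longer applies since $\lambda_j^{\widetilde{\W}}>1-\qty(k+1)^{-7/12}$. Here I would first pin down $\lambda_j^{\widetilde{\V}}$ on the narrow band around $1$. Let $\tau:=\max\qty{s\in[t_1,t^*]:\lambda_j^{\widetilde{\W}}(s)=1-\qty(k+1)^{-7/12}}$, which is well defined because $s=t_1$ lies in the set; then on $(\tau,t^*]$ one has $\abs{1-\lambda_j^{\widetilde{\W}}(t)}\le\qty(k+1)^{-7/12}$, and at $\tau$ \Cref{lemma: upper bound of lambdaV} gives $\lambda_j^{\widetilde{\V}}(\tau)<-c_*\qty(k+1)^{-5/12}$ with $c_*:=\tfrac{3(1-\eta)}{2(2-\eta)}$. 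I would then show by a first-crossing argument that $\lambda_j^{\widetilde{\V}}(t)<-c_*\qty(k+1)^{-5/12}$ persists on all of $(\tau,t^*]$: on this band the coefficient multiplying $\lambda_j^{\widetilde{\V}}$ in $\mathrm{d}\lambda_j^{\widetilde{\V}}/\mathrm{d}t$ is $\tfrac{1}{\eta(2-\eta)}+O\qty(\qty(k+1)^{-1/6})$ and the inhomogeneous part $\tfrac{1-\eta}{2-\eta}\lambda_j^{\widetilde{\W}}-1$ equals $-\tfrac{1}{2-\eta}+O\qty(\qty(k+1)^{-7/12})$, so at a hypothetical upward crossing $\lambda_j^{\widetilde{\V}}=-c_*\qty(k+1)^{-5/12}$ one gets $\mathrm{d}\lambda_j^{\widetilde{\V}}/\mathrm{d}t=-\tfrac{1}{2-\eta}+O\qty(\qty(k+1)^{-1/6})+\abs{\delta_j^{\widetilde{\V}}}<0$ for $d$ large, a contradiction.

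With $\abs{\lambda_j^{\widetilde{\V}}(t^*)}\ge c_*\qty(k+1)^{-5/12}$, $\lambda_j^{\widetilde{\V}}(t^*)<0$, and $1-\lambda_j^{\widetilde{\W}}(t^*)=-\qty(k+1)^{-7/12}$ in hand, I would finish by inspecting $\mathrm{d}\lambda_j^{\widetilde{\W}}/\mathrm{d}t$ at $t^*$: the restoring term $-\qty(k+1-\tfrac1\eta){\lambda_j^{\widetilde{\V}}}^2\qty(k+1)^{-7/12}$ is negative with magnitude $\gtrsim\qty(k+1)^{5/12}{\lambda_j^{\widetilde{\V}}}^2\ge c_*^2\qty(k+1)^{-5/12}$, dominating both the positive term $\tfrac{1-\eta}{\eta(2-\eta)}{\lambda_j^{\widetilde{\V}}}^2\qty(1+o(1))$ (smaller by the factor $\qty(k+1)^{-5/12}$) and the error $\abs{\delta_j^{\widetilde{\W}}}=O\qty(1/\log^2 d)=O\qty(\qty(k+1)^{-2})$, while the remaining term $\tfrac{1-\eta}{2-\eta}\lambda_j^{\widetilde{\V}}$ is itself negative; hence $\left.\mathrm{d}\lambda_j^{\widetilde{\W}}/\mathrm{d}t\right|_{t=t^*}<0$, contradicting $\lambda_j^{\widetilde{\W}}$ increasing out of $I$ at $t^*$. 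Both endpoints being excluded, $\lambda_j^{\widetilde{\W}}(t)\in I$ for all $t\ge t_1$.

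I expect the main obstacle to be the bootstrap in the second paragraph: keeping $\abs{\lambda_j^{\widetilde{\V}}}$ bounded below on the band $\qty{\abs{1-\lambda_j^{\widetilde{\W}}}\le\qty(k+1)^{-7/12}}$, precisely where \Cref{lemma: upper bound of lambdaV} is unavailable, and the accompanying bookkeeping needed to verify that every asymptotic dominance among the powers of $k+1$ and the $O(1/\log^2 d)$ error terms holds under the standing assumptions $k=\lceil c\log d\rceil$ and $n=\Theta\qty(d\log^5 d)$.
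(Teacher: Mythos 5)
Your first-exit skeleton and the terminal sign computations are right, but both places where you re-invoke \Cref{lemma: upper bound of lambdaV} at a time later than $t_1$ are gaps. That lemma is established by an invariant argument launched from $t=0$ and maintained via \Cref{lemma: negative derivative}, which itself requires $\lambda_j^{\widetilde{\W}}\le 1-(k+1)^{-7/12}$; so its conclusion is only justified when $\lambda_j^{\widetilde{\W}}$ has stayed below that threshold on the \emph{whole} trajectory up to the time of application, not merely at that instant. That is exactly the condition the present lemma allows to fail after $t_1$: on $(t_1,t^*)$ the first-exit hypothesis only confines $\lambda_j^{\widetilde{\W}}$ to $I=(1-2(k+1)^{-7/12},\,1+(k+1)^{-7/12})$, so $\lambda_j^{\widetilde{\W}}$ may have visited the region above $1-(k+1)^{-7/12}$ before returning to your $t^*$ (lower endpoint) or your $\tau$ (upper endpoint). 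Hence the lower-endpoint application of \Cref{lemma: upper bound of lambdaV} ``verbatim'' at $t^*$, and the base case at $\tau$ in the second paragraph, are both unsupported.

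The repair is to anchor the control on $\lambda_j^{\widetilde{\V}}$ at $t_1$ and propagate it forward, which is what the paper does. The lemma's own hypothesis gives $\lambda_j^{\widetilde{\V}}(t_1)<-c_*(k+1)^{-5/12}$ with $c_*:=\frac{3(1-\eta)}{2(2-\eta)}$, and the barrier argument you wrote for $(\tau,t^*]$ runs just as well on all of $[t_1,t^*]$: there the first-exit assumption gives $\abs{1-\lambda_j^{\widetilde{\W}}}\le 2(k+1)^{-7/12}$ (a slightly wider band, but the asymptotics are unchanged), the coefficient of $\lambda_j^{\widetilde{\V}}$ in $\mathrm{d}\lambda_j^{\widetilde{\V}}/\mathrm{d}t$ is $\frac{1}{\eta(2-\eta)}+O((k+1)^{-1/6})$, the inhomogeneous part is $-\frac{1}{2-\eta}+O((k+1)^{-7/12})$, and at the hypothetical crossing $\lambda_j^{\widetilde{\V}}=-c_*(k+1)^{-5/12}$ one finds $\mathrm{d}\lambda_j^{\widetilde{\V}}/\mathrm{d}t=-\frac{1}{2-\eta}+o(1)<0$, so the barrier is never crossed. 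Feed this single uniform bound $\lambda_j^{\widetilde{\V}}(t)<-c_*(k+1)^{-5/12}$ on $[t_1,t^*]$ into both endpoint sign computations in place of the two re-invocations, and your proof closes. (The paper obtains the same bound by solving the linear ODE comparison $\lambda_j^{\widetilde{\V}}(t)\le Ce^{-2(t-t_1)/(\eta(2-\eta))}-\eta/4$ with $C=\lambda_j^{\widetilde{\V}}(t_1)+\eta/4$, concluding $\lambda_j^{\widetilde{\V}}(t^*)\le\max\{\lambda_j^{\widetilde{\V}}(t_1),-\eta/4\}<-c_*(k+1)^{-5/12}$; the two routes are interchangeable.)
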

\begin{proof}
    First, it is clear that the inequality holds at time $t_1$. If the inequality doesn't hold, then there exists $t'>t_1$ such that 
    \begin{align*}
        1-2\qty(k+1)^{-\frac{7}{12}}<{}&\lambda_j^{\widetilde{\W}}\qty(t)<1+\qty(k+1)^{-\frac{7}{12}}\qquad\text{ for any } t_1\le t<t'\\
        \lambda_j^{\widetilde{\W}}\qty(t')={}&1-2\qty(k+1)^{-\frac{7}{12}}
    \end{align*}
    or
    \begin{align*}
        1-2\qty(k+1)^{-\frac{7}{12}}<{}&\lambda_j^{\widetilde{\W}}\qty(t)<1+\qty(k+1)^{-\frac{7}{12}}\qquad\text{ for any } t_1\le t<t'\\
        \lambda_j^{\widetilde{\W}}\qty(t')={}&1+\qty(k+1)^{-\frac{7}{12}}
    \end{align*}
    In the first case, it suffices to prove
    \begin{equation*}
        \lambda_j^{\widetilde{\V}}\qty(t')\le-\frac{3\qty(1-\eta)}{2\qty(2-\eta)}\qty(k+1)^{-\frac{5}{12}}<-\frac{3\qty(1-\eta)}{2\qty(2-\eta)}\frac{1}{\qty(k+1)\qty(1-\lambda_j^{\widetilde{\W}}\qty(t'))}
    \end{equation*}
    to show
    \begin{equation*}
        \eval{\frac{\mathrm{d}\lambda_j^{\widetilde{\W}}}{\mathrm{d}t}}_{t=t'}>0
    \end{equation*}
    which says there exists $t_1\le t''<t'$ such that
    \begin{equation*}
        \lambda_j^{\widetilde{\W}}\qty(t'')\le1-2\qty(k+1)^{-\frac{7}{12}}
    \end{equation*}
    and leads to a contradiction. Recall the gradient of $\lambda_j^{\widetilde{V}}$ in \Cref{lemma: gradient components of the reduced model}
    \begin{align*}
        \frac{\mathrm{d}\lambda_j^{\widetilde{\V}}}{\mathrm{d}t}={}&-\qty[\qty(k+1)\qty(1-\lambda_j^{\widetilde{\W}})^2+\frac{2}{\eta}\lambda_j^{\widetilde{\W}}\qty(1-\lambda_j^{\widetilde{\W}})+\frac{1}{\eta(2-\eta)}{\lambda_j^{\widetilde{\W}}}^2]\lambda_j^{\widetilde{\V}}+\frac{1-\eta}{2-\eta}\lambda_j^{\widetilde{\W}}-\qty(1+\delta^{\widetilde{\V}}_j)\\
        \le{}&-\qty[4\qty(k+1)^{-\frac{1}{6}}+\frac{4}{\eta}\qty[\qty(k+1)^{-\frac{7}{12}}+\qty(k+1)^{-\frac{7}{6}}]+\frac{1}{\eta(2-\eta)}\qty[1+2\qty(k+1)^{-\frac{7}{12}}+\qty(k+1)^{-\frac{7}{6}}]]\lambda_j^{\widetilde{\V}}\\
        &-\frac{1}{2-\eta}+\frac{1-\eta}{2-\eta}\qty(k+1)^{-\frac{7}{12}}+\abs{\delta^{\widetilde{\V}}_j}\\
        \le{}&-\frac{2}{\eta(2-\eta)}\lambda_j^{\widetilde{\V}}-\frac{1}{2\qty(2-\eta)}
    \end{align*}
    and thus we have
    \begin{equation*}
        \lambda_j^{\widetilde{\V}}\qty(t)\le Ce^{-\frac{2}{\eta(2-\eta)}\qty(t-t_1)}-\frac{\eta}{4}
    \end{equation*}
    If $C\le 0$, then \begin{equation*}
        \lambda_j^{\widetilde{\V}}\qty(t')\le-\frac{\eta}{4}\le-\frac{3\qty(1-\eta)}{2\qty(2-\eta)}\qty(k+1)^{-\frac{5}{12}}
    \end{equation*}
    else
    \begin{equation*}
        \lambda_j^{\widetilde{\V}}\qty(t')\le\lambda_j^{\widetilde{\V}}\qty(t_1)=-\frac{3\qty(1-\eta)}{2\qty(2-\eta)}\qty(k+1)^{-\frac{5}{12}}
    \end{equation*}
    In the second case, 
    \begin{equation*}
        \lambda_j^{\widetilde{\V}}\qty(t)\le-\frac{3\qty(1-\eta)}{2\qty(2-\eta)}\qty(k+1)^{-\frac{5}{12}}
    \end{equation*}
    still holds for any $t_1\le t\le t'$. Recall the gradient of $\lambda_j^{\widetilde{\W}}$ in \Cref{lemma: gradient components of the reduced model}
    \begin{align*}
        \eval{\frac{\mathrm{d}\lambda_j^{\widetilde{\W}}}{\mathrm{d}t}}_{t=t'}={}&\qty(k+1-\frac{1}{\eta}){\lambda_j^{\widetilde{\V}}}^2\qty(1-\lambda_j^{\widetilde{\W}})+\frac{1-\eta}{\eta(2-\eta)}{\lambda_j^{\widetilde{\V}}}^2\lambda_j^{\widetilde{\W}}+\frac{1-\eta}{2-\eta}\lambda_j^{\widetilde{\V}}-\delta^{\widetilde{\W}}_j\\
        ={}&-\qty(k+1-\frac{1}{\eta}){\lambda_j^{\widetilde{\V}}}^2\qty(k+1)^{-\frac{7}{12}}+\frac{1-\eta}{\eta(2-\eta)}{\lambda_j^{\widetilde{\V}}}^2\qty[1+\qty(k+1)^{-\frac{7}{12}}]+\frac{1-\eta}{2-\eta}\lambda_j^{\widetilde{\V}}-\delta^{\widetilde{\W}}_j\\
        \le{}&-\frac{\qty(k+1)^{\frac{5}{12}}}{2}{\lambda_j^{\widetilde{\V}}}^2+\frac{1-\eta}{2\qty(2-\eta)}\lambda_j^{\widetilde{\V}}+\abs{\delta^{\widetilde{\W}}_j}\\
        \le{}&-\frac{9\qty(1-\eta)^2}{8\qty(2-\eta)^2}\qty(k+1)^{-\frac{5}{12}}-\frac{3\qty(1-\eta)^2}{4\qty(2-\eta)^2}\qty(k+1)^{-\frac{5}{12}}+\abs{\delta^{\widetilde{\W}}_j}\\
        \le{}&-\frac{\qty(1-\eta)^2}{\qty(2-\eta)^2}\qty(k+1)^{-\frac{5}{12}}
    \end{align*}
    There exists $t_1\le t''<t'$ such that
    \begin{equation*}
        \lambda_j^{\widetilde{\W}}\qty(t'')\ge1+\qty(k+1)^{-\frac{7}{12}}
    \end{equation*}
    which is a contradiction. Hence, the proof is complete.
\end{proof}

\begin{lemma}[$\lambda_j^{\widetilde{\V}}$ converges to near optimal]
    Assume
    \begin{equation*}
        1-2\qty(k+1)^{-\frac{7}{12}}<\lambda_j^{\widetilde{\W}}\qty(t)<1+\qty(k+1)^{-\frac{7}{12}}
    \end{equation*}
    then it takes $O\qty(\log\log d)$ time for $\abs{\lambda_j^{\widetilde{\V}}+\eta}$ to converge to $\qty(k+1)^{-\frac{1}{12}}$.
    \label{lemma: lambdaV convergence}
\end{lemma}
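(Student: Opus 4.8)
The plan is to reduce the evolution of $\lambda_j^{\widetilde{\V}}$, under the assumed window for $\lambda_j^{\widetilde{\W}}$, to a scalar linear ODE with a tiny forcing term and then run a Gr\"onwall argument on the squared residual $\mu_j^2$ where $\mu_j := \lambda_j^{\widetilde{\V}} + \eta$. First I would freeze $\lambda_j^{\widetilde{\W}}$ inside the drift of $\lambda_j^{\widetilde{\V}}$ from \Cref{lemma: gradient components of the reduced model}: writing the coefficient as $A_j := (k+1)(1-\lambda_j^{\widetilde{\W}})^2 + \tfrac{2}{\eta}\lambda_j^{\widetilde{\W}}(1-\lambda_j^{\widetilde{\W}}) + \tfrac{1}{\eta(2-\eta)}{\lambda_j^{\widetilde{\W}}}^2$ and plugging in $|1-\lambda_j^{\widetilde{\W}}| \le 2(k+1)^{-7/12}$, the dominant error comes from $(k+1)(1-\lambda_j^{\widetilde{\W}})^2 = O((k+1)^{-1/6})$, so $A_j = \tfrac{1}{\eta(2-\eta)} + O((k+1)^{-1/6})$, and similarly $\tfrac{1-\eta}{2-\eta}\lambda_j^{\widetilde{\W}} - 1 = -\tfrac{1}{2-\eta} + O((k+1)^{-7/12})$. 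Since $|\delta_j^{\widetilde{\V}}| = O(1/\log^2 d)$ is negligible next to these and $A_j\eta = \tfrac{1}{2-\eta} + O((k+1)^{-1/6})$, the substitution $\lambda_j^{\widetilde{\V}} = -\eta + \mu_j$ gives
$$\frac{\mathrm{d}\mu_j}{\mathrm{d}t} = -A_j\mu_j + r_j, \qquad |r_j| = O\big((k+1)^{-1/6}\big),$$
with $A_j = \Theta(1)$ for $\eta \in (0.1,0.9)$.

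Next I would track $\mu_j^2$, for which $\tfrac{\mathrm{d}}{\mathrm{d}t}\mu_j^2 = -2A_j\mu_j^2 + 2r_j\mu_j$. As long as $|\mu_j| > (k+1)^{-1/12}$ one has $|\mu_j| < (k+1)^{1/12}\mu_j^2$, so $|2r_j\mu_j| = O((k+1)^{-1/12})\mu_j^2$, which is dominated by $2A_j\mu_j^2$ once $d$ is large; hence $\tfrac{\mathrm{d}}{\mathrm{d}t}\mu_j^2 \le -\tfrac{1}{\eta(2-\eta)}\mu_j^2$ in that regime. Integrating from $t_1$ and using $|\mu_j(t_1)| = O(1)$ --- which holds because throughout Stage~1 the drift of $\lambda_j^{\widetilde{\V}}$ is a linear ODE with a positive, $\Theta(1)$ contraction rate $A_j$ (as $\lambda_j^{\widetilde{\W}} \ge \sigma$ there) and $O(1)$ forcing, so $\lambda_j^{\widetilde{\V}}$ stays bounded --- this drives $|\mu_j|$ down to $(k+1)^{-1/12}$ within time $O(\log(k+1)) = O(\log\log d)$, using $k = \Theta(\log d)$. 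To upgrade this to genuine convergence I would then verify that $\{|\mu_j| \le (k+1)^{-1/12}\}$ is forward invariant: at the boundary $\tfrac{\mathrm{d}}{\mathrm{d}t}\mu_j^2 \le -2A_j(k+1)^{-1/6} + O((k+1)^{-1/4}) < 0$, because the equilibrium floor $\approx |r_j|/A_j = O((k+1)^{-1/6})$ of the linear ODE lies strictly below the target radius $(k+1)^{-1/12}$.

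The main obstacle is not the ODE itself but the bookkeeping that licenses it: one must confirm that every error feeding $r_j$ --- the deviation of $A_j$ from $\tfrac{1}{\eta(2-\eta)}$, the deviation of the affine term from $-\tfrac{1}{2-\eta}$, and the interaction term $\delta_j^{\widetilde{\V}}$ --- is of strictly smaller order than $(k+1)^{-1/12}$; equivalently, the exponents $-1/6$, $-7/12$, and $-2$ must all fall below $-1/12$, so that the contraction above cannot stall before reaching the target. This is exactly where the assumed window $1-2(k+1)^{-7/12} < \lambda_j^{\widetilde{\W}} < 1+(k+1)^{-7/12}$ supplied by \Cref{lemma: lambdaW doesn't go far away from gt} and the sample-size assumption $n = \Theta(d\log^5 d)$ behind $|\delta_j^{\widetilde{\V}}| = O(1/\log^2 d)$ enter; checking that this numerology lines up is essentially the whole content of the argument.
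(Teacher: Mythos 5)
Your proposal is correct and tracks the paper's own proof almost step for step. You substitute $\mu_j=\lambda_j^{\widetilde{\V}}+\eta$, use the window $|1-\lambda_j^{\widetilde{\W}}|\le 2(k+1)^{-7/12}$ to show the linear coefficient is $\tfrac{1}{\eta(2-\eta)}+O((k+1)^{-1/6})$ and the forcing is $O((k+1)^{-1/6})$, then run a contraction argument on $\mu_j^2$ in the regime $|\mu_j|>(k+1)^{-1/12}$ where $|2r_j\mu_j|$ is dominated by $2A_j\mu_j^2$; this is precisely what the paper does (it writes the inequality directly on $\tfrac{\mathrm{d}}{\mathrm{d}t}(\lambda_j^{\widetilde{\V}}+\eta)^2$, but the content is identical). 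The minor additions you make — noting explicitly that $|\mu_j(t_1)|=O(1)$ from Stage~1 boundedness, and checking forward invariance of the ball of radius $(k+1)^{-1/12}$ — are natural bookkeeping the paper leaves implicit, and your numerology ($-1/6$, $-7/12$, $-2$ all falling below $-1/12$) is exactly right.
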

\begin{proof}
    From \Cref{lemma: lambdaW doesn't go far away from gt}, we know
    \begin{align*}
        \frac{\mathrm{d}\lambda_j^{\widetilde{\V}}}{\mathrm{d}t}={}&-\qty[\qty(k+1)\qty(1-\lambda_j^{\widetilde{\W}})^2+\frac{2}{\eta}\lambda_j^{\widetilde{\W}}\qty(1-\lambda_j^{\widetilde{\W}})+\frac{1}{\eta(2-\eta)}{\lambda_j^{\widetilde{\W}}}^2]\lambda_j^{\widetilde{\V}}+\frac{1-\eta}{2-\eta}\lambda_j^{\widetilde{\W}}-\qty(1+\delta^{\widetilde{\V}}_j)\\
        \le{}&-\qty[4\qty(k+1)^{-\frac{1}{6}}+\frac{4}{\eta}\qty[\qty(k+1)^{-\frac{7}{12}}+\qty(k+1)^{-\frac{7}{6}}]+\frac{1}{\eta(2-\eta)}\qty[1+2\qty(k+1)^{-\frac{7}{12}}+\qty(k+1)^{-\frac{7}{6}}]]\lambda_j^{\widetilde{\V}}\\
        &-\frac{1}{2-\eta}+\frac{1-\eta}{2-\eta}\qty(k+1)^{-\frac{7}{12}}+\abs{\delta^{\widetilde{\V}}_j}\\
        ={}&-\qty[\frac{1}{\eta\qty(2-\eta)}+O\qty(\qty(k+1)^{-\frac{1}{6}})]\qty(\lambda_j^{\widetilde{\V}}+\eta)+O\qty(\qty(k+1)^{-\frac{1}{6}})
    \end{align*}
    and
    \begin{align*}
        \frac{\mathrm{d}\lambda_j^{\widetilde{\V}}}{\mathrm{d}t}={}&-\qty[\qty(k+1)\qty(1-\lambda_j^{\widetilde{\W}})^2+\frac{2}{\eta}\lambda_j^{\widetilde{\W}}\qty(1-\lambda_j^{\widetilde{\W}})+\frac{1}{\eta(2-\eta)}{\lambda_j^{\widetilde{\W}}}^2]\lambda_j^{\widetilde{\V}}+\frac{1-\eta}{2-\eta}\lambda_j^{\widetilde{\W}}-\qty(1+\delta^{\widetilde{\V}}_j)\\
        \ge{}&-\qty[-\frac{2}{\eta}\qty[\qty(k+1)^{-\frac{7}{12}}+\qty(k+1)^{-\frac{7}{6}}]+\frac{1}{\eta(2-\eta)}\qty[1-4\qty(k+1)^{-\frac{7}{12}}+4\qty(k+1)^{-\frac{7}{6}}]]\lambda_j^{\widetilde{\V}}\\
        &-\frac{1}{2-\eta}-\frac{2\qty(1-\eta)}{2-\eta}\qty(k+1)^{-\frac{7}{12}}-\abs{\delta^{\widetilde{\V}}_j}\\
        ={}&-\qty[\frac{1}{\eta\qty(2-\eta)}+O\qty(\qty(k+1)^{-\frac{1}{6}})]\qty(\lambda_j^{\widetilde{\V}}+\eta)+O\qty(\qty(k+1)^{-\frac{1}{6}})
    \end{align*}
    Therefore,
    \begin{align*}
        \frac{\mathrm{d}\qty(\lambda_j^{\widetilde{\V}}+\eta)^2}{\mathrm{d}t}={}&2\qty(\lambda_j^{\widetilde{\V}}+\eta) \frac{\mathrm{d}\lambda_j^{\widetilde{\V}}}{\mathrm{d}t}\\
        \le{}&-\qty[\frac{1}{\eta\qty(2-\eta)}+O\qty(\qty(k+1)^{-\frac{1}{6}})]\qty(\lambda_j^{\widetilde{\V}}+\eta)^2+O\qty(\qty(k+1)^{-\frac{1}{6}})\qty(\lambda_j^{\widetilde{\V}}+\eta)
    \end{align*}
    If $\abs{\lambda_j^{\widetilde{\V}}+\eta}$ converges to $\epsilon=\qty(k+1)^{-\frac{1}{12}}$, then
    \begin{equation*}
        \frac{\mathrm{d}\qty(\lambda_j^{\widetilde{\V}}+\eta)^2}{\mathrm{d}t}\le{}-\frac{1}{2\eta\qty(2-\eta)}\qty(\lambda_j^{\widetilde{\V}}+\eta)^2\\
    \end{equation*}
    Thus, there exists $c\le\Theta(1)$ such that
    \begin{equation*}
        \epsilon^2={}\qty(\lambda_j^{\widetilde{\V}}+\eta)^2\le{}c^2\exp(-\frac{1}{2\eta\qty(2-\eta)}t)
    \end{equation*}
    In $O\qty(\log\qty(\frac{1}{\epsilon}))=O(\log\log d)$ time, $\abs{\lambda_j^{\widetilde{\V}}+\eta}$ can converge to $\epsilon$.
\end{proof}

\begin{lemma}[Local convergence]
    Suppose $k=\lceil c\log d\rceil$. Assume
    \begin{equation*}
        \abs{\lambda_j^{\widetilde{\W}}\qty(t)-1}\le{}2\qty(k+1)^{-\frac{7}{12}}\qquad
        \abs{\lambda_j^{\widetilde{\V}}\qty(t)+\eta}={}O\qty(\qty(k+1)^{-\frac{1}{12}}),
    \end{equation*}
    then there exists $\alpha=O\qty(\qty(1-\eta)^k)\ge 0$ such that $\Lbd^{\widetilde{\V}}$ and $\Lbd^{\widetilde{\W}}$ comply with
    \begin{align*}
        &\qquad\quad\frac{\mathrm{d}\tr\qty[\qty(\Lbd^{\widetilde{\V}}+\eta\bI)^2]}{\mathrm{d}t}+\frac{\mathrm{d}\tr\qty[\qty(\bI-\Lbd^{\widetilde{\W}})^2]}{\mathrm{d}t}\\
        \le{}&-\frac{1}{2\eta(2-\eta)}\tr\qty[\qty(\Lbd^{\widetilde{\V}}+\eta\bI)^2]-\frac{\eta^2}{2}\qty(k+1)\tr\qty[\qty(\bI-\Lbd^{\widetilde{\W}})^2]+\alpha,
    \end{align*}
    thus $\abs{\lambda_j^{\widetilde{\V}}+\eta}$ and $\abs{1-\lambda_j^{\widetilde{\W}}}$ can converge to $\epsilon\in\qty(d^{-\frac{c}{2}\log\qty(\frac{1}{1-\eta})+\frac12},1)$ in $O\qty(\log\frac{1}{\epsilon})$ time.
    \label{lemma: local convergence}
\end{lemma}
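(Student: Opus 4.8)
\emph{Proof plan.} Throughout write $p_j:=\lambda_j^{\widetilde{\V}}+\eta$ and $q_j:=1-\lambda_j^{\widetilde{\W}}$, so that $\tr[(\Lbd^{\widetilde{\V}}+\eta\bI)^2]=\sum_j p_j^2$, $\tr[(\bI-\Lbd^{\widetilde{\W}})^2]=\sum_j q_j^2$, and the hypothesis becomes $|q_j|\le 2(k+1)^{-7/12}$ and $|p_j|=O((k+1)^{-1/12})$, so in particular $\norm{p}_2=O(\sqrt d\,(k+1)^{-1/12})$ and $\norm{q}_2=O(\sqrt d\,(k+1)^{-7/12})$. First I would substitute $\lambda_j^{\widetilde{\V}}=p_j-\eta$ and $\lambda_j^{\widetilde{\W}}=1-q_j$ into the eigenvalue ODEs of \Cref{lemma: gradient components of the reduced model} and Taylor-expand the polynomial coefficients about $(p_j,q_j)=(0,0)$. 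A direct computation shows the $p_j$- and $q_j$-independent parts cancel exactly (the ground truth is a fixed point of the main terms), leaving
\begin{align*}
    \dot p_j&=-\tfrac{1}{\eta(2-\eta)}\,p_j+\tfrac{1-\eta}{2-\eta}\,q_j+E^{\widetilde{\V}}_j,\\
    \dot q_j&=-\Bigl(\eta^2(k+1)-\tfrac{\eta(3-2\eta)}{2-\eta}\Bigr)q_j+\tfrac{1-\eta}{2-\eta}\,p_j+E^{\widetilde{\W}}_j,
\end{align*}
where $E^{\widetilde{\V}}_j,E^{\widetilde{\W}}_j$ collect (i) the higher-order Taylor remainders, each a product of at least two factors from $\{p_\ell,q_\ell,(k+1)q_\ell^2\}$, and (ii) the interaction terms $\delta^{\widetilde{\V}}_j,\delta^{\widetilde{\W}}_j$, whose fine structure is supplied by the decomposition of $\Delta^{\widetilde{\V}},\Delta^{\widetilde{\W}}$ in \Cref{lemma: gradients of the reduced model}.

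Next I would differentiate $P:=\sum_j p_j^2+\sum_j q_j^2$. The diagonal part of the linearization contributes $-\tfrac{2}{\eta(2-\eta)}\sum_j p_j^2-2\bigl(\eta^2(k+1)-O(1)\bigr)\sum_j q_j^2$, and the two symmetric cross terms combine to $\tfrac{4(1-\eta)}{2-\eta}\sum_j p_jq_j$. Applying Young's inequality $\tfrac{4(1-\eta)}{2-\eta}|p_jq_j|\le\tfrac{1}{\eta(2-\eta)}p_j^2+\tfrac{4\eta(1-\eta)^2}{2-\eta}q_j^2$ — the parameter chosen precisely to consume half of the available $p_j^2$ dissipation — and absorbing the $O(1)$ into $\eta^2(k+1)$ for $d$ large, one obtains
\begin{equation*}
    \tfrac{\mathrm d}{\mathrm dt}P\le -\tfrac{1}{\eta(2-\eta)}\sum_j p_j^2-\eta^2(k+1)\sum_j q_j^2+2\sum_j p_jE^{\widetilde{\V}}_j+2\sum_j q_jE^{\widetilde{\W}}_j.
\end{equation*}
It then suffices to show the two error sums are at most $\tfrac{1}{2\eta(2-\eta)}\sum_j p_j^2+\tfrac{\eta^2}{2}(k+1)\sum_j q_j^2+\alpha$ with $\alpha=O\bigl(d(1-\eta)^k\bigr)$, which leaves exactly the claimed inequality.

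The error bookkeeping is the crux, and here I would use the structure of \Cref{lemma: gradients of the reduced model} term by term. A summand of $E^{\widetilde{\V}}_j$ (resp.\ $E^{\widetilde{\W}}_j$) proportional to $p_j$ with $O(1/\log^2 d)$ coefficient, after multiplication by $p_j$, gives an $O(1/\log^2 d)\,p_j^2$ term, absorbed by the $p_j^2$ slack; a summand proportional to $q_j$ gives an $O(1/\log^2 d)\,|p_jq_j|$ or $O(1/\log^2 d)\,q_j^2$ term, absorbed by Young / the (huge) $q_j^2$ slack; a Taylor remainder such as $(k+1)q_j^2p_j$ gives $(k+1)|p_j|q_j^2=O((k+1)^{11/12})q_j^2$ using $|p_j|=O((k+1)^{-1/12})$, which is $o(k)$ and hence absorbed into $\tfrac{\eta^2}{2}(k+1)q_j^2$; a remainder such as $(k+1)p_j^2q_j$ gives $(k+1)p_j^2q_j^2=O((k+1)^{-1/6})p_j^2$, absorbed. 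The trace summands are handled by Cauchy–Schwarz: $|\tr((\bI-\Lbd^{\widetilde{\W}})\Lbd^{\widetilde{\W}})|\le\sqrt d\,\norm{q}_2+\norm{q}_2^2$ while the accompanying matrix ($\C^{\widetilde{\V}}$, etc.) has operator norm $O(1/(d\log^2 d))$, so the summand is $O(\norm{q}_2/(\sqrt d\log^2 d))$; multiplying by $p_j$ and summing, $\sum_j|p_j|\le\sqrt d\,\norm{p}_2$ yields $O(\norm{p}_2\norm{q}_2/\log^2 d)$, absorbed by Young (the $\widetilde{\W}$ trace terms, which carry ${\Lbd^{\widetilde{\V}}}^2\approx\eta^2\bI$ and traces of $p_\ell^2$, $p_\ell$, work identically). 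The only summands not tied to a residual $p_j$ or $q_j$ are $\bE^{\widetilde{\V}}$ and $\F^{\widetilde{\W}}$, of operator norm $O((1-\eta)^k)$; for these, $2\sum_j p_j(\bE^{\widetilde{\V}})_{jj}\le 2\norm{\bE^{\widetilde{\V}}}_{op}\sqrt d\,\norm{p}_2=O(d(1-\eta)^k)$, and likewise for $\F^{\widetilde{\W}}$ — this is $\alpha$. Under $k=\lceil c\log d\rceil$ with $c\log\tfrac1{1-\eta}>2$ we have $\alpha=O(d^{\,1-c\log\frac1{1-\eta}})=o(1)$, which also pins down the lower end $d^{-\frac c2\log\frac1{1-\eta}+\frac12}$ of the admissible range for $\epsilon$ (one needs $\epsilon^2\gtrsim\alpha$). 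This establishes the displayed PL-type inequality.

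Finally, since $\tfrac{\eta^2}{2}(k+1)\ge\tfrac{1}{2\eta(2-\eta)}$ for $d$ large, the inequality gives $\tfrac{\mathrm d}{\mathrm dt}P\le-\beta P+\alpha$ with $\beta:=\tfrac{1}{2\eta(2-\eta)}=\Theta(1)$; Grönwall yields $P(t)\le P(t_0)e^{-\beta(t-t_0)}+\alpha/\beta$. The hypotheses $|q_j|\le2(k+1)^{-7/12}$ and $|p_j|=O((k+1)^{-1/12})$ cut out trapping regions — this is precisely what \Cref{lemma: lambdaW doesn't go far away from gt} and the bound in \Cref{lemma: lambdaV convergence} establish — so they persist for all $t\ge t_0$ and the inequality keeps holding; hence $P(t)\le\epsilon^2$, i.e.\ $\max_j\{|\lambda_j^{\widetilde{\V}}+\eta|,|1-\lambda_j^{\widetilde{\W}}|\}\le\epsilon$, after $t-t_0=O(\log(P(t_0)/\epsilon^2))=O(\log\tfrac1\epsilon)$ further time (the residual $\Theta(\log d)$ coming from $P(t_0)=O(d/\log^{1/6}d)$ being merged with the time already spent in Stage 1). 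The main obstacle is the term-by-term accounting of the previous paragraph: showing that every interaction/remainder contribution is coupled to a residual $p_j$ or $q_j$ (so that it is killed by the dissipation slack) except for the exponentially small $\bE^{\widetilde{\V}},\F^{\widetilde{\W}}$ pieces — this is exactly where the detailed decomposition of $\Delta^{\widetilde{\V}},\Delta^{\widetilde{\W}}$ and the $1/(d\log^2 d)$ scaling of the trace coefficients (which, via Cauchy–Schwarz, produces a genuine residual-times-residual bound rather than a constant) are indispensable.
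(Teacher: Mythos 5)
Your proposal is correct and follows essentially the same mechanism as the paper's proof: form the Lyapunov quantity $\sum_j p_j^2+\sum_j q_j^2$, differentiate, use Young's inequality on the single cross term, and then exploit the structured decomposition of $\Delta^{\widetilde{\V}},\Delta^{\widetilde{\W}}$ from \Cref{lemma: gradients of the reduced model} — namely that every interaction piece except $\bE^{\widetilde{\V}},\F^{\widetilde{\W}}$ carries a residual factor — so that the interaction terms are absorbed into the dissipation and only an $O(d(1-\eta)^k)$ constant $\alpha$ survives. The differences are presentational: you make the linearization about $(-\eta,1)$ explicit (computing the Jacobian exactly, which is a nice sanity check that the ground truth is a fixed point) while the paper keeps the full polynomial form; you choose the Young parameter $c=\eta(2-\eta)$ so that the cross term consumes exactly half the $p_j^2$ dissipation, whereas the paper uses a $\sqrt{k}$-imbalanced split (both work, merely distributing the slack differently between $p_j^2$ and $q_j^2$); and you bound the trace-weighted error terms by Cauchy–Schwarz ($|\sum_\ell q_\ell|\le\sqrt{d}\,\norm{q}_2$) where the paper uses the essentially equivalent Jensen step $\tr^2(\Lbd)\le d\,\tr(\Lbd^2)$ together with $\tr(\A\B)\le\tr(\abs{\A})\,\norm{\B}$. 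Your term-by-term accounting of the Taylor remainders (the $(k+1)q_j^2 p_j$ giving $O((k+1)^{11/12})q_j^2$, the $(k+1)p_j^2q_j$ giving $O((k+1)^{-1/6})p_j^2$, etc.) matches the role the hypotheses $|q_j|\le 2(k+1)^{-7/12}$ and $|p_j|=O((k+1)^{-1/12})$ play implicitly in the paper's Taylor coefficients. The only caveat is that you cite \Cref{lemma: lambdaW doesn't go far away from gt} and the bound in \Cref{lemma: lambdaV convergence} to keep the hypotheses valid as trapping regions for $t\ge t_0$; strictly speaking these lemmas are proved under the Stage-1 conditions, but since the Grönwall estimate shows $P$ is nonincreasing once $P\ge 2\alpha/\beta$, each $|p_j|,|q_j|\le\sqrt{P}$ stays inside its initial region, so the invariance is self-consistent and your argument closes.
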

\begin{proof}
    Consider the error term in \Cref{lemma: gradient components of the reduced model} more carefully, we have
    \begin{align*}
        \frac{\mathrm{d}\lambda_j^{\widetilde{\V}}}{\mathrm{d}t}={}&-\qty[\qty(k+1)\qty(1-\lambda_j^{\widetilde{\W}})^2+\frac{2}{\eta}\lambda_j^{\widetilde{\W}}\qty(1-\lambda_j^{\widetilde{\W}})+\frac{1}{\eta(2-\eta)}{\lambda_j^{\widetilde{\W}}}^2]\qty(\lambda_j^{\widetilde{\V}}+\eta)\\
        &+\eta\qty(k+1)\qty(1-\lambda_j^{\widetilde{\W}})^2+\qty(\frac{3-2\eta}{2-\eta}\lambda_j^{\widetilde{\W}}-1)\qty(1-\lambda_j^{\widetilde{\W}})\\
        &+\qty(\lambda_j^{\widetilde{\V}}+\eta)O\qty(\frac{1}{\log^2 d})+\qty(1-\lambda_j^{\widetilde{\W}})O\qty(\frac{1}{\log^2 d})\\
        &+\tr\qty(\bI-\Lbd^{\widetilde{\W}})O\qty(\frac{1}{d\log^2d})+\tr\qty(\qty(\bI-\Lbd^{\widetilde{\W}})\Lbd^{\widetilde{\W}})O\qty(\frac{1}{d\log^2d})\\
        &+O\qty(\qty(1-\eta)^k)
    \end{align*}
    and
    \begin{align*}
        \frac{\mathrm{d}\lambda_j^{\widetilde{\W}}}{\mathrm{d}t}={}&\qty(k+1-\frac{2}{\eta}+\frac{1}{\eta\qty(2-\eta)}){\lambda_j^{\widetilde{\V}}}^2\qty(1-\lambda_j^{\widetilde{\W}})+\frac{1-\eta}{\eta(2-\eta)}{\lambda_j^{\widetilde{\V}}}\qty(\lambda_j^{\widetilde{\V}}+\eta)\\
        &+\qty(\lambda_j^{\widetilde{\V}}+\eta)O\qty(\frac{1}{\log^2 d})+\qty(1-\lambda_j^{\widetilde{\W}})O\qty(\frac{1}{\log^2 d})\\
        &+\tr\qty(\bI-\Lbd^{\widetilde{\W}})O\qty(\frac{1}{d\log^2d})+\tr\qty((\Lbd^{\widetilde{\V}}+\eta\bI)\Lbd^{\widetilde{\V}})O\qty(\frac{1}{d\log^2 d})\\
        &+\tr\qty((\bI-\Lbd^{\widetilde{\W}}){\Lbd^{\widetilde{\V}}}^2)O\qty(\frac{1}{d\log^2 d})+O\qty(\qty(1-\eta)^k)
    \end{align*}
    Now we consider the decay rate of the distance between $\lambda_j^{\widetilde{\V}}$, $\lambda_j^{\widetilde{\W}}$ and their ground truth.
    \begin{align*}
        &\frac{\mathrm{d}\qty(\lambda_j^{\widetilde{\V}}+\eta)^2}{\mathrm{d}t}+\frac{\mathrm{d}\qty(\lambda_j^{\widetilde{\W}}-1)^2}{\mathrm{d}t}\\
        ={}&-\qty[\qty(k+1)\qty(1-\lambda_j^{\widetilde{\W}})^2+\frac{2}{\eta}\lambda_j^{\widetilde{\W}}\qty(1-\lambda_j^{\widetilde{\W}})+\frac{1}{\eta(2-\eta)}{\lambda_j^{\widetilde{\W}}}^2+O\qty(\frac{1}{\log^2 d})]\qty(\lambda_j^{\widetilde{\V}}+\eta)^2\\
        &+\qty(\frac{3-2\eta}{2-\eta}\lambda_j^{\widetilde{\W}}-\frac{1-\eta}{\eta(2-\eta)}{\lambda_j^{\widetilde{\V}}}-1+O\qty(\frac{1}{\log^2 d}))\qty(1-\lambda_j^{\widetilde{\W}})\qty(\lambda_j^{\widetilde{\V}}+\eta)\\
        &-\qty[\qty(k+1-\frac{2}{\eta}+\frac{1}{\eta\qty(2-\eta)}){\lambda_j^{\widetilde{\V}}}^2-\eta\qty(k+1)\qty(\lambda_j^{\widetilde{\V}}+\eta)+O\qty(\frac{1}{\log^2 d})]\qty(1-\lambda_j^{\widetilde{\W}})^2\\
        &+\qty(\lambda_j^{\widetilde{\V}}+\eta)\tr\qty(\bI-\Lbd^{\widetilde{\W}})O\qty(\frac{1}{d\log^2d})+\qty(\lambda_j^{\widetilde{\V}}+\eta)\tr\qty(\qty(\bI-\Lbd^{\widetilde{\W}})\Lbd^{\widetilde{\W}})O\qty(\frac{1}{d\log^2d})\\
        &+\qty(1-\lambda_j^{\widetilde{\W}})\tr\qty(\bI-\Lbd^{\widetilde{\W}})O\qty(\frac{1}{d\log^2d})+\qty(1-\lambda_j^{\widetilde{\W}})\tr\qty(\qty(\Lbd^{\widetilde{\V}}+\eta\bI)\Lbd^{\widetilde{\V}})O\qty(\frac{1}{d\log^2d})\\
        &+\qty(1-\lambda_j^{\widetilde{\W}})\tr\qty(\qty(\bI-\Lbd^{\widetilde{\W}}){\Lbd^{\widetilde{\V}}}^2)O\qty(\frac{1}{d\log^2d})+O\qty(\qty(1-\eta)^k)\\
        ={}&-\qty[\frac{1}{\eta(2-\eta)}+O\qty(\frac{1}{\log^{\frac{1}{6}}d})]\qty(\lambda_j^{\widetilde{\V}}+\eta)^2\\
        &+\qty(\frac{2\qty(1-\eta)}{2-\eta}+O\qty(\frac{1}{\log^{\frac{1}{12}} d}))\qty(1-\lambda_j^{\widetilde{\W}})\qty(\lambda_j^{\widetilde{\V}}+\eta)\\
        &-\qty[\eta^2\qty(k+1)+O\qty(k^{\frac{11}{12}})]\qty(1-\lambda_j^{\widetilde{\W}})^2\\
        &+\qty(\lambda_j^{\widetilde{\V}}+\eta)\tr\qty(\bI-\Lbd^{\widetilde{\W}})O\qty(\frac{1}{d\log^2d})+\qty(\lambda_j^{\widetilde{\V}}+\eta)\tr\qty(\qty(\bI-\Lbd^{\widetilde{\W}})\Lbd^{\widetilde{\W}})O\qty(\frac{1}{d\log^2d})\\
        &+\qty(1-\lambda_j^{\widetilde{\W}})\tr\qty(\bI-\Lbd^{\widetilde{\W}})O\qty(\frac{1}{d\log^2d})+\qty(1-\lambda_j^{\widetilde{\W}})\tr\qty(\qty(\Lbd^{\widetilde{\V}}+\eta\bI)\Lbd^{\widetilde{\V}})O\qty(\frac{1}{d\log^2d})\\
        &+\qty(1-\lambda_j^{\widetilde{\W}})\tr\qty(\qty(\bI-\Lbd^{\widetilde{\W}}){\Lbd^{\widetilde{\V}}}^2)O\qty(\frac{1}{d\log^2d})+O\qty(\qty(1-\eta)^k)
    \end{align*}
    Utilizing Mean Inequality, we have
    \begin{equation*}
        \abs{\qty(1-\lambda_j^{\widetilde{\W}})\qty(\lambda_j^{\widetilde{\V}}+\eta)}\le\frac{1}{2\sqrt{k}}\qty(\lambda_j^{\widetilde{\V}}+\eta)^2+\frac{\sqrt{k}}{2}\qty(1-\lambda_j^{\widetilde{\W}})^2
    \end{equation*}
    Insert the inequality into the equation and we have
    \begin{align*}
        &\frac{\mathrm{d}\qty(\lambda_j^{\widetilde{\V}}+\eta)^2}{\mathrm{d}t}+\frac{\mathrm{d}\qty(\lambda_j^{\widetilde{\W}}-1)^2}{\mathrm{d}t}\\
        \le{}&-\qty[\frac{1}{\eta(2-\eta)}+O\qty(\frac{1}{\log^{\frac{1}{6}}d})]\qty(\lambda_j^{\widetilde{\V}}+\eta)^2-\qty[\eta^2\qty(k+1)+O\qty(k^{\frac{11}{12}})]\qty(1-\lambda_j^{\widetilde{\W}})^2\\
        &+\qty(\lambda_j^{\widetilde{\V}}+\eta)\tr\qty(\bI-\Lbd^{\widetilde{\W}})O\qty(\frac{1}{d\log^2d})+\qty(\lambda_j^{\widetilde{\V}}+\eta)\tr\qty(\qty(\bI-\Lbd^{\widetilde{\W}})\Lbd^{\widetilde{\W}})O\qty(\frac{1}{d\log^2d})\\
        &+\qty(1-\lambda_j^{\widetilde{\W}})\tr\qty(\bI-\Lbd^{\widetilde{\W}})O\qty(\frac{1}{d\log^2d})+\qty(1-\lambda_j^{\widetilde{\W}})\tr\qty(\qty(\Lbd^{\widetilde{\V}}+\eta\bI)\Lbd^{\widetilde{\V}})O\qty(\frac{1}{d\log^2d})\\
        &+\qty(1-\lambda_j^{\widetilde{\W}})\tr\qty(\qty(\bI-\Lbd^{\widetilde{\W}}){\Lbd^{\widetilde{\V}}}^2)O\qty(\frac{1}{d\log^2d})+O\qty(\qty(1-\eta)^k)
    \end{align*}
    There exist $\alpha_1,\alpha_2,\alpha_3,\alpha_4,\alpha_5=O\qty(\frac{1}{d\log^2 d})\ge 0$ and $\alpha_6=O\qty(\qty(1-\eta)^k)\ge 0$ such that
    \begin{align*}
        \frac{\mathrm{d}\qty(\lambda_j^{\widetilde{\V}}+\eta)^2}{\mathrm{d}t}+\frac{\mathrm{d}\qty(\lambda_j^{\widetilde{\W}}-1)^2}{\mathrm{d}t}
        \le{}&-\qty[\frac{1}{\eta(2-\eta)}+O\qty(\frac{1}{\log^{\frac{1}{6}}d})]\qty(\lambda_j^{\widetilde{\V}}+\eta)^2\\
        &-\qty[\eta^2\qty(k+1)+O\qty(k^{\frac{11}{12}})]\qty(1-\lambda_j^{\widetilde{\W}})^2\\
        &+\alpha_1\abs{\lambda_j^{\widetilde{\V}}+\eta}\tr\qty(\abs{\bI-\Lbd^{\widetilde{\W}}})+\alpha_2\abs{\lambda_j^{\widetilde{\V}}+\eta}\cdot\abs{\tr\qty(\qty(\bI-\Lbd^{\widetilde{\W}})\Lbd^{\widetilde{\W}})}\\
        &+\alpha_3\abs{1-\lambda_j^{\widetilde{\W}}}\tr\qty(\abs{\bI-\Lbd^{\widetilde{\W}}})+\alpha_4\abs{1-\lambda_j^{\widetilde{\W}}}\cdot\abs{\tr\qty(\qty(\Lbd^{\widetilde{\V}}+\eta\bI)\Lbd^{\widetilde{\V}})}\\
        &+\alpha_5\abs{1-\lambda_j^{\widetilde{\W}}}\cdot\abs{\tr\qty(\qty(\bI-\Lbd^{\widetilde{\W}}){\Lbd^{\widetilde{\V}}}^2)}+\alpha_6.
    \end{align*}
    Notice that for diagonal matrices $\A$ and $\B$, we have
    \begin{equation*}
        \tr\qty(\A\B)\le\abs{\tr\qty(\A\B)}=\abs{\sum_ia_{ii}b_{ii}}\le\sum_i\abs{a_{ii}}\abs{b_{ii}}\le\sum_i\abs{a_{ii}}\norm{B}=\tr(\abs{A})\norm{B}
    \end{equation*}
    Plug in the inequality and we have
    \begin{align*}
        &\frac{\mathrm{d}\qty(\lambda_j^{\widetilde{\V}}+\eta)^2}{\mathrm{d}t}+\frac{\mathrm{d}\qty(\lambda_j^{\widetilde{\W}}-1)^2}{\mathrm{d}t}\\
        \le{}&-\qty[\frac{1}{\eta(2-\eta)}+O\qty(\frac{1}{\log^{\frac{1}{6}}d})]\qty(\lambda_j^{\widetilde{\V}}+\eta)^2-\qty[\eta^2\qty(k+1)+O\qty(k^{\frac{11}{12}})]\qty(1-\lambda_j^{\widetilde{\W}})^2\\
        &+\alpha_1\abs{\lambda_j^{\widetilde{\V}}+\eta}\tr\qty(\abs{\bI-\Lbd^{\widetilde{\W}}})+\alpha_2\abs{\lambda_j^{\widetilde{\V}}+\eta}\tr\qty(\abs{\bI-\Lbd^{\widetilde{\W}}})\cdot\norm{\Lbd^{\widetilde{\W}}}\\
        &+\alpha_3\abs{1-\lambda_j^{\widetilde{\W}}}\tr\qty(\abs{\bI-\Lbd^{\widetilde{\W}}})+\alpha_4\abs{1-\lambda_j^{\widetilde{\W}}}\tr\qty(\abs{\Lbd^{\widetilde{\V}}+\eta\bI})\cdot\norm{\Lbd^{\widetilde{\V}}}\\
        &+\alpha_5\abs{1-\lambda_j^{\widetilde{\W}}}\tr\qty(\abs{\bI-\Lbd^{\widetilde{\W}}})\cdot\norm{{\Lbd^{\widetilde{\V}}}^2}+\alpha_6\\
        ={}&-\qty[\frac{1}{\eta(2-\eta)}+O\qty(\frac{1}{\log^{\frac{1}{6}}d})]\qty(\lambda_j^{\widetilde{\V}}+\eta)^2-\qty[\eta^2\qty(k+1)+O\qty(k^{\frac{11}{12}})]\qty(1-\lambda_j^{\widetilde{\W}})^2\\
        &+\qty(\alpha_1+\alpha_2\norm{\Lbd^{\widetilde{\W}}})\cdot\abs{\lambda_j^{\widetilde{\V}}+\eta}\cdot\tr\qty(\abs{\bI-\Lbd^{\widetilde{\W}}})+\alpha_4\norm{\Lbd^{\widetilde{\V}}}\cdot\abs{1-\lambda_j^{\widetilde{\W}}}\cdot\tr\qty(\abs{\Lbd^{\widetilde{\V}}+\eta\bI})\\
        &+\qty(\alpha_3+\alpha_5\norm{{\Lbd^{\widetilde{\V}}}^2})\cdot\abs{1-\lambda_j^{\widetilde{\W}}}\cdot\tr\qty(\abs{\bI-\Lbd^{\widetilde{\W}}})+\alpha_6
    \end{align*}
    Take the sum of both sides separately, we have
    \begin{align*}
        &\frac{\mathrm{d}\tr\qty[\qty(\Lbd^{\widetilde{\V}}+\eta\bI)^2]}{\mathrm{d}t}+\frac{\mathrm{d}\tr\qty[\qty(\bI-\Lbd^{\widetilde{\W}})^2]}{\mathrm{d}t}\\
        \le{}&-\qty[\frac{1}{\eta(2-\eta)}+O\qty(\frac{1}{\log^{\frac{1}{6}}d})]\tr\qty[\qty(\Lbd^{\widetilde{\V}}+\eta\bI)^2]-\qty[\eta^2\qty(k+1)+O\qty(k^{\frac{11}{12}})]\tr\qty[\qty(\bI-\Lbd^{\widetilde{\W}})^2]\\
        &+\qty(\alpha_1+\alpha_2\norm{\Lbd^{\widetilde{\W}}}+\alpha_4\norm{\Lbd^{\widetilde{\V}}})\cdot\tr\qty(\abs{\Lbd^{\widetilde{\V}}+\eta\bI})\cdot\tr\qty(\abs{\bI-\Lbd^{\widetilde{\W}}})\\
        &+\qty(\alpha_3+\alpha_5\norm{{\Lbd^{\widetilde{\V}}}^2})\cdot\tr^2\qty(\abs{\bI-\Lbd^{\widetilde{\W}}})+\alpha_6
    \end{align*}
    From Jensen's Inequality with $f(x)=x^2$, we have
    \begin{equation*}
        \qty(\frac{\sum_{i=1}^d\lambda_i}{d})^2\le\frac{\sum_{i=1}^d\lambda_i^2}{d}.
    \end{equation*}
    Therefore, it holds for diagonal matrix $\Lbd\in\R^{d\times d}$ that
    \begin{equation*}
        \tr^2\qty(\Lbd)\le d\tr\qty(\Lbd^2)
    \end{equation*}
    Plug in the inequality and we have
    \begin{align*}
        &\frac{\mathrm{d}\tr\qty[\qty(\Lbd^{\widetilde{\V}}+\eta\bI)^2]}{\mathrm{d}t}+\frac{\mathrm{d}\tr\qty[\qty(\bI-\Lbd^{\widetilde{\W}})^2]}{\mathrm{d}t}\\
        \le&-\frac{1}{2\eta(2-\eta)}\tr\qty[\qty(\Lbd^{\widetilde{\V}}+\eta\bI)^2]-\frac{\eta^2}{2}\qty(k+1)\tr\qty[\qty(\bI-\Lbd^{\widetilde{\W}})^2]+\alpha_6
    \end{align*}
    Because $k=\lceil c\log d\rceil$, we have 
    $O\qty(d\qty(1-\eta)^k)=d^{-c\log\qty(\frac{1}{1-\eta})+1}$. So in $O\qty(\log\frac{1}{\epsilon})$ time, $\abs{\lambda_j^{\widetilde{\V}}+\eta}$ and $\abs{1-\lambda_j^{\widetilde{\V}}}$ converge to $\epsilon\in\qty(\Theta\qty(d^{\frac{c}{2}\log\qty(1-\eta)+\frac12}),1)$.
\end{proof}

\begin{lemma}
    Suppose $\delta\in\qty(\Theta\qty(d^{\frac{c}{2}\log\qty(1-\eta)+\frac12}),1)$ and there exist diagonal matrices $\A$ and $\B$ satisfying $\norm{\A}_{op}\le\Theta\qty(1)$ and $\norm{\B}_{op}\le\Theta\qty(1)$ such that
    \begin{equation*}
        \Lbd^{\widetilde{\V}}=-\eta\bI+\delta\cdot\A\qquad\Lbd^{\widetilde{\W}}=\bI+\delta\cdot\B,
    \end{equation*}
    then it holds that
    \begin{equation*}
        \mathcal{L}^{\mathrm{CoT}}(\mbf{\theta})={}O\qty(\delta^2d\log d).
    \end{equation*}
    \label{lemma: cot loss bound}
\end{lemma}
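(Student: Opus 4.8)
The plan is to substitute the near-optimal parameterization into the reduced CoT loss of \Cref{lemma: simplified model loss} and show every summand is $O(\delta^2 d)$. Since $\widetilde{\V},\widetilde{\W}$ are simultaneously diagonalized by $\U$ and $\U\bI\U^\top=\bI$, the hypothesis gives $\widetilde{\V} = -\eta\bI + \delta\,\U\A\U^\top$ and $\widetilde{\W} = \bI + \delta\,\U\B\U^\top$ with $\norm{\U\A\U^\top}_{op}, \norm{\U\B\U^\top}_{op} = O(1)$. The crucial point is that the exact minimizer $(-\eta\bI,\bI)$ annihilates the $\delta^0$-part of each residual: $(-\eta\bI)\S(\bI)+\eta\S = \mbf{0}$ and $(-\eta\bI+\eta\bI)\S = \mbf{0}$. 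Hence, expanding $\widetilde{\V}\S\widetilde{\W}+\eta\S = \delta\qty(\U\A\U^\top\S - \eta\,\S\U\B\U^\top) + \delta^2\,\U\A\U^\top\S\U\B\U^\top$ and $(\widetilde{\V}+\eta\bI)\S = \delta\,\U\A\U^\top\S$, and using $\w_i = \qty(\bI-(\bI-\eta\S)^i)\w^*$ with $\w_i-\w^* = -(\bI-\eta\S)^i\w^*$, the $i$-th residual for $0\le i\le k-1$ takes the form $R_i\w^*$ with
\begin{align*}
R_i ={}& -\delta\,\U\A\U^\top\S(\bI-\eta\S)^i - \delta\eta\,\S\U\B\U^\top\qty(\bI-(\bI-\eta\S)^i)\\
&+ \delta^2\,\U\A\U^\top\S\U\B\U^\top\qty(\bI-(\bI-\eta\S)^i),
\end{align*}
while the $i=k$ residual equals $-(\bI-\eta\S)^{k+1}\w^* + R_k\w^*$ for an $R_k$ of the same $O(\delta)$-type form.

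Next I would take the expectation over $\w^*\sim\mathcal{N}(0,\bI)$, converting $\E_{\w^*}\norm{R_i\w^*}^2$ into $\norm{R_i}_F^2 \le d\,\norm{R_i}_{op}^2$, and then average over $\X$. On the event $\mathcal{E} := \{\norm{\S-\bI}_{op}\le \tfrac12\}$ — which by the operator-norm concentration of the Wishart matrix (\Cref{appendix: supplementary lemmas}) has probability $1-\exp(-\Omega(n))$ when $n=\Theta(d\log^5 d)$ — we have $\norm{\S}_{op}\le \tfrac32$, $\norm{\bI-\eta\S}_{op} \le (1-\eta)+\tfrac{\eta}{2} < 1$, hence $\norm{(\bI-\eta\S)^i}_{op}\le 1$ and $\norm{\bI-(\bI-\eta\S)^i}_{op}\le 2$ for all $i$. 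Plugging these bounds together with $\norm{\U\A\U^\top}_{op},\norm{\U\B\U^\top}_{op}=O(1)$ and $\delta<1$ into $R_i$ gives $\norm{R_i}_{op} = O(\delta)$, so $\norm{R_i}_F^2 = O(\delta^2 d)$ on $\mathcal{E}$. On $\mathcal{E}^c$ I would bound $\norm{R_i}_{op} \le \delta\,\poly(\norm{\S}_{op})$ with a polynomial of degree $O(k)=O(\log d)$ and control $\E_\X\qty[\norm{R_i}_F^2\,\indi\{\mathcal{E}^c\}]$ by Cauchy--Schwarz together with the facts that the polylog-degree moments $\E_\X\qty[\norm{\S}_{op}^{O(\log d)}]$ are $O(1)$ and $\Pr[\mathcal{E}^c] = \exp(-\Omega(n))$, making this bad-event contribution $\delta\cdot\exp(-\Omega(n)) = o(\delta^2 d)$ since $\delta\ge 1/\poly(d)$. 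Thus $\E_{\X,\w^*}\norm{r_i}^2 = O(\delta^2 d)$ for $0\le i\le k-1$, and for $i=k$ the extra term contributes $\E_\X\tr\qty((\bI-\eta\S)^{2k+2}) = O\qty(d(1-\eta)^k)$ by \Cref{lemma: concentration 4}.

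Summing the $k+1 = \Theta(\log d)$ contributions yields $\mathcal{L}^{\mathrm{CoT}}(\mbf{\theta}) = O(\delta^2 d\log d) + O\qty(d(1-\eta)^k)$. Since $k = \lceil c\log d\rceil$ we have $d(1-\eta)^k \le d^{1+c\log(1-\eta)}$, while the hypothesis $\delta > \Theta\qty(d^{\frac c2\log(1-\eta)+\frac12})$ gives $\delta^2 d \ge \Theta\qty(d^{2+c\log(1-\eta)}) \ge d^{1+c\log(1-\eta)} \ge d(1-\eta)^k$; hence the $O\qty(d(1-\eta)^k)$ term is absorbed into $O(\delta^2 d)$ and the claimed bound $\mathcal{L}^{\mathrm{CoT}}(\mbf{\theta}) = O(\delta^2 d\log d)$ follows.

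The step I expect to be the main obstacle is the $\mathcal{E}^c$ estimate: off the concentration event the matrices $(\bI-\eta\S)^i$ and the triple products can have operator norm as large as $(\eta\norm{\S}_{op})^{\Theta(\log d)}$, so one must argue that the super-exponential decay of the Wishart operator-norm tail at the scale $n=\Theta(d\log^5 d)$ genuinely dominates this polylog-degree polynomial growth. This is precisely the threshold-splitting and tail-integration device already used throughout \Cref{appendix: supplementary lemmas}, so it should go through with care; everything else is a routine expansion and triangle-inequality bookkeeping.
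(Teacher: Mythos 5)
Your proposal is correct and takes essentially the same route as the paper's proof: both substitute $\widetilde{\V}=-\eta\bI+\delta\,\U\A\U^\top$, $\widetilde{\W}=\bI+\delta\,\U\B\U^\top$ into the reduced CoT loss (\Cref{eqn: reduced CoT loss}), exploit that the $\delta^0$-parts of the residuals cancel exactly, use concentration of the Wishart matrix $\S$ around $\bI$, and sum $k+1=\Theta(\log d)$ contributions of size $O(\delta^2 d)$, while absorbing the $d(1-\eta)^{2k+2}$ term from the $i=k$ residual into $O(\delta^2 d)$ via the lower bound on $\delta$. The only structural difference is that the paper expands each summand into traces and invokes the pre-computed expectation formulas (\Cref{technical lemma: concentration 2}) whose $O(1)$ operator-norm bound was already established by the threshold-splitting device, whereas you bound $\norm{R_i}_F^2 \le d\,\norm{R_i}_{op}^2$ directly and redo the event-split ($\norm{\dS}\le 1/2$ vs.\ tail) inline. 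Both paths rest on the same Wishart tail bound, so your version is a mildly more self-contained restatement of the same argument; you are right that the only step needing care is the $\mathcal{E}^c$ integral, and the tail-integration machinery of \Cref{lemma: concentration 1} is exactly what controls the $\poly(\norm{\S}_{op})^{O(\log d)}$ growth there.
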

\begin{proof}
    Now we consider the CoT loss given by \Cref{lemma: simplified model loss}
    \begin{align*}
        \mathcal{L}^{\mathrm{CoT}}(\mbf{\theta})={}&\frac{1}{2}\E_{\X,\w^*}\sum\limits_{i=0}^{k-1}\norm{(\widetilde{\V}\S\widetilde{\W}+\eta\S)\w_i-(\widetilde{\V}+\eta\bI)\S\w^*}_2^2\\
        &+\frac{1}{2}\E_{\X,\w^*}\norm{(\bI+\widetilde{\V}\S\widetilde{\W})\w_k-(\widetilde{\V}\S+\bI)\w^*}_2^2.
    \end{align*}
    Plug in the expression of $\Lbd^{\widetilde{\V}}$ and $\Lbd^{\widetilde{\W}}$, we get
    \begin{align}
        \mathcal{L}^{\mathrm{CoT}}(\mbf{\theta})&={} \frac{\delta^2}{2}\E\sum\limits_{i=0}^{k-1}\norm{(\A\S-\eta\S\B+\delta\A\S\B)\qty(\bI-\qty(\bI-\eta\S)^i)-\A\S}_F^2\label{eq: cot loss at near optimal1}\\
        &\ +\frac{1}{2}\E\norm{-\qty(\bI-\eta\S)^{k}+\Lbd^{\widetilde{\V}}\S\qty[-\qty(\bI-\eta\S)^k+\delta\B\qty(\bI-\qty(\bI-\eta\S)^k)]}_F^2.
        \label{eq: cot loss at near optimal2}
    \end{align}
    We first consider the term in the summation:
    \begin{align*}
        &\E\norm{(\A\S-\eta\S\B+\delta\A\S\B)\qty(\bI-\qty(\bI-\eta\S)^i)-\A\S}_F^2\\
        ={}&\E\norm{(-\eta\S\B+\delta\A\S\B)\qty(\bI-\qty(\bI-\eta\S)^i)-\A\S\qty(\bI-\eta\S)^i}_F^2\\
        ={}&\tr\E\qty[(-\eta\S\B+\delta\A\S\B)\qty(\bI-\qty(\bI-\eta\S)^i)^2\qty(-\eta\B\S+\delta\B\S\A)]\\
        &-2\tr\E\qty[(-\eta\S\B+\delta\A\S\B)\qty(\bI-\qty(\bI-\eta\S)^i)\qty(\bI-\eta\S)^i\S\A]+\tr\E\qty[\A\S\qty(\bI-\eta\S)^{2i}\S\A]\\
        ={}&\tr\qty(\qty(-\eta\bI+\delta\A)\E\qty[\S\B\qty(\bI-\qty(\bI-\eta\S)^i)^2\B\S]\qty(-\eta\bI+\delta\A))\tag{Term 1}\\
        &-2\tr\qty(\qty(-\eta\bI+\delta\A)\E\qty[\S\B\qty(\bI-\qty(\bI-\eta\S)^i)\qty(\bI-\eta\S)^i\S]\A)\tag{Term 2}\\
        &+\tr\qty(\A\E\qty[\S\qty(\bI-\eta\S)^{2i}\S]\A)\tag{Term 3}
    \end{align*}
    Apple \Cref{technical lemma: concentration 2} to the expectation in Term 1, we have
    \begin{align*}
    &\E\qty[\S\B\qty(\bI-\qty(\bI-\eta\S)^i)^2\B\S]\\
    ={}&\qty(1-\qty(1-\eta)^i)^2\B^2+O\qty(\frac{1}{\log^3 d})\qty[\B^2+O\qty(\frac{1}{d})\tr\qty(\B)\B+O\qty(\frac{1}{d})\tr\qty(\B^2)\bI+O\qty(\frac{1}{d^2})\tr^2\qty(\B)\bI].
    \end{align*}
    It is obvious that
    \begin{equation*}
        \norm{\E\qty[\S\B\qty(\bI-\qty(\bI-\eta\S)^i)^2\B\S]}_{op}\le\Theta\qty(1).
    \end{equation*}
    Therefore, for Term 1 we have
    \begin{align*}
        &\tr\qty(\qty(-\eta\bI+\delta\A)\E\qty[\S\B\qty(\bI-\qty(\bI-\eta\S)^i)^2\B\S]\qty(-\eta\bI+\delta\A))\\
        \le{}&d\norm{-\eta\bI+\delta\A}^2_{op}\cdot\norm{\E\qty[\S\B\qty(\bI-\qty(\bI-\eta\S)^i)^2\B\S]}_{op}\le{}O\qty(d).\tag{all matrices in the inequality are diagonal matrices.}
    \end{align*}
    Similarly, for Term 2 and Term 3, we have
    \begin{alignat*}{3}
        &\abs{\tr\qty(\qty(-\eta\bI+\delta\A)\E\qty[\S\B\qty(\bI-\qty(\bI-\eta\S)^i)\qty(\bI-\eta\S)^i\S]\A)}&&\le{}&&O\qty(d)\\
        &\tr\qty(\A\E\qty[\S\qty(\bI-\eta\S)^{2i}\S]\A)&&\le{}&&O\qty(d).
    \end{alignat*}
    Add Term 1, 2, 3 together and we have
    \begin{equation*}
        \E\norm{(\A\S-\eta\S\B+\delta\A\S\B)\qty(\bI-\qty(\bI-\eta\S)^i)-\A\S}_F^2\le O\qty(d).
    \end{equation*}
    We then consider the second term in \Cref{eq: cot loss at near optimal2}:
    \begin{align*}
        &\E\norm{-\qty(\bI-\eta\S)^{k}+\Lbd^{\widetilde{\V}}\S\qty[-\qty(\bI-\eta\S)^k+\delta\B\qty(\bI-\qty(\bI-\eta\S)^k)]}_F^2\\
        ={}&\E\norm{-\qty(\bI+\Lbd^{\widetilde{\V}}\S)\qty(\bI-\eta\S)^{k}+\delta\Lbd^{\widetilde{\V}}\S\B\qty(\bI-\qty(\bI-\eta\S)^k)}_F^2\\
        ={}&\tr\qty(\E\qty[\qty(\bI+\Lbd^{\widetilde{\V}}\S)\qty(\bI-\eta\S)^{2k}\qty(\bI+\S\Lbd^{\widetilde{\V}})])\\
        &-2\delta\tr\qty(\E\qty[\qty(\bI+\Lbd^{\widetilde{\V}}\S)\qty(\bI-\eta\S)^k\qty(\bI-\qty(\bI-\eta\S)^k)\B\S]\Lbd^{\widetilde{\V}})\\
        &+\delta^2\tr\qty(\Lbd^{\widetilde{\V}}\E\qty[\S\B\qty(\bI-\qty(\bI-\eta\S)^k)^2\B\S]\Lbd^{\widetilde{\V}})\\
        \le{}&O\qty(\delta^2d).\tag{$\qty(1-\eta)^k\le\delta$}
    \end{align*}
    Recall the CoT loss in \Cref{eq: cot loss at near optimal1} and \Cref{eq: cot loss at near optimal2}. By the analysis above, we directly obtain that
    \begin{equation*}
        \mathcal{L}^{\mathrm{CoT}}(\mbf{\theta})={}O\qty(\delta^2d\log d).
    \end{equation*}
    Hence, the proof is complete.
\end{proof}

\begin{lemma}
    Suppose $\S = \frac{1}{n}\sum_{i=1}^n\x_i\x_i^\top$, $n = \Theta(d\log^5 d), k = O(\log d), \eta=\Theta(1)\in(0.1,0.9), \|\Lbd\|_{op}\leq \Theta(1), \|\bGamma\|_{op}\leq \Theta(1)$. Then the expectation
    \begin{align*}
        \E\qty[\S \Lbd\qty(\bI-\qty(\bI-\eta \S)^k)\bGamma \S] = \qty(1-\qty(1-\eta)^k) \Lbd\bGamma + \Delta,
    \end{align*}
    where $\|\Delta\|_{op}=O(\frac{k^2d}{n})\leq O\qty(\frac{1}{\log^3 d})$. Moreover, the error is in the form $$\Delta=\alpha_1\Lbd\bGamma+\alpha_2\trace(\Lbd)\bGamma+\alpha_3\trace(\bGamma)\Lbd+\alpha_4\trace(\Lbd)\trace(\bGamma)\bI+\alpha_5\trace(\Lbd\bGamma)\bI$$
    where $\alpha_1=O\qty(\frac{k^2d}{n}),\alpha_2,\alpha_3,\alpha_5=O\qty(\frac{k^2}{n}),\alpha_4=O(\frac{k^2}{nd}).$
    \label{technical lemma: concentration 1}
\end{lemma}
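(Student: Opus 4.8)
Write $\dS := \S - \bI$, so $\E[\dS] = \vzero$, and recall from the concentration lemmas in \Cref{appendix: supplementary lemmas} the sub-Gaussian tail $\Pr[\norm{\dS}_{op} \ge C(\sqrt{d/n} + t/\sqrt n)] \le 2e^{-t^2}$; since $n = \Theta(d\log^5 d)$ this gives $\norm{\dS}_{op} = O(\log^{-5/2} d)$ with overwhelming probability. The first step is a purely structural reduction that avoids any computation. The map $(\Lbd,\bGamma) \mapsto \E[\S\Lbd(\bI - (\bI-\eta\S)^k)\bGamma\S]$ is \emph{bilinear} in the two diagonal matrices, and it is equivariant under conjugation by any signed permutation matrix $\bP$, because $\bP\S\bP^\top$ has the same distribution as $\S$, conjugation of a diagonal matrix by $\bP$ is again diagonal, and $(\bI - \eta\bP\S\bP^\top)^k = \bP(\bI-\eta\S)^k\bP^\top$. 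Invariance under diagonal sign matrices forces the output to be diagonal; $S_d$-equivariance of its diagonal, viewed as a bilinear map $\R^d \times \R^d \to \R^d$, means it is supported on the five orbit types of index triples $(i,a,b)$. Hence, for scalars $\beta_i = \beta_i(n,d,k,\eta)$,
\begin{equation*}
\E\qty[\S\Lbd(\bI-(\bI-\eta\S)^k)\bGamma\S] = \beta_1\Lbd\bGamma + \beta_2\trace(\Lbd)\bGamma + \beta_3\trace(\bGamma)\Lbd + \beta_4\trace(\Lbd)\trace(\bGamma)\bI + \beta_5\trace(\Lbd\bGamma)\bI .
\end{equation*}
Moreover each $\beta_i$ is a fixed bounded linear functional of this matrix evaluated at rank-$\le 2$ test matrices (e.g. $\beta_1$ is $[\,\cdot\,]_{11} - [\,\cdot\,]_{33}$ at $\Lbd = \bGamma = \ve_1\ve_1^\top + \ve_2\ve_2^\top$, $\beta_4 = [\,\cdot\,]_{33}$ at $\Lbd = \ve_1\ve_1^\top,\bGamma = \ve_2\ve_2^\top$, etc.), so any uniform operator-norm bound on the matrix transfers to each $\beta_i$. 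It remains to show $\beta_1 = (1-(1-\eta)^k) + O(k^2 d/n)$, $\beta_2,\beta_3,\beta_5 = O(k^2/n)$, and $\beta_4 = O(k^2/(nd))$; then $\Delta$, defined as the right-hand side with $\beta_1$ shifted by $-(1-(1-\eta)^k)$, automatically has the stated form and size.

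For the leading behaviour I would expand in $\dS$ rather than $\S$, to keep the $(1-\eta)^{k-j}$ suppression: using $\bI - \eta\S = (1-\eta)\bI - \eta\dS$,
\begin{equation*}
\bI - (\bI-\eta\S)^k = (1-(1-\eta)^k)\bI - \sum_{j=1}^{k}\binom kj (1-\eta)^{k-j}(-\eta)^j\,\dS^{\,j},
\end{equation*}
substitute into $\E[(\bI+\dS)\Lbd(\cdot)\bGamma(\bI+\dS)]$, and evaluate the pieces that are quadratic or lower in $\dS$ exactly via the Wick identity $\E[\dS\,M\,\dS] = \tfrac1n(\trace(M)\bI + M^{\top})$ (and $\E[\dS] = \vzero$). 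The $j=0$ piece gives $(1-(1-\eta)^k)\Lbd\bGamma$ as the main term plus a $\tfrac{1-(1-\eta)^k}{n}(\Lbd\bGamma + \trace(\Lbd\bGamma)\bI)$ correction; the $j=1$ piece has vanishing leading term and, through $\E[\dS\Lbd\dS\bGamma]$ and $\E[\Lbd\dS\bGamma\dS]$, contributes $O(k/n)$ to $\beta_1,\beta_2,\beta_3$; the $j=2$ piece, through $\E[\Lbd\dS^2\bGamma] = \tfrac{d+1}{n}\Lbd\bGamma$, contributes a term of size $\binom k2 \eta^2(1-\eta)^{k-2}\tfrac{d+1}{n} = O(k^2 d/n)$ to $\beta_1$ — the dominant error — plus lower-order pieces feeding $\beta_2,\dots,\beta_5$ (the coefficient $\beta_4$ of $\trace(\Lbd)\trace(\bGamma)\bI$ gains an extra $1/d$ because it is only produced by fourth-and-higher Gaussian moments, which carry two factors of $1/n$ against a trace prefactor of size $\le d$).

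The remaining work is to control the tail of the expansion: all $j \ge 3$ terms and the third-and-higher moments of $\dS$ left over from $j = 1, 2$. Here I would use the truncation argument of \Cref{technique lemma: loop 1}: for a remainder $T(\dS)$, bound $\norm{\E[T(\dS)]}_{op} \le \E[\norm{T(\dS)}_{op}\indi\{\norm{\dS}\le\tau\}] + \E[\norm{T(\dS)}_{op}\indi\{\norm{\dS}>\tau\}]$ with $\tau$ a small polylog multiple of $\sqrt{d/n}$; on $\{\norm{\dS}\le\tau\}$ the geometric-type sum $\sum_{j\ge 3}\binom kj \eta^j\tau^j \le e^{k\eta\tau} - 1 - k\eta\tau - \binom k2(\eta\tau)^2 = O((k\tau)^3) = O(1/\log^3 d)$ dominates, while on $\{\norm{\dS}>\tau\}$ one integrates the exponential tail against the polynomial (in $\norm{\dS}$ and in $d$) bound on $\norm{T}$, which is negligible for suitably chosen $\tau$. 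Combining, $\norm{\Delta}_{op} \le |\beta_1 - (1-(1-\eta)^k)|\,\norm{\Lbd\bGamma}_{op} + |\beta_2|\,d\norm{\Lbd}_{op}\norm{\bGamma}_{op} + \cdots = O(k^2 d/n)$, and $n = \Theta(d\log^5 d)$, $k = O(\log d)$ give $O(k^2 d/n) = O(1/\log^3 d)$. The main obstacle is precisely this last step: the truncation must be balanced so that the \emph{products} of the polynomially growing operator-norm bounds on $\dS^{\,j}$ with the dimension-dependent trace factors (which can be as large as $\trace(\dS^2) \sim d$ per pair of $\dS$'s) are still killed by the sub-Gaussian tail — the delicate choice of threshold carried out in \Cref{technique lemma: loop 1} — together with the bookkeeping confirming that, after the symmetry reduction, every leftover contributes to the five coefficients only at the advertised orders.
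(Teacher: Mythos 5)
Your proposal is correct in substance, but it takes a genuinely different and considerably more elaborate route than the paper does for this particular lemma. The paper's own proof of \Cref{technical lemma: concentration 1} is a one-liner: write $\bI - (\bI-\eta\S)^k$ as $\bI - (\bI-\eta\S)^k$, split $\E[\S\Lbd(\bI-(\bI-\eta\S)^k)\bGamma\S] = \E[\S\Lbd\bGamma\S] - \E[\S\Lbd(\bI-\eta\S)^k\bGamma\S]$, and apply \Cref{lemma: concentration 2} (and its $k=0$ specialization) to the two pieces, subtracting. You instead redo the full $\dS$-expansion and tail-truncation directly on the combined factor; this is essentially reproducing the argument of \Cref{lemma: concentration 2} rather than invoking it, which is fine but longer. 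Where you genuinely diverge from the paper is the structural reduction: the paper derives the five-term ansatz for $\Delta$ via Isserlis' theorem and the cycle-counting combinatorics of \Cref{lemma: basic expectation form} and \Cref{lemma: expectation form}, whereas you get it by orthogonal/signed-permutation equivariance of $\S$'s distribution plus the classification of $S_d$-equivariant bilinear maps $\R^d\times\R^d\to\R^d$. Your symmetry argument is valid (for $d\ge 3$) and arguably cleaner, and the observation that each $\beta_i$ is a bounded linear functional of the output, so operator-norm bounds transfer, is a nice device. Two small points to tighten: (1) your extraction formula for $\beta_1$ is off --- at $\Lbd=\bGamma=\ve_1\ve_1^\top+\ve_2\ve_2^\top$ the difference $[\Delta]_{11}-[\Delta]_{33}$ equals $\beta_1+2\beta_2+2\beta_3$, not $\beta_1$; you need to combine tests at $\Lbd=\bGamma=\ve_1\ve_1^\top$ and $\Lbd=\ve_1\ve_1^\top,\bGamma=\ve_2\ve_2^\top$ and solve the resulting linear system. (2) The uniform operator-norm bound only gives $|\beta_i|=O(k^2 d/n)$ for every $i$; the refined gradation $\alpha_2,\alpha_3,\alpha_5=O(k^2/n)$, $\alpha_4=O(k^2/(nd))$ really does need the moment-counting you only sketch --- but the paper's proof of \Cref{lemma: concentration 2} treats this step at the same level of terseness (``Finally by \Cref{lemma: expectation form}\ldots Therefore\ldots''), so you are not below the bar the paper itself sets.
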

\begin{proof}
    We can directly get the lemma by applying \Cref{lemma: concentration 2} to $\E\qty[\S\Lbd\bGamma\S]$, $\E\qty[\S\Lbd\qty(\bI-\eta\S)^k\bGamma\S]$.
\end{proof}

\begin{lemma}
    Suppose $\S = \frac{1}{n}\sum_{i=1}^n\x_i\x_i^\top$, $n = \Theta(d\log^5 d), k = O(\log d), \eta=\Theta(1)\in(0.1,0.9), \|\Lbd\|_{op}\leq \Theta(1), \|\bGamma\|_{op}\leq \Theta(1)$. Then the expectation
    \begin{align*}
        \E\qty[\S \Lbd\qty(\bI-\qty(\bI-\eta \S)^k)^2\bGamma \S] = \qty(1-\qty(1-\eta)^k)^2 \Lbd\bGamma + \Delta,
    \end{align*}
    where $\|\Delta\|_{op}=O(\frac{k^2d}{n})\leq O\qty(\frac{1}{\log^3 d})$. Moreover, the error is in the form $$\Delta=\alpha_1\Lbd\bGamma+\alpha_2\trace(\Lbd)\bGamma+\alpha_3\trace(\bGamma)\Lbd+\alpha_4\trace(\Lbd)\trace(\bGamma)\bI+\alpha_5\trace(\Lbd\bGamma)\bI$$
    where $\alpha_1=O\qty(\frac{k^2d}{n}),\alpha_2,\alpha_3,\alpha_5=O\qty(\frac{k^2}{n}),\alpha_4=O(\frac{k^2}{nd}).$
    \label{technical lemma: concentration 2}
\end{lemma}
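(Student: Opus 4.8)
The strategy is to parallel the proof of \Cref{technical lemma: concentration 1}. Since $\bI$ commutes with $\qty(\bI-\eta\S)^k$, we have the exact identity $\qty(\bI-\qty(\bI-\eta\S)^k)^2=\bI-2\qty(\bI-\eta\S)^k+\qty(\bI-\eta\S)^{2k}$, so by linearity of expectation
\begin{equation*}
\E\qty[\S\Lbd\qty(\bI-\qty(\bI-\eta\S)^k)^2\bGamma\S]=\E\qty[\S\Lbd\bGamma\S]-2\,\E\qty[\S\Lbd\qty(\bI-\eta\S)^k\bGamma\S]+\E\qty[\S\Lbd\qty(\bI-\eta\S)^{2k}\bGamma\S].
\end{equation*}
First I would apply the base single-power concentration estimate \Cref{lemma: concentration 2} (the same lemma invoked in the proof of \Cref{technical lemma: concentration 1}) to each of the three terms, with exponent $m\in\{0,k,2k\}$; this is legitimate since $2k=O(\log d)$ still satisfies the hypotheses. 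Each application yields $\E\qty[\S\Lbd\qty(\bI-\eta\S)^m\bGamma\S]=(1-\eta)^m\,\Lbd\bGamma+\Delta_m$, where $\Delta_m$ has the structural form $\alpha_1^{(m)}\Lbd\bGamma+\alpha_2^{(m)}\tr\qty(\Lbd)\bGamma+\alpha_3^{(m)}\tr\qty(\bGamma)\Lbd+\alpha_4^{(m)}\tr\qty(\Lbd)\tr\qty(\bGamma)\bI+\alpha_5^{(m)}\tr\qty(\Lbd\bGamma)\bI$ with $\alpha_1^{(m)}=O\qty(\tfrac{(1+m)^2 d}{n})$, $\alpha_2^{(m)},\alpha_3^{(m)},\alpha_5^{(m)}=O\qty(\tfrac{(1+m)^2}{n})$, and $\alpha_4^{(m)}=O\qty(\tfrac{(1+m)^2}{nd})$.

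Then I would recombine. The leading terms sum to $\qty(1-2(1-\eta)^k+(1-\eta)^{2k})\Lbd\bGamma=\qty(1-(1-\eta)^k)^2\Lbd\bGamma$, which is exactly the claimed main term. Setting $\Delta:=\Delta_0-2\Delta_k+\Delta_{2k}$, each of the five scalar ``channels'' ($\Lbd\bGamma$, $\tr(\Lbd)\bGamma$, and so on) is closed under addition, so $\Delta$ inherits the same structural form, with the coefficient in each channel being the corresponding $\{1,-2,1\}$-weighted combination of the $\alpha^{(m)}$. Since $k=O(\log d)$ we have $\max\{1,(1+k)^2,(1+2k)^2\}=O(k^2)$, hence $\alpha_1=O(k^2 d/n)$, $\alpha_2,\alpha_3,\alpha_5=O(k^2/n)$, $\alpha_4=O(k^2/(nd))$; and with $n=\Theta(d\log^5 d)$ this gives $\norm{\Delta}_{op}\le O(k^2 d/n)\le O(1/\log^3 d)$, as required.

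The only delicate point — and it is already absorbed into \Cref{lemma: concentration 2}, so it does not resurface here — is that for the $\qty(\bI-\eta\S)^{2k}$ term one must not crudely bound $\norm{\qty(\bI-\eta\S)^{2k}}_{op}$ on the event that $\norm{\dS}_{op}=\norm{\S-\bI}_{op}$ is large (where $\norm{\bI-\eta\S}_{op}$ may exceed $1$), but instead integrate the exponential tail bound for $\norm{\dS}_{op}$, which preserves the $(1-\eta)^{2k}$ decay and keeps the error at $O(k^2 d/n)$ rather than letting it blow up exponentially in $k$. Granting \Cref{lemma: concentration 2}, the remainder is pure bookkeeping; the main obstacle is just checking that the five coefficient channels retain the claimed orders under the linear combination, which is immediate from the $m\mapsto 2k$ instance of the same single-power estimate.
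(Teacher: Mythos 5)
Your proof matches the paper's almost verbatim: the paper likewise expands $\qty(\bI-\qty(\bI-\eta\S)^k)^2 = \bI - 2\qty(\bI-\eta\S)^k + \qty(\bI-\eta\S)^{2k}$ and applies Lemma~\ref{lemma: concentration 2} to the three resulting terms with exponents $0$, $k$, $2k$, then recombines. Your extra care with the $m=0$ channel (writing $(1+m)^2$) and the closing remark on tail integration are harmless elaborations of the same argument.
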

\begin{proof}
    We can directly get the lemma by applying \Cref{lemma: concentration 2} to $\E\qty[\S\Lbd\bGamma\S]$,  $\E\qty[\S\Lbd\qty(\bI-\eta\S)^k\bGamma\S]$ and $\E\qty[\S\Lbd\qty(\bI-\eta\S)^{2k}\bGamma\S]$.
\end{proof}

\begin{lemma}
    Suppose $\S = \frac{1}{n}\sum_{i=1}^n\x_i\x_i^\top$, $n = \Theta(d\log^5 d), k = O(\log d), \eta=\Theta(1)\in(0.1,0.9), \|\Lbd\|_{op}\leq \Theta(1), \|\bGamma\|_{op}\leq \Theta(1)$. Then the expectation
    \begin{align*}
        \E\qty[\S \Lbd\S\bGamma\qty(\bI-\qty(\bI-\eta \S)^k)] = \qty(1-\qty(1-\eta)^k)\Lbd\bGamma + \Delta,
    \end{align*}
    where $\|\Delta\|_{op}=O(\frac{k^2d}{n})\leq O\qty(\frac{1}{\log^3 d})$. Moreover, the error is in the form $$\Delta=\alpha_1\Lbd\bGamma+\alpha_2\trace(\Lbd)\bGamma+\alpha_3\trace(\bGamma)\Lbd+\alpha_4\trace(\Lbd)\trace(\bGamma)\bI+\alpha_5\trace(\Lbd\bGamma)\bI$$
    where $\alpha_1=O\qty(\frac{k^2d}{n}),\alpha_2,\alpha_3,\alpha_5=O\qty(\frac{k^2}{n}),\alpha_4=O(\frac{k^2}{nd}).$
    \label{technical lemma: concentration 3}
\end{lemma}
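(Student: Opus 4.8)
The plan is to reduce \Cref{technical lemma: concentration 3} to the master Wishart-moment estimate \Cref{lemma: concentration 2}, exactly as in the proofs of \Cref{technical lemma: concentration 1,technical lemma: concentration 2}. First I would split the expectation by linearity,
\begin{equation*}
    \E\qty[\S\Lbd\S\bGamma\qty(\bI-\qty(\bI-\eta\S)^k)]=\E\qty[\S\Lbd\S\bGamma]-\E\qty[\S\Lbd\S\bGamma\qty(\bI-\eta\S)^k],
\end{equation*}
so that each term is an instance of the monomial $\E[\S\Lbd\S\bGamma\qty(\bI-\eta\S)^m]$ (with $m=0$ and $m=k$), which is precisely the shape \Cref{lemma: concentration 2} is designed to handle.

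Next I would apply \Cref{lemma: concentration 2} to each piece. For the first, a direct second-moment computation gives $\E[\S\Lbd\S\bGamma]=\Lbd\bGamma+\Delta_0$ with $\Delta_0$ of the advertised five-term form and $\norm{\Delta_0}_{op}=O(d/n)$. For the second, the essential point—the refinement over \citet{gatmiry2024can} emphasized in \Cref{sec: global convergence}—is to keep the factor $(1-\eta)^k$ explicit rather than losing it to a crude bound. Writing $\S=\bI+\dS$ with $\E[\dS]=\bzero$, the expansion of $\qty(\bI-\eta\S)^k=\qty((1-\eta)\bI-\eta\dS)^k$ has leading term $(1-\eta)^k\bI$; all contributions linear in $\dS$ vanish in expectation, and the remaining at-least-quadratic contributions are controlled by the concentration-tail argument of \Cref{sec: global convergence}: split on whether $\norm{\dS}_{op}$ is below a small threshold—where $\qty(\bI-\eta\S)^k$ is within $O\qty(k\norm{\dS}_{op}(1-\eta)^{k-1})$ of $(1-\eta)^k\bI$—or above it, where the exponentially decaying operator-norm tail bound for $\dS$ makes the excess mass negligible. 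Counting the $O(k^2)$ placements of two $\dS$ factors among the $k+2$ factors, each of expected operator norm $O(\E\norm{\dS}_{op}^2)=O(d/n)$, yields $\E[\S\Lbd\S\bGamma\qty(\bI-\eta\S)^k]=(1-\eta)^k\Lbd\bGamma+\Delta_k$ with $\Delta_k$ again of the five-term form and $\norm{\Delta_k}_{op}=O\qty(\frac{k^2d}{n})$.

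Finally I would subtract the two estimates. The leading terms combine to $\qty(1-\qty(1-\eta)^k)\Lbd\bGamma$, and $\Delta:=\Delta_0-\Delta_k$ is a sum $\alpha_1\Lbd\bGamma+\alpha_2\trace(\Lbd)\bGamma+\alpha_3\trace(\bGamma)\Lbd+\alpha_4\trace(\Lbd)\trace(\bGamma)\bI+\alpha_5\trace(\Lbd\bGamma)\bI$; tracking which combinations carry a trace contraction—each contraction replacing an operator-norm-scale factor by one of size $\Theta(d)$—gives $\alpha_1=O\qty(\frac{k^2d}{n})$ and $\alpha_2,\alpha_3,\alpha_5=O\qty(\frac{k^2}{n})$, $\alpha_4=O\qty(\frac{k^2}{nd})$. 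Then $\norm{\Delta}_{op}\le\norm{\Delta_0}_{op}+\norm{\Delta_k}_{op}=O\qty(\frac{k^2d}{n})\le O\qty(\frac{1}{\log^3 d})$, since $n=\Theta(d\log^5 d)$ and $k=O(\log d)$, which is the claimed bound.

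The step I expect to be the main obstacle is the application of \Cref{lemma: concentration 2} to this particular ordering: here the random factors $\S$ sit in the interior of the monomial $\S\Lbd\S\bGamma\qty(\bI-\eta\S)^m$ rather than bracketing it as in \Cref{technical lemma: concentration 1}, and one must check that the Wishart-moment combinatorics still collapse onto exactly the five invariant combinations of $\Lbd$ and $\bGamma$ with coefficients carrying the correct $(1-\eta)^m$ weighting rather than a crude exponential $(1+\eta)^m$ bound. Once \Cref{lemma: concentration 2} is phrased so as to cover this monomial, everything else is the routine bookkeeping already carried out for \Cref{technical lemma: concentration 1,technical lemma: concentration 2}.
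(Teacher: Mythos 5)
Your overall plan---split $\bI-(\bI-\eta\S)^k$ linearly, apply a master Wishart-moment estimate to $\E[\S\Lbd\S\bGamma]$ and $\E[\S\Lbd\S\bGamma(\bI-\eta\S)^k]$, then subtract---is exactly what the paper does. The one correction is that the master lemma for this factor ordering is \Cref{lemma: concentration 3} (which handles $\E[\S\Lbd\S\bGamma(\bI-\eta\S)^m]$), not \Cref{lemma: concentration 2} (which handles $\E[\S\Lbd(\bI-\eta\S)^m\bGamma\S]$, with the two free $\S$'s bracketing the monomial); you correctly flagged the ordering mismatch as the likely obstacle, and the paper resolves it precisely by proving the separate variant \Cref{lemma: concentration 3} via the same $\dS=\S-\bI$ expansion, first-moment cancellation, and operator-norm tail integral you sketch, with the five-term structure of $\Delta$ supplied by \Cref{lemma: expectation form}.
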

\begin{proof}
    We can directly get the lemma by applying \Cref{lemma: concentration 3} to $\E\qty[\S \Lbd\S\bGamma]$, $\E\qty[\S \Lbd\S\bGamma\qty(\bI-\eta\S)^k]$.
\end{proof}

\begin{lemma}
    Suppose $\S = \frac{1}{n}\sum_{i=1}^n\x_i\x_i^\top$, $n = \Theta(d\log^5 d), k = O(\log d), \eta=\Theta(1)\in(0.1,0.9), \|\Lbd\|_{op}\leq \Theta(1), \|\bGamma\|_{op}\leq \Theta(1)$. Then the expectation
    \begin{align*}
        \E\qty[\S \Lbd\S\bGamma\qty(\bI-\qty(\bI-\eta \S)^k)^2] = \qty(1-\qty(1-\eta)^k)^2 \Lbd\bGamma + \Delta,
    \end{align*}
    where $\|\Delta\|_{op}=O(\frac{k^2d}{n})\leq O\qty(\frac{1}{\log^3 d})$. Moreover, the error is in the form $$\Delta=\alpha_1\Lbd\bGamma+\alpha_2\trace(\Lbd)\bGamma+\alpha_3\trace(\bGamma)\Lbd+\alpha_4\trace(\Lbd)\trace(\bGamma)\bI+\alpha_5\trace(\Lbd\bGamma)\bI$$
    where $\alpha_1=O\qty(\frac{k^2d}{n}),\alpha_2,\alpha_3,\alpha_5=O\qty(\frac{k^2}{n}),\alpha_4=O(\frac{k^2}{nd}).$
    \label{technical lemma: concentration 4}
\end{lemma}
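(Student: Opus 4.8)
The plan is to follow the proof template of \Cref{technical lemma: concentration 3}, reducing the statement to the base concentration estimate \Cref{lemma: concentration 3} by expanding the square. First I would write
\[
\qty(\bI-\qty(\bI-\eta\S)^k)^2 = \bI - 2\qty(\bI-\eta\S)^k + \qty(\bI-\eta\S)^{2k},
\]
so that by linearity of expectation
\begin{align*}
\E\qty[\S\Lbd\S\bGamma\qty(\bI-\qty(\bI-\eta\S)^k)^2] ={}& \E\qty[\S\Lbd\S\bGamma] - 2\,\E\qty[\S\Lbd\S\bGamma\qty(\bI-\eta\S)^k]\\
&+ \E\qty[\S\Lbd\S\bGamma\qty(\bI-\eta\S)^{2k}].
\end{align*}
Each of the three expectations on the right is of the form $\E[\S\Lbd\S\bGamma\, p(\S)]$ with $p(\S)\in\{\bI,\ (\bI-\eta\S)^k,\ (\bI-\eta\S)^{2k}\}$, hence is handled directly by \Cref{lemma: concentration 3}; the last term uses the exponent $2k=O(\log d)$, still within the lemma's regime ($n=\Theta(d\poly\log d)$).

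Applying \Cref{lemma: concentration 3} termwise produces leading terms $\Lbd\bGamma$, $(1-\eta)^k\Lbd\bGamma$, and $(1-\eta)^{2k}\Lbd\bGamma$, each with a structured error that is a linear combination of $\Lbd\bGamma$, $\tr(\Lbd)\bGamma$, $\tr(\bGamma)\Lbd$, $\tr(\Lbd)\tr(\bGamma)\bI$, and $\tr(\Lbd\bGamma)\bI$, with the coefficient of $\Lbd\bGamma$ of order $O(k^2d/n)$ and the remaining ones of order $O(k^2/n)$ or $O(k^2/(nd))$. Summing, the leading contributions collapse to $\qty(1-2(1-\eta)^k+(1-\eta)^{2k})\Lbd\bGamma=\qty(1-(1-\eta)^k)^2\Lbd\bGamma$, while defining $\Delta$ to be the corresponding combination of the three error matrices yields an error of exactly the claimed form; its coefficients are sums and differences of the individual ones, so they obey the same $O(\cdot)$ bounds (the factor $2$ and the three-term sum are absorbed into the $O(\cdot)$). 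Since $n=\Theta(d\log^5 d)$ and $k=O(\log d)$, this gives $\|\Delta\|_{op}=O(k^2d/n)=O(1/\log^3 d)$.

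I do not anticipate a genuine obstacle here: the analytic heavy lifting — controlling higher moments of the Wishart matrix $\S$ and the powers $(\bI-\eta\S)^m$ through the exponential tail bound on $\|\dS\|_{op}$ — is entirely encapsulated in \Cref{lemma: concentration 3}. The only point requiring a moment's care is checking that both the error structure and the orders of its coefficients in \Cref{lemma: concentration 3} are uniform across the exponents $0$, $k$, $2k$ used here, which holds because all three are $O(\log d)$.
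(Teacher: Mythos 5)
Your proposal is correct and matches the paper's own proof: the paper also expands $(\bI-(\bI-\eta\S)^k)^2 = \bI - 2(\bI-\eta\S)^k + (\bI-\eta\S)^{2k}$ and applies \Cref{lemma: concentration 3} to each of the three resulting expectations. Your additional remarks — that $2k$ is still $O(\log d)$, that $(1-\eta)^k,(1-\eta)^{2k}\le 1$ so the error coefficient bounds are preserved, and that the leading terms recombine to $(1-(1-\eta)^k)^2\Lbd\bGamma$ — are exactly the details the paper leaves implicit.
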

\begin{proof}
    We can directly get the lemma by applying \Cref{lemma: concentration 3} to $\E\qty[\S \Lbd\S\bGamma]$, $\E\qty[\S \Lbd\S\bGamma\qty(\bI-\eta\S)^k]$ and $\E\qty[\S \Lbd\S\bGamma\qty(\bI-\eta\S)^{2k}]$.
\end{proof}

\subsection{Out-of-distribution Generalization}
\label{appendix subsec: ood}
We restate the formal theorem here. We still denote $\S:=\frac{1}{n}\X\X^\top$ for simplicity. Note that the number of steps $k$ can be different/larger compared to the step number in the previous training theorem.
\begin{theorem}
    Suppose $n=\Theta(d\log^5d)$, $\eta\in(0.1,0.9)$, $k = C\log d$. Assume the out-of-distribution input data $\x_i\sim\mathcal{N}(\bzero_d,\bSigma),i\in[n]$ where $\frac{\delta}{\eta}\le \lambda_{\min}(\bSigma)\le\lambda_{\max}(\bSigma)\le\frac{2-\delta}{\eta}$ for some constant $\delta>0.1$, and $\w^*\sim\mathcal{N}(\bzero_d,\bI)$. Then the trained transformer in \Cref{informal main thm: global convergence} satisfies that $\mathcal{L}^\mathrm{Eval}_{{\bSigma}}(t)\le \epsilon$ for any $\epsilon\in\qty(d^{-C\log(\min\{\frac{1}{1-\eta},\frac{1}{1-\delta}\})+1}\log^2 d,1)$.
    \label{appendix theorem: evaluation}
\end{theorem}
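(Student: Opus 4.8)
The plan is to reduce $\mathcal{L}^{\mathrm{Eval}}_{\bSigma}$ to the single squared error $\tfrac12\E\|\hat\w_{k+1}-\w^*\|^2$ of the last autoregressive output, and then control how that error is assembled over the $k=C\log d$ inference steps. First, by the proof of \Cref{informal main thm: global convergence} (as recorded in \Cref{lemma: cot loss bound}), the converged parameters have the simultaneously-diagonalized form $\widetilde\V=\U(-\eta\bI+\delta_{\mathrm{tr}}\A)\U^\top$ and $\widetilde\W=\U(\bI+\delta_{\mathrm{tr}}\B)\U^\top$ with $\|\A\|_{op},\|\B\|_{op}=O(1)$, where $\delta_{\mathrm{tr}}$, the per-direction training error, can be driven below any prescribed inverse polynomial in $d$ in time $O(\log d+\log\tfrac1{\delta_{\mathrm{tr}}})$. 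Exactly as in \Cref{lemma: irrelevant blocks keep zero}, the block structure is preserved along the inference recursion, so every generated column keeps the clean form $(\bzero,0,\hat\w_i,1)$ and $f_{\mathrm{LSA}}(\hat\Z_i)_{[:,-1]}=(\bzero,0,\hat\w_{i+1},1)$ with $\hat\w_{i+1}=\hat\w_i+\widetilde\V\S(\widetilde\W\hat\w_i-\w^*)$ and $\hat\w_0=\bzero$, where now $\S=\tfrac1n\X\X^\top$ has $\x_i\sim\mathcal N(0,\bSigma)$; hence $\mathcal{L}^{\mathrm{Eval}}_{\bSigma}=\tfrac12\E\|\hat\w_{k+1}-\w^*\|^2$.

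Second, I would compare $\hat\w_i$ against the idealized empirical-GD iterate $\w_i^{\S}:=(\bI-(\bI-\eta\S)^i)\w^*$ produced by the exact $(-\eta\bI,\bI)$ transformer. Substituting the parameter form gives $\hat\w_{i+1}-\w^*=(\bI-\eta\S)(\hat\w_i-\w^*)+\delta_{\mathrm{tr}}\mbf{r}_i$ with $\mbf{r}_i=\A\S(\hat\w_i-\w^*)-\eta\S\B\hat\w_i+\delta_{\mathrm{tr}}\A\S\B\hat\w_i$, so the discrepancy $\e_i:=\hat\w_i-\w_i^{\S}$ obeys the linear recursion $\e_{i+1}=(\bI-\eta\S)\e_i+\delta_{\mathrm{tr}}\mbf{r}_i$, $\e_0=\bzero$. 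The crucial ingredient is a contraction $\|\bI-\eta\S\|_{op}\le\rho$ with $\rho<1$ bounded away from $1$: the hypothesis $\tfrac\delta\eta\le\lambda_{\min}(\bSigma)\le\lambda_{\max}(\bSigma)\le\tfrac{2-\delta}\eta$ puts the spectrum of $\bI-\eta\bSigma$ inside $[\delta-1,1-\delta]$, and Wishart concentration for $\bSigma$-data (the tail-bound technique of \Cref{appendix: supplementary lemmas}) gives $\|\S-\bSigma\|_{op}=O(\sqrt{d/n})=o(1)$ with overwhelming probability, so $\rho\le1-\delta+o(1)$ on this good event. Unrolling the recursion, using $\|\w_i^{\S}\|\le2\|\w^*\|$, and running a short induction on $i$ — the geometric weights sum to $O(1/(1-\rho))=O(1)$ and $\delta_{\mathrm{tr}}$ is tiny — yields $\|\e_i\|\le O(\delta_{\mathrm{tr}}\|\w^*\|)$ and $\|\mbf{r}_i\|\le O(\|\w^*\|)$ for every $i\le k+1$ on the good event.

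Finally, on that event $\|\hat\w_{k+1}-\w^*\|\le\rho\|\e_k\|+\delta_{\mathrm{tr}}\|\mbf{r}_k\|+\|(\bI-\eta\S)^{k+1}\w^*\|=O(\delta_{\mathrm{tr}}\|\w^*\|)+\|(\bI-\eta\S)^{k+1}\w^*\|$; squaring, taking expectation and using $\E[\w^*{\w^*}^\top]=\bI$ gives $\mathcal{L}^{\mathrm{Eval}}_{\bSigma}\le O(\delta_{\mathrm{tr}}^2 d)+\tr\!\big(\E[(\bI-\eta\S)^{2k+2}]\big)+(\text{off-event remainder})$. For the second term I would prove the $\bSigma$-analogue of \Cref{lemma: concentration 4}: on $\{\|\S-\bSigma\|_{op}=o(1)\}$ the matrix $(\bI-\eta\S)^{2k+2}$ has operator norm $(1-\delta+o(1))^{2k+2}$, while on the complement the crude bound $\|(\bI-\eta\S)^{2k+2}\|\le(1+\eta\|\S\|)^{2k+2}$ — which is $\poly(d)$ over the polylogarithmic range of $\|\S\|$ — integrated against the exponential Wishart tail contributes $d^{-\omega(1)}$; hence $\tr(\E[(\bI-\eta\S)^{2k+2}])\le d(1-\delta)^{2k+2}(1+o(1))$. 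With $k=C\log d$ this is of order $d^{1-2C\log\frac1{1-\delta}}$, which is at most $d^{1-C\log(\min\{\frac1{1-\eta},\frac1{1-\delta}\})}\log^2 d$ since $2\log\frac1{1-\delta}\ge\log(\min\{\tfrac1{1-\eta},\tfrac1{1-\delta}\})$, and the off-event remainder is absorbed identically. Choosing the training precision so that $O(\delta_{\mathrm{tr}}^2 d)\le\tfrac\epsilon2$ (possible by \Cref{informal main thm: global convergence} for any inverse-polynomial target, in time $O(\log d+\log\tfrac1\epsilon)$) then gives $\mathcal{L}^{\mathrm{Eval}}_{\bSigma}(t)\le\epsilon$ for every $\epsilon$ in the stated range. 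I expect the main obstacle to be the second step — ruling out geometric amplification, across the $\Theta(\log d)$ autoregressive steps, of the combined training imperfection ($\delta_{\mathrm{tr}}$) and finite-sample fluctuation ($\S-\bSigma$) — which is exactly where the contraction $\|\bI-\eta\S\|_{op}<1$, hence the well-conditioning of $\bSigma$, is indispensable; a secondary chore is transporting the Wishart-moment estimates of \Cref{appendix: supplementary lemmas} from $\bSigma=\bI$ to general well-conditioned $\bSigma$.
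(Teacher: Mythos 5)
Your proposal is correct and tracks the paper's own proof very closely. Both proofs hinge on the same decomposition — the final output error splits into the residual of $k{+}1$ exact empirical-GD steps, $(\bI-\eta\S)^{k+1}\w^*$, plus an accumulated deviation driven by the per-direction training imperfection — and both control the accumulation via the contraction $\|\bI-\eta\S\|_{op}\le 1-\delta+o(1)$ on the Wishart concentration event, handling the complement by the exponential tail bound. The only real difference is bookkeeping: you track the deviation $\e_i=\hat\w_i-\w_i^{\S}$ directly through the linear recursion $\e_{i+1}=(\bI-\eta\S)\e_i+\delta_{\mathrm{tr}}\mbf{r}_i$ and sum the geometric series, whereas the paper introduces per-step prediction errors $\Delta^{\mathrm{pred}}_{i+1}=f_\theta(\w_i)-\w_{i+1}$ and expands $\hat\w_{k+1}-\w^*$ as $(\w_{k+1}-\w^*)+\sum_{i}(\bI+\widetilde{\V}\S\widetilde{\W})^i\Delta^{\mathrm{pred}}_{k-i+1}$ before applying a $(k+2)$-term Cauchy–Schwarz; these are algebraically the same unrolling. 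If anything, your geometric-sum bookkeeping is slightly tighter — it avoids the extra $k^2$ polylogarithmic factor the paper incurs from Cauchy–Schwarz plus a uniform-in-$i$ bound — but this does not change the exponent of $d$ in the final rate, and the paper's stated $\epsilon$-range already absorbs the $\log^2 d$ slack. One small imprecision worth noting: the per-direction error $\delta_{\mathrm{tr}}$ attainable by training is not ``any prescribed inverse polynomial'' independently of the other parameters — it is limited to $\Theta(d^{\frac{C}{2}\log(1-\eta)+\frac12})$ by the local-convergence lemma for the given $C$ and $\eta$ — but this is exactly what fixes the lower end of the admissible $\epsilon$-range, so your conclusion is unaffected.
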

\begin{proof}
    Recall the definition of the evaluation loss and our reduced transformer (\Cref{def: reduced model})
    \begin{align*}
    \mathcal{L}^{\mathrm{Eval}}(\V, \W) &= \frac{1}{2}\E_{\X,\w^*}\qty[\left\|f_{\mathrm{LSA}}(\hat{\Z}_k)_{[:,-1]} - (\mbf{0}_{d},0,{\w^*},1)\right\|^2]\\
    &=\frac{1}{2}\E\qty[\left\|f_{\mbf{\theta}}(\hat{\w}_{k}) - \w^*\right\|^2]
    \end{align*}
    where $\hat{\Z}_k$ is the generated sequence after $k$ steps and $\hat{\w}_k:=f_{\mbf{\theta}}(\hat{\w}_{k-1})$ is the $k$-th generated intermediate weight vector. 
    Note that each step the transformer is inputted with the last step prediction. We define the prediction error at each step $i$ is $\Delta\w_i:=\hat{\w}_i-\w_i=f_{\mbf{\theta}}(\hat{\w}_{i-1})-\w_i$. We expand the term $f_{\mbf{\theta}}(\hat{\w}_{k}) - \w^*$ and sum up the error accumulation as follows:
    \begin{align*}
        f_{\mbf{\theta}}(\hat{\w}_{k}) - \w^*\le{}&\qty(\w_{k+1} - \w^*)+\qty(f_{\mbf{\theta}}(\hat{\w}_{k}) - \w_{k+1})\\
={}&\qty(\w_{k+1} - \w^*)+\hat{\w}_k+\widetilde{\V}\S(\widetilde{\W}\hat{\w}_k-\w^*)-\w_{k+1}\\
\le{}&\qty(\w_{k+1} - \w^*)+\qty({\w}_k+\widetilde{\V}\S(\widetilde{\W}{\w}_k-\w^*)-\w_{k+1})+\qty(\bI+\widetilde{\V}\S\widetilde{\W})\Delta{\w}_k.
    \end{align*}
    After one step of decomposition, we notice that the error $\Delta\w_{k+1}$ can be decomposed into two parts:
    (1) The approximation error predicting $\w_{k+1}$ with ground-truth input $\w_k$. We define it $$\Delta^{\mathrm{pred}}_{k+1}:={\w}_k+\widetilde{\V}\S(\widetilde{\W}{\w}_k-\w^*)-\w_{k+1}$$
    (2) The accumulated error from the last inference step: $\qty(\bI+\widetilde{\V}\S\widetilde{\W})\Delta{\w}_k$.
    Therefore, we can inductively calculate the sum of the error:
    \begin{align*}
        f_{\mbf{\theta}}(\hat{\w}_{k}) - \w^*&\le{}\qty(\w_{k+1} - \w^*)+\qty({\w}_k+\widetilde{\V}\S(\widetilde{\W}{\w}_k-\w^*)-\w_{k+1})+\qty(\bI+\widetilde{\V}\S\widetilde{\W})\Delta{\w}_k.\\
        &=\qty(\w_{k+1} - \w^*)+\Delta_{k+1}^{\mathrm{pred}}+\qty(\bI+\widetilde{\V}\S\widetilde{\W})\Delta{\w}_k\\
        &=\qty(\w_{k+1} - \w^*)+\Delta_{k+1}^{\mathrm{pred}}+\qty(\bI+\widetilde{\V}\S\widetilde{\W})\Delta^{\mathrm{pred}}_k+\qty(\bI+\widetilde{\V}\S\widetilde{\W})^2\Delta\w_{k-1}\\
        &=\qty(\w_{k+1} - \w^*) + \sum_{i=0}^{k}\qty(\bI+\widetilde{\V}\S\widetilde{\W})^i\Delta^{\mathrm{pred}}_{k-i+1}\tag{$\Delta\w_0=0$ by definition.}
    \end{align*}
    Then we have our evaluation loss upper bounded:
    \begin{align*}
    \frac{1}{2}\E\qty[\left\|f_{\mbf{\theta}}(\hat{\w}_{k}) - \w^*\right\|^2]&= \frac{1}{2}\E\left\|\qty(\w_{k+1} - \w^*) + \sum_{i=0}^{k}\qty(\bI+\widetilde{\V}\S\widetilde{\W})^i\Delta^{\mathrm{pred}}_{k-i+1}\right\|^2\\
    &\le \frac{k+2}{2}\qty(\E\left\|\qty(\w_{k+1} - \w^*)\right\|^2+\sum_{i=0}^{k}\E\norm{(\bI+\widetilde{\V}\S\widetilde{\W})^i\Delta^{\mathrm{pred}}_{k-i+1}}^2)\tag{*}\label{eqn: sum of eval error}
\end{align*}
We first consider the first term: $\E\left\|\qty(\w_{k+1} - \w^*)\right\|^2$:
\begin{align*}
    \E\left\|\w_{k+1} - \w^*\right\|^2&=\E\left\|\qty(\bI-(\bI-\eta\S)^{k+1})\w^* - \w^*\right\|^2=\trace\qty(\E(\bI-\eta\S)^{2k+2})\\
    &\le 2d(1-\delta)^{2k+2}\le 2d^{-2c\log(\frac{1}{1-\delta})+1}.\tag{\Cref{lemma: concentration ood 2}}
\end{align*}
Then we consider the second summation term. Since the parameters of the reduced model $\widetilde{\V}=-\eta\bI+\A,\widetilde{\W}=\bI+\B$, where $\norm{\A}_{op},\norm{\B}_{op}\le d^{-\frac{1}{2}C\log(\frac{1}{1-\eta})+\frac12}$ for some constant $c>0$, we want to bound the prediction error given the ground-truth input.
By \Cref{lemma: concentration ood 1}, we have 
\begin{align*}
    &\E\sum_{i=0}^k\norm{(\bI+\widetilde{\V}\S\widetilde{\W})^i\Delta^{\mathrm{pred}}_{k-i+1}}^2\\
    ={}&\E\sum_{i=0}^k\norm{(\bI+\widetilde{\V}\S\widetilde{\W})^i({\w}_{k-i}+\widetilde{\V}\S(\widetilde{\W}{\w}_{k-i}-\w^*)-\w_{k-i+1}))}^2\\
    \le{}&O\qty(d^{-C\log(\frac{1}{1-\eta})+1}\cdot k).
\end{align*}
Therefore, plug those back to \Cref{eqn: sum of eval error}, the total evaluation loss should be upper bounded by $$\+L^{\mathrm{Eval}}(\mbf{\theta})\le {O}\qty(d^{-C\log(\min\{\frac{1}{1-\eta},\frac{1}{1-\delta}\})+1}\cdot k^2)=O(d^{-C\log(\min\{\frac{1}{1-\eta},\frac{1}{1-\delta}\})+1}\log^2d)$$
\end{proof}

\section{Supplementary Lemmas}
\label{appendix: supplementary lemmas}
\subsection{Concentration lemmas}
In this appendix, we prove some concentration lemmas to estimate the expected gradient more accurately. Throughout the proof, $\Lbd,\bGamma$ are both symmetric matrices with orthonormal eigenbasis $\{\bu_i\}_{i=1}^d$.
\begin{lemma}\label{lemma: concentration 1}
    Suppose $\S = \frac{1}{n}\sum_{i=1}^n\x_i\x_i^\top$, $n = \Theta(d\log^5 d), k = O(\log d), \eta=\Theta(1)\in(0.1,0.9), \|\Lbd\|_{op}\leq \Theta(1)$.  Then the expectation
    \begin{align*}
        \E\qty[\S \Lbd (\bI-\eta \S)^k \S] = (1-\eta)^k \qty(\Lbd + \Delta),
    \end{align*}
    where $\|\Delta\|_{op}\leq O(\frac{k^2d}{n})=O\qty(\frac{1}{\log^3 d})$. Moreover, the error is in the form $\Delta=\alpha_1\Lbd+\alpha_2\trace(\Lbd)\bI$, where $\alpha_1=O\qty(\frac{k^2d}{n}),\alpha_2=O\qty(\frac{k^2}{n}).$
\end{lemma}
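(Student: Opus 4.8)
The plan is to split the statement into two essentially independent parts: (i) pinning down the \emph{structural} form of the expectation via the rotational invariance of the Gaussian ensemble, and (ii) obtaining the \emph{quantitative} operator-norm bound via concentration of the Wishart matrix $\S$. Part (i) is what forces the ``Moreover'' conclusion that $\Delta = \alpha_1\Lbd + \alpha_2\tr(\Lbd)\bI$, while part (ii) supplies the sizes of $\alpha_1,\alpha_2$.

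For (i): since $\S=\frac1n\sum_i\x_i\x_i^\top$ with $\x_i\sim\mathcal N(0,\bI_d)$, the law of $\S$ is invariant under $\S\mapsto O\S O^\top$ for every orthogonal $O$. It suffices to treat $\Lbd=\bu\bu^\top$ a rank-one projector, since these span the symmetric matrices and the map $\Lbd\mapsto\E[\S\Lbd(\bI-\eta\S)^k\S]$ is linear. Applying invariance under the stabilizer $O(d-1)$ of $\bu$ shows that $\E[\S\bu\bu^\top(\bI-\eta\S)^k\S]$ commutes with every orthogonal transformation fixing $\bu$, hence equals $c_k\bu\bu^\top + e_k(\bI-\bu\bu^\top)$ for scalars $c_k,e_k$ depending only on $k,\eta,n,d$. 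Summing over an eigenbasis of a general symmetric $\Lbd$ gives $\E[\S\Lbd(\bI-\eta\S)^k\S]=(c_k-e_k)\Lbd+e_k\tr(\Lbd)\bI$, which after dividing by $(1-\eta)^k$ is precisely $(1-\eta)^k(\Lbd+\alpha_1\Lbd+\alpha_2\tr(\Lbd)\bI)$. So after this step the only remaining task is to estimate the two scalars.

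For (ii): write $\dS=\S-\bI$ and factor $(\bI-\eta\S)^k=(1-\eta)^k(\bI-\beta\dS)^k$ with $\beta=\eta/(1-\eta)=O(1)$. On the high-probability event $\mathcal E=\{\|\dS\|_{op}\le\tau\}$ with $\tau=\Theta(1/k)$ chosen so that $k\beta\tau\le\tfrac12$, the Neumann expansion gives $(\bI-\beta\dS)^k=\bI-k\beta\dS+R$ with $\|R\|_{op}\le 2(k\beta)^2\|\dS\|_{op}^2$. Expanding $(\bI+\dS)\Lbd(\bI-k\beta\dS+R)(\bI+\dS)$ and taking expectations, the pieces linear in $\dS$ vanish, and the surviving second-moment pieces are the standard Wishart moments $\E[\dS\Lbd\dS]=\tfrac1n(\Lbd+\tr(\Lbd)\bI)$ and $\E[\dS^2]=\tfrac{d+1}{n}\bI$, each of operator norm $O(d/n)$ (the $-k\beta$ term carrying one extra factor $k$, the $R$-term a factor $k^2$, using $\E[\|\dS\|_{op}^2]=O(d/n)$). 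This already yields $\Delta=O(k^2d/n)$ modulo the contribution of the complementary event $\mathcal E^c$.

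The hard part will be controlling $\mathcal E^c$, where $\|(\bI-\eta\S)^k\|_{op}$ can be exponentially large in $k$. Here I would use the operator-norm tail bound for Wishart matrices (cf.\ \citet{vershynin2018high}), $\Pr[\|\dS\|_{op}\ge s]\le 2\exp(-cn\min(s,s^2))$, together with the crude estimate $\|\S\Lbd(\bI-\eta\S)^k\S\|_{op}\le(1+\|\S\|_{op})^{k+2}\|\Lbd\|_{op}$, and integrate the tail by parts; since $n/k^2=\Theta(d\log^3 d)\gg d$, the exponential decay of the Wishart tail dominates the at-most-exponential-in-$k$ growth, so the full $\mathcal E^c$-contribution is $e^{-cd\log^3 d}\cdot\mathrm{poly}(d)$, negligible next to $(1-\eta)^k\cdot(d/n)$. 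Combining with part (i) I then read off $\alpha_1=O(k^2d/n)$ and $\alpha_2=O(k^2/n)$; the extra factor $d$ saved in $\alpha_2$ reflects that every $\tr(\Lbd)\bI$-proportional contribution (from $\E[\dS\Lbd\dS]$ and the higher cross-moments produced by $R$) carries a bare $1/n$ rather than a $d/n$, which I would confirm by tracking the Wick-pairing structure of those moments term by term (as in the combinatorial computation of \citet{gatmiry2024can}, but retaining only the leading pairings).
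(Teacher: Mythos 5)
Your proof is correct, and the overall strategy---establish the structural form $\Delta=\alpha_1\Lbd+\alpha_2\tr(\Lbd)\bI$ by an exact argument, then bound the sizes of $\alpha_1,\alpha_2$ by concentration of $\S$---matches the paper. Part~(ii) (the quantitative bound) is essentially identical to the paper's: they also write $\tdelta$ for the higher-order-in-$\dS$ remainder, bound it by $O(k^2d/n)$ on the good event $\{\|\dS\|\le C\sqrt{d/n}\}$ (their display (*1)), and integrate the Wishart operator-norm tail to control the complement. Your Neumann-remainder bound $\|R\|\le 2(k\beta)^2\|\dS\|^2$ on $\{k\beta\|\dS\|\le\tfrac12\}$ and their $\sum_{j\ge2}\binom{k}{j}(\tfrac{\eta}{1-\eta})^j\|\dS\|^j\le C'\tfrac{k^2d}{n}$ are the same estimate.

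Part~(i) is where you genuinely diverge, and in a good way. The paper reduces to rank-one $\Lbd=\bu_s\bu_s^\top$, $\bGamma=\bu_t\bu_t^\top$, then expands $\S^m$ in terms of the Gaussian samples and invokes Isserlis' theorem plus a cycle-counting argument on Wick pairings (their Lemma~B.6/B.7, following the combinatorial method of \citet{gatmiry2024can}) to conclude the matrix is diagonal in the $\{\bu_i\}$ basis and express the coefficients as sums over partitions. You replace all of that by a representation-theoretic observation: the map $\Lbd\mapsto\E[\S\Lbd(\bI-\eta\S)^k\S]$ is $O(d)$-equivariant, so its value at $\bu\bu^\top$ commutes with the stabilizer $O(d-1)$ and therefore (by Schur's lemma on $\bu^\perp$, plus the fact that $0$ is the only $O(d-1)$-fixed vector) has the form $c_k\bu\bu^\top+e_k(\bI-\bu\bu^\top)$; summing over the spectral decomposition of $\Lbd$ gives the stated form with $\alpha_1=c_k-e_k-(1-\eta)^k$ and $\alpha_2=e_k$, up to the $(1-\eta)^k$ normalization. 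This is cleaner and avoids the Wick expansion entirely; a side benefit is that symmetry of the expectation (which the paper only gets implicitly through ``diagonal in a basis'') falls out automatically, since the commutant argument forces the off-diagonal blocks $\bb,\vc$ to vanish. The combinatorial route in the paper is what you would reach for if you needed explicit formulas for $c_k,e_k$ as polynomials in $1/n$, $d/n$, etc., but for the purposes of this lemma, which only needs the structural form, your argument is preferable.

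One place both you and the paper are a little quick: inferring $\alpha_2=O(k^2/n)$ from the operator-norm bound $\|\Delta\|=O(k^2d/n)$. The clean way is to exploit that the bound holds uniformly over $\|\Lbd\|_{op}\le1$: test a traceless $\Lbd$ to read off $|\alpha_1|=O(k^2d/n)$, then test $\Lbd=\bI$ to read off $|\alpha_1+d\alpha_2|=O(k^2d/n)$, hence $|\alpha_2|=O(k^2/n)$. Your remark about ``tracking the Wick-pairing structure'' would also work but is more labor; the test-matrix argument is the two-line version.
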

\begin{proof}
    Denote $\delta\S:=\S-\bI$. Then we expand the term $\S \Lbd(\bI-\eta \S)^k \S$:
    \begin{align*}
        &\S \Lbd(\bI-\eta \S)^k \S\\
        ={}&\qty(\bI+\dS) \Lbd((1-\eta)\bI-\eta\dS)^k \qty(\bI+\dS)\\
        ={}&(1-\eta)^k\qty(\bI+\dS) \Lbd\qty(\bI-\frac{\eta}{(1-\eta)}\dS)^k \qty(\bI+\dS)\\
        ={}&(1-\eta)^k\qty(\bI+\dS) \Lbd\qty(\bI-\frac{k\eta}{(1-\eta)}\dS + \binom{k}{2}\qty(\frac{\eta}{1-\eta})^2\dS^2+\sum_{j=3}^k\binom{k}{j}\qty(\frac{-\eta}{1-\eta})^j\dS^j)\\
        +{}&(1-\eta)^k\qty(\bI+\dS) \Lbd\qty(\bI-\frac{k\eta}{(1-\eta)}\dS + \binom{k}{2}\qty(\frac{\eta}{1-\eta})^2\dS^2+\sum_{j=3}^k\binom{k}{j}\qty(\frac{-\eta}{1-\eta})^j\dS^j)\dS
    \end{align*}
    Now take expectation to both sides. Note that $\E[\dS]=0$, so all the terms only contain first order $\dS$ vanish. We denote 
    $$(1-\eta)^k\tdelta = \S \Lbd(\bI-\eta \S)^k \S - (1-\eta)^k \qty(\Lbd+\dS\Lbd +\Lbd\dS- \frac{k\eta}{1-\eta}\Lbd\dS),$$
    which denotes all the higher order terms (the degree of $\dS\ge 2$.)

    Since we have the tail bound for $\dS$ in Theorem 4.6.1 \citet{vershynin2018high} (In this lemma $\|\cdot\|$ is operator norm if without specification):
    \begin{equation}
        \label{eqn: operator norm tail bound}
        \Pr\qty(\|\dS\|> \max\qty(\delta,\delta^2))\le 2\exp{-s^2}, \text{ where }\delta = C\qty(\sqrt{\frac{d}{n}}+\frac{s}{\sqrt{n}})
    \end{equation}
    We can estimate the expectation using this property. First, given $s=\sqrt{d}$ and $\|\dS\|\le \max\qty(\delta,\delta^2)=C\sqrt{\frac{d}{n}}$ (since $n=\Theta(d\log^5 d)$), 
    we can upper bound the operator norm of $\tdelta$:
    \begin{align*}
        \|\tdelta\|_{op} &\le \left\|\Lbd\qty(\binom{k}{2}\qty(\frac{\eta}{1-\eta})^2\dS^2+\sum_{j=3}^k\binom{k}{j}\qty(\frac{-\eta}{1-\eta})^j\dS^j)\right\|_{op}\\
        &+\left\|\dS\Lbd\qty(-\frac{k\eta}{(1-\eta)}\dS + \binom{k}{2}\qty(\frac{\eta}{1-\eta})^2\dS^2+\sum_{j=3}^k\binom{k}{j}\qty(\frac{-\eta}{1-\eta})^j\dS^j)\right\|_{op}\\
        &+{}\left\|\dS\Lbd\qty(\bI-\frac{k\eta}{(1-\eta)}\dS + \binom{k}{2}\qty(\frac{\eta}{1-\eta})^2\dS^2+\sum_{j=3}^k\binom{k}{j}\qty(\frac{-\eta}{1-\eta})^j\dS^j)\dS\right\|_{op}\\
        &+{}\left\|\Lbd\qty(-\frac{k\eta}{(1-\eta)}\dS + \binom{k}{2}\qty(\frac{\eta}{1-\eta})^2\dS^2+\sum_{j=3}^k\binom{k}{j}\qty(\frac{-\eta}{1-\eta})^j\dS^j)\dS\right\|_{op}
    \end{align*}
    Now upper bound all matrices with their operator norm and combine all terms with the same degree of $\dS$. We have
    \begin{align*}
        \|\tdelta\|_{op} &\le \sum_{j=2}^{k+2}\|\Lbd\|\qty(\binom{k}{j}\qty(\frac{\eta}{1-\eta})^j+2\binom{k}{j-1}\qty(\frac{\eta}{1-\eta})^{j-1}+\binom{k}{j-2}\qty(\frac{\eta}{1-\eta})^{j-2})\|\dS\|^j\\
        &\leq \sum_{j=2}^{k+2} \|\Lbd\| \qty((9k)^j+2(9k)^{j-1}+(9k)^{j-2})\|\dS\|^j\tag{$\frac{\eta}{1-\eta}\le 9, \binom{k}{j}\le k^j.$}\\
        &\le 4\sum_{j=2}^{k+2} \|\Lbd\| \cdot (9k)^j\qty(C\sqrt{\frac{d}{n}})^j\tag{$\|\dS\|\le C\sqrt{\frac{d}{n}}.$}\\
        &\le 4\|\Lbd\|\cdot \frac{81 C^2k^2d}{n}\cdot\frac{1}{1-(\frac{9kd^{1/2}}{n^{1/2}})}\le C'\frac{k^2d}{n}\le O\qty(\frac{1}{\log^3 d})\tag{*1}.
        \label{eqn: term star one}
    \end{align*}

    Given this upper bound, we can now upper bound the operator norm of the error term $\Delta := \E[\tdelta]$. Suppose $\bu:=\argmax_{\bu:\|\bu\|=1}\frac{\left\|\Delta \bu\right\|}{\|\bu\|}$, then the operator norm becomes:
    \begin{align*}
        \|\Delta\|&= \qty|\bu^\top \E[\tdelta]\bu|\\
        &=\E\qty[\qty|\bu^\top \tdelta \bu|\qty(\indi\qty{\|\tdelta\| \le C'\frac{k^2d}{n}}+\indi\qty{\|\tdelta\|> C'\frac{k^2d}{n}})]\\
        &\le C'\frac{k^2d}{n}+\int_{\frac{C'k^2d}{n}}^\infty\Pr[\|\tdelta\|\ge s]\mathrm{d}s
    \end{align*}
    When $\|\tdelta\|\ge s$ where $s\ge \frac{C'k^2d}{n}$, we can first upper bound the $\|\tdelta\|$ with $\|\dS\|$ using the second row of \cref{eqn: term star one}: there exists some constant $C_1>0$ s.t.
    \begin{align*}
        \|\tdelta\|\le 4\sum_{j=2}^{k+2}\|\Lbd\|\cdot\qty(9k\|\dS\|)^j\leq \max\qty((C_1 k \|\dS\|)^2,\qty(C_1 k \|\dS\|)^{k+2}).
    \end{align*}
    Therefore, when $\|\tdelta\|\geq s$, $\|\dS\|\ge \min\{\frac{s^{1/2}}{C_1k},\frac{s^{1/(k+2)}}{C_1k}\}$. To apply the tail bound, we need to make sure we pick some $s'$ such that $\max\qty(\delta,\delta^2)\le \min\{\frac{s^{1/2}}{C_1k},\frac{s^{1/(k+2)}}{C_1k}\}$ to upper bound the integral of probability, where $\delta = C(\sqrt{\frac{d}{n}}+\frac{s'}{\sqrt{n}})$. Now since $s>\frac{C'k^2d}{n}$, $\min\{\frac{s^{1/2}}{C_1k},\frac{s^{1/(k+2)}}{C_1k}\}\ge C_\alpha\sqrt{\frac{d}{n}}$ for some constant $C_\alpha$. Therefore, we just need $\max\{\frac{s'}{\sqrt{n}},\frac{s'^2}{n}\} \le \min\{\frac{s^{1/2}}{C_1k},\frac{s^{1/(k+2)}}{C_1k}\}, \text{ i.e. } s'\le\min\qty{C_2\frac{s^{1/(k+2)}\sqrt{n}}{k}, C_3\frac{s^{1/(2k+4)}\sqrt{n}}{\sqrt{k}},C_4\frac{\sqrt{sn}}{k},C_5\frac{s^{1/4}\sqrt{n}}{\sqrt{k}}}.$
    
    Applying the tail bound (\ref{eqn: operator norm tail bound}) with $s'= \min\qty{C_2\frac{s^{1/(k+2)}\sqrt{n}}{k}, C_3\frac{s^{1/(2k+4)}\sqrt{n}}{\sqrt{k}},C_4\frac{\sqrt{sn}}{k},C_5\frac{s^{1/4}\sqrt{n}}{\sqrt{k}}}$ where $C_2,C_3,C_4,C_5$ are some constant, we have the error term for the tail expectation,
    \begin{align*}
        \int_{\frac{C'k^2d}{n}}^\infty\Pr[\|\tdelta\|\ge s]\mathrm{d}s&\le \int_{\frac{C'k^2d}{n}}^\infty\Pr[\|\dS\|\ge \min\qty{\frac{s^{1/2}}{C_1k},\frac{s^{1/(k+2)}}{C_1k}}]\mathrm{d}s\\&\le 2\int_{\frac{C'k^2d}{n}}^\infty\exp{-s'^2}\mathrm{d}s.
    \end{align*}
    Now we estimate the upper bound of error with $$s'^2 = \min\qty{C_2^2\cdot \frac{s^{2/(k+2)}}{k^2}n, C_3^2\cdot \frac{s^{1/(k+2)}}{k}n, C_4^2\cdot\frac{{sn}}{k^2},C_5^2\cdot\frac{\sqrt{s} n}{{k}}}.$$
    
    For the first term, let $x=\frac{C_2^2 n}{k^2}s^{2/(k+2)}$:
    \begin{align*}
        &2\int_{\frac{C'k^2d}{n}}^\infty\exp{-C_2^2\cdot \frac{s^{2/(k+2)}}{k^2}n}\mathrm{d}s\\
        ={}&({k+2})\int_{\frac{C_2^2n}{k^2}\qty(\frac{C'k^2d}{n})^{\frac{2}{k+2}}}^\infty\qty(\frac{k^2}{C_2^2n})^{(k+2)/2}\exp{-x}x^{k/2}\mathrm{d}x\\
        \le{}& (k+2)\cdot\qty(\frac{k^2}{C_2^2n})^{(k+2)/2}\cdot \qty(\frac{C_2^2n}{k^2}\qty(\frac{C'k^2d}{n})^{\frac{2}{k+2}})^{k/2}\exp{-\frac{C_2^2n}{k^2}\qty(\frac{C'k^2d}{n})^{\frac{2}{k+2}}}
        \le{} \frac{k^2d}{n}.
    \end{align*}
    The second term, let $x=C_3^2\cdot \frac{s^{1/(k+2)}}{k}n$: 
    \begin{align*}
        &2\int_{\frac{C'k^2d}{n}}^\infty\exp{-C_3^2\cdot \frac{s^{1/(k+2)}}{k}n}\mathrm{d}s\\
        ={}&2({k+2})\int_{\frac{C_3n}{k}\qty(\frac{C'k^2d}{n})^{\frac{1}{k+2}}}^\infty\qty(\frac{k}{C_3^2n})^{k+2}\exp{-x}x^{k+1}\mathrm{d}x\\
        \le{}& 2(k+2)\cdot\qty(\frac{k}{C_3^2n})^{k+2}\cdot \qty(\frac{C_3^2n}{k}\qty(\frac{C'k^2d}{n})^{\frac{1}{k+2}})^{k+1}\exp{-\frac{C_3^2n}{k}\qty(\frac{C'k^2d}{n})^{\frac{1}{k+2}}}
        \le{} \frac{k^2d}{n}.
    \end{align*}
    For the third term, let $x=\frac{C_4^2 sn}{k^2}$:
    \begin{align*}
        2\int_{\frac{C'k^2d}{n}}^\infty\exp{-\frac{C_4^2 sn}{k^2}}\mathrm{d}s
        ={}&\int_{\frac{C'k^2d}{n}\cdot\frac{C_4^2n}{k^2}}^\infty\frac{k^2}{C_4^2n}\exp{-x}\mathrm{d}x\\
        \le{}&\frac{k^2}{C_4^2n}\exp{-\frac{C'k^2d}{n}\cdot\frac{C_4^2n}{k^2}} 
        \le{} \frac{k^2d}{n}.
    \end{align*}
    The fourth term, let $x=C_5^2\cdot \frac{s^{1/2}}{k}n$: 
    \begin{align*}
        &2\int_{\frac{C'k^2d}{n}}^\infty\exp{-C_5^2\cdot \frac{s^{1/2}}{k}n}\mathrm{d}s\\
        ={}&\frac{4k^2}{n^2C_5^4}\int_{C_5^2\frac{n}{k}\qty(\frac{C'k^2d}{n})^{1/2}}^\infty\exp{-x}x\mathrm{d}x\\
        \le{}&\frac{4k^2}{n^2C_5^4}\cdot C_5^2\frac{n}{k}\qty(\frac{C'k^2d}{n})^{1/2}\exp{-C_5^2\frac{n}{k}\qty(\frac{C'k^2d}{n})^{1/2}}
        \le{} \frac{k^2d}{n}.
    \end{align*}
    
    Therefore, we plug this error back to the upper bound of $\|\Delta\|$:
    \begin{align*}
        \|\Delta\|
        &\le C'\frac{k^2d}{n}+\int_{\frac{C'k^2d}{n}}^\infty\Pr[\|\tdelta\|\ge s]\mathrm{d}s\\
        &\le C'\frac{k^2d}{n}+\int_{\frac{C'k^2d}{n}}^\infty\Pr[\|\dS\|\ge \min\{\frac{s^{1/2}}{C_1k},\frac{s^{1/(k+2)}}{C_1k}\}]\mathrm{d}s=O\qty(\frac{k^2d}{n})\le O\qty(\frac{1}{\log^3d}).
    \end{align*}
    Finally, since by \Cref{lemma: expectation form}, we know the error is in the form $\Delta=\alpha_1\Lbd+\alpha_2\trace(\Lbd)\bI$ for all $\Lbd$. Therefore $\alpha_1=O\qty(\frac{k^2d}{n}),\alpha_2=O\qty(\frac{k^2}{n}).$
\end{proof}

\begin{lemma}\label{lemma: concentration 2}
    Suppose $\S = \frac{1}{n}\sum_{i=1}^n\x_i\x_i^\top$, $n = \Theta(d\log^5 d), k = O(\log d), \eta=\Theta(1)\in(0.1,0.9), \|\Lbd\|_{op}\leq \Theta(1), \|\bGamma\|_{op}\leq \Theta(1)$. Then the expectation
    \begin{align*}
        \E\qty[\S \Lbd(\bI-\eta \S)^k\bGamma \S] = (1-\eta)^k \qty(\Lbd\bGamma + \Delta),
    \end{align*}
    where $\|\Delta\|_{op}=O(\frac{k^2d}{n})\leq O\qty(\frac{1}{\log^3 d})$. Moreover, the error is in the form $$\Delta=\alpha_1\Lbd\bGamma+\alpha_2\trace(\Lbd)\bGamma+\alpha_3\trace(\bGamma)\Lbd+\alpha_4\trace(\Lbd)\trace(\bGamma)\bI+\alpha_5\trace(\Lbd\bGamma)\bI$$
    where $\alpha_1=O\qty(\frac{k^2d}{n}),\alpha_2,\alpha_3,\alpha_5=O\qty(\frac{k^2}{n}),\alpha_4=O(\frac{k^2}{nd}).$
\end{lemma}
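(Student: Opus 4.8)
The plan is to mirror the argument used for \Cref{lemma: concentration 1}, now carrying the middle matrix $\bGamma$ through the bookkeeping. Write $\dS := \S-\bI$, so $\E[\dS]=\bzero$ and $\S=\bI+\dS$. Then
\[
\S\Lbd(\bI-\eta\S)^k\bGamma\S=(1-\eta)^k(\bI+\dS)\,\Lbd\,\bigl(\bI-\tfrac{\eta}{1-\eta}\dS\bigr)^k\bGamma(\bI+\dS).
\]
Binomially expand $\bigl(\bI-\tfrac{\eta}{1-\eta}\dS\bigr)^k=\bI-\tfrac{k\eta}{1-\eta}\dS+\sum_{j\ge 2}\binom{k}{j}\bigl(-\tfrac{\eta}{1-\eta}\bigr)^j\dS^j$ and take expectations; every monomial containing exactly one factor $\dS$ vanishes, so the leading term is $(1-\eta)^k\Lbd\bGamma$. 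First I would collect all remaining monomials (those with $\ge 2$ factors of $\dS$, plus the product of the two outer $\dS$'s and the mixed left/right contributions), call their sum $(1-\eta)^k\tdelta$, and set $\Delta=\E[\tdelta]$, so the claimed identity holds with this $\Delta$.

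For the operator-norm bound, condition on the high-probability event $\|\dS\|_{op}\le C\sqrt{d/n}$ supplied by the tail bound \cref{eqn: operator norm tail bound} of \citet{vershynin2018high} with $s=\sqrt{d}$ (valid since $n=\Theta(d\log^5 d)$). On this event every appearance of $\dS$ contributes at most $9k\|\dS\|_{op}$ (using $\tfrac{\eta}{1-\eta}\le 9$ and $\binom{k}{j}\le k^j$, together with $\|\Lbd\|_{op},\|\bGamma\|_{op}=O(1)$), and grouping terms of equal $\dS$-degree yields a geometric-type series in $9k\|\dS\|_{op}\le 9kC\sqrt{d/n}=o(1)$ whose dominant, degree-two term is $O(k^2 d/n)$; hence $\|\tdelta\|_{op}\le C'k^2 d/n\le O(1/\log^3 d)$ on this event. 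To handle the complementary event I would reuse verbatim the integral argument from \Cref{lemma: concentration 1}: bound $\|\tdelta\|_{op}\le\max\bigl((C_1 k\|\dS\|_{op})^2,(C_1 k\|\dS\|_{op})^{k+2}\bigr)$, invert this to $\|\dS\|_{op}\ge\min\{s^{1/2}/(C_1k),\,s^{1/(k+2)}/(C_1k)\}$ whenever $\|\tdelta\|_{op}\ge s$, choose $s'$ as in that proof so that $\delta=C(\sqrt{d/n}+s'/\sqrt n)$ stays below this minimum, and apply \cref{eqn: operator norm tail bound} to bound $\int_{C'k^2d/n}^\infty\Pr[\|\tdelta\|_{op}\ge s]\,\mathrm{d}s$ by the same four exponential integrals, each $O(k^2 d/n)$. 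This gives $\|\Delta\|_{op}=O(k^2 d/n)=O(1/\log^3 d)$.

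For the structural decomposition, use that $\Lbd$ and $\bGamma$ are simultaneously diagonalizable in the common orthonormal eigenbasis $\{\bu_i\}$, so every monomial in $\tdelta$ is (after reordering commuting diagonal factors) of the form $\dS^{a}\Lbd\,\dS^{b}\bGamma\,\dS^{c}$ with $a+b+c\ge 2$. Appealing to the combinatorial moment formula for Wishart matrices (the technique of \citet{gatmiry2024can}, equivalently the form lemma invoked in the proof of \Cref{lemma: concentration 1}), the expectation of each such monomial lies in the span of $\{\Lbd\bGamma,\ \trace(\Lbd)\bGamma,\ \trace(\bGamma)\Lbd,\ \trace(\Lbd)\trace(\bGamma)\bI,\ \trace(\Lbd\bGamma)\bI\}$; summing over the at most $k+2$ degrees, $\Delta$ is a linear combination $\alpha_1\Lbd\bGamma+\alpha_2\trace(\Lbd)\bGamma+\alpha_3\trace(\bGamma)\Lbd+\alpha_4\trace(\Lbd)\trace(\bGamma)\bI+\alpha_5\trace(\Lbd\bGamma)\bI$. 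Reading off which $\dS$-contractions produce each basis element and feeding in the coefficient magnitudes from the operator-norm estimate yields $\alpha_1=O(k^2 d/n)$, $\alpha_2,\alpha_3,\alpha_5=O(k^2/n)$, and $\alpha_4=O(k^2/(nd))$, the extra $1/d$ in $\alpha_4$ arising because $\trace(\Lbd)\trace(\bGamma)\bI$ is generated only by contractions that close two independent loops, each carrying a normalization factor $1/d$ relative to those producing $\trace(\Lbd\bGamma)\bI$.

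The main obstacle is this last, structural part: the operator-norm estimate is a near-mechanical copy of \Cref{lemma: concentration 1} (just with an extra $\|\bGamma\|_{op}$ factor), but certifying that the error is exactly a combination of the five tensor invariants, and in particular isolating the $\trace(\Lbd)\trace(\bGamma)$-type terms to justify the sharper $O(k^2/(nd))$ bound on $\alpha_4$ rather than the crude $O(k^2/n)$, requires a careful pairing/contraction analysis of the Wishart moments. I would organize this by first proving the basis-span statement degree by degree and then summing, absorbing the $k$-dependence into the stated powers of $k$.
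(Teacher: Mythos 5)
Your proposal follows the paper's own proof essentially step for step: the same $\S=\bI+\dS$ binomial expansion after factoring out $(1-\eta)^k$, the same observation that first-order $\dS$ monomials vanish under expectation, the same two-regime operator-norm estimate (deterministic bound on the event $\|\dS\|_{op}\le C\sqrt{d/n}$, then the tail integral using the Vershynin bound with the same choice of $s'$), and the same appeal to the structural form lemma (\Cref{lemma: expectation form}) to write $\Delta$ in the five invariants. Your explanation of the sharper $O(k^2/(nd))$ scaling on $\alpha_4$ via contractions closing two independent cycles is precisely the cycle-counting argument carried out in the proof of \Cref{lemma: basic expectation form} (the $\bI$ contribution carries $d^{|\cir(\+G'_p)|-3}$ rather than $d^{|\cir(\+G'_p)|-2}$), so the step you flag as the remaining obstacle is filled by invoking those two moment-form lemmas rather than re-deriving the coefficients from scratch.
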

\begin{proof}
Denote $\delta\S:=\S-\bI$. Then we expand the term $\S \Lbd(\bI-\eta \S)^k \bGamma\S$:
    \begin{align*}
        &\S \Lbd(\bI-\eta \S)^k \bGamma\S\\
        ={}&\qty(\bI+\dS) \Lbd((1-\eta)\bI-\eta\dS)^k\bGamma \qty(\bI+\dS)\\
        ={}&(1-\eta)^k\qty(\bI+\dS) \Lbd\qty(\bI-\frac{\eta}{(1-\eta)}\dS)^k \bGamma\qty(\bI+\dS)\\
        ={}&(1-\eta)^k\qty(\bI+\dS) \Lbd\qty(\bI-\frac{k\eta}{(1-\eta)}\dS + \binom{k}{2}\qty(\frac{\eta}{1-\eta})^2\dS^2+\sum_{j=3}^k\binom{k}{j}\qty(\frac{-\eta}{1-\eta})^j\dS^j)\bGamma\\
        +{}&(1-\eta)^k\qty(\bI+\dS) \Lbd\qty(\bI-\frac{k\eta}{(1-\eta)}\dS + \binom{k}{2}\qty(\frac{\eta}{1-\eta})^2\dS^2+\sum_{j=3}^k\binom{k}{j}\qty(\frac{-\eta}{1-\eta})^j\dS^j)\bGamma\dS
    \end{align*}
    Take expectation to both sides. Note that $\E[\dS]=0$, so all the first order term vanish. We denote 
    $$(1-\eta)^k\tdelta = \S \Lbd(\bI-\eta \S)^k \bGamma\S - (1-\eta)^k \qty(\Lbd+\dS\cdot\Lbd\bGamma +\Lbd\bGamma\cdot\dS- \frac{k\eta}{1-\eta}\Lbd\cdot\dS\bGamma),$$
    which denotes all the higher order terms (the degree of $\dS\ge 2$.)

    We can estimate the expectation using similar technique as in \Cref{lemma: concentration 1}. First, given $s=\sqrt{d}$ and $\|\dS\|\le \max\qty(\delta,\delta^2)=C\sqrt{\frac{d}{n}}$ (since $n=\Theta(d\log^5 d)$), 
    we upper bound the operator norm of $\tdelta$:
    \begin{align*}
        \|\tdelta\|_{op} &\le \left\|\Lbd\qty(\binom{k}{2}\qty(\frac{\eta}{1-\eta})^2\dS^2+\sum_{j=3}^k\binom{k}{j}\qty(\frac{-\eta}{1-\eta})^j\dS^j)\bGamma\right\|_{op}\\
        &+\left\|\dS\Lbd\qty(-\frac{k\eta}{(1-\eta)}\dS + \binom{k}{2}\qty(\frac{\eta}{1-\eta})^2\dS^2+\sum_{j=3}^k\binom{k}{j}\qty(\frac{-\eta}{1-\eta})^j\dS^j)\bGamma\right\|_{op}\\
        &+{}\left\|\dS\Lbd\qty(\bI-\frac{k\eta}{(1-\eta)}\dS + \binom{k}{2}\qty(\frac{\eta}{1-\eta})^2\dS^2+\sum_{j=3}^k\binom{k}{j}\qty(\frac{-\eta}{1-\eta})^j\dS^j)\bGamma\dS\right\|_{op}\\
        &+{}\left\|\Lbd\qty(-\frac{k\eta}{(1-\eta)}\dS + \binom{k}{2}\qty(\frac{\eta}{1-\eta})^2\dS^2+\sum_{j=3}^k\binom{k}{j}\qty(\frac{-\eta}{1-\eta})^j\dS^j)\bGamma\dS\right\|_{op}
    \end{align*}
    Now upper bound all matrices with their operator norm and combine all terms with the same degree of $\dS$. We have
    \begin{align*}
        \|\tdelta\|_{op} &\le \sum_{j=2}^{k+2}\|\bGamma\|\|\Lbd\|\qty(\binom{k}{j}\qty(\frac{\eta}{1-\eta})^j+2\binom{k}{j-1}\qty(\frac{\eta}{1-\eta})^{j-1}+\binom{k}{j-2}\qty(\frac{\eta}{1-\eta})^{j-2})\|\dS\|^j\\
        &\leq \sum_{j=2}^{k+2} \|\bGamma\|\|\Lbd\| \qty((9k)^j+2(9k)^{j-1}+(9k)^{j-2})\|\dS\|^j\tag{$\frac{\eta}{1-\eta}\le 9, \binom{k}{j}\le k^j.$}\\
        &\le 4\sum_{j=2}^{k+2} \|\bGamma\|\|\Lbd\| \cdot (9k)^j\qty(C\sqrt{\frac{d}{n}})^j\tag{$\|\dS\|\le C\sqrt{\frac{d}{n}}.$}\\
        &\le 4\|\Lbd\|\|\bGamma\|\cdot \frac{81 C^2k^2d}{n}\cdot\frac{1}{1-(\frac{9kd^{1/2}}{n^{1/2}})}\le C'\frac{k^2d}{n}\le O\qty(\frac{1}{\log^3 d})
    \end{align*}

    Now upper bound the operator norm of the error term $\Delta := \E[\tdelta]$. Suppose $\bu:=\argmax_{\bu:\|\bu\|=1}\frac{\left\|\Delta \bu\right\|}{\|\bu\|}$, then the operator norm becomes:
    \begin{align*}
        \|\Delta\|&= \qty|\bu^\top \E[\tdelta]\bu|\\
        &=\E\qty[\qty|\bu^\top \tdelta \bu|\qty(\indi\qty{\|\tdelta\| \le C'\frac{k^2d}{n}}+\indi\qty{\|\tdelta\|> C'\frac{k^2d}{n}})]\\
        &\le C'\frac{k^2d}{n}+\int_{\frac{C'k^2d}{n}}^\infty\Pr[\|\tdelta\|\ge s]\mathrm{d}s
    \end{align*}
    When $\|\tdelta\|\ge s$ where $s\ge \frac{C'k^2d}{n}$, there exists some constant $C_1>0$ s.t.
    \begin{equation*}
        \|\tdelta\|\le \max\qty(\qty(C_1 k \|\dS\|)^2,\qty(C_1k\|\dS\|)^{k+2}).
    \end{equation*}
    Therefore, when $\|\tdelta\|\geq s$, $\|\dS\|\ge \min\{\frac{s^{1/2}}{C_1k},\frac{s^{1/(k+2)}}{C_1k}\}$. Like \Cref{lemma: concentration 1}, applying the tail bound (\ref{eqn: operator norm tail bound}) with $s'\le \min\qty{C_2\frac{s^{1/(k+2)}\sqrt{n}}{k}, C_3\frac{s^{1/(2k+4)}\sqrt{n}}{\sqrt{k}},C_4\frac{\sqrt{sn}}{k},C_5\frac{s^{1/4}\sqrt{n}}{\sqrt{k}}}$ where $C_2,C_3,C_4,C_5$ are some constant, we have the error term for the tail expectation
    \begin{align*}
    \int_{\frac{C'k^2d}{n}}^\infty\Pr[\|\tdelta\|\ge s]\mathrm{d}s&{}\le \int_{\frac{C'k^2d}{n}}^\infty\Pr[\|\dS\|\ge \min\{\frac{s^{1/2}}{C_1k},\frac{s^{1/(k+2)}}{C_1k}\}]\mathrm{d}s\\
    &{}\le 2\int_{\frac{C'k^2d}{n}}^\infty\exp{-s'^2}\mathrm{d}s.
    \end{align*}
    Use the exact same argument, $2\int_{\frac{C'k^2d}{n}}^\infty\exp{-s'^2}\mathrm{d}s\le \frac{k^2d}{n}.$
    Thus, the upper bound of $\|\Delta\|$ is:
    \begin{align*}
        \|\Delta\|
        &\le C'\frac{k^2d}{n}+\int_{\frac{C'k^2d}{n}}^\infty\Pr[\|\tdelta\|\ge s]\mathrm{d}s\\
        &\le C'\frac{k^2d}{n}+\int_{\frac{C'k^2d}{n}}^\infty\Pr[\|\dS\|\ge \min\{\frac{s^{1/2}}{C_1k},\frac{s^{1/(k+2)}}{C_1k}\}]\mathrm{d}s=O\qty(\frac{k^2d}{n})\le O\qty(\frac{1}{\log^3d}).
    \end{align*}
    Finally by \Cref{lemma: expectation form}, we know the error is in the form $\Delta=\alpha_1\Lbd\bGamma+\alpha_2\trace(\Lbd)\bGamma+\alpha_3\trace(\bGamma)\Lbd+\alpha_4\trace(\Lbd)\trace(\bGamma)\bI$ for all $\Lbd,\bGamma$. 
    Therefore $\alpha_1=O\qty(\frac{k^2d}{n}),\alpha_2,\alpha_3=O\qty(\frac{k^2}{n}),\alpha_4=O(\frac{k^2}{nd}).$
\end{proof}

\begin{lemma}\label{lemma: concentration 3}
    Suppose $\S = \frac{1}{n}\sum_{i=1}^n\x_i\x_i^\top$, $n = \Theta(d\log^5 d), k = O(\log d), \eta=\Theta(1)\in(0.1,0.9), \|\Lbd\|_{op}\leq \Theta(1), \|\bGamma\|_{op}\le \Theta(1)$. Then the expectation
    \begin{align*}
        \E\qty[\S \Lbd\S \bGamma(\bI-\eta \S)^k] = (1-\eta)^k \qty(\Lbd\bGamma + \Delta),
    \end{align*}
    where $\|\Delta\|_{op}\leq O\qty(\frac{1}{\log^3 d})$.
Moreover, the error is in the form $$\Delta=\alpha_1\Lbd\bGamma+\alpha_2\trace(\Lbd)\bGamma+\alpha_3\trace(\bGamma)\Lbd+\alpha_4\trace(\Lbd)\trace(\bGamma)\bI+\alpha_5\trace(\Lbd\bGamma)\bI$$
where $\alpha_1=O\qty(\frac{k^2d}{n}),\alpha_2,\alpha_3,\alpha_5=O\qty(\frac{k^2}{n}),\alpha_4=O(\frac{k^2}{nd}).$
\end{lemma}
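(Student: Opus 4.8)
The plan is to reproduce, essentially verbatim, the argument used for \Cref{lemma: concentration 1} and \Cref{lemma: concentration 2}, the only difference being the placement of the empirical covariance factors. Write $\dS := \S - \bI$, so that $\bI - \eta\S = (1-\eta)\qty(\bI - \tfrac{\eta}{1-\eta}\dS)$, and expand the $k$-th power binomially. Then
\begin{align*}
\S\Lbd\S\bGamma(\bI-\eta\S)^k
={}& (1-\eta)^k(\bI+\dS)\Lbd(\bI+\dS)\bGamma\\
&\times\qty(\bI - \tfrac{k\eta}{1-\eta}\dS + \binom{k}{2}\qty(\tfrac{\eta}{1-\eta})^2\dS^2 + \sum_{j=3}^{k}\binom{k}{j}\qty(\tfrac{-\eta}{1-\eta})^j\dS^j).
\end{align*}
Collecting by the total power of $\dS$: the degree-$0$ term is exactly $(1-\eta)^k\Lbd\bGamma$, which gives the claimed main term; the three degree-$1$ terms ($\dS\Lbd\bGamma$, $\Lbd\dS\bGamma$, and $-\tfrac{k\eta}{1-\eta}\Lbd\bGamma\dS$) all vanish in expectation since $\E[\dS]=\bzero$. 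I then define $(1-\eta)^k\tdelta$ to be the sum of all terms of degree $\ge 2$ in $\dS$, so that $\E[\S\Lbd\S\bGamma(\bI-\eta\S)^k] = (1-\eta)^k\qty(\Lbd\bGamma + \Delta)$ with $\Delta := \E[\tdelta]$, and it remains to bound $\Delta$.

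Next I would bound $\|\tdelta\|_{op}$ on the high-probability event $\|\dS\|_{op}\le C\sqrt{d/n}$ (which holds by taking $s=\sqrt d$ in the operator-norm tail bound of \citet{vershynin2018high}, using $n=\Theta(d\log^5 d)$). Grouping the degree-$\ge 2$ terms by the total power $j$ of $\dS$, each such term carries a scalar coefficient at most $(9k)^j + 2(9k)^{j-1} + (9k)^{j-2}$ — using $\binom{k}{j}\le k^j$ and $\tfrac{\eta}{1-\eta}\le 9$ on $\eta\in(0.1,0.9)$ — and operator norm at most $\|\Lbd\|_{op}\|\bGamma\|_{op}\|\dS\|_{op}^j \le \Theta(1)(C\sqrt{d/n})^j$. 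Summing the resulting geometric series (dominated by $j=2$) yields $\|\tdelta\|_{op} \le C'k^2 d/n = O(1/\log^3 d)$. The asymmetric placement of $\S$ relative to \Cref{lemma: concentration 1} changes nothing here, since at this step we only track operator norms of products.

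I would then upgrade the high-probability estimate to a bound on $\|\Delta\|_{op}=\|\E[\tdelta]\|_{op}$ by splitting the expectation at the threshold $C'k^2d/n$ and integrating the tail: $\|\Delta\|_{op} \le C'k^2d/n + \int_{C'k^2d/n}^{\infty}\Pr[\|\tdelta\|_{op}\ge s]\,\mathrm{d}s$. Since $\|\tdelta\|_{op}\ge s$ forces $\|\dS\|_{op}\ge\min\{s^{1/2}/(C_1k),\,s^{1/(k+2)}/(C_1k)\}$, I insert this into \eqref{eqn: operator norm tail bound} with the same four-way choice of the auxiliary parameter $s'$ used in \Cref{lemma: concentration 1}, and each of the four resulting integrals is $\le k^2d/n$; hence $\|\Delta\|_{op}=O(k^2d/n)=O(1/\log^3 d)$. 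Finally, invoking \Cref{lemma: expectation form} — which states that such Wishart expectations lie in the span of $\Lbd\bGamma,\ \trace(\Lbd)\bGamma,\ \trace(\bGamma)\Lbd,\ \trace(\Lbd)\trace(\bGamma)\bI,\ \trace(\Lbd\bGamma)\bI$ — I read off the asserted form $\Delta = \alpha_1\Lbd\bGamma + \alpha_2\trace(\Lbd)\bGamma + \alpha_3\trace(\bGamma)\Lbd + \alpha_4\trace(\Lbd)\trace(\bGamma)\bI + \alpha_5\trace(\Lbd\bGamma)\bI$, and by evaluating against rank-one and identity choices of $\Lbd,\bGamma$ (combined with the operator-norm bound just obtained and the trace normalizations) extract $\alpha_1 = O(k^2d/n)$, $\alpha_2,\alpha_3,\alpha_5 = O(k^2/n)$, $\alpha_4 = O(k^2/(nd))$. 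The one genuinely delicate point, exactly as in \Cref{lemma: concentration 1}, is the tail integration: the exponents in $s'$ must be chosen so that the sub-Weibull tail of $\|\dS\|_{op}$ integrates down to the target order $k^2d/n$; everything else is bookkeeping that transfers unchanged from the symmetric case.
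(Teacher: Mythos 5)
Your proposal reproduces the paper's own argument essentially verbatim: the same $\dS := \S - \bI$ decomposition, the same definition of $\tdelta$ as the degree-$\ge 2$ remainder, the same event-split and sub-Weibull tail integration to pass from a high-probability bound on $\|\tdelta\|_{op}$ to a bound on $\|\E[\tdelta]\|_{op}$, and the same invocation of \Cref{lemma: expectation form} to identify the five-term structure of $\Delta$ and read off the orders of the $\alpha_i$. No gaps; the only difference from \Cref{lemma: concentration 1}/\Cref{lemma: concentration 2} is, as you note, the placement of the $\S$ factors, which is irrelevant once you only track operator norms.
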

\begin{proof}
    Denote $\delta\S:=\S-\bI$. Then we expand the term $\S \Lbd\S \bGamma(\bI-\eta \S)^k$:
        \begin{align*}
            &\S \Lbd\S \bGamma(\bI-\eta \S)^k\\
            ={}&(1-\eta)^k\qty(\bI+\dS) \Lbd \qty(\bI+\dS)\bGamma\qty(\bI-\frac{\eta}{(1-\eta)}\dS)^k\\
            ={}&(1-\eta)^k\qty(\bI+\dS) \Lbd\bGamma\qty(\bI-\frac{k\eta}{(1-\eta)}\dS + \binom{k}{2}\qty(\frac{\eta}{1-\eta})^2\dS^2+\sum_{j=3}^k\binom{k}{j}\qty(\frac{-\eta}{1-\eta})^j\dS^j)\\
            +{}&(1-\eta)^k\qty(\bI+\dS) \Lbd\dS\bGamma\qty(\bI-\frac{k\eta}{(1-\eta)}\dS + \binom{k}{2}\qty(\frac{\eta}{1-\eta})^2\dS^2+\sum_{j=3}^k\binom{k}{j}\qty(\frac{-\eta}{1-\eta})^j\dS^j)
        \end{align*}
        Take expectation to both sides. Note that $\E[\dS]=0$, so all the first order term vanish. We denote 
        $$(1-\eta)^k\tdelta = \S \Lbd(\bI-\eta \S)^k \bGamma\S - (1-\eta)^k \qty(\Lbd+\dS\cdot\Lbd\bGamma +\Lbd\dS\bGamma- \frac{k\eta}{1-\eta}\Lbd\bGamma\cdot\dS),$$
        which denotes all the higher order terms (the degree of $\dS\ge 2$.)
    
        We can estimate the expectation using similar technique as in \Cref{lemma: concentration 1}. Given $s=\sqrt{d}$ and $\|\dS\|\le \max\qty(\delta,\delta^2)=C\sqrt{\frac{d}{n}}$ (since $n=\Theta(d\log^5 d)$), 
        we upper bound the operator norm of $\tdelta$. 
        We directly expand the formula and upper bound all matrices with their operator norm and combine all terms with the same degree of $\dS$. We have
        \begin{align*}
            \|\tdelta\|_{op} &\le \sum_{j=2}^{k+2}\|\bGamma\|\|\Lbd\|\qty(\binom{k}{j}\qty(\frac{\eta}{1-\eta})^j+2\binom{k}{j-1}\qty(\frac{\eta}{1-\eta})^{j-1}+\binom{k}{j-2}\qty(\frac{\eta}{1-\eta})^{j-2})\|\dS\|^j\\
            &\leq \sum_{j=2}^{k+2} \|\bGamma\|\|\Lbd\| \qty((9k)^j+2(9k)^{j-1}+(9k)^{j-2})\|\dS\|^j\tag{$\frac{\eta}{1-\eta}\le 9, \binom{k}{j}\le k^j.$}\\
            &\le 4\sum_{j=2}^{k+2} \|\bGamma\|\|\Lbd\| \cdot (9k)^j\qty(C\sqrt{\frac{d}{n}})^j\tag{$\|\dS\|\le C\sqrt{\frac{d}{n}}.$}\le C'\frac{k^2d}{n}\le O\qty(\frac{1}{\log^3 d})
        \end{align*}
    
        Now upper bound the operator norm of $\Delta := \E[\tdelta]$. Suppose $\bu:=\argmax_{\bu:\|\bu\|=1}\frac{\left\|\Delta \bu\right\|}{\|\bu\|}$, then
        \begin{align*}
            \|\Delta\|
            &=\E\qty[\qty|\bu^\top \tdelta \bu|\qty(\indi\qty{\|\tdelta\| \le C'\frac{k^2d}{n}}+\indi\qty{\|\tdelta\|> C'\frac{k^2d}{n}})]\\
            &\le C'\frac{k^2d}{n}+\int_{\frac{C'k^2d}{n}}^\infty\Pr[\|\tdelta\|\ge s]\mathrm{d}s
        \end{align*}
        When $\|\tdelta\|\ge s$ where $s\ge \frac{C'k^2d}{n}$, there exists some constant $C_1>0$ s.t.
        \begin{equation*}
            \|\tdelta\|\le \max\qty(\qty(C_1 k \|\dS\|)^2,\qty(C_1k\|\dS\|)^{k+2}).
        \end{equation*}
        Therefore, when $\|\tdelta\|\geq s$, $\|\dS\|\ge \min\{\frac{s^{1/2}}{C_1k},\frac{s^{1/(k+2)}}{C_1k}\}$. Like \Cref{lemma: concentration 1}, applying the tail bound (\ref{eqn: operator norm tail bound}) with $s'\le \min\qty{C_2\frac{s^{1/(k+2)}\sqrt{n}}{k}, C_3\frac{s^{1/(2k+4)}\sqrt{n}}{\sqrt{k}},C_4\frac{\sqrt{sn}}{k},C_5\frac{s^{1/4}\sqrt{n}}{\sqrt{k}}}$ where $C_2,C_3,C_4,C_5$ are some constant, we have 
        $$\int_{\frac{C'k^2d}{n}}^\infty\Pr[\|\tdelta\|\ge s]\mathrm{d}s\le \int_{\frac{C'k^2d}{n}}^\infty\Pr[\|\dS\|\ge \min\{\frac{s^{1/2}}{C_1k},\frac{s^{1/(k+2)}}{C_1k}\}]\mathrm{d}s\le 2\int_{\frac{C'k^2d}{n}}^\infty\exp{-s'^2}\mathrm{d}s.$$
        Use the exact same argument, $2\int_{\frac{C'k^2d}{n}}^\infty\exp{-s'^2}\mathrm{d}s\le \frac{k^2d}{n}.$
        Thus, the upper bound of $\|\Delta\|$ is:
        \begin{align*}
            \|\Delta\|
            &\le C'\frac{k^2d}{n}+\int_{\frac{C'k^2d}{n}}^\infty\Pr[\|\dS\|\ge \min\{\frac{s^{1/2}}{C_1k},\frac{s^{1/(k+2)}}{C_1k}\}]\mathrm{d}s=O\qty(\frac{k^2d}{n})\le O\qty(\frac{1}{\log^3d}).
        \end{align*}
        Finally by \Cref{lemma: expectation form}, we know the error is in the form $\Delta=\alpha_1\Lbd\bGamma+\alpha_2\trace(\Lbd)\bGamma+\alpha_3\trace(\bGamma)\Lbd+\alpha_4\trace(\Lbd)\trace(\bGamma)\bI$ for all $\Lbd,\bGamma$. 
        Therefore $\alpha_1=O\qty(\frac{k^2d}{n}),\alpha_2,\alpha_3=O\qty(\frac{k^2}{n}),\alpha_4=O(\frac{k^2}{nd}).$
    \end{proof}

\begin{lemma}\label{lemma: concentration 4}
    Suppose $\S = \frac{1}{n}\sum_{i=1}^n\x_i\x_i^\top$, $n = \Theta(d\log^5 d), k = O(\log d), \eta=\Theta(1)\in(0.1,0.9), \|\Lbd\|_{op}\leq \Theta(1)$. Then there exists $\delta=O\qty(\frac{k^2d}{n})\leq O\qty(\frac{1}{\log^3 d})$, the expectation is
    \begin{align*}
        \E\qty[\Lbd (\bI-\eta \S)^k] = (1-\eta)^k (1+\delta)\Lbd,
    \end{align*}
\end{lemma}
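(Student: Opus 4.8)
The plan is to reduce the statement to a computation about $\E\bigl[(\bI-\eta\S)^k\bigr]$ alone. Since $\Lbd$ is deterministic it pulls out of the expectation, so $\E[\Lbd(\bI-\eta\S)^k]=\Lbd\,\E[(\bI-\eta\S)^k]$, and it suffices to show $\E[(\bI-\eta\S)^k]=(1-\eta)^k(1+\delta)\bI$ for some scalar $\delta$ with $|\delta|=O(k^2d/n)$; note this $\delta$ does not depend on $\Lbd$. To see that the expectation is a multiple of the identity, I would invoke rotational invariance of the standard Gaussian: for any orthogonal $Q$, $(Q\x_1,\dots,Q\x_n)\stackrel{d}{=}(\x_1,\dots,\x_n)$, hence $Q\S Q^\top\stackrel{d}{=}\S$, and therefore $Q\,\E[(\bI-\eta\S)^k]\,Q^\top=\E[(\bI-\eta\S)^k]$ for all orthogonal $Q$; a matrix commuting with every rotation is a scalar multiple of $\bI$. (Finiteness of the expectation is routine, as the entries of $(\bI-\eta\S)^k$ are bounded by a polynomial in $\|\S\|_{op}$, which has sub-exponential tails.)

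For the size of $\delta$ I would run the same expansion as in the proof of \Cref{lemma: concentration 1}. Writing $\dS:=\S-\bI$, we have $\bI-\eta\S=(1-\eta)\bigl(\bI-\tfrac{\eta}{1-\eta}\dS\bigr)$, so $(\bI-\eta\S)^k=(1-\eta)^k\bigl(\bI-\tfrac{\eta}{1-\eta}\dS\bigr)^k$; expanding the $k$-th power binomially and using $\E[\dS]=\bzero$ to kill the linear term gives
\begin{equation*}
\E[(\bI-\eta\S)^k]=(1-\eta)^k\bigl(\bI+\E[\tdelta]\bigr),\qquad \tdelta:=\sum_{j=2}^{k}\binom{k}{j}\Bigl(\tfrac{-\eta}{1-\eta}\Bigr)^{j}\dS^{\,j},
\end{equation*}
where $\tdelta$ collects exactly the terms of degree $\ge 2$ in $\dS$. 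Because $\E[\tdelta]$ is again rotationally invariant it equals $\delta\bI$ for a scalar $\delta$, so $|\delta|=\|\E[\tdelta]\|_{op}$ and everything comes down to bounding this operator norm.

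It then remains to estimate $\|\E[\tdelta]\|_{op}$, which is precisely the computation already done in \Cref{lemma: concentration 1} (with $\Lbd$ there taken to be $\bI$ and no trailing factors of $\S$): on the high-probability event $\|\dS\|_{op}\le C\sqrt{d/n}$ — valid since $n=\Theta(d\log^5 d)$, via the tail bound \eqref{eqn: operator norm tail bound} with $s=\sqrt d$ — one has $\|\tdelta\|_{op}\le\sum_{j=2}^{k}(9k)^{j}\bigl(C\sqrt{d/n}\bigr)^{j}\le C'k^2d/n$, using $\tfrac{\eta}{1-\eta}\le 9$ and $\binom{k}{j}\le k^j$; and on the complementary rare event the contribution to the expectation is controlled by integrating the sub-Gaussian tail of $\|\dS\|_{op}$ against the polynomial-in-$\|\dS\|_{op}$ bound on $\|\tdelta\|_{op}$, verbatim as in the tail-integral step of \Cref{lemma: concentration 1}. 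This yields $|\delta|=\|\E[\tdelta]\|_{op}=O(k^2d/n)\le O(1/\log^3 d)$, and the lemma follows. The only mildly delicate ingredient is that last tail integral, but since it is identical to the argument already given for \Cref{lemma: concentration 1}, there is no real obstacle — this lemma is a strict special case of machinery already in place.
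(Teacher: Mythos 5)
Your proof is correct and follows essentially the same technical path as the paper's: same binomial expansion of $(\bI-\eta\S)^k=(1-\eta)^k\bigl(\bI-\tfrac{\eta}{1-\eta}\dS\bigr)^k$ around $\S=\bI$, same observation that $\E[\dS]=\bzero$ kills the linear term, and the same concentration-plus-tail-integral estimate of the degree-$\ge 2$ remainder $\tdelta$ inherited verbatim from \Cref{lemma: concentration 1}. The one place where your route diverges from the paper's, to its advantage, is in how you establish that the answer has the form $c\Lbd$: you pull the deterministic $\Lbd$ out of the expectation at the outset and appeal to rotational invariance of the isotropic Gaussian to conclude that $\E[(\bI-\eta\S)^k]$ commutes with every orthogonal matrix, hence equals $c\bI$ for a scalar $c$; this is direct, elementary, and makes it manifest that $\delta$ does not depend on $\Lbd$. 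The paper instead carries $\Lbd$ through the expansion (so $\|\Lbd\|_{op}$ appears as a factor in the operator-norm bound on $\tdelta$) and then appeals to the combinatorial \Cref{lemma: expectation form} about the structure of Wishart moments $\E[\S\Lbd\S^k\bGamma\S^{k'}]$ to read off that the error must be of the form $\alpha_1\Lbd$. For this particular lemma, where the deterministic factor genuinely commutes out, your rotational-invariance shortcut is the cleaner argument and avoids reaching for that combinatorial machinery.
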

\begin{proof}
    Denote $\delta\S:=\S-\bI$. Then we expand the term $\Lbd(\bI-\eta \S)^k$:
        \begin{align*}
            \Lbd(\bI-\eta \S)^k
            ={}&(1-\eta)^k \Lbd\qty(\bI-\frac{\eta}{(1-\eta)}\dS)^k \\
            ={}&(1-\eta)^k \Lbd\qty(\bI-\frac{k\eta}{(1-\eta)}\dS + \binom{k}{2}\qty(\frac{\eta}{1-\eta})^2\dS^2+\sum_{j=3}^k\binom{k}{j}\qty(\frac{-\eta}{1-\eta})^j\dS^j)
        \end{align*}
        Take expectation to both sides. Note that $\E[\dS]=0$, so all the first order term vanish. We denote 
        $$(1-\eta)^k\tdelta = \Lbd(\bI-\eta \S)^k - (1-\eta)^k \qty(\Lbd- \frac{k\eta}{1-\eta}\Lbd\cdot\dS),$$
        which denotes all the higher order terms (the degree of $\dS\ge 2$.)
    
        We can estimate the expectation using similar technique as in \Cref{lemma: concentration 1}. First, given $s=\sqrt{d}$ and $\|\dS\|\le \max\qty(\delta,\delta^2)=C\sqrt{\frac{d}{n}}$ (since $n=\Theta(d\log^5 d)$), 
        we upper bound the operator norm of $\tdelta$:
        \begin{align*}
            \|\tdelta\|_{op} &\le \left\|\Lbd\qty(\binom{k}{2}\qty(\frac{\eta}{1-\eta})^2\dS^2+\sum_{j=3}^k\binom{k}{j}\qty(\frac{-\eta}{1-\eta})^j\dS^j)\right\|_{op}
        \end{align*}
        Now upper bound all matrices by operator norm and combine all terms with the same degree of $\dS$:
        \begin{align*}
            \|\tdelta\|_{op} &\le \sum_{j=2}^{k}\|\Lbd\|\qty(\binom{k}{j}\qty(\frac{\eta}{1-\eta})^j)\|\dS\|^j\leq \sum_{j=2}^{k+2} \|\Lbd\| (9k)^j\|\dS\|^j\tag{$\frac{\eta}{1-\eta}\le 9, \binom{k}{j}\le k^j.$}\\
            &\le \|\Lbd\|\cdot \frac{81 C^2k^2d}{n}\cdot\frac{1}{1-(\frac{9kd^{1/2}}{n^{1/2}})}\le C'\frac{k^2d}{n}\le O\qty(\frac{1}{\log^3 d})\tag{$\|\dS\|\le C\sqrt{\frac{d}{n}}.$}
        \end{align*}
    
        Now upper bound the operator norm of the error. Suppose $\bu:=\argmax_{\bu:\|\bu\|=1}\frac{\left\|\Delta \bu\right\|}{\|\bu\|}$, we have
        \begin{align*}
            \|\Delta\|= \qty|\bu^\top \E[\tdelta]\bu|
            &=\E\qty[\qty|\bu^\top \tdelta \bu|\qty(\indi\qty{\|\tdelta\| \le C'\frac{k^2d}{n}}+\indi\qty{\|\tdelta\|> C'\frac{k^2d}{n}})]\\
            &\le C'\frac{k^2d}{n}+\int_{\frac{C'k^2d}{n}}^\infty\Pr[\|\tdelta\|\ge s]\mathrm{d}s
        \end{align*}
        When $\|\tdelta\|\ge s$ where $s\ge \frac{C'k^2d}{n}$, there exists some constant $C_1>0$ s.t.
        \begin{equation*}
            \|\tdelta\|\le \max\qty(\qty(C_1 k \|\dS\|)^2,\qty(C_1k\|\dS\|)^{k+2}).
        \end{equation*}
        Therefore, when $\|\tdelta\|\geq s$, $\|\dS\|\ge \min\{\frac{s^{1/2}}{C_1k},\frac{s^{1/(k+2)}}{C_1k}\}$. Like \Cref{lemma: concentration 1}, applying the tail bound (\ref{eqn: operator norm tail bound}) with $s'\le \min\qty{C_2\frac{s^{1/(k+2)}\sqrt{n}}{k}, C_3\frac{s^{1/(2k+4)}\sqrt{n}}{\sqrt{k}},C_4\frac{\sqrt{sn}}{k},C_5\frac{s^{1/4}\sqrt{n}}{\sqrt{k}}}$ where $C_2,C_3,C_4,C_5$ are some constant, we have the error term for the tail expectation
        $$\int_{\frac{C'k^2d}{n}}^\infty\Pr[\|\tdelta\|\ge s]\mathrm{d}s\le \int_{\frac{C'k^2d}{n}}^\infty\Pr[\|\dS\|\ge \min\{\frac{s^{1/2}}{C_1k},\frac{s^{1/(k+2)}}{C_1k}\}]\mathrm{d}s\le 2\int_{\frac{C'k^2d}{n}}^\infty\exp{-s'^2}\mathrm{d}s.$$
        Use the exact same argument, $2\int_{\frac{C'k^2d}{n}}^\infty\exp{-s'^2}\mathrm{d}s\le \frac{k^2d}{n}.$
        Thus, the upper bound of $\|\Delta\|$ is:
        \begin{align*}
            \|\Delta\|
            &\le C'\frac{k^2d}{n}+\int_{\frac{C'k^2d}{n}}^\infty\Pr[\|\dS\|\ge \min\{\frac{s^{1/2}}{C_1k},\frac{s^{1/(k+2)}}{C_1k}\}]\mathrm{d}s=O\qty(\frac{k^2d}{n})\le O\qty(\frac{1}{\log^3d}).
        \end{align*}
        Finally by \Cref{lemma: expectation form}, we know the error is in the form $\Delta=\alpha_1\Lbd$ for all $\Lbd$. 
        So $\alpha_1=O\qty(\frac{k^2d}{n}).$
    \end{proof}

\subsection{Concentration lemmas for out-of-distribution data}
For non-isotropic covariance Gaussian data input, we also have the concentration around the covariance $\bSigma$ when $n=\Theta(d\log^c d)$ for $c>0$. We still denote $\S = \frac{1}{n}\X\X^\top$. The following lemmas are involved in the calculation for the evaluation process, for in-distribution and out-of-distribution input examples $\X$.

\begin{lemma}
    \label{lemma: concentration ood 2}
    Suppose $\S = \frac{1}{n}\sum_{i=1}^n\x_i\x_i^\top$ where $\x_i\sim\mathcal{N}(\bzero_d,\bSigma)$, $\frac{\delta}{\eta}\le \lambda_{\min}(\bSigma)\le\lambda_{\max}(\bSigma)\le\frac{2-\delta}{\eta}$ for some constant $\delta>0.1$, $n = \Theta(d\log^5 d), k = O(\log d), \eta=\Theta(1)\in(0.1,0.9)$. Then the expectation
    \begin{align*}
        \trace\qty(\E(\bI-\eta\S)^{k}) \le 2d(1-\delta)^k.
    \end{align*}
\end{lemma}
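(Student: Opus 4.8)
The plan is to reduce to the isotropic Wishart case and combine an operator-norm concentration bound with the elementary inequality $\trace(M^k)\le d\,\|M\|_{op}^k$, valid for every symmetric $M$. First I would write $\x_i=\bSigma^{1/2}\z_i$ with $\z_i\sim\mathcal N(\bzero_d,\bI_d)$ i.i.d., so that $\S=\bSigma^{1/2}\widehat{\S}\bSigma^{1/2}$ where $\widehat{\S}:=\frac1n\sum_{i=1}^n\z_i\z_i^\top$, and set $\bE:=\widehat{\S}-\bI$. Then $\bI-\eta\S=(\bI-\eta\bSigma)-\eta\,\bSigma^{1/2}\bE\,\bSigma^{1/2}$ is a difference of symmetric matrices, so the triangle inequality and submultiplicativity give $\|\bI-\eta\S\|_{op}\le\|\bI-\eta\bSigma\|_{op}+\eta\|\bSigma\|_{op}\|\bE\|_{op}$. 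The assumption $\delta\le\eta\lambda_i(\bSigma)\le 2-\delta$ forces $1-\eta\lambda_i(\bSigma)\in[\delta-1,\,1-\delta]$, hence $\|\bI-\eta\bSigma\|_{op}\le 1-\delta$, while $\eta\|\bSigma\|_{op}\le 2-\delta$; therefore $\|\bI-\eta\S\|_{op}\le(1-\delta)+(2-\delta)\|\bE\|_{op}$.

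Next, for symmetric $M$ with eigenvalues $\mu_1,\dots,\mu_d$ one has $\trace(M^k)=\sum_i\mu_i^k\le\sum_i|\mu_i|^k\le d\,\|M\|_{op}^k$, so by linearity of expectation $\trace\big(\E(\bI-\eta\S)^k\big)=\E\,\trace\big((\bI-\eta\S)^k\big)\le d\,\E\big[\|\bI-\eta\S\|_{op}^k\big]$, and it suffices to prove $\E[\|\bI-\eta\S\|_{op}^k]\le 2(1-\delta)^k$. I would split the expectation over the event $\mathcal A:=\{\|\bE\|_{op}\le C\sqrt{d/n}\}$, which as in \Cref{lemma: concentration 1} has probability at least $1-2\exp(-d)$ under $n=\Theta(d\log^5d)$. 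On $\mathcal A$ the bound above yields $\|\bI-\eta\S\|_{op}^k\le\big((1-\delta)+(2-\delta)C\sqrt{d/n}\big)^k=(1-\delta)^k\big(1+\tfrac{(2-\delta)C\sqrt{d/n}}{1-\delta}\big)^k$; since $\sqrt{d/n}=\Theta(\log^{-5/2}d)$ and $k=O(\log d)$, the last factor is $\exp(O(\log^{-3/2}d))=1+o(1)$, so this piece contributes at most $\tfrac32(1-\delta)^k$ for $d$ large.

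For the complementary event I would use the crude bound $\|\bI-\eta\S\|_{op}\le 1+\eta\|\S\|_{op}\le 1+(2-\delta)(1+\|\bE\|_{op})$ together with the operator-norm tail bound \eqref{eqn: operator norm tail bound}: after a layer-cake expansion one integrates $\big(1+(2-\delta)(1+s)\big)^k$ against $\Pr[\|\bE\|_{op}\ge s]$, and exactly as in the proof of \Cref{lemma: concentration 1} the fact that $n=\Theta(d\log^5d)\gg d$ makes the effective deviation parameter scale like $\sqrt n$ in the regime $s=\Theta(1)$, so the whole contribution is at most $\exp(-\Omega(d))$. Since $(1-\delta)^k=d^{-\Theta(1)}$ dominates $\exp(-\Omega(d))$, this piece is at most $\tfrac12(1-\delta)^k$, and adding the two pieces gives $\E[\|\bI-\eta\S\|_{op}^k]\le 2(1-\delta)^k$, hence $\trace(\E(\bI-\eta\S)^k)\le 2d(1-\delta)^k$. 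The only mildly delicate point is the tail integral on $\mathcal A^c$: one must respect the $\max(\beta,\beta^2)$ shape of \eqref{eqn: operator norm tail bound} when choosing the deviation parameter, but since here we only need the tail to be negligible against a fixed inverse polynomial in $d$---not the sharp $O(k^2d/n)$ of \Cref{lemma: concentration 1}---the bookkeeping is routine, and a Cauchy--Schwarz split against $\Pr(\mathcal A^c)^{1/2}$ is an even quicker alternative.
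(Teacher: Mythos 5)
Your proof is correct and takes a genuinely different route from the paper's. The paper expands $(\bI-\eta\S)^k$ in powers of $\dS:=\S-\bSigma$ around the factored matrix $(1-\delta)^k\bigl(\frac{\bI-\eta\bSigma}{1-\delta}\bigr)^k$, takes expectation so the first-order term vanishes, and shows the higher-order error matrix has operator norm $O(k^2d/n)<1/2$ by the same truncate-then-integrate-the-tail device as in \Cref{lemma: concentration 1}; it then bounds the trace of the sum by $\frac{3}{2}d(1-\delta)^k$. You instead collapse the matrix problem to a scalar one immediately via $\trace\qty(\E(\bI-\eta\S)^k)=\E\trace\qty((\bI-\eta\S)^k)\le d\,\E\bigl[\|\bI-\eta\S\|_{op}^k\bigr]$, use the triangle inequality $\|\bI-\eta\S\|_{op}\le(1-\delta)+(2-\delta)\|\bE\|_{op}$ pointwise, and split on a high-probability event; on the good event the factor $(1+O(\sqrt{d/n}))^k=1+o(1)$, and the bad event is handled by Cauchy–Schwarz against $\Pr(\mathcal A^c)^{1/2}=e^{-\Omega(d)}$, which is negligible against $(1-\delta)^k=d^{-\Theta(1)}$. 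Your route avoids the binomial-style expansion entirely (which, as written in the paper, is only heuristically valid since $\dS$ and $\bI-\eta\bSigma$ need not commute, although the paper only uses it for operator-norm counts so the imprecision is harmless) and avoids computing a sharp $O(k^2d/n)$ error bound, since the constant $2$ only requires the tail to be $o\bigl((1-\delta)^k\bigr)$. What you give up is that the paper's expansion yields the more refined multiplicative statement $\E(\bI-\eta\S)^k=(1-\delta)^k\bigl((\tfrac{\bI-\eta\bSigma}{1-\delta})^k+\Delta\bigr)$ with $\|\Delta\|_{op}$ quantitatively small, which is the kind of structural control reused elsewhere in the paper; for the trace bound alone, your argument is shorter and more transparent.
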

\begin{proof}
Denote $\delta\S:=\S-\bSigma$. Then we expand the term $\Lbd(\bI-\eta \S)^k$:
        \begin{align*}
            &(\bI-\eta \S)^k
            ={}(1-\delta)^k \qty(\frac{\bI-\eta\bSigma}{1-\delta}-\frac{\eta}{1-\delta}\dS)^k \\
            ={}&(1-\delta)^k \qty(\qty(\frac{\bI-\eta\bSigma}{1-\delta})^k-\frac{k\eta}{(1-\delta)}\qty(\frac{\bI-\eta\bSigma}{1-\delta})^{k-1}\dS +\sum_{j=2}^k\binom{k}{j}\qty(\frac{\bI-\eta\bSigma}{1-\delta})^{k-j}\qty(\frac{-\eta}{1-\delta})^j\dS^j)
        \end{align*}
        Take expectation to both sides. Note that $\E[\dS]=0$, so all the first order term vanish. We denote 
        $$(1-\delta)^k\tdelta = (\bI-\eta \S)^k - (1-\delta)^k \qty(\qty(\frac{\bI-\eta\bSigma}{1-\delta})^k-\frac{k\eta}{(1-\delta)}\qty(\frac{\bI-\eta\bSigma}{1-\delta})^{k-1}\dS),$$
        which denotes all the higher order terms (the degree of $\dS\ge 2$). Note $\norm{\frac{\bI-\eta\bSigma}{1-\delta}}_{op}\le 1$.
    
        We can estimate the expectation using similar technique as in \Cref{lemma: concentration 1}. First, given $s=\sqrt{d}$ and $\|\dS\|\le \max\qty(\delta,\delta^2)=C\sqrt{\frac{d}{n}}$ (since $n=\Theta(d\log^5 d)$), 
        we upper bound the operator norm of $\tdelta$:
        \begin{align*}
            \|\tdelta\|_{op} &\le \left\|\sum_{j=2}^k\binom{k}{j}\qty(\frac{\bI-\eta\bSigma}{1-\delta})^{k-j}\qty(\frac{-\eta}{1-\delta})^j\dS^j\right\|_{op}
        \end{align*}
        Now upper bound all matrices by operator norm and combine all terms with the same degree of $\dS$:
        \begin{align*}
            \|\tdelta\|_{op} &\le \sum_{j=2}^{k}\qty(\binom{k}{j}\qty(\frac{\eta}{1-\delta})^j)\|\dS\|^j\leq \sum_{j=2}^{k+2}  (9k)^j\|\dS\|^j\tag{$\frac{\eta}{1-\eta}\le 9, \binom{k}{j}\le k^j.$}\\
            &\le \frac{81 C^2k^2d}{n}\cdot\frac{1}{1-(\frac{9kd^{1/2}}{n^{1/2}})}\le C'\frac{k^2d}{n}\le O\qty(\frac{1}{\log^3 d})\tag{$\|\dS\|\le C\sqrt{\frac{d}{n}}.$}
        \end{align*}
    
        Now upper bound the operator norm of the error. Suppose $\bu:=\argmax_{\bu:\|\bu\|=1}\frac{\left\|\Delta \bu\right\|}{\|\bu\|}$, we have
        \begin{align*}
            \|\Delta\|= \qty|\bu^\top \E[\tdelta]\bu|
            &=\E\qty[\qty|\bu^\top \tdelta \bu|\qty(\indi\qty{\|\tdelta\| \le C'\frac{k^2d}{n}}+\indi\qty{\|\tdelta\|> C'\frac{k^2d}{n}})]\\
            &\le C'\frac{k^2d}{n}+\int_{\frac{C'k^2d}{n}}^\infty\Pr[\|\tdelta\|\ge s]\mathrm{d}s
        \end{align*}
        When $\|\tdelta\|\ge s$ where $s\ge \frac{C'k^2d}{n}$, there exists some constant $C_1>0$ s.t.
        \begin{equation*}
            \|\tdelta\|\le \max\qty(\qty(C_1 k \|\dS\|)^2,\qty(C_1k\|\dS\|)^{k+2}).
        \end{equation*}
        Therefore, when $\|\tdelta\|\geq s$, $\|\dS\|\ge \min\{\frac{s^{1/2}}{C_1k},\frac{s^{1/(k+2)}}{C_1k}\}$. Like \Cref{lemma: concentration 1}, applying the tail bound (\ref{eqn: operator norm tail bound}) with $s'\le \min\qty{C_2\frac{s^{1/(k+2)}\sqrt{n}}{k}, C_3\frac{s^{1/(2k+4)}\sqrt{n}}{\sqrt{k}},C_4\frac{\sqrt{sn}}{k},C_5\frac{s^{1/4}\sqrt{n}}{\sqrt{k}}}$ where $C_2,C_3,C_4,C_5$ are some constant, we have  the error term for the tail expectation
        $$\int_{\frac{C'k^2d}{n}}^\infty\Pr[\|\tdelta\|\ge s]\mathrm{d}s\le \int_{\frac{C'k^2d}{n}}^\infty\Pr[\|\dS\|\ge \min\{\frac{s^{1/2}}{C_1k},\frac{s^{1/(k+2)}}{C_1k}\}]\mathrm{d}s\le 2\int_{\frac{C'k^2d}{n}}^\infty\exp{-s'^2}\mathrm{d}s.$$
        Use the exact same argument, $2\int_{\frac{C'k^2d}{n}}^\infty\exp{-s'^2}\mathrm{d}s\le \frac{k^2d}{n}.$
        Thus, the upper bound of $\|\Delta\|$ is:
        \begin{align*}
            \|\Delta\|
            &\le C'\frac{k^2d}{n}+\int_{\frac{C'k^2d}{n}}^\infty\Pr[\|\dS\|\ge \min\{\frac{s^{1/2}}{C_1k},\frac{s^{1/(k+2)}}{C_1k}\}]\mathrm{d}s=O\qty(\frac{k^2d}{n})\le O\qty(\frac{1}{\log^3d})<\frac{1}{2}.
        \end{align*}
        Finally, the absolute value of the trace should be upper bounded by $$\trace\qty((1-\delta)^k\qty(\qty(\frac{\bI-\eta\bSigma}{1-\delta})^k+\Delta))\le 2d(1-\delta)^k.$$
\end{proof}

The next lemma deals with the prediction error. 
\begin{lemma}
    \label{lemma: concentration ood 1}
    Suppose $\S = \frac{1}{n}\sum_{i=1}^n\x_i\x_i^\top$ where $\x_i\sim\mathcal{N}(\bzero_d,\bSigma)$, and the covariance matrix satisfies $\frac{\delta}{\eta}\le \lambda_{\min}(\bSigma)\le\lambda_{\max}(\bSigma)\le\frac{2-\delta}{\eta}$ for some constant $\delta>0$. Assume $n = \Theta(d\log^5 d), k = O(\log d), \eta=\Theta(1)\in(0.1,0.9).$ Denote that $\A:=\widetilde{\V}+\eta \bI,\B:=\widetilde{\W}-\bI$, $ \norm{\A}_{op},\norm{\B}_{op}\le \Theta(d^{-c})$. Then for any $i<k$,
    \begin{align*}
\E\norm{(\bI+\widetilde{\V}\S\widetilde{\W})^i({\w}_{k-i}+\widetilde{\V}\S(\widetilde{\W}{\w}_{k-i}-\w^*)-\w_{k-i+1}))}^2\le O\qty(\frac{(1-\delta)^{2i}}{d^{-2c+1}})
    \end{align*}
\end{lemma}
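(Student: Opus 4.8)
The plan is to reduce the claim to a deterministic operator-norm estimate by exploiting the independence of $\w^*$ and $\S$, and then to run the contraction-plus-concentration argument already used for \Cref{lemma: concentration ood 2}, absorbing the large-deviation event of $\S$ via the tail bound of \citet{vershynin2018high}. First I would simplify the vector whose squared norm is to be bounded. Writing $\A:=\widetilde{\V}+\eta\bI$ and $\B:=\widetilde{\W}-\bI$, so that $\norm{\A}_{op},\norm{\B}_{op}=O(d^{-c})$, and noting that since $i<k$ the index $k-i+1$ is at most $k$ so that $\w_{k-i+1}=\w_{k-i}-\eta\S(\w_{k-i}-\w^*)$ is a genuine gradient step, the identity $\w_j-\w^*=-(\bI-\eta\S)^j\w^*$ gives
\begin{align*}
\w_{k-i}+\widetilde{\V}\S(\widetilde{\W}\w_{k-i}-\w^*)-\w_{k-i+1}
={}&-\A\S(\bI-\eta\S)^{k-i}\w^*\\
&+\bigl(-\eta\S\B+\A\S\B\bigr)\bigl(\bI-(\bI-\eta\S)^{k-i}\bigr)\w^*\ =:\ \bP_i\,\w^*.
\end{align*}
Likewise $\bI+\widetilde{\V}\S\widetilde{\W}=(\bI-\eta\S)+\bigl(-\eta\S\B+\A\S+\A\S\B\bigr)$, a perturbation of $\bI-\eta\S$ of operator norm $O(d^{-c})(1+\norm{\S}_{op})$.

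Next, since $\w^*\sim\mathcal{N}(\bzero_d,\bI)$ is independent of $\S$, I would condition on $\S$ and integrate out $\w^*$ first, using $\E_{\w^*}\|M\w^*\|^2=\|M\|_F^2\le d\,\norm{M}_{op}^2$, which reduces the target to bounding $\E_{\S}\bigl[\,\norm{(\bI+\widetilde{\V}\S\widetilde{\W})^i}_{op}^2\,\norm{\bP_i}_{op}^2\,\bigr]$. On the good event $\gE:=\{\norm{\S-\bSigma}_{op}\le r\}$ with $r:=C\sqrt{d/n}=O(1/\log^{5/2}d)$, which has probability $1-2e^{-d}$ by Theorem~4.6.1 of \citet{vershynin2018high}, the hypothesis $\delta\bI\preceq\eta\bSigma\preceq(2-\delta)\bI$ yields $\norm{\bI-\eta\S}_{op}\le(1-\delta)+\eta r\le(1-\delta)\bigl(1+O(1/\log^2 d)\bigr)$ and $\norm{\S}_{op}=O(1)$; hence for every $m\le k=O(\log d)$ one has $\norm{(\bI-\eta\S)^m}_{op}=O((1-\delta)^m)$ since the geometric correction $(1+O(1/\log^2 d))^{O(\log d)}$ is $O(1)$, the same estimate with the extra $O(d^{-c})$ perturbation gives $\norm{(\bI+\widetilde{\V}\S\widetilde{\W})^i}_{op}=O((1-\delta)^i)$, and $\norm{\bP_i}_{op}\le\norm{\A}_{op}\norm{\S}_{op}\norm{(\bI-\eta\S)^{k-i}}_{op}+O(d^{-c})\norm{\S}_{op}\bigl(1+\norm{(\bI-\eta\S)^{k-i}}_{op}\bigr)=O(d^{-c})$. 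Multiplying these and reinserting the dimensional factor $d$ produces the main contribution, of order $d\,\norm{\A}_{op}^2\,(1-\delta)^{2i}$, which is the order claimed.

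The hard part is the complementary event $\gE^c$: there only the crude bound $\norm{(\bI+\widetilde{\V}\S\widetilde{\W})^i}_{op}^2\norm{\bP_i}_{op}^2=O(d^{-2c})\,q\bigl(\norm{\S-\bSigma}_{op}\bigr)$ is available, where $q$ is a polynomial of degree $O(k)=\Theta(\log d)$ that cannot simply be discarded. As in \Cref{lemma: concentration 1} and its isotropic counterparts, I would integrate this against the tail bound $\Pr[\norm{\S-\bSigma}_{op}\ge s]\le 2\exp(-{s'}^2)$ of \citet{vershynin2018high}, choosing the auxiliary parameter $s'$ piecewise — essentially as the minimum of the scales $s^{1/2}\sqrt n/(C_1k)$ and $s^{1/(k+2)}\sqrt n/(C_1k)$ — so that $\max(\rho',{\rho'}^{2})$ with $\rho'=C(\sqrt{d/n}+s'/\sqrt n)$ stays below the threshold on $\norm{\S-\bSigma}_{op}$ forced by the integrand; a routine computation then shows the tail integral converges and contributes only a polylogarithmic fraction of the main term. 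Combining the two events proves the lemma, and summing the resulting estimate over $0\le i\le k$ reproduces the bound invoked in the proof of \Cref{appendix theorem: evaluation}.
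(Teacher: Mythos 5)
Your proposal is correct and matches the paper's proof essentially step for step: both reduce the squared norm to a Frobenius-norm (hence operator-norm) bound by integrating out $\w^*$, both rewrite the inner residual as $-\A\S(\bI-\eta\S)^{k-i}\w^*+(\A\S\B-\eta\S\B)(\bI-(\bI-\eta\S)^{k-i})\w^*$ and $\bI+\widetilde{\V}\S\widetilde{\W}$ as $(\bI-\eta\S)$ plus an $O(d^{-c})$ perturbation, both obtain the main estimate on the good concentration event $\norm{\S-\bSigma}_{op}\le C\sqrt{d/n}$ by invoking the eigenvalue hypothesis on $\bSigma$ so that $\norm{\bI-\eta\S}_{op}\le(1-\delta)(1+o(1))$, and both handle the complementary event by the same tail-integral argument used in the isotropic concentration lemmas. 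Your write-up is if anything a bit more explicit than the paper's about the algebra of the perturbation identity and about why the correction factor $(1+O(1/\log^2 d))^{O(\log d)}$ stays $O(1)$.
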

\begin{proof}
    We will adopt a similar method as we did throughout \Cref{lemma: concentration 1} to \Cref{lemma: concentration 4}.

    First, we expand the left hand side loss:
    \begin{align*}
&\E\norm{(\bI+\widetilde{\V}\S\widetilde{\W})^i({\w}_{k-i}+\widetilde{\V}\S(\widetilde{\W}{\w}_{k-i}-\w^*)-\w_{k-i+1}))}^2\\
={}&\E\norm{(\bI+\widetilde{\V}\S\widetilde{\W})^i\qty((\widetilde{\V}\S\widetilde{\W}+\eta \S)(\bI-(\bI-\eta \S)^{k-i})\w^*-(\widetilde{\V}+\eta \bI)\S\w^*)}^2\\
={}&\E\norm{(\bI+\widetilde{\V}\S\widetilde{\W})^i\qty((\widetilde{\V}\S\widetilde{\W}+\eta \S)(\bI-(\bI-\eta \S)^{k-i})-(\widetilde{\V}+\eta \bI)\S)}_F^2\\
\le{}&d\cdot \E\norm{(\bI+\widetilde{\V}\S\widetilde{\W})^i\qty((\widetilde{\V}\S\widetilde{\W}+\eta \S)(\bI-(\bI-\eta \S)^{k-i})-(\widetilde{\V}+\eta \bI)\S)}_{op}^2
    \end{align*}
    The second equation is due to $\w_i = (\bI-(\bI-\eta\S)^i)$, and we arranged to stress the error terms. The third line is because $\E[\w^*{\w^*}^\top]=\bI.$ The last line is $\|\cdot\|_F\le \sqrt{d}\norm{\cdot}_{op}$.

    Now we expand each term of the expression within the operator norm into $\A,\B,\bI,$ and $\S$:
    $$\bI+\widetilde{\V}\S\widetilde{\W} = \bI-\eta \S +\A\S\B -\eta\S\B+\A\S.$$
$$\widetilde{\V}\S\widetilde{\W}+\eta \S = \A\S\B -\eta\S\B+\A\S, \widetilde{\V}+\eta \bI = \A.$$
Therefore the formula becomes (consider each term separately)
\begin{align*}
&\qty(\bI+\widetilde{\V}\S\widetilde{\W})^i=\qty(\bI-\eta \S +\A\S\B-\eta\S\B+\A\S)^i\\
&\qty((\widetilde{\V}\S\widetilde{\W}+\eta \S)(\bI-(\bI-\eta \S)^{k-i})-(\widetilde{\V}+\eta \bI)\S)\\={}&(-\eta\S\B+\A\S\B)(\bI-(\bI-\eta \S)^{k-i})-\A\S(\bI-\eta \S)^{k-i}
\end{align*}
We still denote $\dS = \S-\bSigma$. We first consider when the concentration holds, a.k.a $\norm{\dS}\le C\sqrt{\frac{d}{n}}$. Since $\|\A\|,\|\B\|\le O(d^{-c})$, their error are dominated by $C\sqrt{\frac{d}{n}}$. We reduce this case to the previous \Cref{lemma: concentration ood 2}. Therefore we can upper bound the expression by
\begin{align*}
&\norm{\bI+\widetilde{\V}\S\widetilde{\W}}^i=\norm{\bI-\eta \S +\A\S\B-\eta\S\B+\A\S}^i\le\frac{3}{2}(1-\delta)^{i}\\
&\norm{(-\eta\S\B+\A\S\B)(\bI-(\bI-\eta \S)^{k-i})-\A\S(\bI-\eta \S)^{k-i}} \le O(d^{-c}).
\end{align*}
That means this part of the expectation is upper bounded by $d\cdot\frac{9}{4}(1-\delta)^2i\cdot O(d^{-2c}) = O\qty(\frac{(1-\delta)^{2i}}{d^{-2c+1}})$

Then we estimate the tail expectation. We first upper bound the above formula by $\norm{\dS}$:
\begin{align*}
\norm{\bI+\widetilde{\V}\S\widetilde{\W}}^i=\norm{\bI-\eta \S +\A\S\B-\eta\S\B+\A\S}^i&\le O(k(1-\delta)^i\min\{\norm{\dS},1\}^i)\\
    \norm{(-\eta\S\B+\A\S\B)(\bI-(\bI-\eta \S)^{k-i})-\A\S(\bI-\eta \S)^{k-i}}&\le O(kd^{-c}\min\{1,\norm{\dS}^{k-i}\}).
\end{align*}
Use the same argument as in \Cref{lemma: concentration 1} to calculate the integral of tail bound, the tail expectation can also be upper bounded by $O\qty(\frac{(1-\delta)^{2i}}{d^{-2c+1}})$. Combine those two part and we finish the proof.

\end{proof}
\subsection{The form of expectation}
\label{appendix subsec: form of expectation}
\begin{lemma}
    \label{lemma: basic expectation form}
        Suppose $\S = \frac{1}{n}\sum_{i=1}^n\x_i\x_i^\top$, then the expectation is in the following form for any $k$:
        \begin{align*}
            \E\qty[\S \bu_s\bu_s^\top \S^k \bu_t\bu_t^\top \S^{k'}] ={}& \alpha_1 \bu_s\bu_s^\top+\alpha_2 \bu_t\bu_t^\top+\alpha_3\bI\text{ for any }s\neq t.\\
            \E\qty[\S \bu_s\bu_s^\top \S^k \bu_s\bu_s^\top \S^{k'}] ={}& \alpha_4 \bu_s\bu_s^\top+\alpha_5\bI.
        \end{align*}
\end{lemma}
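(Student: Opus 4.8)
The statement is a pure symmetry fact about the isotropic Wishart matrix $\S=\frac1n\sum_{i=1}^n\x_i\x_i^\top$ with $\x_i\sim\mathcal{N}(0,\bI_d)$ i.i.d., and I would prove it by exploiting the invariance of the law of $\S$ under conjugation by suitable orthogonal matrices (reflections and coordinate transpositions). First I would reduce to coordinates: extend $\bu_s,\bu_t$ to an orthonormal basis and conjugate the whole identity by the orthogonal matrix $\U$ mapping this basis to the standard basis. Since $\U\S\U^\top$ has the same distribution as $\S$, one checks that $\E[\S\,\bu_s\bu_s^\top\,\S^k\,\bu_t\bu_t^\top\,\S^{k'}]=\U^\top M\U$ where $M:=\E[\S E_{ss}\S^k E_{tt}\S^{k'}]$ and $E_{aa}=e_ae_a^\top$; so it suffices to prove the claim for $M$ with $\bu_s=e_s$, $\bu_t=e_t$. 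The only input I will use is that for any orthogonal $Q$, $(Q\x_1,\dots,Q\x_n)\stackrel{d}{=}(\x_1,\dots,\x_n)$, hence $Q\S Q^\top\stackrel{d}{=}\S$, and since every power $\S^j$ is a deterministic function of $\S$, $Q^\top\S^j Q=(Q^\top\S Q)^j$; consequently $M=QMQ^\top$ whenever $Q E_{ss}Q^\top=E_{ss}$ and $Q E_{tt}Q^\top=E_{tt}$.

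Second, I would take $Q$ to be the sign-flip reflection $R_j$ ($e_j\mapsto-e_j$, all other $e_i$ fixed) for each $j\notin\{s,t\}$. It fixes $E_{ss}$ and $E_{tt}$, so $M=R_jMR_j$; reading off entries gives $M_{ab}=\varepsilon_a\varepsilon_bM_{ab}$ with $\varepsilon_a=-1$ iff $a=j$, which forces $M_{aj}=M_{ja}=0$ for all $a\ne j$. Doing this for every $j\notin\{s,t\}$ shows $M$ has no entry coupling an index outside $\{s,t\}$ to any other index, i.e. $M$ is block diagonal with a $2\times 2$ block on $\{s,t\}$ and a diagonal part on the complement. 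Third, taking $Q=P_{jl}$, the transposition swapping $e_j\leftrightarrow e_l$ for $j,l\notin\{s,t\}$ (which fixes $E_{ss},E_{tt}$), gives $M=P_{jl}MP_{jl}$, hence $M_{jj}=M_{ll}$; so all diagonal entries $M_{jj}$ with $j\notin\{s,t\}$ equal a common value, to be called $\alpha_3$ (resp. $\alpha_5$ when $s=t$).

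Fourth, the two cases split. If $s=t$, Steps 2--3 already force $M$ to be diagonal with every entry off index $s$ equal to $\alpha_5$, so $M=\alpha_4E_{ss}+\alpha_5\bI$ with $\alpha_4:=M_{ss}-\alpha_5$. If $s\ne t$ it remains only to kill $M_{st}$: here I would apply the \emph{single} sign flip $R_s:e_s\mapsto-e_s$. Because $E_{ss}=e_se_s^\top$ is quadratic in $e_s$, it is fixed by $R_s$, and $E_{tt}$ is fixed too, so $M=R_sMR_s$, which sends $M_{st}\mapsto-M_{st}$; hence $M_{st}=0$ and likewise $M_{ts}=0$. Therefore $M=M_{ss}E_{ss}+M_{tt}E_{tt}+\alpha_3(\bI-E_{ss}-E_{tt})=\alpha_1E_{ss}+\alpha_2E_{tt}+\alpha_3\bI$ with $\alpha_1:=M_{ss}-\alpha_3$, $\alpha_2:=M_{tt}-\alpha_3$. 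Conjugating back by $\U$ yields the stated forms in $\bu_s\bu_s^\top$, $\bu_t\bu_t^\top$, $\bI$.

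I do not expect a genuine obstacle; the content is entirely the action of the reflections $R_j$, transpositions $P_{jl}$, and the final reflection $R_s$ on $M$. The one point worth stating carefully is the observation used in Step 4, that the rank-one projection $e_se_s^\top$ is invariant under $R_s$ even though $e_s$ is not: this is exactly what allows $M_{ss}$ and $M_{tt}$ to differ while the cross term $M_{st}$ vanishes, and it is the structural reason the lemma carries two separate coefficients $\alpha_1,\alpha_2$ rather than one. The remaining steps (the distributional commutation with powers of $\S$, and the entrywise bookkeeping $M_{ab}=\varepsilon_a\varepsilon_bM_{ab}$) are routine.
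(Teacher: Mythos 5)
Your proof is correct, and it takes a genuinely different and more conceptual route than the paper's. The paper proves this lemma by expanding $\E\qty[\S E_{ss}\S^kE_{tt}\S^{k'}]$ via Isserlis' (Wick's) theorem, encoding each Gaussian pairing as a pair of graphs $\mathcal G_p,\mathcal G_p'$ on $\{0,\dots,k+k'\}$, and classifying contributions by whether the indices attached to $s$ and $t$ lie in the same cycle as the free index $j_0$; the three (resp. two) types of pairings give the $\bu_s\bu_s^\top$, $\bu_t\bu_t^\top$, and $\bI$ terms, with coefficients written as explicit sums over cycle structures. You instead observe, after the same reduction to the standard basis via orthogonal invariance of the isotropic Wishart law, that $M=\E[\S E_{ss}\S^kE_{tt}\S^{k'}]$ must commute in the sense $M=QMQ^\top$ for every orthogonal $Q$ fixing $E_{ss}$ and $E_{tt}$: sign flips $R_j$ ($j\notin\{s,t\}$) kill all off-diagonal entries except possibly $M_{st},M_{ts}$; transpositions $P_{jl}$ ($j,l\notin\{s,t\}$) equalize the remaining diagonal entries outside $\{s,t\}$; and when $s\neq t$, the single flip $R_s$ (which fixes $E_{ss}$ because the projection is quadratic in $e_s$) kills $M_{st}=M_{ts}$. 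That yields exactly the claimed spans. (Minor cosmetic point: with $\U$ the matrix whose columns are the $\bu_i$, the conjugation comes out as $\U M\U^\top$ rather than $\U^\top M\U$, but this does not affect the argument.)

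What each approach buys: your symmetry argument is shorter, works verbatim for any finitely many powers $\S^{k_1},\dots,\S^{k_m}$ interleaved with rank-one projections $\bu_{s_i}\bu_{s_i}^\top$, and avoids combinatorics entirely; the paper's Wick expansion is heavier but in principle yields explicit formulas for the scalars $\alpha_i$ in terms of cycle counts. However, looking at how this lemma is used downstream (it feeds only into Lemma~\ref{lemma: expectation form}, and the actual magnitudes of the $\alpha_i$ are controlled there by the separate concentration Lemmas~\ref{lemma: concentration 1}--\ref{lemma: concentration 4} via operator-norm tail bounds, not by the combinatorial formulas), the extra explicit information is never exploited, so your more elementary route would serve the paper equally well.
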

\begin{proof}
We notice that by changing the basis to $\{\bu_s\}_{s=1}^d$,
\begin{equation}
    \label{eq: basis change}
    \E\qty[\S \bu_s\bu_s^\top \S^k \bu_t\bu_t^\top \S^{k'}]= \U\E\qty[\qty(\U^\top\S\U)\e_s\e_s^\top\qty(\U^\top\S\U)^k\e_t\e_t^\top\qty(\U^\top\S\U)^{k'}]\U^\top.
\end{equation}
Define $\hat{\x_i}=\U^\top\x_i$. Since gaussian is isotropic, we have $\E\qty[\hat{\x_i}]=\U^\top\E\qty[\x_i]=\bzero$. After we change the basis, the covariance matrix of $\hat{\x_i}$ should also be the same:
\begin{equation*}
    \mathrm{Cov}(\hat{\x_i})=\U^\top\mathrm{Cov}(\x_i)\U=\bI.
\end{equation*}
Therefore $\hat{\x_i}$ has the same distribution as $\x_i$ and we have
\begin{equation*}
\U\E\qty[\qty(\U^\top\S\U)\e_s\e_s^\top\qty(\U^\top\S\U)^k\e_t\e_t^\top\qty(\U^\top\S\U)^{k'}]\U^\top=\U\E\qty[\S\e_s\e_s^\top\S^k\e_t\e_t^\top\S^{k'}]\U^\top.
\end{equation*}
Subsequently, we only need to consider the expectation of $\S\e_s\e_s^\top\S^k\e_t\e_t^\top\S^{k'}$. Decompose $\x_i$ into the sum of basis vectors and we get
$ \x_i=\sum_{j=1}^dx_{ij}\e_j.$
   
Plug in the decomposition into the expectation and we have
\begin{align*}
&n^{k+2}\E\qty[\S \e_s\e_s^\top \S^k\e_t\e_t^\top \S^{k'}]\\
={}&\E\Bigg[\qty(\sum_{i_0=1}^n \sum_{j_0,j_1\in[d]}x_{i_0j_0}x_{i_0j_1}\e_{j_0}\e_{j_1}^\top)\e_s\e_s^\top\prod_{l = 1}^{k'}\qty(\sum_{i_l=1}^n \sum_{j_{2l},j_{2l+1}\in[d]}x_{i_lj_{2l}}x_{i_lj_{2l+1}}\e_{j_{2l}}\e_{j_{2l+1}}^\top)\\
&\e_t\e_t^\top\prod_{l = 1}^{k}\qty(\sum_{i_l=1}^n \sum_{j_{2l},j_{2l+1}\in[d]}x_{i_lj_{2l}}x_{i_lj_{2l+1}}\e_{j_{2l}}\e_{j_{2l+1}}^\top)\Bigg]\\
={}&\E\Bigg[\sum_{i_0,\cdots,i_{k+k'}\in\qty[n]}\sum_{j_0,\cdots,j_{2\qty(k+k')+1}\in\qty[d]}x_{i_0j_0}x_{i_0j_1}\cdots x_{i_{k+k'}j_{2\qty(k+k')}}x_{i_{k+k'}j_{2\qty(k+k')+1}}\\
&\e_{j_0}\e_{j_1}^\top\e_s\e_s^\top\e_{j_2}\e_{j_3}^\top\cdots\e_{j_{2k}}\e_{j_{2k+1}}^\top\e_t\e_t^\top\e_{j_{2k+2}}\e_{j_{2k+3}}^\top\cdots\e_{j_{2\qty(k+k')}}\e_{j_{2\qty(k+k')+1}}^\top\Bigg]\\
={}&\sum_{i_0,\cdots,i_{k+k'}\in\qty[n]}\sum_{j_0,\cdots,j_{2\qty(k+k')+1}\in\qty[d]}\E\qty[x_{i_0j_0}x_{i_0j_1}\cdots x_{i_{k+k'}j_{2\qty(k+k')}}x_{i_{k+k'}j_{2\qty(k+k')+1}}]\\
&\e_{j_0}\e_{j_1}^\top\e_s\e_s^\top\e_{j_2}\e_{j_3}^\top\cdots\e_{j_{2k}}\e_{j_{2k+1}}^\top\e_t\e_t^\top\e_{j_{2k+2}}\e_{j_{2k+3}}^\top\cdots\e_{j_{2\qty(k+k')}}\e_{j_{2\qty(k+k')+1}}^\top.
\end{align*}
Note that $\e_a^\top\e_b \ne 0$ only when $a=b$, so $\e_a^\top\e_s\e_s^\top\e_b \ne 0$ only when $a=b=s$. Therefore, we only need to consider the case where $j_{2q-1}=j_{2q}$ for any $q\in\qty[1,k+k']$. By symmetry, we know $\E\qty[\S \e_s\e_s^\top \S^k\e_t\e_t^\top \S^{k'}]$ is a diagonal matrix, so we have $j_0=j_{2\qty(k+k')+1}$. We denote
\begin{equation*}
    \bE_{j_0} = \e_{j_0}\e_{j_1}^\top\e_s\e_s^\top\e_{j_1}\e_{j_2}^\top\cdots\e_{j_k}\e_{j_{k+1}}^\top\e_t\e_t^\top\e_{j_{k+1}}\e_{j_{k+2}}^\top\cdots\e_{j_{k+k'}}\e_{j_0}^\top
\end{equation*}
to be one of the standard basis in $\R^{d\times d}$ space. It is a non-zero matrix when $j_1=s$ and $j_{k+1}=t$. By the analysis above, we have
\begin{equation*}
n^{k+2}\E\qty[\S \e_s\e_s^\top \S^k\e_t\e_t^\top \S^{k'}]=\sum_{i_0,\cdots,i_{k+k'}\in\qty[n]}\sum_{j_0,\cdots,j_{k+k'}\in\qty[d]}\E\qty[x_{i_0j_0}x_{i_0j_1}\cdots x_{i_{k+k'}j_{k+k'}}x_{i_{k+k'}j_{0}}]\bE_{j_0}.
\end{equation*}
Let $\+P(2k+2)$ be the set of all distinct ways of partitioning $\qty{i_0j_0,i_0j_1\cdots,i_{k+k'}j_{k+k'},i_{k+k'}j_0}$ into $k+1$ unordered pairs $p=\qty(\qty(p_1,p_2),\cdots,\qty(p_{2k+1},p_{2k+2}))$. From Isserlis' theorem, we have
\begin{equation*}
    \E\qty[x_{i_0j_0}x_{i_0j_1}\cdots x_{i_{k+k'}j_{k+k'}}x_{i_{k+k'}j_{0}}]=\sum_{p\in\+P\qty(2k+2)}\prod_{i=0}^{k+k'}\E\qty[x_{p_{2i}}x_{p_{2i+1}}].
\end{equation*}
Plug it in the expectation and we have
\begin{align*}
n^{k+2}\E\qty[\S \e_s\e_s^\top \S^k\e_t\e_t^\top \S^{k'}]={}&\sum_{i_0,\cdots,i_{k+k'}\in\qty[n]}\sum_{j_0,\cdots,j_{k+k'}\in\qty[d]}\sum_{p\in\+P\qty(2k+2)}\prod_{i=0}^{k+k'}\E\qty[x_{p_{2i}}x_{p_{2i+1}}]\bE_{j_0}\\
={}&\sum_{p\in\+P\qty(2k+2)}\sum_{i_0,\cdots,i_k\in\qty[n]}\sum_{j_0,\cdots,j_{k}\in\qty[d]}\prod_{i=0}^{k+k'}\E\qty[x_{p_{2i}}x_{p_{2i+1}}]\bE_{j_0}.
\end{align*}
To make sure the term in the summation is non-zero, $p_{2q-1}=p_{2q}$ should hold for any $1\le q\le k+1$. Now consider the graph $\+G_p$ and $\+G'_p$ with vertices $\qty{0,1,\cdots,k+k'}$. If $i_{u_1}j_{v_1}$ is paired with $i_{u_2}j_{v_2}$, then we put an edge between $u_1$ and $u_2$ into $\+G_p$ and put an edge between $v_1$ and $v_2$ into $\+G'_p$, which means $i_{u_1}=i_{u_2}$ and $j_{v_1}=j_{v_2}$. Therefore, for a cycle $C=\qty(u_1,u_2,\cdots,u_r)$ in $\+G_p$ or $\+G'_p$, we have $i_{u_1}=i_{u_2}=\cdots=i_{u_r}$ or $j_{u_1}=j_{u_2}=\cdots=j_{u_r}$. Note that we have $n$ or $d$ choices for the value of the circle.
Here we use $\mathrm{C}\qty(\cdot)$ to denote the set of circles in the graph and use $\abs{\mathrm{C}\qty(\cdot)}$ to denote the number of circles in the graph. Let $c^*$ be the cycle in $\+G'_p$ which includes the vertex $j_0$.

\paragraph{Case 1: $s\neq t$.}For the partition $p$ where $j_1\in c^*$ and $j_{k+1}\in c\neq c^*$, there is only one choice for $c$ and $c^*$ to take. So the term in the summation should be $n^{|\cir(\+G_p)|}d^{|\cir(\+G^{'}_p)|-2}\e_s\e_s^\top$. Similarly, for the partition $p$ where $j_{k+1}\in c^*$ and $j_1\in c\neq c^*$, the term in the summation should be $n^{|\cir(\+G_p)|}d^{|\cir(\+G^{'}_p)|-2}\e_t\e_t^\top$. For the partition $p$ where $j_1\in c'\neq c^*$ and $j_{k+1}\in c''\neq c^*$, there is only one choice for $c'$ and $c''$ to take. Therefore, the expectation should be
\begin{align*}
&n^{k+2}\E\qty[\S \e_s\e_s^\top \S^k\e_t\e_t^\top \S^{k'}]\\
={}&\sum_{\+P:j_1\in c^*,j_{k+1}\notin c^*}n^{|\cir(\+G_p)|}d^{|\cir(\+G^{'}_p)|-2}\e_s\e_s^\top+\sum_{\+P:j_{k+1}\in c^*, j_1\notin c^*}n^{|\cir(\+G_p)|}d^{|\cir(\+G^{'}_p)|-2}\e_t\e_t^\top\\
&+\sum_{\+P:j_1,j_{k+1}\notin c^*}n^{|\cir(\+G_p)|}d^{|\cir(\+G^{'}_p)|-2}\e_{j_0}\e_{j_0}^\top\\
={}&\sum_{\+P:j_1\in c^*,j_{k+1}\notin c^*}n^{|\cir(\+G_p)|}d^{|\cir(\+G^{'}_p)|-2}\e_s\e_s^\top+\sum_{\+P:j_{k+1}\in c^*, j_1\notin c^*}n^{|\cir(\+G_p)|}d^{|\cir(\+G^{'}_p)|-2}\e_t\e_t^\top\\
&+\sum_{\+P:j_1,j_{k+1}\notin c^*}n^{|\cir(\+G_p)|}d^{|\cir(\+G^{'}_p)|-3}\bI.
\end{align*}
Recall \Cref{eq: basis change}, we prove that
\begin{equation*}
    \E\qty[\S \bu_s\bu_s^\top \S^k \bu_t\bu_t^\top \S^{k'}] = \alpha_1 \bu_s\bu_s^\top+\alpha_2 \bu_t\bu_t^\top+\alpha_3\bI.
\end{equation*}
\paragraph{Case 2: $s=t$.}For the partition $p$ where $j_1,j_{k+1}\in c^*$, there is only one choice for $c^*$ to take. So the term in the summation should be $n^{|\cir(\+G_p)|}d^{|\cir(\+G^{'}_p)|-1}\e_s\e_s^\top$. For the partition $p$ where $j_1\in c^*$ and $j_{k+1}\in c\neq c^*$, there is only one choice for $c$ and $c^*$ to take. So the term in the summation should be $n^{|\cir(\+G_p)|}d^{|\cir(\+G^{'}_p)|-2}\e_s\e_s^\top$. Similarly, for the partition $p$ where $j_{k+1}\in c^*$ and $j_1\in c\neq c^*$, the term in the summation should be $n^{|\cir(\+G_p)|}d^{|\cir(\+G^{'}_p)|-2}\e_s\e_s^\top$. For the partition $p$ where $j_1\in c'\neq c^*$ and $j_{k+1}\in c''\neq c^*$, there is only one choice for $c'$ and $c''$ to take. Therefore, the expectation should be
\begin{align*}
&n^{k+2}\E\qty[\S \e_s\e_s^\top \S^k\e_s\e_s^\top \S^{k'}]\\
={}&\sum_{\+P:j_1,j_{k+1}\in c^*}n^{|\cir(\+G_p)|}d^{|\cir(\+G^{'}_p)|-1}\e_s\e_s^\top+\sum_{\+P:j_1\in c^*,j_{k+1}\notin c^*}n^{|\cir(\+G_p)|}d^{|\cir(\+G^{'}_p)|-2}\e_s\e_s^\top\\
&+\sum_{\+P:j_{k+1}\in c^*, j_1\notin c^*}n^{|\cir(\+G_p)|}d^{|\cir(\+G^{'}_p)|-2}\e_s\e_s^\top+\sum_{\+P:j_1,j_{k+1}\notin c^*}n^{|\cir(\+G_p)|}d^{|\cir(\+G^{'}_p)|-2}\e_{j_0}\e_{j_0}^\top\\
={}&\Bigg[\sum_{\+P:j_1,j_{k+1}\in c^*}n^{|\cir(\+G_p)|}d^{|\cir(\+G^{'}_p)|-1}+\sum_{\+P:j_1\in c^*,j_{k+1}\notin c^*}n^{|\cir(\+G_p)|}d^{|\cir(\+G^{'}_p)|-2}\\
&+\sum_{\+P:j_{k+1}\in c^*, j_1\notin c^*}n^{|\cir(\+G_p)|}d^{|\cir(\+G^{'}_p)|-2}\Bigg]\e_s\e_s^\top+\sum_{\+P:j_1,j_{k+1}\notin c^*}n^{|\cir(\+G_p)|}d^{|\cir(\+G^{'}_p)|-3}\bI.
\end{align*}
Recall \Cref{eq: basis change}, we prove that
\begin{equation*}
    \E\qty[\S \bu_s\bu_s^\top \S^k \bu_s\bu_s^\top \S^{k'}] = \alpha_4 \bu_s\bu_s^\top+\alpha_5\bI.
\end{equation*}
Hence, the proof is complete.
\end{proof}

\begin{lemma}
    \label{lemma: expectation form}
        Suppose $\S = \frac{1}{n}\sum_{i=1}^n\x_i\x_i^\top$, then the expectation is in the following form for any $k$:
        \begin{align*}
            \E\qty[\S \Lbd \S^k \bGamma \S^{k'}] = \beta_1 \Lbd \bGamma+\beta_2\trace(\Lbd)\bGamma+\beta_3 \trace(\bGamma)\Lbd+\beta_4\trace(\Lbd)\trace(\bGamma)\bI+\beta_5\trace(\Lbd\bGamma)\bI.
        \end{align*}
        where $\Lbd=\sum_{j=1}^d\lambda_j^{\Lbd}\bu_j\bu_j^\top, \bGamma=\sum_{j=1}^d\lambda_j^{\bGamma}\bu_j\bu_j^\top.$
    \end{lemma}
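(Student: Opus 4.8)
The plan is to reduce the statement to the rank-one case already handled by \Cref{lemma: basic expectation form}. Expanding both matrices in their common orthonormal eigenbasis, $\Lbd=\sum_{s=1}^d\lambda_s^{\Lbd}\bu_s\bu_s^\top$ and $\bGamma=\sum_{t=1}^d\lambda_t^{\bGamma}\bu_t\bu_t^\top$, linearity of expectation gives
\begin{equation*}
    \E\qty[\S\Lbd\S^k\bGamma\S^{k'}]=\sum_{s,t=1}^d\lambda_s^{\Lbd}\lambda_t^{\bGamma}\,\E\qty[\S\bu_s\bu_s^\top\S^k\bu_t\bu_t^\top\S^{k'}],
\end{equation*}
so it suffices to plug in the two formulas from \Cref{lemma: basic expectation form} and collect terms. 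The one preliminary point I would record first is that the scalars $\alpha_1,\dots,\alpha_5$ appearing there are \emph{universal}: they depend on $n,d,k,k'$ and on the indicator $\indi\{s=t\}$, but not otherwise on the chosen indices $s,t$. This is immediate from the combinatorial expression in the proof of that lemma, where each $\alpha_i$ is a sum of terms of the shape $n^{\abs{\cir(\+{G}_p)}}d^{\abs{\cir(\+{G}'_p)}-c}$ over set partitions $p$ whose count only sees whether $s=t$; alternatively, for a permutation $\pi$ of $[d]$ with eigenbasis permutation matrix $\bP_\pi$ one has $\bP_\pi\S\bP_\pi^\top\overset{d}{=}\S$ since the Gaussian is isotropic, hence the rank-one expectation is $\bP_\pi$-equivariant, which forces the coefficients to be permutation-invariant and therefore constant on each orbit $\{s=t\}$, $\{s\neq t\}$.

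Granting universality, I would split the double sum into its diagonal part and its off-diagonal part. On the diagonal, \Cref{lemma: basic expectation form} contributes $\sum_s\lambda_s^{\Lbd}\lambda_s^{\bGamma}\qty(\alpha_4\bu_s\bu_s^\top+\alpha_5\bI)=\alpha_4\Lbd\bGamma+\alpha_5\trace(\Lbd\bGamma)\bI$, using that $\Lbd$ and $\bGamma$ commute. Off the diagonal, substituting $\alpha_1\bu_s\bu_s^\top+\alpha_2\bu_t\bu_t^\top+\alpha_3\bI$ and rewriting each ``$s\neq t$'' sum as a full sum minus its diagonal via $\sum_{t\neq s}\lambda_t^{\bGamma}=\trace(\bGamma)-\lambda_s^{\bGamma}$ (and the analogous identities for the other two pieces), I obtain
\begin{equation*}
    \alpha_1\qty(\trace(\bGamma)\Lbd-\Lbd\bGamma)+\alpha_2\qty(\trace(\Lbd)\bGamma-\Lbd\bGamma)+\alpha_3\qty(\trace(\Lbd)\trace(\bGamma)-\trace(\Lbd\bGamma))\bI.
\end{equation*}
Adding the two contributions and reading off coefficients yields the claimed form with $\beta_1=\alpha_4-\alpha_1-\alpha_2$, $\beta_2=\alpha_2$, $\beta_3=\alpha_1$, $\beta_4=\alpha_3$, and $\beta_5=\alpha_5-\alpha_3$.

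The index bookkeeping above is entirely routine --- finite-sum rearrangement plus commutativity of simultaneously diagonalizable matrices --- so the only step that needs genuine justification, and the one I would flag as the main obstacle, is the universality of $\alpha_1,\dots,\alpha_5$. The symmetry argument sketched above is the cleanest route and avoids re-deriving the combinatorial sum; it also makes transparent why exactly these five invariant building blocks ($\Lbd\bGamma$, $\trace(\Lbd)\bGamma$, $\trace(\bGamma)\Lbd$, $\trace(\Lbd)\trace(\bGamma)\bI$, $\trace(\Lbd\bGamma)\bI$) can appear: they span the permutation-equivariant expressions that are bilinear in a pair of commuting diagonal matrices. The quantitative sizes of the $\beta_i$ needed in applications are a separate matter, extracted in \Cref{lemma: concentration 1}--\Cref{lemma: concentration 4}; they play no role in this form statement.
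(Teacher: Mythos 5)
Your proposal is correct and follows essentially the same route as the paper's own argument: expand $\Lbd$ and $\bGamma$ in the shared eigenbasis, apply \Cref{lemma: basic expectation form} to each rank-one pair, split the double sum into diagonal and off-diagonal parts, and collect terms. The one point you flag — that $\alpha_1,\dots,\alpha_5$ must be independent of the specific indices $s,t$ (except through whether $s=t$) — is indeed what the whole reduction silently relies on. The paper's proof of \Cref{lemma: basic expectation form} establishes this implicitly through the combinatorial sum over pairings, where the coefficients count cycle structures that only distinguish $s=t$ from $s\neq t$; your isotropy/permutation-equivariance argument reaches the same conclusion without re-opening the combinatorics and is the cleaner way to see it. One small bonus of your computation: you read off $\beta_2=\alpha_2$ and $\beta_3=\alpha_1$, which is the correct assignment given the expansion $\alpha_1\trace(\bGamma)\Lbd+\alpha_2\trace(\Lbd)\bGamma+\cdots$ appearing two lines earlier in the paper's proof; the paper's final displayed line ``$\beta_2=\alpha_1,\beta_3=\alpha_2$'' has these two swapped, an inconsequential typo since $\alpha_1$ and $\alpha_2$ are of the same order wherever the lemma is invoked, but your bookkeeping is the right one.
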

    \begin{proof}
        By \cref{lemma: basic expectation form}, we have:
        \begin{align*}
            &\E\qty[\S \Lbd \S^k \bGamma \S^{k'}]\\
            ={}&\sum_{j=1}^d\sum_{i\neq j}^d\lambda_i^{\Lbd}\lambda_j^{\bGamma}\qty(\alpha_1 \bu_i\bu_i^\top+\alpha_2 \bu_j\bu_j^\top+\alpha_3\bI)+\sum_{i=1}^d\lambda_i^{\Lbd}\lambda_i^{\bGamma}\qty(\alpha_4\bu_i\bu_i^\top+\alpha_5\bI)
        \end{align*}
        The first term here can be expand into the following form:
        \begin{align*}
            &\sum_{j=1}^d\sum_{i\neq j}^d\lambda_i^{\Lbd}\lambda_j^{\bGamma}\qty(\alpha_1 \bu_i\bu_i^\top+\alpha_2 \bu_j\bu_j^\top+\alpha_3\bI)\\
            ={}&\alpha_1\trace(\bGamma)\Lbd + \alpha_2\trace(\Lbd)\bGamma +\alpha_3\trace(\Lbd)\trace(\bGamma)\bI-(\alpha_1+\alpha_2)\Lbd\bGamma-\alpha_3\trace(\Lbd\bGamma)\bI
        \end{align*}
        Meanwhile, the second term is directly $\alpha_4\Lbd\bGamma+\alpha_5\trace(\Lbd\bGamma)\bI.$ We pick $\beta_2 = \alpha_1,\beta_3=\alpha_2,\beta_4=\alpha_3,\beta_1=\alpha_4-\alpha_1-\alpha_2,\beta_5 = \alpha_5-\alpha_3$, and we complete the proof.
        
    \end{proof}

\newpage
\section{Experimental details}
\label{appendix: experimental details}
For all our experiments, we use pytorch \citet{paszke2019pytorch} and models are trained on an NVIDIA RTX A6000. Each experiment takes about 1 hour.
\paragraph{Setup} In all our experiments, we choose $d=10$, $n=20$ and $\eta=0.4$. The architecture is
\begin{equation*}
    f_{\mathrm{LSA}}(\Z;\V, \W)_{[:, -1]} = \Z_{[:, -1]} + \V \Z \cdot\frac{\Z^\top \W \Z_{[:, -1]}}{n}
\end{equation*}
and data is drawn from the distribution in \Cref{eq: data distribution}. The batch size $B$ is $1000$ and the learning rate $\alpha$ is $0.001$. The total time is $\tau=750$ iterations. In the first experiment, $k$ is chosen as $20$ while $k=10,20,30,40$ in the second experiment. The baseline (evaluation loss of transformers without CoT) is given by \Cref{main corollary: significant error for 1-step} where $\eta^*=\frac{n}{n+d+1}$:
\begin{equation*}
    \+L^{\mathrm{Eval}}\qty(\V,\W)\ge\frac{1}{2}\qty(d-2\eta^* d+\frac{{\eta^*}^2}{n}(n+d+1)d)
\end{equation*}

\paragraph{In-distribution Generalization}
We empirically verify the evaluation loss gap between transformers with and without CoT shown by \Cref{main thm: lower bound for tf without cot} and \Cref{main thm: construction for tf with cot}. Our experiments in \Cref{fig: eval loss} demonstrate that the evaluation loss of transformers with CoT converges to near zero even when $k= 10$. See \Cref{sec: experiments} for details.

\paragraph{Out-of-distribution Generalization} In addition, we empirically verify the OOD generalization result shown by \Cref{main theorem: evaluation}. We sample 10 different covariance matrices from the distribution which complies to 
\begin{equation*}
    \frac{\delta}{\eta}\le\lambda_{\min}\qty(\bSigma)\le\lambda_{\max}\qty(\bSigma)\le\frac{2-\delta}{\eta}
\end{equation*}
where $\eta=0.4$ and $\eta=0.4$.
10 experiments are taken to show the generality of our results for each set of experiment. Our experiment in \Cref{fig: ood loss} exhibits that the OOD loss of transformers with CoT converges to near zero when $k=10,20,30,40$ as the training loss/in-distribution loss converges to zero. The final loss also drops when the number of reasoning steps increases. 
\begin{figure}[ht]
    \centering
    \includegraphics[width=0.6\textwidth]{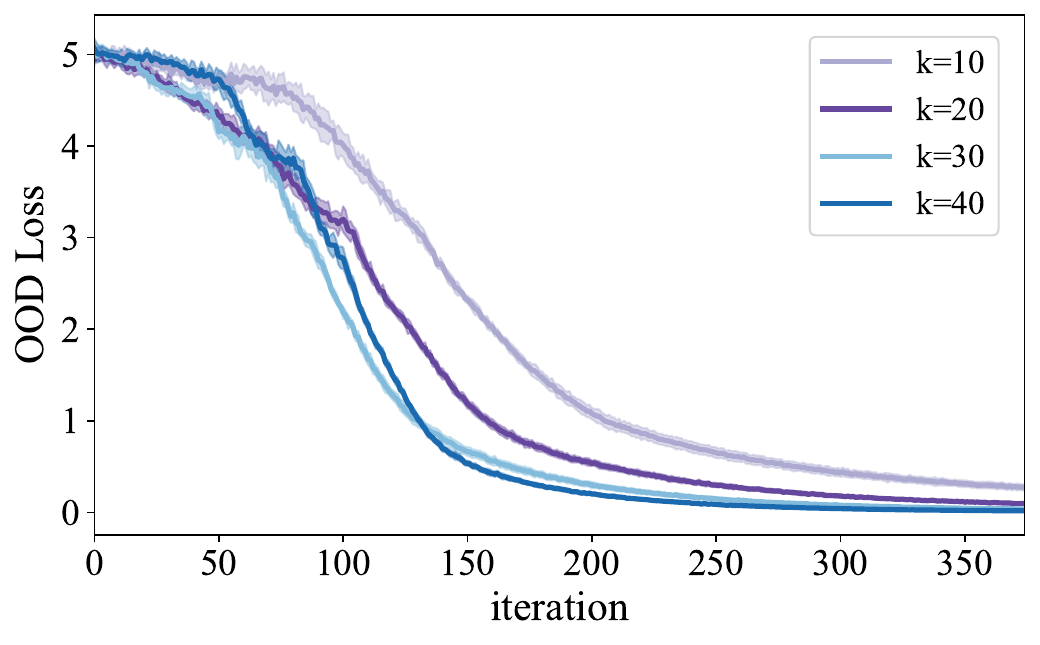}
    \caption{\textbf{OOD Generalization:} We plot the OOD loss $\mathcal{L}^{\mathrm{Eval}}_{\bSigma}$ when $n=20$, $d=10$. Each set of experiments sampled 10 different $\bSigma$. The mean results are presented as line charts, with variance represented by shaded areas. As shown, OOD loss will converge to near zero.}
    \label{fig: ood loss}
\end{figure}

Given all experiments above, we conclude that transformers with CoT can converge to our construction (\Cref{informal main thm: global convergence}), surpass those without CoT (\Cref{main corollary: significant error for 1-step}, \Cref{main thm: construction for tf with cot}) and generalize well to unseen data (\Cref{main theorem: evaluation}).

\end{document}